\newcommand{\congfang}{\textcolor{red}}
\newcommand*\samethanks[1][\value{footnote}]{\footnotemark[#1]}
\theoremstyle{definition}
\theoremstyle{plain}
\newtheorem{theorem}{Theorem}
\theoremstyle{plain}
\newtheorem{assumption}{Assumption}
\theoremstyle{plain}
\newtheorem{lemma}{Lemma}
\theoremstyle{remark}
\newtheorem*{remark}{Remark}
\theoremstyle{plain}
\theoremstyle{plain}
\newtheorem{property}{Property}
\theoremstyle{plain}
\newtheorem{proposition}{Proposition}
\theoremstyle{plain}
\newtheorem{corollary}[theorem]{Corollary}
\title{Learning Curves of Stochastic Gradient Descent\\ in Kernel Regression}
\author{Haihan Zhang$^{\dagger}$\thanks{Equal Contribution.}  \quad Weicheng Lin$^{\dagger}$\samethanks \quad Yuanshi Liu$^{\dagger}$ \quad Cong Fang$^{\dagger}$ \\
\\
        \small $^{\dagger}$Peking University
}
\begin{document}
\date{}
\maketitle

\begin{abstract}%
This paper considers a canonical problem in kernel regression: how good are the model performances when it is trained by the popular online first-order algorithms, compared to the offline ones,  such as ridge and ridgeless regression?  In this paper, we analyze the foundational single-pass Stochastic Gradient Descent (SGD) in kernel regression under source condition where the optimal predictor can even not belong to the RKHS, i.e. the model is misspecified. Specifically,  we focus on the inner product kernel over the sphere and characterize the exact orders of the excess risk curves under different scales of sample sizes $n$ concerning the input dimension $d$. Surprisingly, we show that SGD achieves min-max optimal rates up to constants among all the scales,  \emph{without} suffering the saturation,  
a prevalent phenomenon observed in (ridge) regression,   except  when the model is highly misspecified and the learning is in a final stage where  $n\gg d^\gamma$ with any constant $\gamma >0$. The main reason for SGD  to overcome the curse of saturation is the exponentially
decaying step size schedule, a common practice in deep neural network training. As a byproduct, we provide the \emph{first} provable advantage of the scheme over the iterative averaging method in the common setting. 
\end{abstract}

\section{Introduction}
Non-parametric least-squares regression within the RKHS framework represents a cornerstone of statistical learning theory. One mainstream method to solve the problem is kernel ridge regression (KRR) with optimality analysis \citep{caponnetto2007optimal,smale2007learning,zhang2024optimality}. Recent years have witnessed a renaissance of interest in kernel methods driven by the neural tangent kernel (NTK) theory~\citep{NEURIPS2018_5a4be1fa,arora2019exact}, which states that sufficiently wide neural networks, under specific initialization, can be well approximated by a deterministic kernel model derived from the network architecture.
Though deep learning often operates in regimes beyond the traditional statistical mindset, recent advances demonstrate that these generalization mysteries are not 
peculiar to neural networks and the phenomena are also present in kernel regression, particularly in the high-dimensional regime \citep{10.1214/20-AOS1990,Liang_2020,zhang2024optimal1}.

Substantial studies have been made in the related regimes for kernel ridge or ridgeless methods. For instance, \citet{Liang_2020} demonstrates the existence of benign overfitting for ridgeless regression, a phenomenon where the model interpolates data yet still generalizes well. In a large-dimension regime where $n \asymp d$, \citet{10.1214/20-AOS1990} provably shows the double descent phenomenon.  In more general regimes, \citet{zhang2024optimal1} characterizes the exact orders of the excess risk curves, i.e., learning curves where the sample size scales polynomially with the input dimension $n=d^{\gamma}$ with $\gamma>0$ and elaborates the
interplay of the regularization parameter, $\gamma$, and the source condition. 

On the other hand, in practice, dominant algorithms are implemented in an online fashion to reduce computation costs, with iterative updates relying on stochastic gradients (single-pass or minibatch) as estimates of the population counterpart. Stochastic gradient descent (SGD), one of the foundational online algorithms, admits the simplest updates and receives abundant theoretical analysis in general optimization. However, its learning dynamics over specific kernels remains underexplored. Existing theoretical results are either inapplicable or leave room for improvement. Specifically, the pioneering work of \cite{dieuleveut2016nonparametric} considers the optimality of single-pass SGD with iterate averaging, yet their optimality results do not demonstrate a clear improvement over ridge regression. Another thread of work analyses single-pass SGD in the linear regression framework \citep{JMLR:v18:16-595,zou2021benign,wu2022last}. Although insightful, their decisive assumption on the concentration effect deviates from that in kernel regression, resulting in entirely different learning dynamics.

In this study, we investigate a fundamental question to fill the abovementioned gap: how does model performance, when trained using popular online first-order algorithms, compare to that of offline ridge and ridgeless regression? We analyze the single-pass SGD that incorporates the exponentially decaying step size schedule, a common practice in deep neural network training~\citep{8187120,NEURIPS2019_2f4059ce}. The analysis is conducted within the framework of kernel regression, considering the source condition that allows for potential model misspecification. We precisely characterize the order of learning curves in the high-dimensional setting, where $n\asymp d^{\gamma}$ with $\gamma>0$ treated as a constant.
Our focus is on the inner product kernel over
the sphere, which encompasses NTK with ReLu activation as a special case~\citep{bietti2021deep,bietti2019inductive}.  For functions satisfying the source condition with $s>0$, we establish excess risk convergence rates given by $\Omega \left ( d^{-\mathrm{min}\left \{ \gamma -p,s(p+1) \right \}  } \right )$, where $p=\left \lfloor \frac{\gamma }{s+1}  \right \rfloor $. These derived rates surprisingly match the minimax lower bounds up to constants, thereby establishing minimax optimality across all sampling scales in high-dimensional settings and throughout the spectrum of source condition smoothness~\citep{zhang2024optimal1}. Notably, the result circumvents the saturation effect inherent to the KRR - the fundamental limitation where it fails to attain optimality for the smooth problems regardless of the ridge regularization parameter \citep{neubauer1997converse,li2023on,lu2024on}, which persists under the current online analysis.

With the above optimality analysis, two follow-up questions naturally arise. 
(i) Can the SGD algorithm consistently be optimal across all scalings of $n$, particularly when liberated from the $n\asymp d^{\gamma}$ constraints?
(ii) Why can SGD overcome the curse of saturation? 

For question (i), the claim does not hold in the asymptotic regime where $n\gg d^{\gamma}$: we demonstrate that SGD loses its optimality when the model is highly misspecified. Specifically, our analysis shows that SGD can only reach optimality for $s>\frac{1}{d+1}$.

For question (ii), we demonstrate that the exponentially decaying step size schedule serves as the critical mechanism for eliminating the saturation effect. To illustrate our arguments, we study a class of SGD with averaged iterates. Our result establishes a lower bound on this scheme, and as a byproduct, firstly demonstrates the provable advantage of decaying step sizes over iterate averaging.

To summarize our contribution:
\begin{itemize}
    \item We establish the optimality condition of SGD in both high-dimensional and asymptotic settings under the source condition.
    \item  We demonstrate the advantages of SGD with exponentially decaying step size over iterative averaging methods, highlighting the former's efficiency in well-specified problems without suffering from the saturation effect.
\end{itemize}



\section{Related Work}

\textbf{Kernel Regression.}
In the asymptotic setting, when $n\gg d$, the eigenvalues of the RKHS kernel often exhibit polynomial decay $\lambda_i\asymp i^{-\alpha}$ \cite{bietti2019inductive,bietti2021deep,li2024eigenvalue}, which is an extensively studied setting in the RKHS framework. Under a source condition of smoothness $s$ (which will be specified later in Section~\ref{RKHS}), \citet{caponnetto2007optimal} establishs a minimax lower bound of $n^{-\frac{s\alpha}{s\alpha+1}}$. For offline algorithms, KRR has been shown to achieve optimality for $0<s\le 2$~\citep{caponnetto2007optimal,smale2007learning,steinwart2009optimal,dicker2017kernel,blanchard2018optimal,fischer2020sobolev,lin2020optimal,NEURIPS2022_1c71cd40,liu2024statistical,zhang2024optimality}, but suffers from the saturation effect when $s>2$~\citep{neubauer1997converse,bauer2007regularization,dicker2017kernel,NEURIPS2021_543bec10,li2023on}. 
Gradient descent (GD) with early stopping has been shown to achieve the minimax optimal rate for all $s>0$~\citep{yao2007early,raskutti2014early,lin2020optimal}. 
Several studies have also revealed connections between the optimal stopping time in early-stopped GD and the regularization in KRR~\citep{pmlr-v89-ali19a,sonthalia2024regularizationearlystoppingsquares}.
\citet{velikanov2024generalization} further demonstrates that spectral algorithms with finite qualification $\tau$ experience saturation and fail to achieve optimality for $s>2\tau$, with only GF with early stopping able to achieve optimality~\citep{raskutti2014early}. For online algorithms, \citet{dieuleveut2016nonparametric} shows that single-pass SGD with constant step size and averaged iterates can achieve optimality for $\frac{\alpha-1}{\alpha}<s\le 2$.


In the high-dimensional setting, where $n\asymp d^{\gamma}$, for $\gamma>0$, the eigenvalues of the RKHS kernel decay at a rate that is not polynomial depending on $d$, leading to the ineffectiveness of asymptotic analysis. Recently, several works have investigated the excess risk of offline algorithms in this setting. It has been proven that KRR and kernel gradient flow are capable of achieving convergent excess risk for certain regression functions~\citep{ADVANI2020428,10.5555/3524938.3525034,10.1214/20-AOS1990,donhauser2021rotational,NEURIPS2022_1d3591b6,misiakiewicz2022spectrum,tsigler2023benign}. Kernel interpolation is shown to exhibit the benign overfitting phenomenon, which allows generalization under certain values of $\gamma$ and $s$~\citep{Liang_2020,liang2020multiple,bartlett2020benign,barzilai2023generalization,zhang2025phase}. Both the excess risk and minimax rate have been shown to exhibit periodic plateau behavior and multiple descent behaviors in the high-dimensional setting~\citep{10.5555/3524938.3525034,10.1214/20-AOS1990,NEURIPS2022_1d3591b6,lu2023optimal,zhang2024optimal1}. For the minimax optimality analysis in high-dimensional settings, KRR has been demonstrated to roughly achieve optimality for $s\le 1$ and $\gamma>\frac{3s}{2(s+1)}$, while experiencing saturation effects in the remaining range~\citep{zhang2024optimal1}. \citet{lu2024on} further demonstrates that offline algorithms with finite qualification $\tau$ experience saturation and fail to achieve optimality for $s>\tau$, with only gradient flow with early stopping able to achieve optimality for all $s>0$ and $\gamma>0$. However, in high-dimensional settings, the excess risk curve and optimality analysis for online algorithms remain an open area of research.

\textbf{SGD over Linear Model.}
SGD in linear regression has been extensively studied. When the RKHS is finite-dimensional, it has been shown that single-pass SGD, with either an exponentially decaying step size or a constant step size with averaged iterates, achieves the minimax optimal rate $\mathcal{O} \left ( \frac{d}{n}  \right ) $~\citep{bach2013non,JMLR:v18:16-595,NEURIPS2019_2f4059ce}. In overparameterized linear regression, single-pass SGD exhibits a benign overfitting behavior~\citep{zou2021benefits,zou2021benign,wu2022last}, with \citet{zou2021benign} and \citet{wu2022last} establishing excess risk bounds for SGD under both constant step sizes with iterate averaging and exponentially decaying step size schedules. These results crucially rely on a fourth-moment condition on the covariates to ensure sufficient concentration, in addition to depending on the spectral properties of the covariates and the target parameters.
Under the source condition with $s$ and polynomial eigenvalue decay of the covariance matrix $\lambda _{i}\asymp i^{-\alpha }$, \citet{zhang2024optimalityacceleratedsgdhighdimensional} showed that SGD with an exponentially decaying step size can attain the minimax optimal rate for all $s\ge \frac{\alpha-1}{\alpha}$.
For general kernel regression problems, under the source condition with $s$ and the spectrum of the kernel operator that decays polynomially $\lambda _{i}\asymp i^{-\alpha }$,
\citet{dieuleveut2016nonparametric,dieuleveut2017harder} demonstrated that SGD achieves optimal performance when $\frac{\alpha-1}{\alpha}\le s\le 2$, while failing to reach optimality and encountering saturation for $s>2$. In the low-regularity regime $s< \frac{\alpha-1}{\alpha}$, single-pass SGD is limited by its optimization capacity and cannot reach optimality; in such cases, momentum acceleration~\citep{zhang2024optimalityacceleratedsgdhighdimensional,dieuleveut2017harder} or multiple passes over the data~\citep{lin2017optimal,pillaud2018statistical} are required.

\section{Preliminaries}

\textbf{Notations.}
In this paper, we consider the input space $\mathcal{X} \subseteq \mathbb{R}^{d+1} $ and the output space $\mathcal{Y} \subseteq \mathbb{R}$, with a probability distribution $\rho$ on $\mathcal{X}\times \mathcal{Y}$. Let $\rho _{\mathcal{X} }$ denote the marginal probability distribution on $\mathcal{X}$, $\rho_{y|\mathbf{x}}$ the conditional distribution of $y$ given $\mathbf{x}$.  $\left \| \cdot  \right \| _{L^2}$ denotes the norm $\left \| f  \right \| _{L^2}^2=\int _{\mathcal{X} }\left | f\left ( \mathbf{x} \right )  \right | ^2\mathrm{d}\rho_{\mathcal{X} }\left ( \mathbf{x} \right )  $ where $L^2$ represents the $L^2$-space $L^2\left ( \mathcal{X} ,\rho _{\mathcal{X} } \right ) $. 
Denote $\mathbb{S}^d$ as the unit sphere in $\mathbb{R}^{d+1}$. In a Hilbert space $\mathcal{H}$, when $\mathbf{A}$ and $\mathbf{B}$ are self-adjoint operators, the symbol $\mathbf{A} \preceq \mathbf{B}$ denotes a positive semi-definite relationship, that is: for any $f\in \mathcal{H}$, $\left \langle f,\mathbf{A}f  \right \rangle_{\mathcal{H} } \le\left \langle f,\mathbf{B}f  \right \rangle_{\mathcal{H} }  $. For any $f, \ g\in \mathcal{H}$, define the operator $f\otimes g:\mathcal{H}\rightarrow \mathcal{H}$ by $f\otimes g\ (h) = \left \langle f,h \right \rangle_{\mathcal{H} }g $.

\subsection{Regression in RKHS}\label{RKHS}

 We consider the regression problem of minimizing the prediction error for a function $f\in L^2$, defined as 
\begin{equation}\nonumber
    \mathcal{E} _{\rho} \left ( f \right ) =\mathbb{E} _{\rho} \left [ \left ( f\left ( \mathbf{x}  \right ) -y  \right ) ^2\right ] ,
\end{equation}
where the optimal predictor $f_{\rho}^*\left ( \mathbf{x}  \right )=\mathbb{E}_{\rho} \left [ y| \mathbf{x}\right ] $ achieves the minimal error. The prediction error of $f$ can be equivalently measured by the excess risk $\left \| f-f_{\rho}^* \right \| _{L^2}^2$.

In this paper, we investigate regression problems within the framework of the Reproducing Kernel Hilbert Space (RKHS)~\citep{berlinet2011reproducing}. Let $\mathcal{H}$ denote the RKHS associated with a continuous kernel $K$ defined on the input space $\mathcal{X}$. 
The covariance operator $T:\ L^2\to \mathcal{H}$ associated with the kernel $K$ and the marginal distribution $\rho_{\mathcal{X}}$ is defined as:
\begin{equation}\nonumber
\begin{aligned}
T\left ( f \right ) \left ( \mathbf{z} \right )=\int _{\mathcal{X} } f\left ( \mathbf{x}  \right ) K\left ( \mathbf{x},\mathbf{z}   \right )\mathrm{d}\rho _{\mathcal{X}}\left ( \mathbf{x}  \right ) .
\end{aligned}
\end{equation}
By Mercer's theorem~\citep{aronszajn1950theory,steinwart2012mercer}, 
\begin{equation}\nonumber
\begin{aligned}
    T&=\sum_{i=1}^{\infty } \lambda_i \left \langle \cdot ,\phi _i\right \rangle _{L^2} \phi _i,\\
     K\left ( \mathbf{x} ,\mathbf{y}   \right ) &=\sum_{i=1}^{\infty } \lambda_i\phi _{i}\left ( \mathbf{x}  \right )\phi _{i}\left ( \mathbf{y}  \right )  ,
  \end{aligned}  
\end{equation}
where $\left \{ \lambda_i \right \} _{i=1}^{\infty }$ are the eigenvalues of $T$ in non-increasing order, and $\left \{\phi _i \right \} _{i=1}^{\infty }$ are the corresponding eigenfunctions of $T$ which form an orthonormal basis of $L^2$. 
Let $K_{\mathbf{x}}$ denote the short hand of the function $K_{\mathbf{x}} = K(\mathbf{x}, \cdot)$.
If the embedding operator $T^*: \mathcal{H}\rightarrow L^2$ satisfies $\overline{\mathrm{Ran}( T^*)} \subseteq L^2$, then $\boldsymbol{\Sigma}: \mathcal{H}\rightarrow\mathcal{H}$ defined as $\boldsymbol{\Sigma} = \mathbb{E}[K_{\mathbf{x}}\otimes K_{\mathbf{x}}]$ admits the decomposition of
\begin{align*}
    \boldsymbol{\Sigma} = \sum_{i=1}^{\infty}\lambda_i\langle\cdot,\lambda_i^{1/2}\phi_i \rangle_{\mathcal{H}} \lambda^{1/2}_i\phi_i,
\end{align*}
with $\left \{ \lambda _i^{\frac{1}{2} } \phi _i\right \} _{i=1}^{\infty }$ forming an orthonormal basis of $\mathcal{H}$ \citep{dieuleveut2016nonparametric}.



To quantify the regularity of the optimal predictor $f_{\rho}^*$, we introduce the interpolation space $\left [ \mathcal{H}  \right ] ^s$. For any $s\ge 0$, the $s$-th power of $T$ is defined as 
\begin{equation}\nonumber
    T^s\left ( f \right ) =\sum_{i=1}^{\infty } \lambda_i^s \left \langle f,\phi _i\right \rangle _{L^2} \phi _i.
\end{equation}
The interpolation space $\left [ \mathcal{H}  \right ] ^s$ is defined by 
\begin{equation}\nonumber
    \left [ \mathcal{H}  \right ] ^s=\left \{ \sum_{i=1}^{\infty}a_i\lambda _i^{\frac{s}{2} }\phi _i\ \big|\ \left \{ a_i \right \}_{i=1}^{\infty } \in \ell^2  \right \} ,
\end{equation}
with the inner product $\left \langle f,g \right \rangle _{\left [ \mathcal{H}  \right ]^s }=\left \langle T^{-\frac{s}{2} } f,T^{-\frac{s}{2} } g\right \rangle_{L^2} $.

\subsection{Dot-Product Kernels on the Sphere}\label{Kernel}
A dot-product kernel $K$ is defined as $K(\mathbf{x},\mathbf{y})=\Phi \left ( \left \langle \mathbf{x},\mathbf{y} \right \rangle  \right ) $, for any $\mathbf{x},\mathbf{y}\in \mathbb S^d$, where $\Phi \in  \mathcal{C} ^{\infty }\left [ -1,1 \right ] $ is a fixed function independent of $d$. The Mercer's decomposition for $K$~\citep{gallier2009notes} can be written as 
\begin{equation}\nonumber
    K(\mathbf{x},\mathbf{y})=\sum_{k=0}\mu_k\sum_{j=1}^{N(d,k)}Y_{k,j}(\mathbf{x})Y_{k,j}(\mathbf{y}),
\end{equation}
where $\left \{ Y_{k,j} \right \} _{j=1}^{N(d,k)}$  are spherical harmonic polynomials of degree k and $\mu_k's$ are the eigenvalues of $K$ with multiplicity $N(d,0)=1$, $N(d,k)=\frac{2k+d-1}{k}\cdot \frac{(k+d-2)!}{(d-1)!(k-1)!}$, for $k\ge 1$.

The smoothness of $\Phi \in  \mathcal{C} ^{\infty }\left [ -1,1 \right ] $  and compactness of $\mathbb{S}^d$ ensure that the dot-product kernel $K$ is a bounded kernel. For any dimension $d$, the dot-product kernel $K$ exhibits the following properties:
\begin{property}\label{pro-kernel}
    There exists $\kappa>0$, such that 
\begin{itemize}
    \item (a) The dot-product kernel K is bounded, that is  $\sup_{\mathbf{x}\in \mathcal{X}} K(\mathbf{x},\mathbf{x})\le \kappa^2$.
    \item (b) The trace of the covariance operator $T$ is bounded, that is, $\mathrm{tr}(T)\le \kappa^2$.
    \item (c) The marginal probability distribution $\rho_{\mathcal{X}}$ satisfies $\mathbb{E} \left [ K(\mathbf{x},\mathbf{x})K_{\mathbf{x}}\otimes K_{\mathbf{x}}  \right ] \preceq \kappa ^2\boldsymbol{\Sigma}$.
\end{itemize}
\end{property}

The NTK of a ReLU network with $L$ layers with inputs on $\mathbb S^d$ is a specific dot-product kernel~\citep{NEURIPS2018_5a4be1fa,bietti2021deep}, which is defined as
\begin{equation}\nonumber
    K_{\mathrm{NTK} }\left ( \mathbf{x} ,\mathbf{y}  \right ) =\kappa^{L}_{\mathrm{NTK}} \left ( \left \langle  \mathbf{x} ,\mathbf{y} \right \rangle  \right ).
\end{equation}
 $\kappa^{L}_{\mathrm{NTK}}$ is defined recursively, starting with $\kappa^{1}_{\mathrm{NTK}}(t)=\kappa^{1}(t)=t$, and for $2\le \ell \le L$,
\begin{equation}\nonumber
    \begin{aligned}
     \kappa^{\ell }(t)&=\kappa_1\left ( \kappa^{\ell-1 }(t) \right ) ,\\
 \kappa^{\ell}_{\mathrm{NTK} }(t)&= \kappa^{\ell-1}_{\mathrm{NTK} }(t)\kappa_0\left ( \kappa^{\ell-1 }(t) \right )+\kappa^{\ell }(t),   
    \end{aligned}
\end{equation}
where $\kappa _{0}(t)=\frac{1}{\pi}\left ( \pi-\mathrm{arccos}(t)  \right )  $ and $\kappa _{1}(t)=\frac{1}{\pi}\left ( t\left ( \pi-\mathrm{arccos}(t)  \right ) +\sqrt[]{1-t^2}  \right )  $. Let the Mercer decomposition of $K_{\mathrm{NTK} } $ be $K_{\mathrm{NTK} }\left ( \mathbf{x} ,\mathbf{y}   \right ) =\sum_{i=1}^{\infty } \lambda_i\phi _{i}\left ( \mathbf{x}  \right )\phi _{i}\left ( \mathbf{y}  \right )$, where $\left \{ \lambda_i \right \} _{i=1}^{\infty }$ are the eigenvalues of $K_{\mathrm{NTK} }$ in non-increasing order, and $\left \{\phi _i \right \} _{i=1}^{\infty }$ are the corresponding eigenfunctions. As demonstrated in \citet{bietti2021deep}, when $n\gg d$, the decay rate of the eigenvalues of $K_{\mathrm{NTK} }$ is given by 
\begin{equation}\label{NTK-eigen-decay}
    \lambda _j\asymp j^{-\frac{d+1}{d} },
\end{equation}
indicating that the eigenvalues of $K_{\mathrm{NTK} }$ decay polynomially. This decay rate is consistent with the standard capacity condition typically employed to assess the optimality of algorithms in the RKHS  framework~\citep{caponnetto2007optimal,steinwart2009optimal}.

\section{Setup}

\subsection{SGD for Kernel Regression}
Leveraging the properties of RKHS, for any function $f\in L^2$, recall that the prediction error of $f$ is expressed as 
\begin{equation}\nonumber
  \mathcal{E}_{\rho}  \left ( f \right )=\mathbb{E}_{\rho }\left [ y-f(x)\right ]^2  .
\end{equation}
 We then employ stochastic gradient descent (SGD) to solve the kernel regression problem.
Starting from an initial function $f_0$, the SGD updates are performed recursively as follows:
\begin{equation}\nonumber
   f_{t}=f_{t-1}-\eta_t\left ( f_{t-1}(\mathbf{x}_{t} )-y_{t} \right )K_{\mathbf{x}_{t} } . 
\end{equation}
At the $t$-th iteration, we observe a fresh data $\left ( \mathbf{x}_{t},y_{t} \right )$ from $\rho$, and $\eta_t$ denotes the step size. Assuming the initialization starts from $f\equiv 0$, $f_{t}$ can be expressed as:
\begin{equation}\nonumber
    f_{t}=\sum_{j=1}^{t} a_{j}K_{\mathbf{x}_{j}},
\end{equation}
where $\left ( a_{j}  \right )_{1\le j\le t}$ is given by the following iteration, starting from $a_{0} = 0$:
\begin{equation}\nonumber
    \begin{aligned}
        a_{t}=-\eta_{t}\left ( \sum_{j=1}^{t-1}a_{j}K\left ( \mathbf{x}_{j}, \mathbf{x}_{t} \right )-y_{t}  \right ) .
    \end{aligned}
\end{equation}
When the kernel is chosen as the NTK, the dynamics of SGD for kernel regression exactly coincide with the training process of an infinitely wide neural network under gradient descent~\citep{NEURIPS2018_5a4be1fa}. For finite-width networks, this correspondence holds approximately in the early training phase~\citep{NEURIPS2019_ae614c55}.
In this paper, we investigate two types of step size schedules: an exponentially decaying step size schedule~\citep{8187120,NEURIPS2019_2f4059ce} and a constant step size with averaged iterates~\citep{polyak1992acceleration,dieuleveut2016nonparametric}.\vspace{-0.8em} 
\begin{itemize}
    \item \textbf{Exponentially Decaying Step Size Schedule.} Given a total of $n$ iterations, the step size is piecewise constant and decays by a factor after each stage, as defined by \begin{equation}\label{step schedule}
    \begin{aligned}
        \eta_t = \frac{\eta_0}{2^{\ell-1}}, \  \text{if } m(\ell-1) + 1 \leq t \leq m  \ell,
    \end{aligned}
\end{equation}
for $1\leq \ell \le   \left \lceil \mathrm{log}_2n \right \rceil $, 
where $ \left \lceil \mathrm{log}_2n \right \rceil $ is the total number of stages, and $m = \lceil\frac{n}{\mathrm{log}_2n}\rceil$. The final iterate $f^{dec}_{n}=f_n$ is considered the output.
\item \textbf{Constant Step Size with Averaged Iterates.} In this schedule, the step size remains constant, and the output is given by the average of the $n$ iterates: $f^{avg}_{n}=\frac{1}{n}\sum_{t=0}^{n-1}  f_t$.
\end{itemize}

\subsection{Assumptions} 
\subsubsection{Data Noise Assumption}
Denote $\Xi_{(\mathbf{x},y)}=\left ( y-f_{\rho}^*(\mathbf{x}) \right ) K_{\mathbf{x}}$ as the residual of $(\mathbf{x},y)$, where $y-f_{\rho}^{*}(\mathbf{x})$ can be regarded as the noise of data. To control the interaction between bias and variance in the RKHS, we assume the residual’s covariance structure satisfies the following assumption.
\begin{assumption}\label{ass-noise}
    There exists $\sigma>0$ such that the residual $\Xi_{(\mathbf{x},y)}$ satisfies $\mathbb{E} \left [ \Xi_{(\mathbf{x},y)}\otimes \Xi_{(\mathbf{x},y)} \right ] \preceq \sigma ^2\boldsymbol{\Sigma}$ and $\mathbb{E}[\Xi_{(\mathbf{x},y)}]=0$.
\end{assumption}
\begin{remark}
This assumption holds when the data is bounded or $y-f_{\rho}^{*}(\mathbf{x})$ is sub-Gaussian conditioned on $\mathbf{x}$. 
It does not require $\mathbb{E}[\Xi|\mathbf{x}]=0$, thus encompassing a broader class of problems. This assumption is also adopted by several works~\citep{bach2013non,dieuleveut2016nonparametric,dieuleveut2017harder}.
\end{remark}

\subsubsection{Source Condition}
The source condition assumes that the optimal predictor $f_{\rho}^{*}$ lies within a bounded ball in $\left [ \mathcal{H}  \right ] ^s$.
\begin{assumption}\label{ass-source-a}
    For a given $s>0$, we assume that $f_{\rho}^{*}$ belongs to the unit ball in $\left [ \mathcal{H}  \right ] ^s$, that is, 
    \begin{equation}\nonumber
        \left \| f_{\rho}^{*} \right \|_{\left [ \mathcal{H}  \right ] ^s}\le 1.
    \end{equation}
\end{assumption}
\begin{remark}
   The source condition serves as a criterion for evaluating optimality in the RKHS framework. It characterizes the smoothness of $f_{\rho}^{*}$: a larger $s$ implies a faster decay of the Fourier coefficients of $f_{\rho}^{*}$ in $L^2$, indicating higher smoothness and an intrinsically simpler learning problem. This condition further categorizes the problem settings: $0<s<1$ is called the misspecified problem, and $s\ge 1$ is called the well-specified problem. 
\end{remark}
\subsubsection{Assumption on the Dot-Product Kernel}
For the high-dimensional settings, where $c_1d^\gamma < n < c_2d^\gamma$ for some fixed $\gamma > 0$ and absolute constants $c_1,c_2$, we adopt the assumption for $\Phi$ as stated in \citet{lu2023optimal}. 
\begin{assumption}\label{ass-dotkernel}
There exists a non-negative sequence of absolute constants $\left \{ a_j \right \} _{j=0}^{\infty }$, such that $\Phi \left ( t \right ) =\sum_{j=0}^{\infty }a_jt^j$, where $a_j>0$ for any $j\le \left \lfloor \gamma \right \rfloor +3.$
\end{assumption}

\begin{remark}
  This assumption is made to maintain the clarity of the main results and proofs, and it can be extended to the NTK as discussed in \citet{lu2023optimal}.
\end{remark}

\subsection{Minimax Lower Bound}
The minimax lower bound serves as a fundamental criterion for evaluating statistical optimality in learning algorithms. An algorithm achieves minimax optimality if its worst-case excess risk upper bound matches the corresponding minimax lower bound over a given function class. Formally, for a family of distributions $\mathcal{P}$ and estimators $\hat{f}$ trained on $n$ i.i.d. samples $\left \{ \left ( \mathbf{x}_i,y_i  \right )  \right \} _{i=1}^n$, the minimax lower bound is defined as:
\begin{equation}\nonumber
    \inf_{\hat{f} }\sup_{\rho \in \mathcal{P}}  \left \| \hat{f}-f_{\rho}^*\right \| _2^2,
\end{equation}
characterizing the irreducible statistical error in the worst-case distributional setting. 
The minimax lower bound for some kernel methods has been extensively studied~\citep{yang1999information,caponnetto2007optimal,lu2023optimal,lu2024pinskerboundinnerproduct}.
Below, we now present two key minimax lower bounds under distinct scaling regimes. 
The first result, adapted from~\citet{lu2024pinskerboundinnerproduct}, addresses the high-dimensional setting under source condition with $s>0$.
\begin{proposition}[Minimax Lower Bound in High-Dimensional Settings] \label{minmax-high}

 In high-dimensional settings, where $n$ is bounded by $c_1 d^\gamma < n < c_2 d^\gamma$ for some fixed $\gamma > 0$ and constants $c_1, c_2$, consider $\mathcal{X} = \mathbb S^d$.  The marginal distribution $\rho_{\mathcal{X}}$ is assumed to be the uniform distribution on $\mathbb S^d$. Let $K$ denote a dot-product kernel on the sphere, which satisfies Assumption~\ref{ass-dotkernel}.  
Let $\mathcal{P}$ consist of all distributions $\rho$ on $\mathcal{X}\times \mathcal{Y}$ given by $y=f_{\rho}^*(\mathbf{x})+\epsilon$, where $\epsilon \sim \mathcal{N} \left ( 0,1 \right ) $ is independent of $\mathbf{x}$. Suppose Assumption~\ref{ass-source-a} holds with $s>0$.  Let $p=\left\lceil \frac{\gamma}{s+1} \right\rceil- 1$, then we have:
\begin{equation}\nonumber
    \inf_{\hat{f} }\sup_{\rho\in \mathcal{P} }\mathbb{E}_{\rho} \left \| \hat{f} -f_{\rho}^{*} \right \| _{L^2}^2=\Omega \left ( d^{-\mathrm{min}\left \{ \gamma -p,s(p+1) \right \}  } \right ) .
\end{equation}
\end{proposition}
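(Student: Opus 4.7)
The plan is to reduce the problem to ellipsoid estimation in the spherical-harmonic eigenbasis of $K$ and apply Fano's inequality at two distinct degree levels, whose maximum yields the advertised rate. Under Assumption~\ref{ass-dotkernel}, the eigenvalues obey $\mu_k \asymp d^{-k}$ for every $k \le \lfloor\gamma\rfloor+3$, while the multiplicity of harmonics of degree $k$ satisfies $N(d,k) \asymp d^k$. Writing $f_{\rho}^{*} = \sum_{k,j} c_{k,j} Y_{k,j}$, Assumption~\ref{ass-source-a} becomes $\sum_{k,j} \mu_k^{-s} c_{k,j}^2 \le 1$, so signal concentrated at a single level $k$ obeys $\sum_j c_{k,j}^2 \le \mu_k^s \asymp d^{-ks}$. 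Because the noise is $\mathcal{N}(0,1)$ and the $\mathbf{x}_i$ are i.i.d.\ uniform, the KL divergence between two $n$-sample regression laws equals $\tfrac{n}{2}\|f-\tilde f\|_{L^2}^2$, so Fano's method applies as soon as one exhibits a separated packing inside the source ball.

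First I would handle the ``approximation-limited'' level $k = p+1$. Varshamov--Gilbert furnishes $M \ge \exp(c\,N(d,p+1))$ sign patterns in $\{\pm 1\}^{N(d,p+1)}$ with pairwise Hamming distance at least $N(d,p+1)/4$. Scaling them by $\tau$ with $\tau^2 \asymp \mu_{p+1}^s/N(d,p+1)$ produces a packing $\{f_i\}$ in the unit ball of $[\mathcal{H}]^s$ with pairwise $L^2$-gaps $\gtrsim \mu_{p+1}^s \asymp d^{-s(p+1)}$ and pairwise KL divergence $\lesssim n\mu_{p+1}^s$. The feasibility condition $n\mu_{p+1}^s \lesssim N(d,p+1)$ reduces to $d^{\gamma} \lesssim d^{(p+1)(s+1)}$, which is exactly the defining inequality $p+1 \ge \gamma/(s+1)$. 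Fano then yields the lower bound $\Omega(d^{-s(p+1)})$.

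Next I would work at the ``information-limited'' level $k = p$, now with much smaller amplitudes: take $\{\pm 1\}^{N(d,p)}$ codewords scaled by $\tau$ with $\tau^2 \asymp 1/n$. The total squared norm $N(d,p)\tau^2 \asymp d^{p-\gamma}$ respects the intra-level budget $d^{-ps}$ precisely because $p(s+1) < \gamma$, a consequence of $p = \lceil\gamma/(s+1)\rceil - 1$. The pairwise $L^2$-gaps are of order $N(d,p)/n \asymp d^{p-\gamma}$, the pairwise KL is of order $N(d,p) \asymp \log M$, and Fano yields the $\Omega(d^{-(\gamma-p)})$ lower bound. Taking the maximum of the two subproblem bounds gives the required $\Omega(d^{-\min\{\gamma-p,\,s(p+1)\}})$.

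The key technical obstacle is the uniform eigenvalue estimate $\mu_k \asymp d^{-k}$ throughout the range $k \le \lfloor\gamma\rfloor+3$; this is precisely why Assumption~\ref{ass-dotkernel} insists on positivity of the first $\lfloor\gamma\rfloor+3$ Taylor coefficients of $\Phi$, and its proof requires careful tracking of the Gegenbauer expansion and of the dimensional combinatorics of $N(d,k)$. A secondary subtlety is checking that Varshamov--Gilbert sign packings translate into honestly $L^2$-separated functions through the orthonormal spherical-harmonic basis, and that the Fano feasibility constants can be arranged (possibly by thinning the packing). I would import both estimates directly from \citet{lu2023optimal,lu2024pinskerboundinnerproduct} rather than reprove them, after which the remainder is a routine Fano calculation.
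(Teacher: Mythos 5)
The paper does not prove this proposition; it is imported verbatim from \citet{lu2024pinskerboundinnerproduct}, which itself establishes it by a two-level Fano argument of precisely the kind you sketch, and you also acknowledge importing the key eigenvalue and multiplicity estimates from the same source. Your reconstruction is correct in its essentials: the approximation-limited packing at degree $p+1$ with amplitudes calibrated to the source budget $\mu_{p+1}^s$ yields $\Omega(d^{-s(p+1)})$, the information-limited packing at degree $p$ with amplitudes calibrated to $1/n$ yields $\Omega(d^{-(\gamma-p)})$, and the definition $p=\lceil\gamma/(s+1)\rceil-1$ is exactly what makes the Fano feasibility constraint $n\mu_{p+1}^s\lesssim N(d,p+1)$ at level $p+1$ and the source-ball budget $N(d,p)\mu_p^{-s}/n\lesssim 1$ at level $p$ simultaneously hold. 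Two small caveats are worth recording. First, when $p=0$ (i.e.\ $\gamma\le s+1$) we have $N(d,0)=1$, so Varshamov--Gilbert furnishes only a two-point family and the bound at that level is a Le Cam two-point argument rather than a genuine Fano; it still yields $\Omega(d^{-\gamma})$, but you should not describe it as a Fano calculation in that regime. Second, when $\gamma$ is an integer multiple of $s+1$, the quantity $n\mu_{p+1}^s/N(d,p+1)$ is $\Theta(1)$ rather than $o(1)$, so the constant in the Fano feasibility condition is tight and the packing must be thinned or the amplitude shrunk by a constant factor, which you already flag as a possibility. Neither affects the conclusion.
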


The second result pertains to the asymptotic setting $n\gg d$,  leveraging the spectral properties of NTK for ReLU networks and the optimality analysis in~\citet{caponnetto2007optimal} :
\begin{proposition}[Minimax Lower Bound in Asymptotic Settings] \label{minmax-low}

In asymptotic settings, where $n \gg d$, consider  $\mathcal{X} = \mathbb S^d$. The marginal distribution $\rho_{\mathcal{X}}$ is assumed to be the uniform distribution on $\mathbb S^d$. Let $K_{\mathrm{NTK}}$ the NTK of a ReLU network with $L$ layers with inputs on $\mathbb S^d$.  
Let $\mathcal{P}$ consist of all distributions $\rho$ on $\mathcal{X}\times \mathcal{Y}$ given by $y=f_{\rho}^{*}(\mathbf{x})+\epsilon$, where  $\epsilon \sim \mathcal{N} \left ( 0,1 \right ) $ is independent of $\mathbf{x}$. Suppose Assumption~\ref{ass-source-a} holds with $s>0$.  Then we have:
\begin{equation}\nonumber
    \inf_{\hat{f} }\sup_{\rho\in \mathcal{P} }\mathbb{E}_{\rho} \left \| \hat{f} -f_{\rho}^{*} \right \| _{L^2}^2=\Omega \left ( n^{-\frac{s(d+1)}{s(d+1)+d} } \right ) .
\end{equation}
\end{proposition}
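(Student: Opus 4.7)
The plan is to reduce Proposition 2 to a standard minimax lower bound for nonparametric regression in RKHS with polynomial eigendecay, then plug in the specific decay exponent of the ReLU NTK. The key input is already in the excerpt: by equation (\ref{NTK-eigen-decay}), the eigenvalues of the NTK operator on $\mathbb{S}^d$ satisfy $\lambda_j \asymp j^{-\alpha}$ with $\alpha = (d+1)/d$. Given this capacity condition together with Assumption~\ref{ass-source-a} (source condition of order $s$), the result we want is the classical Caponnetto--De Vito rate $n^{-s\alpha/(s\alpha+1)}$, which with $\alpha = (d+1)/d$ simplifies exactly to $n^{-s(d+1)/(s(d+1)+d)}$. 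So concretely, I would either invoke Theorem 2 of \citet{caponnetto2007optimal} directly, or re-derive it by the Fano method described next.

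If proving from scratch, the first step is to construct a finite hypothesis family inside the unit ball of $[\mathcal{H}]^s$. Let $\{\phi_i\}_{i\ge 1}$ be the $L^2$-orthonormal eigenfunctions of the NTK integral operator $T$. For a block of indices $I = \{K+1,\ldots,2K\}$ of size $N = K$, set
\begin{equation}\nonumber
f_\epsilon \;=\; a\sum_{i\in I}\epsilon_i\phi_i, \qquad \epsilon\in\{-1,+1\}^N,
\end{equation}
and choose the amplitude $a$ so that $\|f_\epsilon\|_{[\mathcal{H}]^s}^2 = a^2\sum_{i\in I}\lambda_i^{-s}\asymp a^2 K^{\alpha s+1}\le 1$, i.e.\ $a\asymp K^{-(\alpha s+1)/2}$. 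Applying the Varshamov--Gilbert lemma extracts a subset $\mathcal{E}\subset\{-1,+1\}^N$ of size $|\mathcal{E}|\ge 2^{N/8}$ with pairwise Hamming distance at least $N/8$, giving the $L^2$ separation
\begin{equation}\nonumber
\|f_\epsilon - f_{\epsilon'}\|_{L^2}^2 \;\gtrsim\; a^2 N \;\asymp\; K^{-\alpha s}.
\end{equation}

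The second step is to bound the KL divergence between the product Gaussian models $\rho_\epsilon^{\otimes n}$ and $\rho_{\epsilon'}^{\otimes n}$ corresponding to $y = f_\epsilon(\mathbf{x})+\epsilon$: since the noise is standard Gaussian and $\rho_{\mathcal{X}}$ is uniform on $\mathbb{S}^d$, we get
\begin{equation}\nonumber
\mathrm{KL}(\rho_\epsilon^{\otimes n}\,\|\,\rho_{\epsilon'}^{\otimes n}) \;=\; \tfrac{n}{2}\|f_\epsilon-f_{\epsilon'}\|_{L^2}^2 \;\lesssim\; n\,a^2 N \;\asymp\; n\,K^{-\alpha s}.
\end{equation}
Applying Fano's inequality to the hypothesis set $\mathcal{E}$ then yields a lower bound of order $K^{-\alpha s}$ provided the KL bound is dominated by $\log|\mathcal{E}|\asymp N = K$; this is the usual radius calibration $n K^{-\alpha s}\lesssim K$, i.e.\ $K\asymp n^{1/(\alpha s+1)}$, and substituting back gives the excess risk lower bound $n^{-\alpha s/(\alpha s+1)}$. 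Inserting $\alpha = (d+1)/d$ produces exactly $n^{-s(d+1)/(s(d+1)+d)}$.

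The only genuinely delicate point is checking that the cited NTK eigendecay in \citet{bietti2021deep} holds tightly enough on the chosen block $I$ so that both the upper bound on $\|f_\epsilon\|_{[\mathcal{H}]^s}$ and the lower bound on $\|f_\epsilon-f_{\epsilon'}\|_{L^2}$ employ matching constants; the block-of-$K$-consecutive-eigenvalues construction is standard precisely because this avoids multiplicities causing trouble, and the ReLU NTK on the sphere is known to satisfy two-sided polynomial bounds $c_1 j^{-(d+1)/d}\le \lambda_j\le c_2 j^{-(d+1)/d}$. Beyond that, the argument is an essentially routine application of the standard RKHS minimax machinery, and in the write-up I would likely simply cite \citet{caponnetto2007optimal} with the verification of the capacity and source conditions supplied by (\ref{NTK-eigen-decay}) and Assumption~\ref{ass-source-a}.
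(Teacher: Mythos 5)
Your proposal is correct and matches the paper's approach: the paper simply cites \citet{caponnetto2007optimal} and invokes the NTK eigenvalue decay rate $\lambda_j\asymp j^{-(d+1)/d}$ from \citet{bietti2021deep}, which is exactly your first paragraph. Your additional Fano/Varshamov--Gilbert sketch is a standard re-derivation of the cited lower bound and is consistent (block construction, KL $\lesssim n a^2 N$, calibration $K\asymp n^{1/(\alpha s+1)}$, yielding $n^{-\alpha s/(\alpha s+1)}$ with $\alpha=(d+1)/d$), but the paper does not spell it out.
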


\section{Main Result}
\subsection{Convergence Rates in High-Dimensional Settings}\label{sec-sgd-ndr} 
We first present the convergence rates of the excess risk for SGD in high-dimensional settings where $n\asymp d^{\gamma }$. Theorem~\ref{thm-ndr-easy} shows that SGD with exponentially decaying step size is efficient for well-specified problems, and Theorem~\ref{thm-ndr-hard} shows that SGD with averaged iterates is efficient for mis-specified problems.
\begin{theorem}[Convergence Rate for Well-Specified Problems]\label{thm-ndr-easy}

  In high-dimensional settings, where $n$ is bounded by $c_1 d^\gamma < n < c_2 d^\gamma$ for some fixed $\gamma > 0$ and constants $c_1, c_2$, consider $\mathcal{X} = \mathbb{S}^d$. The marginal distribution $\rho_{\mathcal{X}}$ is assumed to be the uniform distribution on $\mathbb{S}^d$. Let $K$ denote the dot-product kernel on $\mathbb{S}^d$, which satisfies Assumption~\ref{ass-dotkernel}.
     Given $s\ge 1$, supposing that Assumption~\ref{ass-source-a} holds with $s$, and treating $\gamma,\sigma,\kappa,c_1,c_2$ and $s$ as constants, the excess risk of the output of SGD with exponentially decaying step size $f^{dec}_n$ satisfies:
     
        (i) When $\gamma \in (ps+p,ps+p+s]$ for some $p \in \mathbb N$, with initial step size $\eta_0 = \Theta(d^{-\gamma+p} \log_2 n \ln d)$, there exists a constant $d_0$ such that for any $d \ge d_0$, we have:
        \begin{equation}\nonumber
            \mathbb{E}\left[\|f^{dec}_n-f_\rho^*\|_{L^2}^2\right] \lesssim d^{-\gamma+p}\log_2^2 d.
        \end{equation}
        (ii) When $\gamma \in (ps+p+s,(p+1)s+p+1]$ for some $p \in \mathbb N$, with initial step size $\eta_0 = \Theta(d^{-\gamma+p} \log_2 n \ln d)$, there exists a constant $d_0$ such that for any $d \ge d_0$, we have:
        \begin{equation}\nonumber
           \mathbb{E}\left[\|f^{dec}_n-f_\rho^*\|_{L^2}^2\right]\lesssim d^{-(p+1)s}.
        \end{equation}
\end{theorem}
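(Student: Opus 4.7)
The plan is a bias--variance split for the SGD iterate analyzed stage by stage, combined with the block decomposition of the inner product kernel spectrum on $\mathbb{S}^d$ into spherical harmonic degrees. Write $f^{dec}_n - f_\rho^* = \mathcal{B}_n + \mathcal{V}_n$, where $\mathcal{B}_n$ is the noiseless iterate started at $-f_\rho^*$ and $\mathcal{V}_n$ is the purely stochastic iterate started at $0$ driven by the residuals $\Xi_{(\mathbf{x}_t,y_t)}$; Assumption~\ref{ass-noise} annihilates the cross term, so $\mathbb{E}\|f^{dec}_n - f_\rho^*\|_{L^2}^2 = \|\mathcal{B}_n\|_{L^2}^2 + \mathbb{E}\|\mathcal{V}_n\|_{L^2}^2$. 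Lifting each single iteration to a semidefinite inequality on the operator second moment via Property~\ref{pro-kernel}(c), within stage $\ell$ (length $m \asymp n/\log_2 n$, step $\eta_\ell = \eta_0/2^{\ell-1}$) the second moment contracts by approximately $(I - \eta_\ell \boldsymbol{\Sigma})^{2m}$ up to an additive stage-noise injection of order $\eta_\ell^2\,\boldsymbol{\Sigma}$ applied to the current second moment.

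For the bias, iterating over all $L=\lceil \log_2 n\rceil$ stages and invoking the source condition $f_\rho^* = T^{s/2} g$ with $\|g\|_{L^2}\le 1$ reduces the $L^2$ error to the scalar quantity $\sup_{\lambda\ge 0}\lambda^{s+1}\prod_{\ell=1}^{L}(1-\eta_\ell\lambda)^{2m}$. The mechanism that defeats saturation is the geometric spacing of $\{m\eta_\ell\}$, spanning $m\eta_0$ down to $O(1)$: for every spectral scale $\lambda$ there is a ``matching'' stage $\ell^\star(\lambda)$ at which $m\eta_{\ell^\star}\lambda = \Theta(1)$, while earlier stages still contract and later ones preserve. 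This multi-scale contraction has effectively infinite qualification and bounds that supremum by $(m\eta_0)^{-(s+1)}$ uniformly in $s\ge 1$, in contrast to the qualification ceiling $s\le 2$ of constant-step or averaged schemes.

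The variance telescopes across stages to $\mathbb{E}\|\mathcal{V}_n\|_{L^2}^2 \lesssim \sigma^2\sum_\ell \eta_\ell\,\operatorname{tr}\bigl(\boldsymbol{\Sigma}^2\prod_{j>\ell}(I-\eta_j\boldsymbol{\Sigma})^{2m}\bigr)$, which I would bound by decomposing the trace along the spherical harmonic blocks. Under Assumption~\ref{ass-dotkernel}, the Funk--Hecke expansion implies $\mu_k\asymp d^{-k}$ with multiplicity $N(d,k)\asymp d^k$ for all $k\le \lfloor \gamma\rfloor+3$. With the prescribed $\eta_0\asymp d^{-(\gamma-p)}\log_2 n\,\ln d$, the threshold $\lambda^\star\asymp 1/(m\eta_0)\asymp d^{-p}/\ln d$ sits between $\mu_{p+1}$ and $\mu_p$, so the effective degrees of freedom are $\sum_{k\le p}N(d,k)\asymp d^p$, yielding variance $\lesssim d^{p-\gamma}\log^2 d$. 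Balancing against the ``tail'' bias contribution $\sum_{k\ge p+1}\mu_k^{s+1}N(d,k)\asymp d^{-(p+1)s}$ produces case (i) when $\gamma\le (s+1)p+s$ (variance dominates, giving $d^{p-\gamma}\log^2 d$) and case (ii) when $\gamma>(s+1)p+s$ (bias tail dominates, giving $d^{-(p+1)s}$).

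I expect the principal obstacle to be the bias step: controlling $\sup_\lambda \lambda^{s+1}\prod_\ell(1-\eta_\ell\lambda)^{2m}$ must \emph{genuinely} exploit the geometric spacing, since the crude collapse $\sum_\ell\eta_\ell\lesssim \eta_0$ would revert to a constant-step analysis and reinstate saturation. A secondary subtlety is that the stage-noise injected into the variance recursion is further contracted by all later stages, so a careful telescoping is needed to ensure the dominant contribution is only the last-stage step size times the effective dimension rather than a cumulative $O(L)$ amplification. Finally, the eigenvalue estimates from Assumption~\ref{ass-dotkernel} must be sharp enough to cleanly separate the degree-$p$ and degree-$(p+1)$ blocks at the threshold $\lambda^\star$, since this is exactly the scale at which the two regimes of the theorem interlock and where the logarithmic tuning factor $\ln d$ in $\eta_0$ is chosen.
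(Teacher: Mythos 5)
Your high-level skeleton (bias--variance split, first-stage contraction, threshold at $\lambda^\star \asymp 1/(m\eta_0)$, effective dimension $d^p$, balancing to get the two cases) matches the paper's, and the proposed operator second-moment recursion is a legitimate alternative bookkeeping to the paper's further split of each of $f_n^b, f_n^v$ into a ``population'' part and a ``residual'' part $f_n^{(\cdot)(1)}$ driven by $\boldsymbol{\Sigma}-K_{\mathbf{x}}\otimes K_{\mathbf{x}}$. However, two of your central claims in the bias step are wrong, and a third would fail under the paper's assumptions.

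\textbf{The bias cannot be bounded by $(m\eta_0)^{-(s+1)}$ (nor by $(m\eta_0)^{-s}$; the exponent $\lambda^{s+1}$ should be $\lambda^s$).} With $m\eta_0 \asymp d^p\ln d$, that bound is $d^{-ps}\,\mathrm{poly}(\ln d)^{-1}$, which is off by a full factor $d^{-s}$ from the target $d^{-(p+1)s}$ in case~(ii). The paper's Lemma~\ref{lem:dec-pop-bias upper bound} gives exactly that coarse bound, but the theorem's proof does \emph{not} stop there: it takes the supremum over the \emph{discrete} spectrum of $K$, splitting at $k^*$ with $\lambda_{k^*}\asymp d^{-p}$ and $\lambda_{k^*+1}\asymp d^{-p-1}$. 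For $k\le k^*$ the $\ln d$ factor in $\eta_0$ forces $\exp(-2m\eta_0\lambda_k)\lesssim \exp(-2C\log_2 n)\lesssim d^{-(p+1)s}$ once $2C>(p+1)s/\gamma$, and for $k>k^*$ one simply has $\lambda_k^s\le\lambda_{k^*+1}^s\lesssim d^{-(p+1)s}$; there is nothing in between because the dot-product kernel's spectrum has a hole of width $\Theta(d)$ between consecutive degrees. Your final paragraph anticipates exactly this (``the degree-$p$ and degree-$(p+1)$ blocks must separate at $\lambda^\star$; this is where the $\ln d$ factor is chosen''), but the explicit bound $(m\eta_0)^{-(s+1)}$ that you actually state is simply not the quantity that appears, and it is not strong enough.

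\textbf{The ``geometric spacing defeats saturation'' narrative is not what the paper does, and as stated it is also incorrect.} The paper bounds $\prod_{i=1}^n(\boldsymbol{I}-\eta_i\boldsymbol{\Sigma})\preceq(\boldsymbol{I}-\eta_0\boldsymbol{\Sigma})^m$ by dropping every stage but the first; the bias control is entirely a single-stage exponential filter, which already has unbounded qualification. The later, geometrically shrinking stages are there to tame the \emph{variance}: they roll the stochastic noise injected in earlier stages through an exponentially smaller accumulation, which is what buys you the $k^*\log_2^2 n/n$ effective-dimension rate in Lemma~\ref{lem:dec-pop-variance upper bound}. Your claim that ``constant-step or averaged schemes'' have qualification ceiling $s\le 2$ is only half true: constant-step \emph{last-iterate} has filter $(1-\eta\lambda)^n$ and no bias saturation at all; the saturation is specific to iterate averaging (whose filter is $\frac{1-(1-\eta\lambda)^n}{n\eta\lambda}$, which decays like $1/\lambda$ rather than exponentially, c.f.\ Lemma~\ref{lem:avg lower bound}). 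Lumping them together misreads what the decaying schedule buys.

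\textbf{The cross-term cancellation you invoke requires $\mathbb{E}[\Xi|\mathbf{x}]=0$, which is not assumed.} Assumption~\ref{ass-noise} only asserts the unconditional mean $\mathbb{E}[\Xi]=0$ (the remark after it is explicit about not requiring the conditional version), and both $f_n^b$ and $f_n^v$ are functions of the same covariates $\mathbf{x}_1,\dots,\mathbf{x}_n$, so $\mathbb{E}\langle f_n^b,\boldsymbol{\Sigma} f_n^v\rangle$ need not vanish. The paper sidesteps this with a Minkowski/Young inequality costing a factor $2$ (Lemma~\ref{lem:bias-variance decomposition}); your exact equality $\mathbb{E}\|f^{dec}_n-f_\rho^*\|_{L^2}^2=\|\mathcal{B}_n\|_{L^2}^2+\mathbb{E}\|\mathcal{V}_n\|_{L^2}^2$ is not justified. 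This is minor, but the bias issue above is not: as written, your argument produces an upper bound that is polynomially weaker than the theorem statement whenever $p<s$, including the basic case $p=0$.
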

Compared to the minimax lower bound in Proposition~\ref{minmax-high}, Theorem~\ref{thm-ndr-easy} demonstrates that SGD with exponentially decaying step size can achieve optimality for any $\gamma > 0$ and $s \geq 1$. Notably, this theorem illustrates that SGD with exponentially decaying step size does not suffer from the saturation effect encountered by KRR when $n\asymp d^{\gamma }$~\citep{neubauer1997converse,bauer2007regularization}. Specifically, when $n \asymp d^{\gamma}$, \citet{lu2024on} shows that if Assumption~\ref{ass-source-a} holds with $s>1$, the convergence rate of the excess risk for KRR can be bounded from below by 
\begin{equation}\nonumber
\begin{aligned}
    &\mathbb{E} \left ( \left \| \hat{f}^{\mathrm{KRR} }_{\lambda_*}-f_{\rho}^{*}  \right \|^2_{L^2}  \right )\\ =&\Theta _{\mathbb{P}}\left ( d^{-\min \left \{ \gamma-p,\frac{\gamma-p+p\tilde{s}+1 }{2} ,\tilde{s}\left ( p+1 \right )  \right \} } \right )\cdot poly\left ( \mathrm{ln}(d)  \right )  ,
    \end{aligned}
\end{equation}
where $\tilde{s}=\min\left \{ s,2 \right \} $, and no choice of the regularization parameter  $\lambda=\lambda(d,n)$ leads to an improvement in the excess risk.
This result suggests that when $n \asymp d^{\gamma}$ and $s > 1$, there exists a region in which KRR fails to achieve optimality. Therefore, for high-dimensional problems, SGD with an exponentially decaying step size matches or outperforms KRR in well-specified problems.

\begin{theorem}[Convergence Rate for Mis-Specified Problems]\label{thm-ndr-hard}

     In high-dimensional settings, where $n$ is bounded by $c_1 d^\gamma < n < c_2 d^\gamma$ for some fixed $\gamma > 0$ and constants $c_1, c_2$, consider $\mathcal{X} = \mathbb{S}^d$. The marginal distribution $\rho_{\mathcal{X}}$ is assumed to be the uniform distribution on $\mathbb{S}^d$. Let $K$ denote the dot-product kernel on $\mathbb{S}^d$, which satisfies Assumption~\ref{ass-dotkernel}.
     Given $0 < s < 1$, supposing that Assumption~\ref{ass-source-a} holds with $s$, and treating $\gamma,\sigma,\kappa,c_1,c_2$ and $s$ as constants, the excess risk of the output of SGD with averaged iterates $f^{avg}_n$ satisfies:
     
        (i) When $\gamma \in (ps+p,ps+p+s]$ for some $p \in \mathbb N$, with initial step size $\eta_0 = \Theta(d^{-\gamma+p+\frac{s}{2}})$ ($p>0$) or $\eta_0=\Theta(d^{-\frac{\gamma}{2}})$ ($p=0$), there exists a constant $d_0$ such that for any $d \ge d_0$, we have:
        \begin{equation}\nonumber
            \mathbb{E}\left[\|f^{avg}_n-f_\rho^*\|_{L^2}^2\right] \lesssim d^{-\gamma+p}.
        \end{equation}
        (ii) When $\gamma \in (ps+p+s,(p+1)s+p+1]$ for some $p \in \mathbb N$, with initial step size $\eta_0 = \Theta(d^{-\gamma+p+\frac{s}{2}})$, there exists a constant $d_0$ such that for any $d \ge d_0$, we have:
        \begin{equation}\nonumber
            \mathbb{E}\left[\|f^{avg}_n-f_\rho^*\|_{L^2}^2\right]\lesssim d^{-(p+1)s}.
        \end{equation}
\end{theorem}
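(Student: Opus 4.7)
The plan is to combine a bias--variance decomposition for constant--step averaged SGD with the spherical--harmonic spectral structure of the dot--product kernel guaranteed by Assumption~\ref{ass-dotkernel}. Following the RKHS analysis of averaged SGD in~\citet{bach2013non,dieuleveut2016nonparametric} together with the two--regime variance bound developed for over--parameterized linear regression in~\citet{zou2021benign,wu2022last}, the first step is to establish the template
\[
\bbE\bigl\|f^{avg}_n-f_\rho^*\bigr\|_{L^2}^2 \;\lesssim\; \underbrace{\sum_k \min\!\bigl(1,(n\eta_0\mu_k)^{-2}\bigr)\,\|\Pi_k f_\rho^*\|_{L^2}^2}_{\mathrm{Bias}} \;+\; \underbrace{\sigma^2\!\left(\tfrac{k^*}{n}+n\eta_0^2\!\sum_{i>k^*}\!\lambda_i^2\right)}_{\mathrm{Var}},
\]
where $\Pi_k$ projects onto the degree--$k$ spherical--harmonic subspace, $k^*=\#\{i:\lambda_i>(n\eta_0)^{-1}\}$ is the effective head dimension, and Assumption~\ref{ass-source-a} transfers to the block--wise source inequality $\|\Pi_k f_\rho^*\|_{L^2}^2\le \mu_k^s$ (since $\mu_k^{-s}\|\Pi_k f_\rho^*\|_{L^2}^2\le \|f_\rho^*\|_{[\mathcal{H}]^s}^2\le 1$).

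Next I would substitute the spectral profile from Assumption~\ref{ass-dotkernel}: on $\bbS^d$ the kernel eigenvalues satisfy $\mu_k\asymp d^{-k}$ with multiplicities $N(d,k)\asymp d^k/k!$ for $k\le \lfloor\gamma\rfloor+3$, so $N(d,k)\mu_k\asymp a_k$. With $\eta_0=\Theta(d^{-\gamma+p+s/2})$, the effective scale $(n\eta_0)^{-1}\asymp d^{-p-s/2}$ lies strictly between $\mu_{p+1}$ and $\mu_p$, hence $k^*=\sum_{k\le p}N(d,k)\asymp d^p$. A short summation then gives $\mathrm{Var}\lesssim \sigma^2 d^{-(\gamma-p)}$ (the tail contribution $n\eta_0^2\sum_{k>p}N(d,k)\mu_k^2\asymp d^{-(\gamma-p)+(s-1)}$ is strictly lower order for $s<1$) and $\mathrm{Bias}\lesssim d^{-s(p+1)}$: the learned--mode contribution $\sum_{k\le p}(n\eta_0\mu_k)^{-2}\mu_k^s$ is dominated at $k=p$ by $d^{-s(p+1)}$, and the unlearned--mode contribution $\sum_{k\ge p+1}\mu_k^s$ is dominated at $k=p+1$ also by $d^{-s(p+1)}$. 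In case (i), $\gamma-p\le s(p+1)$, so the variance dominates and yields the claimed $d^{-\gamma+p}$; in case (ii), $\gamma-p>s(p+1)$, so the bias dominates and yields $d^{-s(p+1)}$. The $p=0$ subcase of~(i) is a boundary situation in which the learned--mode bias is vacuous, and I would verify it separately with the alternative step size $\eta_0=\Theta(d^{-\gamma/2})$, which balances the surviving unlearned bias $\mu_1^s\asymp d^{-s}$ against the variance head.

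\textbf{Main obstacle.} The chief difficulty is securing the head--tail variance bound above in place of the naive effective--dimension estimate $\sigma^2\mathcal{N}((n\eta_0)^{-1})/n$; the latter would be inflated by the tail $\sum_{k>p}N(d,k)\mu_k=\Theta(1)$ and yield the strictly worse rate $d^{-(\gamma-p)+s/2}$, off by a factor of $d^{s/2}$. Obtaining the sharper bound requires propagating a second--moment SGD recursion on the full kernel operator while separating head from tail modes --- the head reaching a stationary variance of order $\sigma^2/n$ per coordinate, while the tail accumulates noise only at order $\eta_0$ thanks to the averaging --- under the weak residual assumption $\bbE[\Xi_{(\mathbf{x},y)}\otimes\Xi_{(\mathbf{x},y)}]\preceq\sigma^2\boldsymbol{\Sigma}$ (Assumption~\ref{ass-noise}). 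The stability condition $\eta_0\kappa^2\lesssim 1$ is automatic since $p+s/2<\gamma$ by the definition of $p$. A secondary subtlety is the sharp treatment of the transitional $(p+1)$--st eigenspace, where $\mu_{p+1}/(n\eta_0)^{-1}=\Theta(d^{-1+s/2})$ --- this ratio is precisely what forces the $s/2$ shift in the prescribed step size.
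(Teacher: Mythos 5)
Your proposal follows essentially the same path as the paper: the bias--variance decomposition for constant-step averaged SGD, the head--tail split of the variance over an effective dimension $k^*\asymp d^p$, and the spherical-harmonic spectral profile $\mu_k\asymp d^{-k}$, $N(d,k)\asymp d^k$ all match the paper's derivation, as do your step-size choice, the boundary treatment at $p=0$, and the resulting rates.

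The one piece your template does not make explicit is the residual-bias fluctuation term $\overline{f_n^{b(1)}}=\overline{f_n^b}-\overline{\tilde f_n^b}$. The bias bound you display covers only the population bias $\langle \overline{\tilde f_n^b},\boldsymbol{\Sigma}\,\overline{\tilde f_n^b}\rangle_{\mathcal{H}}$, but for $0<s<1$ we have $f_\rho^*\notin\mathcal{H}$, and controlling the deviation of the empirical bias iterates from their population counterpart is precisely the reason averaging---rather than the decaying-step schedule---is used in this regime. The paper handles this with Lemma~\ref{lem:avg-res-bias upper bound} (a bound of the form $\tfrac{1}{n}\max_k\left(1-(1-\eta_0\lambda_k)^{2n}\right)\lambda_k^{s-1}\,\left\|\boldsymbol{\Sigma}^{\frac{1-s}{2}}(f_0-f_\rho^*)\right\|_{\mathcal{H}}^2$, rooted in \citet{bach2013non}); tracing the exponents shows that for $p\ge 1$ it is dominated by the population bias at $k=p$, and for $p=0$ it is dominated by the variance head, so your final rates come out correct, but the template as stated is incomplete and this step would need to be filled in. A secondary remark: the head--tail variance bound you single out as the main obstacle is obtainable under Assumption~\ref{ass-noise} and Property~\ref{pro-kernel}(c) alone by the elementary eigendecomposition manipulations in Lemmas~\ref{lem:avg-pop-variance upper bound} and~\ref{lem:avg-res-variance upper bound}; you do not need the fourth-moment apparatus of \citet{zou2021benign,wu2022last}, which is designed for a different concentration regime that is not assumed here.
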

Compared to the minimax lower bound in Proposition~\ref{minmax-high}, Theorem~\ref{thm-ndr-hard} demonstrates that SGD with averaged iterates can achieve optimality for any $\gamma > 0$ and $0 < s < 1$. In contrast, when $0 < s \leq 1$ and $n \asymp d^{\gamma}$, \citet{zhang2024optimality} shows that KRR only achieves optimality if $\gamma > \frac{3s}{2(s+1)}$. This indicates that, for high-dimensional problems, SGD has an advantage over KRR when dealing with mis-specified problems.
\subsection{Convergence Rates in Asymptotic Settings}\label{sec-sgd-dfix}
In this section, we present the convergence rates of the excess risk for SGD in asymptotic settings where $n\gg d$. 
\begin{theorem}[Convergence Rate for Well-Specified Problems] \label{thm-nfix-easy}

    In asymptotic settings, where $n \gg d$, consider  $\mathcal{X} = \mathbb S^d$. The marginal distribution $\rho_{\mathcal{X}}$ is assumed to be the uniform distribution on $\mathbb S^d$. Let $K_{\mathrm{NTK}}$ denote the NTK of a ReLU network with $L$ layers with inputs on $\mathbb S^d$. For $s\ge 1$, supposing that Assumption~\ref{ass-source-a} holds for $s$, with initial step size $\eta_0=\Theta\left(n^{\frac{1-s(d+1)}{s(d+1)+d}}\right)$, the excess risk of the output of SGD with exponentially decaying step size $f^{dec}_n$ satisfies:
    \begin{equation}\nonumber
       \mathbb{E}\left[\|f^{dec}_n-f_\rho^*\|_{L^2}^2\right] \lesssim \log_2^s n\cdot n^{-\frac{s(d+1)}{s(d+1)+d} }. 
    \end{equation}
\end{theorem}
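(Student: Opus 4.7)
The plan is to control $\mathbb{E}\bigl[\|f^{dec}_n - f_\rho^*\|_{L^2}^2\bigr]$ by a bias-variance decomposition of the SGD last iterate, exploiting the NTK spectral decay $\lambda_i \asymp i^{-(d+1)/d}$ from (\ref{NTK-eigen-decay}) in a stagewise fashion adapted to the exponentially decaying step size schedule. First I would unroll the recursion $f_t = (I - \eta_t K_{\mathbf{x}_t}\otimes K_{\mathbf{x}_t})f_{t-1} + \eta_t y_t K_{\mathbf{x}_t}$ in $\mathcal{H}$ and write $f^{dec}_n - f_\rho^* = B_n + V_n$, where $B_n$ evolves via $B_t = (I - \eta_t K_{\mathbf{x}_t}\otimes K_{\mathbf{x}_t})B_{t-1}$ with $B_0 = -f_\rho^*$, and $V_n$ is a martingale driven by the residuals $\Xi_{(\mathbf{x}_t,y_t)}$ with $V_0 = 0$. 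Passing to $L^2$-expectations through the orthonormal basis $\{\lambda_i^{1/2}\phi_i\}$ of $\mathcal{H}$, and combining Property~\ref{pro-kernel}(c) with Assumption~\ref{ass-noise}, both terms decouple and are majorized by explicit eigenvalue-indexed functionals: the bias, after taking the worst case over the source-condition ball, by $\sup_i \lambda_i^s \prod_{\ell=1}^{\lceil\log_2 n\rceil}(1-\eta_\ell \lambda_i)^{2m}$, and the variance by $\sigma^2 \sum_i \lambda_i \sum_{t=1}^n \eta_t^2 \prod_{t'>t}(1-\eta_{t'}\lambda_i)^2$.

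For the bias, a direct optimization shows the supremum is attained at $\lambda \sim 1/(m\eta_0)$ with value of order $(m\eta_0)^{-s}$; with $m \asymp n/\log_2 n$ and the prescribed $\eta_0 = \Theta\bigl(n^{(1-s(d+1))/(s(d+1)+d)}\bigr)$, this is exactly $\log_2^s n \cdot n^{-s(d+1)/(s(d+1)+d)}$. The essential feature of the schedule is that this argument is uniform in $s$: since $\{\eta_\ell\}$ spans a geometric range of length $\log_2 n$, every frequency scale $\lambda$ relevant to the problem has a stage in which $\eta_\ell \lambda \sim 1$, so the bias analysis incurs no saturation at $s=2$ of the kind that limits KRR or constant-step averaged SGD. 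For the variance I would partition the eigenmodes into three regimes --- those reaching steady state during the last stage ($\eta_n\lambda_i m \gtrsim 1$), those equilibrating at some intermediate stage, and those never equilibrating --- and accumulate noise per mode using the NTK decay $\lambda_i \asymp i^{-(d+1)/d}$. Under the prescribed $\eta_0$, the variance turns out to be at most of the same order as the bias up to logarithmic factors, so the bias governs the overall rate.

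The main obstacle is the stagewise bookkeeping: the bias bound requires showing that each stage's contraction factor does not re-inflate modes already controlled by earlier stages, and the variance bound requires tracking noise injected at large-$\eta$ stages through the subsequent small-$\eta$ contractions without loss. Concretely, one needs a per-step operator-level second-moment inequality propagating the covariances of $B_t$ and $V_t$, for which Property~\ref{pro-kernel}(c) is essential --- it replaces the fourth-moment concentration hypotheses used in the linear-regression SGD literature and allows the whole argument to survive with only the second-moment Assumption~\ref{ass-noise}. Once this is in place, the remaining work is a deterministic eigenvalue-by-eigenvalue optimization that matches the minimax lower bound in Proposition~\ref{minmax-low}.
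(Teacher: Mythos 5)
Your proposal follows essentially the same route as the paper: decompose $f_n^{dec}-f_\rho^*$ into bias and variance, further split each into its population dynamics (which do diagonalize in the $\boldsymbol{\Sigma}$-eigenbasis) and a residual tracking the deviation of $K_{\mathbf{x}_t}\otimes K_{\mathbf{x}_t}$ from $\boldsymbol{\Sigma}$, control the residuals' second moments via Property~\ref{pro-kernel}(c) (this is exactly your ``per-step operator-level second-moment inequality''), and then optimize over the NTK spectrum with the prescribed $\eta_0$ and effective dimension $k^*=\Theta\left(n^{d/(s(d+1)+d)}\right)$. The only imprecision is that the empirical bias $B_n$ itself does not decouple across eigenmodes; the supremum $\sup_i \lambda_i^s\prod_\ell(1-\eta_\ell\lambda_i)^{2m}$ bounds only the population bias $\tilde f_n^b$, while the deviation $f_n^{b(1)}$ must be bounded separately as a variance-type term (as the paper does in Lemma~\ref{lem:dec-res-bias upper bound}), a step your description folds implicitly into the ``stagewise bookkeeping.''
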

Compared to the minimax lower bound in Proposition~\ref{minmax-low}, Theorem~\ref{thm-nfix-easy} demonstrates that SGD with exponentially decaying step size can achieve optimality for any $s > 1$. This further indicates that in asymptotic settings, SGD with exponentially decaying step size does not encounter the saturation effect. In contrast, \citet{NEURIPS2021_543bec10} and \citet{li2023on} show that if Assumption~\ref{ass-source-a} holds with $s>2$, the convergence rate of the excess risk for KRR is bounded from below by:
\begin{equation}\nonumber
\begin{aligned}
    &\mathbb{E} \left ( \left \| \hat{f}^{\mathrm{KRR} }_{\lambda_*}-f_{\rho}^{*}  \right \|^2_{L^2} \right ) =\Theta _{\mathbb{P}}\left ( n^{-\frac{2d+2}{3d+2} }\right ),
    \end{aligned}
\end{equation}
This result suggests that for $s > 2$, KRR becomes suboptimal.  Consequently, we argue that SGD with exponentially decaying step size is more effective for problems where $f_{\rho}^{*}$ is sufficiently smooth.

\begin{theorem}[Convergence Rate for Mis-Specified Problems]\label{thm-nfix-hard}

   In asymptotic settings, where $n \gg d$, consider  $\mathcal{X} = \mathbb S^d$. The marginal distribution $\rho_{\mathcal{X}}$ is assumed to be the uniform distribution on $\mathbb S^d$. Let $K_{\mathrm{NTK}}$ denote the NTK of a ReLU network with $L$ layers with inputs on $\mathbb S^d$.
   For $\frac{1}{d+1}\le s\le 1$, supposing that Assumption~\ref{ass-source-a} holds for $s$, with initial step size $\eta_0=\Theta\left(n^{\frac{1-s(d+1)}{s(d+1)+d}}\right)$, the excess risk of the output of SGD with averaged iterates $f^{avg}_n$ satisfies:
    \begin{equation}\nonumber
        \mathbb{E}\left[\|f^{avg}_n-f_\rho^*\|_{L^2}^2\right] \lesssim n^{-\frac{s(d+1)}{s(d+1)+d} }. 
    \end{equation}
\end{theorem}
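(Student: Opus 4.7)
\textbf{Proof plan for Theorem~\ref{thm-nfix-hard}.} The strategy is to follow the standard analysis of constant step-size SGD with Polyak--Ruppert averaging in an RKHS (in the spirit of \citet{dieuleveut2016nonparametric}), and to specialize it to the NTK on $\mathbb S^d$ using the polynomial eigendecay in~\eqref{NTK-eigen-decay}. Writing $g_t = f_t - f^*_\rho$ and $\Xi_t = (y_t - f^*_\rho(\mathbf x_t))K_{\mathbf x_t}$, Assumption~\ref{ass-noise} gives $\mathbb E[\Xi_t]=0$ and the linear stochastic recursion
\begin{equation*}
g_t = \bigl(I - \eta_0\, K_{\mathbf x_t}\otimes K_{\mathbf x_t}\bigr)g_{t-1} + \eta_0\, \Xi_t,
\end{equation*}
so that by linearity $f_n^{avg}-f^*_\rho = \bar g_n = \bar g_n^{\mathrm{bias}} + \bar g_n^{\mathrm{var}}$, where the bias part evolves with $\Xi_t\equiv 0$ from $g_0 = -f^*_\rho$, and the variance part is driven solely by $\Xi_t$ from $g_0 = 0$.

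Set $\mu := 1/(n\eta_0)$. For the bias part, the source condition $\|f^*_\rho\|_{[\mathcal H]^s}\le 1$ combined with operator-calculus arguments on the averaged semigroup $\frac{1}{n}\sum_{t=0}^{n-1}(I-\eta_0 T)^t$ and the estimate $\tr\bigl(T^{s+1}(T+\mu I)^{-2}\bigr)\lesssim \mu^{s-1}$ for $0<s\le 1$ yields $\mathbb E\|\bar g_n^{\mathrm{bias}}\|_{L^2}^2 \lesssim \mu^{s}$, provided the contractivity condition $\eta_0\kappa^2 \le 1$ holds, which follows from Property~\ref{pro-kernel}(a) once $\eta_0$ is small enough. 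For the variance part, Assumption~\ref{ass-noise} together with Property~\ref{pro-kernel}(c) (controlling the fourth moment of $K_{\mathbf x}$) gives $\mathbb E\|\bar g_n^{\mathrm{var}}\|_{L^2}^2 \lesssim \frac{\sigma^2}{n}\,\mathcal N(\mu)$, where $\mathcal N(\mu):=\sum_i \lambda_i/(\lambda_i+\mu)$ is the effective dimension.

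In the asymptotic regime $n\gg d$, the NTK eigendecay~\eqref{NTK-eigen-decay} gives $\lambda_j\asymp j^{-(d+1)/d}$, hence $\mathcal N(\mu)\asymp \mu^{-d/(d+1)}$. Balancing $\mu^{s}$ against $\sigma^2\mu^{-d/(d+1)}/n$ pins down $\mu^\star = n^{-(d+1)/(s(d+1)+d)}$, equivalently $\eta_0 = \Theta\bigl(n^{(1-s(d+1))/(s(d+1)+d)}\bigr)$, and both bias and variance become $\Theta\bigl(n^{-s(d+1)/(s(d+1)+d)}\bigr)$, which is exactly the stated upper bound and simultaneously matches the lower bound in Proposition~\ref{minmax-low}.

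The hard part is twofold. \emph{First}, the stability condition $\eta_0\kappa^2 \le 1$ must hold at the optimal step size; this is precisely where the lower assumption $s\ge 1/(d+1)$ is used, since with $\alpha = (d+1)/d$ one has $(\alpha-1)/\alpha = 1/(d+1)$, and any strictly smaller $s$ would force $\eta_0\to\infty$ with $n$ and break the contractivity of the stochastic operator in expectation --- explaining the boundary observed in the theorem. \emph{Second}, the bias analysis must propagate the source-condition regularity $s<1$ through the averaged semigroup sharply in the $L^2$ topology rather than in $\mathcal H$; because $f^*_\rho\notin\mathcal H$ in the truly misspecified regime, one has to pass through the interpolation operator $T^{s/2}$ and obtain tight control of $\tr(T^{s+1}(T+\mu I)^{-2})$. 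This is also the place where averaging, as opposed to the exponentially decaying schedule used in Theorem~\ref{thm-nfix-easy}, plays an essential role, since the averaging kernel supplies the extra $\mu^{-1}$ factor needed to convert the effective-dimension integral into the sharp bias rate $\mu^s$ for $s\le 1$.
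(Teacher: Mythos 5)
Your plan matches the paper's approach at a structural level: a bias--variance decomposition for constant step-size SGD with Polyak averaging, further split into population and residual parts, with the variance controlled by an effective-dimension quantity and the residual terms controlled via Property~\ref{pro-kernel}(c). You correctly identify that the optimal $\mu := 1/(n\eta_0)$ is found by balancing $\mu^s$ against $\sigma^2\mathcal N(\mu)/n$ under the NTK decay $\lambda_j\asymp j^{-(d+1)/d}$, and your explanation of where $s\ge 1/(d+1)$ enters --- namely that the required $\eta_0 = \Theta(n^{(1-s(d+1))/(s(d+1)+d)})$ would grow with $n$ for smaller $s$ and violate the step-size constraint $\eta_0 \le \min\{1/\kappa^2,1/\lambda_1\}$ --- is precisely the condition the paper checks when it notes $\eta_0\lesssim 1$.

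One intermediate step in your bias sketch does not hold as stated. You claim $\tr\bigl(T^{s+1}(T+\mu I)^{-2}\bigr)\lesssim \mu^{s-1}$ and that this yields the population bias $\lesssim\mu^s$. Under $\lambda_j\asymp j^{-(d+1)/d}$ that trace actually scales like $\mu^{\,s-1-d/(d+1)}$, not $\mu^{s-1}$, and more to the point the source condition is a worst-case constraint over a unit ball, so the bias is controlled by an operator norm, not a trace. The correct estimate --- which is what the paper's Lemma~\ref{lem:avg-pop-bias upper bound} (following \citet{dieuleveut2016nonparametric}) uses --- is
\begin{equation*}
\left\|\frac{1}{n}\sum_{t=0}^{n-1}(I-\eta_0 T)^t\,T^{s/2}\right\|^2
= \sup_{\lambda\in\sigma(T)}\left(\frac{1-(1-\eta_0\lambda)^n}{n\eta_0\lambda}\right)^2\lambda^s
\lesssim (n\eta_0)^{-s} = \mu^s,
\end{equation*}
with the supremum near $\lambda\asymp\mu$. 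Replacing your trace step by this operator-norm bound, your plan is sound and follows essentially the same route as the paper.
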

Compared to the minimax lower bound in Proposition~\ref{minmax-low}, Theorem~\ref{thm-nfix-hard} demonstrates that SGD with averaged iterates can achieve optimality for $\frac{1}{d+1}\le s\le 1$. Unfortunately, in asymptotic settings, SGD with averaged iterates cannot achieve optimality for all mis-specified problems. This is because, in the asymptotic regime, the eigenvalues of the NTK exhibit polynomial decay, making optimization more challenging for $f_{\rho}^{*}$ in mis-specified problems. Due to the limitations in SGD's optimization capabilities, optimality can only be achieved within the range $s\ge \frac{1}{d+1}$. To achieve optimality over a broader range of problems, the introduction of other techniques such as momentum~\citep{dieuleveut2017harder} or multi-pass strategies~\citep{pillaud2018statistical} is necessary.
The differences in generalization performance of SGD in high-dimensional and asymptotic settings highlight the role of overparameterization.

We provide a graphical illustration of Theorem~\ref{thm-ndr-easy} and Theorem~\ref{thm-ndr-hard} and a summary table of the optimality regions of SGD, KRR, and GD with early stopping in Appendix~\ref{App: Graph}. 

\begin{figure*}[htbp]
    \centering
    \begin{subfigure}{0.32\textwidth}
        \centering
        \includegraphics[width=\linewidth]{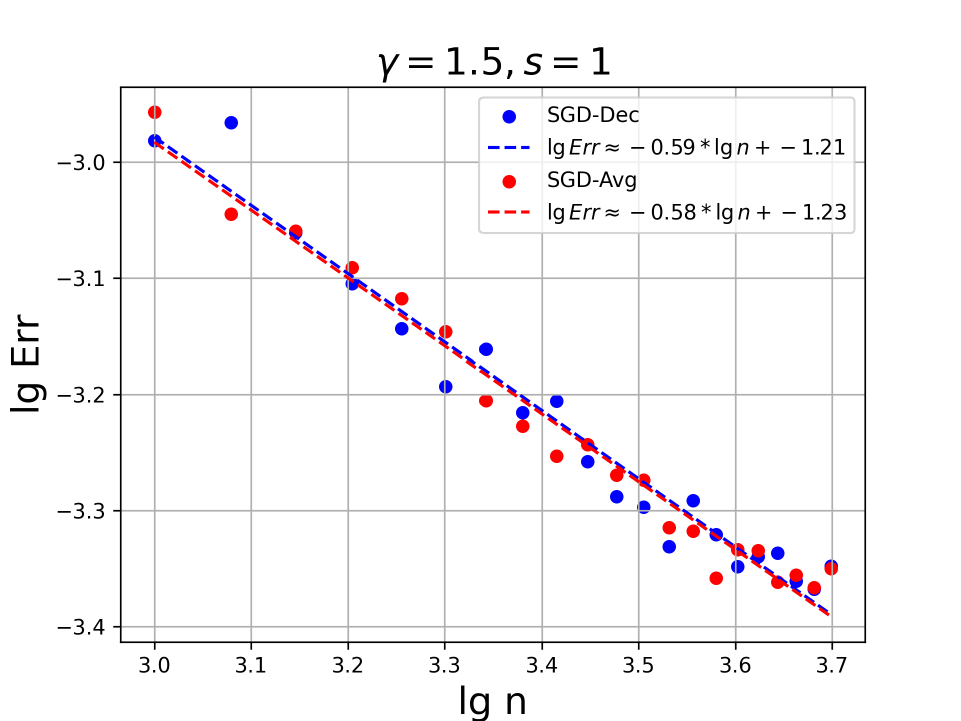}
        \caption{$d=n^{1.5},s=1$}
    \end{subfigure}
    \hfill
    \begin{subfigure}{0.32\textwidth}
        \centering
        \includegraphics[width=\linewidth]{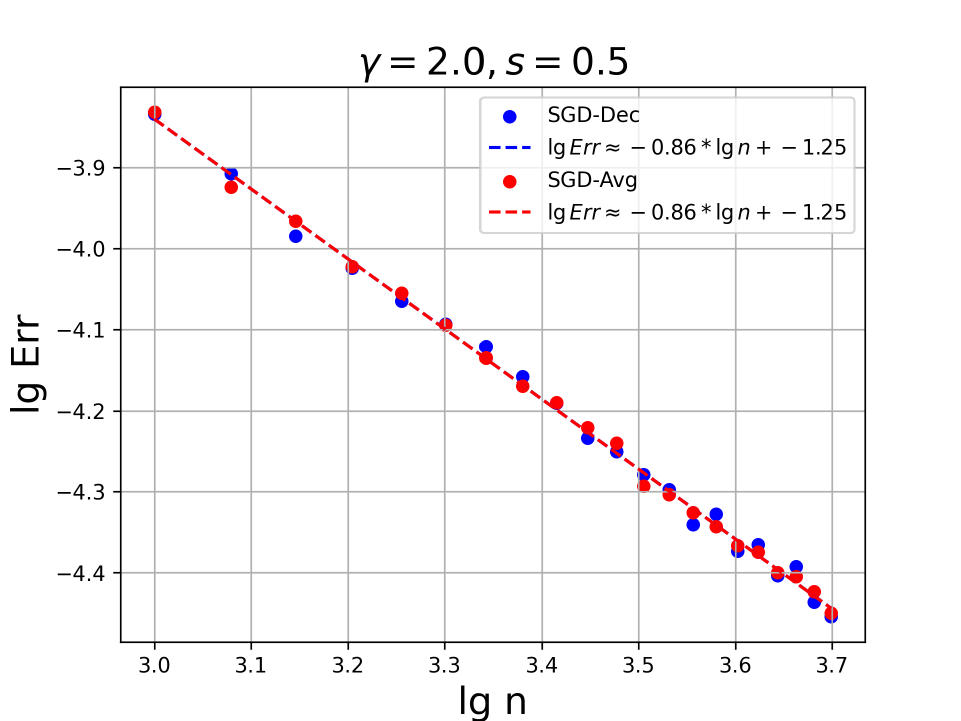}
        \caption{$d=n^2,s=0.5$}
    \end{subfigure}
    \hfill
    \begin{subfigure}{0.32\textwidth}
        \centering
        \includegraphics[width=\linewidth]{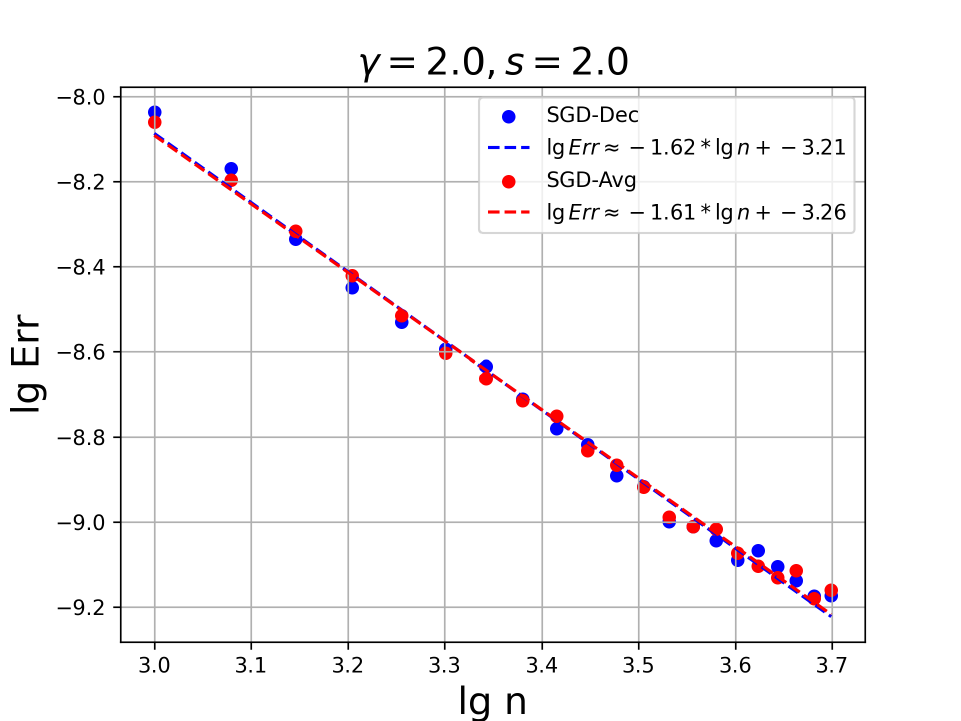}
        \caption{$d=n^2,s=2$}
    \end{subfigure}
    \caption{Log excess risk decay curves of SGD with the two schedules. The blue curves represent SGD with exponentially decaying step size schedule, and the red curves represent SGD with constant step size and averaged iterates.}
    \label{Fig: experiment}
\end{figure*}

\section{Discussion}
\subsection{SGD is Efficient for Well-Specified Problems}
As demonstrated in Section~\ref{sec-sgd-ndr}, SGD with exponentially decaying step size achieves optimality for all well-specified problems in both high-dimensional and asymptotic settings. This indicates that SGD with exponentially decaying step size is not subject to the saturation effect encountered by offline algorithms such as KRR. The saturation effect refers to a situation where the algorithm fails to reach optimality when $f_{\rho}^{*}$ is sufficiently smooth\citep{neubauer1997converse,bauer2007regularization,dicker2017kernel,NEURIPS2021_543bec10,li2023on}. Specifically, this occurs when source condition holds with a large value of $s$, the algorithm results in suboptimal performance. For instance, KRR is suboptimal for $s>1$ in high-dimensional settings~\citep{zhang2024optimal1} and $s>2$ in asymptotic settings~\citet{li2023on}.

The theoretical efficiency of SGD with exponentially decaying step size for well-specified problems arises from the implicit regularization effect induced by the exponentially decaying step size. For well-specified problems, the excess risk of SGD with exponentially decaying step size can be approximated as the sum of the bias and variance terms in the population, which are 
\begin{equation}\nonumber
    \begin{aligned}
        \mathrm{Bias} &\approx \left\|\left(\prod_{i=1}^{n}(\boldsymbol{I} -\eta_i\boldsymbol{\Sigma})\right)\boldsymbol{\Sigma}^\frac{1}{2}(f_0-f_\rho^*)\right\|_\mathcal{H}^2,\\
         \mathrm{Var}\ &\approx \sigma^2\sum_{i=1}^{n}\eta_i^2\mathrm{tr}\left(\left(\prod_{j=i+1}^{n}(\boldsymbol{I} -\eta_j\boldsymbol{\Sigma})^2\right)\boldsymbol{\Sigma}^2\right).
    \end{aligned}
\end{equation}
In the first phase, a constant step size induces exponential shrinkage of the coefficients along the top eigendirections with large eigenvalues. The subsequent decaying step sizes reduce the variance and ensure that the bias does not increase further. Since the  coefficients of $f_{\rho}^{*}$ along the eigendirections in the well-specified setting exhibit a decaying trend with respect to the eigenvalues, SGD implicitly balances the remaining bias and the resulting variance, enabling it to achieve optimality. 
In contrast, the explicit regularization added in offline  algorithms cannot eliminate the impact of the coefficients corresponding to the large eigenvalues on the bias. As a result, these algorithms fail to achieve optimality when $f_{\rho}^{*}$ is sufficiently smooth. 
\citet{velikanov2024generalization} and \citet{lu2024on} further analyze the offline algorithms constructed using certain analytic filter functions, which are characterized by a qualification $\tau \geq 1$. For high-dimensional settings, the saturation effect of offline algorithms arises when $s \geq \tau$. For asymptotic settings, the saturation effect arises when $s \geq 2\tau$. KRR corresponds to the case where $\tau = 1$ and offline algorithms do not exhibit saturation effects only when $\tau = \infty$, which corresponds to the kernel gradient flow with early stopping. Consequently, when dealing with problems where $f_{\rho}^{*}$ is relatively smooth, SGD outperforms most offline algorithms. 

It is worth noting that the implicit regularization effect of SGD allows the excess risk convergence analysis to depend solely on the boundedness of the kernel. This contrasts with offline algorithms, which typically require matrix concentration results specific to the kernels in use. In fact, as demonstrated in Proposition~\ref{pro-sgddecany} in the appendix, we establish a general convergence rate for the excess risk of SGD with exponentially decaying step sizes, which applies to any bounded kernel. It is straightforward to verify that for any RKHS associated with a bounded kernel whose eigenvalues exhibit polynomial decay, SGD with exponentially decaying step size can achieve optimal performance for well-specified problems.
Given this implicit regularization effect exhibited by SGD in well-specified problems, it is reasonable to expect that SGD will also perform effectively when $f_{\rho}^{*}$ is a smooth function in a Sobolev space. As discussed in \citet{steinwart2012mercer,fischer2020sobolev}, consider RKHS $\mathcal{H}$ as the Sobolev space $H^r(\mathcal{X})$ of order $r > \frac{d}{2}$ where $\mathcal{X} \subseteq \mathbb{R}^d$ is a bounded domain with a smooth boundary. The integral operator of the associated kernel admits eigenvalues that polynomially decay as $\lambda_i\asymp i^{-\frac{2r}{d}}$~\citep{28b3576e-da58-3870-9c54-add05b35b7f7}. Under such spectral decay, SGD is expected to attain the optimal rate  $n^{-\frac{2r}{2r+d}}$. Moreover, it holds that $\left[\mathcal{H}\right]^s \cong H^{rs}(\mathcal{X})$. Given this equivalence, we expect SGD to exhibit similar regularization benefits for Sobolev smooth functions.

\subsection{Comparison of Two Types of SGD} 
\textbf{Iterate averaging is effective for mis-specified problems.} 
We primarily use SGD with averaged iterates for mis-specified problems because we only have Property~\ref{pro-kernel}. (c) of the kernel. When analyzing the error of the SGD algorithm using the common bias-variance decomposition, it is essential to estimate the residuals between the actual bias $f_{t}^{b}$ and the population-level bias $\tilde{f} _{t}^{b}$, where
\begin{equation}\nonumber
    \begin{aligned}
      f_{t}^{b}&=\left ( \mathbf{I}-\eta_{t}K_{\mathbf{x}_{t} }\otimes K_{\mathbf{x}_{t} }  \right ) f _{t-1}^{b},\ f_0^{b}=f_{\rho}^{*} ,\\
        \tilde{f} _{t}^{b}&=\left ( \mathbf{I}-\eta_{t}\boldsymbol{\Sigma}  \right ) \tilde{f} _{t-1}^{b},\ \tilde{f} _{0}^{b}=f_{\rho}^{*} .
    \end{aligned}
\end{equation}
For mis-specified problems, when only Property 1 holds, the error for the residual $\left \| \tilde{f} _{t}^{b}-f _{t}^{b} \right \| _2^2$ may not converge. 
Fortunately, the following property from \citet{bach2013non} supports the convergence of the residuals when using constant step size and averaged iterates.
\begin{proposition}
    Let $g_t=\tilde{f} _{t}^{b}-f _{t}^{b}$, $r_t=\left ( K_{\mathbf{x}_{t} }\otimes K_{\mathbf{x}_{t} }-\boldsymbol{\Sigma} \right ) g_{t-1}$, the following inequality holds:
    \begin{equation}\nonumber
        \begin{aligned}
            \mathbb{E} \left \| g_{t-1} \right \| _{\mathcal{H} }^2\le &\frac{1}{2\eta (1-\eta \kappa ^2)} \left ( \mathbb{E} \left \| g_{t-1} \right \| _{L^2}^2-\mathbb{E} \left \| g_{t} \right \| _{L^2}^2  \right. \\
        & \left. +2\eta^2\mathbb{E} \left \| r_{t} \right \| _{L^2}^2 \right ) ,
        \end{aligned}
    \end{equation}
    which implies that $\mathbb{E} \left \| \bar{g} _n \right \| _{\mathcal{H} }^2\le \frac{1}{1-\eta \kappa ^2} \frac{\eta }{\kappa } \mathbb{E}\left [ \sum_{i=1}^n \left \| r_i \right \|_{L^2}^2 \right ]$.
\end{proposition}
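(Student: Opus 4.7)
The plan is to run a Bach--Moulines style martingale analysis of the recursion for $g_t$, followed by telescoping and Jensen's inequality. Using $f_t^b=(\mathbf I-\eta K_{\mathbf x_t}\otimes K_{\mathbf x_t})f_{t-1}^b$ and $\tilde f_t^b=(\mathbf I-\eta\boldsymbol{\Sigma})\tilde f_{t-1}^b$, I would first rewrite the one-step evolution of $g_t$ as
\begin{equation*}
g_t=(\mathbf I-\eta\boldsymbol{\Sigma})g_{t-1}-\eta r_t,
\end{equation*}
and verify that $r_t$ is a martingale difference with respect to $\mathcal F_{t-1}$: since $\mathbb E[K_{\mathbf x_t}\otimes K_{\mathbf x_t}]=\boldsymbol{\Sigma}$ and $g_{t-1}$ is $\mathcal F_{t-1}$-measurable, $\mathbb E[r_t\mid\mathcal F_{t-1}]=0$.

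Next, I would compute $\mathbb E\|g_t\|^2$ by squaring the recursion, taking expectation, and using the martingale property to kill the cross term $2\eta\langle(\mathbf I-\eta\boldsymbol{\Sigma})g_{t-1},r_t\rangle$. This leaves $\mathbb E\|g_t\|^2=\mathbb E\langle g_{t-1},(\mathbf I-\eta\boldsymbol{\Sigma})^2 g_{t-1}\rangle+\eta^2\mathbb E\|r_t\|^2$. Expanding $(\mathbf I-\eta\boldsymbol{\Sigma})^2=\mathbf I-2\eta\boldsymbol{\Sigma}+\eta^2\boldsymbol{\Sigma}^2$ and invoking Property~\ref{pro-kernel}(a) in the form $\boldsymbol{\Sigma}^2\preceq\kappa^2\boldsymbol{\Sigma}$ absorbs the $\eta^2\boldsymbol{\Sigma}^2$ tail into the $-2\eta\boldsymbol{\Sigma}$ contraction under the standing assumption $\eta\kappa^2\le 1$. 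Using the identity $\langle g,\boldsymbol{\Sigma}g\rangle_{\mathcal H}=\|g\|_{L^2}^2$ to re-express inner products then rearranges into exactly the per-step inequality claimed in the proposition.

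For the averaged-iterate consequence, I would sum the per-step inequality from $t=1$ to $n$. Because $g_0=0$, the telescoping norm differences collapse to a nonpositive residual $-\mathbb E\|g_n\|_{L^2}^2$, giving $\sum_{t=1}^n\mathbb E\|g_{t-1}\|_{\mathcal H}^2\le\frac{\eta}{1-\eta\kappa^2}\sum_{t=1}^n\mathbb E\|r_t\|_{L^2}^2$. A Jensen/Cauchy--Schwarz estimate $\|\bar g_n\|_{\mathcal H}^2\le\frac{1}{n}\sum_{t=1}^n\|g_{t-1}\|_{\mathcal H}^2$ then produces the stated bound on $\mathbb E\|\bar g_n\|_{\mathcal H}^2$ (modulo how the leading scalar factor is written).

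The main obstacle is the cross-term cancellation, which relies on $r_t$ being conditionally centered. This is delicate because $r_t$ couples the current random operator $K_{\mathbf x_t}\otimes K_{\mathbf x_t}$ to the past iterate $g_{t-1}$, and centering works only because the random operator is unbiased for $\boldsymbol{\Sigma}$ and $g_{t-1}$ is $\mathcal F_{t-1}$-measurable. The step-size restriction $\eta\kappa^2\le 1$, made available by Property~\ref{pro-kernel}(a), is precisely what makes the residual variance $\eta^2\boldsymbol{\Sigma}^2$ absorbable; without kernel boundedness or with too large a step size the argument breaks. A secondary subtlety is the bookkeeping between the two norms $\|\cdot\|_{\mathcal H}$ and $\|\cdot\|_{L^2}$, which differ by a factor of $\boldsymbol{\Sigma}$ in the inner product and are easy to conflate when manipulating the quadratic forms.
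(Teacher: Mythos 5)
Your central recursion is wrong, and the error is fatal to the argument. You posit $g_t=(\boldsymbol{I}-\eta\boldsymbol{\Sigma})g_{t-1}-\eta r_t$. Substituting the proposition's $r_t=(K_{\mathbf{x}_t}\otimes K_{\mathbf{x}_t}-\boldsymbol{\Sigma})g_{t-1}$, this collapses to the homogeneous recursion $g_t=(\boldsymbol{I}-\eta K_{\mathbf{x}_t}\otimes K_{\mathbf{x}_t})g_{t-1}$. Since $f_0^b=\tilde f_0^b=f_\rho^*$ gives $g_0=0$, that would force $g_t\equiv 0$ for all $t$ and the proposition would be vacuous. Subtracting the two one-step updates actually leaves a genuine drift term:
\[
g_t=\tilde f_t^b-f_t^b=(\boldsymbol{I}-\eta K_{\mathbf{x}_t}\otimes K_{\mathbf{x}_t})g_{t-1}+\eta\,(K_{\mathbf{x}_t}\otimes K_{\mathbf{x}_t}-\boldsymbol{\Sigma})\tilde f_{t-1}^b.
\]
The residual that pushes $g_t$ away from zero acts on the deterministic population iterate $\tilde f_{t-1}^b$, not on $g_{t-1}$; this is exactly why the paragraph after the proposition observes that $\|r_i\|_{L^2}^2$ is controlled by $\|f_\rho^*\|_{L^2}^2$. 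With your reading of $r_t$, the bound would be circular (controlling $g$ in terms of itself) and have no content.

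Because the recursion is misidentified, you also miss the mechanism of the proof. The Bach--Moulines argument squares the one-step update with the \emph{random} contraction $\boldsymbol{I}-\eta K_{\mathbf{x}_t}\otimes K_{\mathbf{x}_t}$ in the $\mathcal{H}$-norm. The crucial identity is $\langle g_{t-1},(K_{\mathbf{x}_t}\otimes K_{\mathbf{x}_t})g_{t-1}\rangle_{\mathcal{H}}=g_{t-1}(\mathbf{x}_t)^2$, whose expectation is $\|g_{t-1}\|_{L^2}^2$; that is how the weaker $L^2$ norm appears against the telescoping $\mathcal{H}$ norm. Kernel boundedness enters through $\|(K_{\mathbf{x}_t}\otimes K_{\mathbf{x}_t})g_{t-1}\|_{\mathcal{H}}^2=g_{t-1}(\mathbf{x}_t)^2K(\mathbf{x}_t,\mathbf{x}_t)\le\kappa^2 g_{t-1}(\mathbf{x}_t)^2$, and an AM--GM step on the cross term between the contraction and the drift is what produces the $(1-\eta\kappa^2)$ factor. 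Your route instead squares the deterministic $(\boldsymbol{I}-\eta\boldsymbol{\Sigma})^2$ and invokes $\boldsymbol{\Sigma}^2\preceq\kappa^2\boldsymbol{\Sigma}$, which is a different computation; even granting your recursion it yields $(1-\eta\kappa^2/2)$ because there is no cross term to absorb, so the constants could not match. The telescoping and Jensen steps at the end are sound in isolation, but they sit on a decomposition that cannot produce this proposition.
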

For the bound of $\left \| r_i \right \|_{L^2}^2 $, we only require that $\left \| f_{\rho}^{*} \right \| _{L^2}^2<\infty$, which holds for mis-specified problems. This condition enables us to derive the excess risk convergence analysis for SGD in mis-specified problems. Furthermore, when the kernel has additional properties, such as a bounded kurtosis for the projection of $K_{\mathbf{x}}$ onto any function $f$, that is, $\mathbb{E} \left \langle f,K_{\mathbf{x} } \right \rangle ^4\le \kappa \left \| f \right \|_{\mathcal{H} } ^2$, we can demonstrate that SGD with exponentially decaying step size achieves the same effectiveness in handling mis-specified problems as SGD with averaged iterates. 

\textbf{Iterate averaging suffers from the saturation effect.}
SGD with averaged iterates can handle mis-specified problems under Property~\ref{pro-kernel}. However, it faces the saturation effect when $f_{\rho}^{*}$ is relatively smooth. We prove that the excess risk of SGD with constant step size and averaged iterates can be bounded from below by
\begin{equation}\nonumber
    \begin{aligned}
        &\frac{1}{n^2\eta_0^s} \max_{i\in \mathbb{N}^{+}}\left\{(1-(1-\lambda_i\eta_0)^n)^2(\lambda_i\eta_0)^{s-2}\right\}\left\|\boldsymbol{\Sigma}^\frac{1-s}{2}f_\rho^*\right\|_\mathcal{H}^2\\ &+ \sigma^2\left(\frac{1}{16}\frac{k^*}{n} + \frac{1}{64}\sum_{i=k^*+1}^{+\infty}n\eta_0^2\lambda_i^2\right),
    \end{aligned}
\end{equation}
where $k^*=\max \left \{ k: \lambda_k\ge (\eta_0n)^{-1} \right \} $ often referred to as the effective dimension.
This implies that the optimization ability of SGD with averaged iterates with respect to the bias is inherently limited by $\frac{1}{n^2\eta_0^s}$. When $s$ is large, optimality cannot be achieved. As demonstrated in the Appendix~\ref{sec-lower-bound}, we show that in high-dimensional settings, when $s>1$, there exists a region where SGD with averaged iterates cannot achieve optimality. Furthermore, in asymptotic settings, SGD with averaged iterates fails to achieve optimality for any $s>2$. This demonstrates that SGD with exponentially decaying step size has an advantage over SGD with averaged iterates when $f_{\rho}^{*}$ is relatively smooth.

\section{Simulations}

    In this section, we show our experiments with the NTK kernel $\kappa_{\text{NTK}}^1$, which shows that the convergence rate of SGD matches our theoretical result.

    The data is generated as follows with a fixed $f_{\rho}^{*}$:
    \begin{equation}\nonumber
        y_i = f_{\rho}^{*}(\mathbf{x}_i)+\epsilon_i, i = 1,\dots,n,
    \end{equation}
    where $x_i$ is i.i.d. sampled from the uniform distribution on sphere $\mathbb S^d$, and $\epsilon_i \overset{i.i.d}{\sim}\mathcal{N} \left ( 0,1 \right )$.
    
    In experiment with $s=1$, we generate the regression function by:
    \begin{equation}\nonumber
        f_{\rho}^{*}(\mathbf{x}) = K(\mathbf{x},\mathbf{u}_1)+K(\mathbf{x},\mathbf{u}_2)+K(\mathbf{x},\mathbf{u}_3),
    \end{equation}
    where each $\mathbf{u}_i\ (i=1,2,3)$ is i.i.d. sampled from the uniform distribution on sphere $\mathbb S^d$.
    
    In experiment with $s \ne 1$, we retain the settings above but replace the regression function with $f_{\rho}^{*}(\mathbf{x})=\mu_2^\frac{s}{2}N(d,2)^{-\frac{1}{2}}P_2(\mathbf{u},\mathbf{x})$, where $P_2(x)=\frac{dx^2-1}{d-1}$ is the Gegenbauer polynomial, and $\mathbf{u}$ is i.i.d. sampled from the uniform distribution on sphere $\mathbb S^d$.

    We conduct these experiments to simulate our results under different high-dimensional settings $n\asymp d^\gamma$:
    (1) $\gamma = 1.5$, $s=1$, $n$ from $1000$ to $2000$, with intervals $200$, $d=n^\frac{2}{3}$;
    (2) $\gamma = 2$, $s=0.5$, $n$ from $1000$ to $2000$, with intervals $200$, $d=n^\frac{1}{2}$;
    (3) $\gamma = 1.5$, $s=2$, $n$ from $1000$ to $2000$, with intervals $200$, $d=n^\frac{1}{2}$.

    We numerically approximate the excess risk by the empirical excess risk on 1000 i.i.d. sampled data from the uniform distribution on the sphere $\mathbb S^d$. As shown in Figure~\ref{Fig: experiment}, the results support our theoretical findings and indicate that SGD with an exponentially decaying step size does not suffer from the saturation effect.

\section*{Acknowledgements}
This work is supported by the NSF China (No.s 92470117 and  62376008).

\bibliographystyle{plainnat}
\bibliography{main}

\newpage
\appendix
\onecolumn
\section{\texorpdfstring{Basic Properties of the Dot-Product Kernel $K$}{Basic Properties of the Dot-Product Kernel K}}

    Recall our notation defined in Section \ref{RKHS} and \ref{Kernel}. This section provides some basic properties of the dot-product kernel $K$.

\subsection{Proof of Property \ref{pro-kernel}}
    \begin{proof}
        
        (a) By the boundedness property of continuous functions, and given that $\Phi\in \mathcal{C}^{\infty}[-1,1]$, we obtain that there exists a constant $\kappa>0$ such that
        \begin{equation}\nonumber
            \sup_{x\in[0,1]}\Phi(x) < \kappa^2.
        \end{equation}
        Since $K\left(\mathbf{x},\mathbf{x}\right)=\Phi\left(\left\langle\mathbf{x},\mathbf{x}\right\rangle\right)$ and $\left\langle\mathbf{x},\mathbf{x}\right\rangle\le 1$ for any $\mathbf{x} \in \mathcal{X}$, we obtain that
        \begin{equation}\label{eq: sup kernel}
            \sup_{\mathbf{x} \in \mathcal{X}} K(\mathbf{x},\mathbf{x}) \le \kappa^2.
        \end{equation}

        (b) By the definition of $T$, we have
        \begin{equation}\nonumber
            \mathrm{tr}(T)=\sum_{i=1}^{\infty} \langle \phi_i, T \phi_i \rangle_{L^2} 
= \sum_{i=1}^{\infty} \mathbb{E} \left( \lambda_i^{1/2} \phi_i(X) \right)^2 
\overset{(\mathrm{i})}{=} \sum_{i=1}^{\infty} \mathbb{E} \left\langle K_X, \lambda_i^{1/2} \phi_i \right\rangle_{\mathcal{H}}^2 
\overset{(\mathrm{ii})}{=} \mathbb{E} \left\langle K_X, K_X \right\rangle_{\mathcal{H}}\overset{(\mathrm{iii})}{=}\mathbb{E}K(X,X)\le \kappa^2,
        \end{equation}
        where (i) and (iii) follow the reproducing property, and (ii) follows from Parseval's identity.

(c) From (a) and $\mathbb E\left[K_{\mathbf{x}}\otimes K_{\mathbf{x}}\right]=\boldsymbol{\Sigma}$, we obtain that
        \begin{equation}\nonumber
            \mathbb E\left[K\left(\mathbf{x},\mathbf{x}\right)K_{\mathbf{x}}\otimes K_{\mathbf{x}}\right] \preceq \kappa^2\mathbb E\left[K_{\mathbf{x}}\otimes K_{\mathbf{x}}\right]=\kappa^2\boldsymbol{\Sigma}.
        \end{equation}
    \end{proof}

\subsection{\texorpdfstring{Covariance Operator $T$}{Covariance Operator T}}

    In this section, we prove that $T|_{\mathcal{H}}=\boldsymbol{\Sigma}$. Based on this, we can express $\mathbb \|f\|_{L^2}^2$ by $\left\langle f, \boldsymbol{\Sigma}f\right\rangle_\mathcal{H}$ for any $f \in \mathcal{H}$ in subsequent sections.

    \begin{lemma}

        The Reproducing Kernel Hilbert Space $\mathcal{H}$ is a subspace of $L^2$.

    \end{lemma}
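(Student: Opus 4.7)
The plan is to show that every $f \in \mathcal{H}$, viewed pointwise as a function on $\mathcal{X}$, belongs to $L^2(\mathcal{X}, \rho_{\mathcal{X}})$, with the inclusion map being continuous. The inclusion then exhibits $\mathcal{H}$ as a (continuously embedded) subspace of $L^2$.

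First, I would fix an arbitrary $f \in \mathcal{H}$ and apply the reproducing property to obtain the pointwise identity $f(\mathbf{x}) = \langle f, K_{\mathbf{x}}\rangle_{\mathcal{H}}$ for every $\mathbf{x} \in \mathcal{X}$. Applying Cauchy--Schwarz in $\mathcal{H}$ then yields
\begin{equation}\nonumber
    |f(\mathbf{x})|^2 \le \|f\|_{\mathcal{H}}^2 \cdot \|K_{\mathbf{x}}\|_{\mathcal{H}}^2 = \|f\|_{\mathcal{H}}^2 \cdot K(\mathbf{x}, \mathbf{x}),
\end{equation}
where the last equality again uses the reproducing property, $\|K_{\mathbf{x}}\|_{\mathcal{H}}^2 = \langle K_{\mathbf{x}}, K_{\mathbf{x}}\rangle_{\mathcal{H}} = K(\mathbf{x}, \mathbf{x})$.

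Next, I would invoke Property~\ref{pro-kernel}(a), which gives $\sup_{\mathbf{x} \in \mathcal{X}} K(\mathbf{x}, \mathbf{x}) \le \kappa^2$, to obtain the pointwise bound $|f(\mathbf{x})| \le \kappa \|f\|_{\mathcal{H}}$. Integrating against the probability measure $\rho_{\mathcal{X}}$ would then give
\begin{equation}\nonumber
    \|f\|_{L^2}^2 = \int_{\mathcal{X}} |f(\mathbf{x})|^2 \, \mathrm{d}\rho_{\mathcal{X}}(\mathbf{x}) \le \kappa^2 \|f\|_{\mathcal{H}}^2 < \infty,
\end{equation}
which shows $f \in L^2$ and that the canonical inclusion $\iota: \mathcal{H} \to L^2$ is a bounded linear operator with $\|\iota\| \le \kappa$.

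The one subtlety I anticipate is the measurability of $f$ as a function on $\mathcal{X}$: the integral above is only meaningful once we know $\mathbf{x} \mapsto f(\mathbf{x})$ is Borel measurable. This is standard and can be handled by noting that continuity of $K$ on $\mathcal{X} \times \mathcal{X}$ makes every $K_{\mathbf{x}}$ continuous, so finite linear combinations $\sum_i c_i K_{\mathbf{x}_i}$ are continuous, and any $f \in \mathcal{H}$ is an $\mathcal{H}$-limit of such combinations, which by the pointwise bound above is also a uniform limit; hence $f$ is continuous and, a fortiori, Borel measurable. With this in hand, the argument above goes through, and $\mathcal{H} \hookrightarrow L^2$ is established as a continuous linear embedding, completing the proof.
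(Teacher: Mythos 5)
Your proof is correct and follows essentially the same route as the paper's: reproducing property, Cauchy--Schwarz, and the boundedness of $K(\mathbf{x},\mathbf{x})$ from Property~\ref{pro-kernel}(a). The additional remarks on measurability and on $\iota$ being a continuous embedding are sound but not needed beyond what the paper records.
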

    \begin{proof}

        From Property \ref{pro-kernel}(a), and given that $\|f\|_\mathcal{H}<+\infty$ for any $f \in \mathcal{H}$, we have
        \begin{equation}\nonumber
           \mathbb E\left[f(\mathbf{x})^2\right]\overset{(\mathrm{i})}{=} \mathbb E[\left\langle f,K_\mathbf{x}\right\rangle_\mathcal{H}^2]\overset{(\mathrm{ii})}{\le } \mathbb E[K(\mathbf{x},\mathbf{x})\|f\|_\mathcal{H}^2] \overset{(\mathrm{iii})}{\le } \kappa^2\|f\|_\mathcal{H}^2<+\infty,
        \end{equation}
        where (i) follows from the reproducing property, (ii) from the Cauchy–Schwarz inequality, and (iii) from Property~\ref{pro-kernel}(a), along with $\|f\|_\mathcal{H}<+\infty$.
        This implies that $\mathcal{H}$ is a subspace of $L^2$.

    \end{proof}

    \begin{lemma}\label{lem-operator-eql}

        The restriction of the covariance operator $T$ to $\mathcal{H}$ coincides with $\boldsymbol{\Sigma}$.

    \end{lemma}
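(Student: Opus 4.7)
The plan is to show that for every $f \in \mathcal{H}$ and every $\mathbf{z} \in \mathcal{X}$, the functions $T(f)$ and $\boldsymbol{\Sigma}(f)$ take the same value at $\mathbf{z}$; since both are elements of $L^2$ (indeed of $\mathcal{H}$, by the previous lemma and standard RKHS facts), this will imply $T|_{\mathcal{H}} = \boldsymbol{\Sigma}$ as operators on $\mathcal{H}$.

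First I would fix $f \in \mathcal{H}$ and rewrite $\boldsymbol{\Sigma}(f)$ using the definition of the tensor-product operator: $\boldsymbol{\Sigma}(f) = \mathbb{E}[(K_{\mathbf{x}} \otimes K_{\mathbf{x}})(f)] = \mathbb{E}[\langle K_{\mathbf{x}}, f\rangle_{\mathcal{H}} K_{\mathbf{x}}] = \mathbb{E}[f(\mathbf{x}) K_{\mathbf{x}}]$, where the last identity is the reproducing property applied pointwise in $\mathbf{x}$. To make sense of this $\mathcal{H}$-valued expectation as a Bochner integral, I would verify the integrability bound $\mathbb{E}\|f(\mathbf{x}) K_{\mathbf{x}}\|_{\mathcal{H}} \le \mathbb{E}[|f(\mathbf{x})| \sqrt{K(\mathbf{x}, \mathbf{x})}] \le \kappa^2 \|f\|_{\mathcal{H}}$, using Property~\ref{pro-kernel}(a) together with the pointwise inequality $|f(\mathbf{x})| \le \kappa\|f\|_{\mathcal{H}}$ established in the previous lemma.

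Next I would evaluate $\boldsymbol{\Sigma}(f)$ at an arbitrary point $\mathbf{z} \in \mathcal{X}$ by applying the reproducing property once more, which commutes with the Bochner expectation thanks to the integrability bound above:
\begin{equation}\nonumber
\boldsymbol{\Sigma}(f)(\mathbf{z}) = \langle \boldsymbol{\Sigma}(f), K_{\mathbf{z}}\rangle_{\mathcal{H}} = \mathbb{E}\langle f(\mathbf{x}) K_{\mathbf{x}}, K_{\mathbf{z}}\rangle_{\mathcal{H}} = \mathbb{E}[f(\mathbf{x}) K(\mathbf{x}, \mathbf{z})] = \int_{\mathcal{X}} f(\mathbf{x}) K(\mathbf{x}, \mathbf{z})\, \mathrm{d}\rho_{\mathcal{X}}(\mathbf{x}).
\end{equation}
The right-hand side is exactly the defining expression of $T(f)(\mathbf{z})$, so $\boldsymbol{\Sigma}(f)$ and $T(f)$ coincide pointwise on $\mathcal{X}$, and in particular as elements of $L^2$ (and of $\mathcal{H}$, so there is no ambiguity about which space the restriction lands in).

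The main obstacle I anticipate is purely technical: justifying that the inner product $\langle \cdot, K_{\mathbf{z}}\rangle_{\mathcal{H}}$ may be pulled inside the $\mathcal{H}$-valued expectation defining $\boldsymbol{\Sigma}(f)$. This is handled by the standard fact that bounded linear functionals commute with Bochner integrals, once the integrability $\mathbb{E}\|f(\mathbf{x}) K_{\mathbf{x}}\|_{\mathcal{H}} < \infty$ is verified; everything else is a direct application of the reproducing property and the definition of $T$.
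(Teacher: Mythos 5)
Your proof is correct and follows essentially the same approach as the paper: apply the reproducing property twice to rewrite $\boldsymbol{\Sigma}(f)(\mathbf{z})$ as $\int_{\mathcal{X}} f(\mathbf{x}) K(\mathbf{x},\mathbf{z})\,\mathrm{d}\rho_{\mathcal{X}}(\mathbf{x})$, which is the defining expression for $T(f)(\mathbf{z})$. The only difference is that you explicitly justify exchanging the evaluation functional with the $\mathcal{H}$-valued expectation via Bochner integrability, a step the paper's chain of equalities leaves implicit.
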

    \begin{proof}

        Recall that $(f \otimes g)h=\langle f,h\rangle_\mathcal{H}g$.
        
        Thus for any function $f \in \mathcal{H}$ and any $\mathbf{z} \in \mathcal{X}$, we have
        \begin{equation}\nonumber
            \mathbb E\left[K\left(\mathbf{x},\mathbf{z}\right)f\left(\mathbf{x}\right)\right] = \mathbb E\left[\left\langle K_{\mathbf{x}},f\right\rangle_\mathcal{H}K_{\mathbf{x}}\left(\mathbf{z}\right)\right] = \mathbb E\left[(K_{\mathbf{x}}\otimes K_{\mathbf{x}})f(\mathbf{z})\right] = \left(\boldsymbol{\Sigma}f\right)(\mathbf{z}),
        \end{equation}
        where in the first equality, we use the reproducing property of $\mathcal{H}$.

        This equation can be equivalently expressed as
        \begin{equation}\nonumber
            \left(\boldsymbol{\Sigma}f\right)\left(\mathbf{z}\right) = \int_\mathcal{X}f\left(\mathbf{x}\right)K\left(\mathbf{x},\mathbf{z}\right)\mathrm{d}\rho_{\mathcal{X}}\left(\mathbf{x}\right),
        \end{equation}
        whose form is the same as the definition of $T$.

        That is to say, the restriction of $T$ to $\mathcal{H}$ coincides with $\boldsymbol{\Sigma}$.

    \end{proof}

\subsection{\texorpdfstring{Eigenvalue Decay Rate of the Dot-Product Kernel $K$}{Eigenvalue Decay Rate of the Dot-Product Kernel K}}

    We borrowed the following lemma from \citet{10.1214/20-AOS1990} and \citet{lu2023optimal}:
    \begin{lemma}\label{lem: higheigen dec}
        Suppose $n\asymp d^{\gamma}$ Assumption \ref{ass-dotkernel} holds. Suppose $p > 0$ is any fixed integer. There exist $C,C_1,C_2,C_3,C_4>0$ that depend on $p$, such that for any $d > C$, we have
        \begin{equation}\nonumber
            \begin{aligned}
                &C_1d^{-k} \le \mu_k \le C_2d^{-k}, &k=0,1,\dots,p+1;\\
                &C_3d^k \le N(d,k) \le C_4d^k, &k=0,1,\dots,p+1.
            \end{aligned}
        \end{equation}
    \end{lemma}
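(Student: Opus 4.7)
The plan is to handle the multiplicity $N(d,k)$ and the eigenvalue $\mu_k$ bounds separately, since the former is purely combinatorial while the latter requires an orthogonal-polynomial decomposition of $\Phi$. For $N(d,k)$, I would use the explicit formula
\[
N(d,k) \;=\; \frac{2k+d-1}{k!}\prod_{j=0}^{k-1}(d+j-1),
\]
which for any fixed $k\in\{0,1,\ldots,p+1\}$ is a polynomial in $d$ of exact degree $k$ with positive leading coefficient $1/k!$. Dividing by $d^k$ and letting $d\to\infty$ with $k$ fixed gives $N(d,k)/d^k\to 1/k!$, so there exist positive constants $C_3(p),C_4(p)$ and a threshold $C(p)$ for which $C_3 d^k\le N(d,k)\le C_4 d^k$ whenever $d>C$ and $k\le p+1$.

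For the eigenvalues I would invoke the Funk--Hecke formula. Let $P_k^{(d)}$ denote the Gegenbauer polynomial of degree $k$ orthogonal on $[-1,1]$ against the weight $(1-t^2)^{(d-2)/2}\,dt$. Since $\rho_{\mathcal X}$ is uniform on $\mathbb S^d$, the spherical-harmonic addition formula expresses
\[
\mu_k \;=\; \frac{\int_{-1}^{1}\Phi(t)\,P_k^{(d)}(t)(1-t^2)^{(d-2)/2}\,dt}{\int_{-1}^{1}P_k^{(d)}(t)^2(1-t^2)^{(d-2)/2}\,dt}.
\]
Plugging in $\Phi(t)=\sum_{j\ge 0}a_j t^j$ from Assumption~\ref{ass-dotkernel} and invoking the classical Gegenbauer moment identities, $\int_{-1}^{1}t^j P_k^{(d)}(t)(1-t^2)^{(d-2)/2}\,dt$ vanishes unless $j\ge k$ and $j-k$ is even, and for $j=k+2\ell$ the corresponding ratio $\beta_{k+2\ell}(d)$ against the normalizing denominator is of exact order $d^{-k-\ell}$ as $d\to\infty$. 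This yields the expansion $\mu_k=\sum_{\ell\ge 0}a_{k+2\ell}\beta_{k+2\ell}(d)$, whose leading $\ell=0$ term $a_k\beta_k(d)$ is $\Theta(d^{-k})$ with $a_k>0$ guaranteed by Assumption~\ref{ass-dotkernel}, since $k\le p+1$ sits inside its positivity range.

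The upper bound $\mu_k\le C_2 d^{-k}$ then follows by separating the expansion at a cutoff $J=J(p)$: the finite prefix is at most a constant multiple of $d^{-k}$, while the tail is controlled using the uniform bound $|P_k^{(d)}(t)|\le P_k^{(d)}(1)$ on $[-1,1]$ together with the absolute convergence $\sum_{j\ge 0}a_j=\Phi(1)<\infty$, which holds since Assumption~\ref{ass-dotkernel} provides $a_j\ge 0$ for every $j$. The lower bound $\mu_k\ge C_1 d^{-k}$ follows because the $\ell\ge 1$ terms contribute at most $O(d^{-k-1})$ in absolute value, so for $d$ exceeding some $C(p)$ the strictly positive leading term dominates. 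The main obstacle is producing the Gegenbauer moment asymptotics $\beta_{k+2\ell}(d)=\Theta(d^{-k-\ell})$ with constants that are uniform enough in $\ell$ to close the tail estimate; this is a standard but delicate calculation, and since the lemma is explicitly borrowed from \cite{10.1214/20-AOS1990} and \cite{lu2023optimal}, I would cite their explicit computations rather than reproduce them.
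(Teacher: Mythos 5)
The paper's treatment of this lemma is a citation rather than a proof: immediately before the statement it says ``We borrowed the following lemma from \citet{10.1214/20-AOS1990} and \citet{lu2023optimal}'' and provides no argument. Your decision to sketch the Funk--Hecke / Gegenbauer mechanism and then defer to those references therefore matches the paper's route. The addition formula, the Funk--Hecke reduction of $\mu_k$ to a one-dimensional integral against the Gegenbauer weight, the power-series expansion through the coefficients $a_j$, the positivity $a_k>0$ for $k\le\lfloor\gamma\rfloor+3$ supplied by Assumption~\ref{ass-dotkernel}, and the finite-prefix-plus-tail estimate for the upper bound are all consistent with what those references actually carry out.

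One concrete algebra slip worth flagging: your displayed formula for $N(d,k)$ is off by a linear factor. Starting from the paper's $N(d,k)=\frac{2k+d-1}{k}\binom{k+d-2}{k-1}$ one gets
\begin{equation*}
N(d,k)=\frac{2k+d-1}{k!}\prod_{j=0}^{k-2}(d+j),
\end{equation*}
a polynomial of exact degree $k$ in $d$ with leading coefficient $1/k!$. Your product $\prod_{j=0}^{k-1}(d+j-1)$ equals $(d-1)\prod_{j=0}^{k-2}(d+j)$, so the displayed expression is a polynomial of degree $k+1$ — contradicting both your own stated conclusion and the lemma. (For $k=1$ your formula gives $(d+1)(d-1)$, whereas the dimension of degree-one spherical harmonics on $\mathbb S^d$ is $d+1$.) The asymptotic conclusion $N(d,k)/d^k\to 1/k!$ that you go on to use is correct, so this is a local error in the formula rather than a flaw in the approach, but it should be fixed before the degree-$k$ claim can actually be read off.
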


\section{General Bound for SGD with Exponentially Decaying Step Sizes in Well-Specified Problems}

\subsection{Bias-Variance Decomposition}\label{Sec: B-V}

Our goal is to minimize the prediction error
\begin{equation}\label{eq: predict error}
    \mathcal{E}_{\rho} (f) = \mathbb E_{ \rho}[(y - f(\mathbf{x}))^2]
\end{equation}
using the stochastic gradient oracles.  Minimizing the prediction error in \eqref{eq: predict error} is equivalent to minimizing the excess risk, given by
\begin{align}\nonumber
    \mathcal{E}(f) - \mathcal{E}(f_{\rho}^*) =  \left\|f - f_\rho^*\right\|_{L^2}^2,
\end{align}
where $f_{\rho}^*\left ( \mathbf{x}  \right )=\mathbb{E}_{\rho} \left [ y| \mathbf{x}\right ] $ denotes the optimal predictor in $L^2$. For well-specified problems, the source condition~\ref{ass-source-a} assumes that $f_{\rho}^*\in \left [ \mathcal{H}  \right ] ^s$ for $s\ge 1$, which implies $f_{\rho}^*\in\mathcal{H}$. 
By Lemma~\ref{lem-operator-eql}, the excess risk can be equivalently expressed as
\begin{equation}\nonumber
    \mathbb E\left[\left\|f - f_\rho^*\right\|_{L^2}^2\right] = \mathbb E\left[\left\langle f-f_\rho^*,\boldsymbol{\Sigma}(f-f_\rho^*)\right\rangle_\mathcal{H}\right].
\end{equation}

The iterative update of SGD is given by
\begin{equation}\nonumber
    \begin{aligned}
        & f_t = f_{t-1}+\eta_t(y_t - f_{t-1}(\mathbf{x}_t))K_{{\mathbf{x}_t}}, &f_0 = 0,
    \end{aligned}
\end{equation}
where $\eta_t\in\mathbb{R}$ is the step size at iteration $t$, and $\left ( \mathbf{x}_{t},y_{t} \right )$ is a fresh data from $\rho$.

We write the update rule of $f_t - f_\rho^*$ in the recursive form:
\begin{equation}\nonumber
    f_t - f_\rho^* = (\boldsymbol{I} - \eta_t K_{{\mathbf{x}_t}} \otimes K_{{\mathbf{x}_t}}) (f_{t-1} - f_\rho^*) + \eta_t\Xi_t, 
\end{equation}
where $\Xi_t = (y_t-f_\rho^*({\mathbf{x}_t}))K_{{\mathbf{x}_t}}$ represents the residual $(\mathbf{x}_t,y)$.

We first apply the commonly used bias-variance decomposition, expressing $f_t - f_\rho^*$ as $f_t - f_\rho^* = f_t^b + f_t^v$. $f_t^b$ stands for the bias term, which demonstrates the decline of the initial error, and $f_t^v$ stands for the variance term, capturing the influence of the noise. Their recursive updates are given by:
\begin{equation}\label{def:bias}
    \begin{aligned}
        & f_t^b = (\boldsymbol{I} - \eta_t K_{{\mathbf{x}_t}} \otimes K_{{\mathbf{x}_t}})f_{t-1}^b, & f_0^b = f_0 - f_\rho^*.
    \end{aligned}
\end{equation}
\begin{equation}\label{def:variance}
    \begin{aligned}
        & f_t^v = (\boldsymbol{I} - \eta_t K_{{\mathbf{x}_t}} \otimes K_{{\mathbf{x}_t}})f_{t-1}^v + \eta_t\Xi_t, & f_0^v = 0.
    \end{aligned}
\end{equation}

Consequently, the excess risk admits a decomposition as stated in Lemma~\ref{lem:bias-variance decomposition}. 
\begin{lemma}[Bias-Variance Decomposition]
\label{lem:bias-variance decomposition}
For any iteration $t$, we have
\begin{equation}\nonumber
    \mathbb E\left[\left\langle f_t-f_\rho^*,\boldsymbol{\Sigma}(f_t-f_\rho^*)\right\rangle_\mathcal{H}\right] \le 2\left(\mathbb E\left[\left\langle f_t^b,\boldsymbol{\Sigma} f_t^b\right\rangle_\mathcal{H}\right] + \mathbb E\left[\left\langle f_t^v,\boldsymbol{\Sigma} f_t^v \right\rangle_\mathcal{H}\right]\right).
\end{equation}
\end{lemma}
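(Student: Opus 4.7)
The plan is to verify the decomposition $f_t - f_\rho^* = f_t^b + f_t^v$ at the level of the sequence itself, and then apply an elementary $(a+b)^2 \le 2a^2 + 2b^2$-type inequality in the quadratic form defined by $\boldsymbol{\Sigma}$.

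First I would show by induction on $t$ that $f_t - f_\rho^* = f_t^b + f_t^v$. The base case $t=0$ is immediate from the initializations $f_0^b = f_0 - f_\rho^* = -f_\rho^*$ and $f_0^v = 0$. For the inductive step, note that the recursion for $f_t - f_\rho^*$ derived just above the definitions \eqref{def:bias} and \eqref{def:variance} is linear in $f_{t-1}-f_\rho^*$ with an additive noise term $\eta_t \Xi_t$, and the recursions defining $f_t^b$ and $f_t^v$ use the same linear operator $\boldsymbol{I}-\eta_t K_{\mathbf{x}_t}\otimes K_{\mathbf{x}_t}$. Adding \eqref{def:bias} and \eqref{def:variance} and using the inductive hypothesis yields the desired identity at time $t$.

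Next I would exploit self-adjointness and positive semi-definiteness of $\boldsymbol{\Sigma}$. For any $u,v\in\mathcal{H}$, expanding the quadratic form gives
\begin{equation}\nonumber
\langle u+v,\boldsymbol{\Sigma}(u+v)\rangle_\mathcal{H} = \langle u,\boldsymbol{\Sigma} u\rangle_\mathcal{H} + 2\langle u,\boldsymbol{\Sigma} v\rangle_\mathcal{H} + \langle v,\boldsymbol{\Sigma} v\rangle_\mathcal{H}.
\end{equation}
Since $\boldsymbol{\Sigma}\succeq 0$ admits a self-adjoint square root $\boldsymbol{\Sigma}^{1/2}$, the Cauchy--Schwarz inequality applied to $\boldsymbol{\Sigma}^{1/2}u$ and $\boldsymbol{\Sigma}^{1/2}v$, followed by AM-GM, yields
\begin{equation}\nonumber
2\langle u,\boldsymbol{\Sigma} v\rangle_\mathcal{H} = 2\langle \boldsymbol{\Sigma}^{1/2}u,\boldsymbol{\Sigma}^{1/2}v\rangle_\mathcal{H} \le \langle u,\boldsymbol{\Sigma} u\rangle_\mathcal{H} + \langle v,\boldsymbol{\Sigma} v\rangle_\mathcal{H}.
\end{equation}
Substituting this bound into the expansion gives the pointwise inequality $\langle u+v,\boldsymbol{\Sigma}(u+v)\rangle_\mathcal{H} \le 2\langle u,\boldsymbol{\Sigma} u\rangle_\mathcal{H} + 2\langle v,\boldsymbol{\Sigma} v\rangle_\mathcal{H}$.

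Finally, I would set $u = f_t^b$ and $v = f_t^v$, so that $u+v = f_t - f_\rho^*$ by the decomposition established in the first step, and take expectations over the random data $(\mathbf{x}_1,y_1),\dots,(\mathbf{x}_t,y_t)$. Linearity of expectation then produces exactly the claimed bound. There is no real obstacle here beyond bookkeeping; the only point worth double-checking is that $f_\rho^*\in\mathcal{H}$ so that the inner products are well-defined in $\mathcal{H}$, which is guaranteed by Assumption~\ref{ass-source-a} with $s\ge 1$ as emphasized at the start of Section~\ref{Sec: B-V}.
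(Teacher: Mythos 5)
Your proof is correct and follows essentially the same approach as the paper: the paper passes to $L^2$ norms via the identity $\langle f,\boldsymbol{\Sigma}f\rangle_\mathcal{H}=\|f\|_{L^2}^2$ (Lemma~\ref{lem-operator-eql}) and then applies Minkowski's inequality, whereas you stay with the $\boldsymbol{\Sigma}$ quadratic form and derive the same $(a+b)^2\le 2a^2+2b^2$ bound via Cauchy--Schwarz on $\boldsymbol{\Sigma}^{1/2}$ — these are the same elementary fact expressed in two equivalent ways. Your explicit inductive verification that $f_t-f_\rho^*=f_t^b+f_t^v$ is a small addition of rigor the paper leaves implicit, but otherwise the arguments coincide.
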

\begin{proof}
By Lemma~\ref{lem-operator-eql}, we have
\begin{equation}\nonumber
    \begin{aligned}
    \left\langle f_t^b + f_t^v, \boldsymbol{\Sigma}\left(f_t^b + f_t^v\right)\right\rangle_\mathcal{H}=\left\|f_t^b + f_t^v\right\|_{L^2}^2 ,\ \ 
    \left\langle f_t^b , \boldsymbol{\Sigma}f_t^b\right\rangle_\mathcal{H}=\left\|f_t^b\right\|_{L^2}^2 ,\ \ 
    \left\langle f_t^v, \boldsymbol{\Sigma} f_t^v\right\rangle_\mathcal{H}=\left\|f_t^v\right\|_{L^2}^2.
    \end{aligned}
\end{equation}
By Minkowski's Inequality, we have
\begin{equation}\nonumber
    \left\|f_t^b + f_t^v\right\|_{L^2}^2 \le \left(\left(\left\|f_t^b\right\|_{L^2}^2\right)^{\frac{1}{2}}+\left(\left\|f_t^v\right\|_{L^2}^2\right)^{\frac{1}{2}}\right)^2 \le 2\left(\left\|f_t^b\right\|_{L^2}^2+\left\|f_t^v\right\|_{L^2}^2\right).
\end{equation}
Consequently, by the decomposition $f_t - f_\rho^*$, it follows that
\begin{equation}\nonumber
    \begin{aligned}
        \mathbb E\left[\left\langle f_t-f_\rho^*, \boldsymbol{\Sigma} (f_t-f_\rho^*)\right]\right\rangle_{\mathcal{H}} &= \mathbb E\left[\left\langle f_t^v + f_t^b, \boldsymbol{\Sigma} (f_t^b + f_t^v)\right\rangle_{\mathcal{H}}\right] \\
        &\le 2 \left(\mathbb E\left[\left\langle f_t^b,\boldsymbol{\Sigma} f_t^b\right\rangle_\mathcal{H}\right] + \mathbb E\left[\left\langle f_t^v,\boldsymbol{\Sigma} f_t^v\right\rangle_\mathcal{H}\right]\right).
    \end{aligned}
\end{equation}

\end{proof}
    
Then we can bound the excess risk by bounding $\mathbb E\left[\left\langle f_t^b,\boldsymbol{\Sigma} f_t^b\right\rangle_\mathcal{H}\right]$ and $\mathbb E\left[\left\langle f_t^v,\boldsymbol{\Sigma} f_t^v\right\rangle_\mathcal{H}\right]$, respectively.

Our second effort to isolate the stochastic effect is analyzing the concentration effect of the data-dependent operator $K_{{\mathbf{x}_t}} \otimes K_{{\mathbf{x}_t}}$ to its population counterpart $\boldsymbol{\Sigma}$. 

We decompose the bias $f_t^b$ into two terms: $\tilde{f}_t^b + f_t^{b(1)}$ where $\tilde{f}_t^b$ describes the expected behavior and is recursively defined as
\begin{equation}\nonumber
    \begin{aligned}
        & \tilde{f}_t^b = (\boldsymbol{I} - \eta_t \boldsymbol{\Sigma}) \tilde{f}_{t-1}^b, & \tilde{f}_0^b = f_0 - f_\rho^*;
    \end{aligned}
\end{equation}
$f_t^{b(1)}$ quantifies the deviation of empirical fluctuations from $\tilde{f}_t^b$. Its recursive form follows as
\begin{equation}\nonumber
    \begin{aligned}
        f_t^{b(1)} &= f_{t}^b - \tilde{f}_{t}^b \\
        &= (\boldsymbol{I} - \eta_t K_{{\mathbf{x}_t}} \otimes K_{{\mathbf{x}_t}})f_{t-1}^b - (\boldsymbol{I} - \eta_t\boldsymbol{\Sigma})\tilde{f}_{t-1}^b \\
        &= (\boldsymbol{I} - \eta_t K_{{\mathbf{x}_t}} \otimes K_{{\mathbf{x}_t}})f_{t-1}^b - (\boldsymbol{I} -\eta_t\boldsymbol{\Sigma})\left(f_{t-1}^b-f_{t-1}^{b(1)}\right) \\
        &= (\boldsymbol{I} - \eta_t\boldsymbol{\Sigma})f_{t-1}^{b(1)} + \eta_t(\boldsymbol{\Sigma} - K_{{\mathbf{x}_t}}\otimes K_{{\mathbf{x}_t}})f_{t-1}^b.
    \end{aligned}
\end{equation}
Summarizing, we have
\begin{equation}\label{def:bias-res}
    \begin{aligned}
        & f_t^{b(1)} = (\boldsymbol{I} - \eta_t\boldsymbol{\Sigma})f_{t-1}^{b(1)} + \eta_t(\boldsymbol{\Sigma} - K_{{\mathbf{x}_t}}\otimes K_{{\mathbf{x}_t}})f_{t-1}^b, & f_0^{b(1)}=0.
    \end{aligned}
\end{equation}

Similarly, we split the variance $f_t^v$ into two components: $f_t^v = \tilde {f}_t^v + f_t^{v(1)}$,  where $\tilde{f}_t^v$ captures the expected variance dynamics and is recursively defined as:
\begin{align}\nonumber
    \tilde{f}_t^v = (\boldsymbol{I} - \eta_t \boldsymbol{\Sigma}) \tilde{f}_{t-1}^v + \eta_t \Xi_t, \quad \tilde{f}_0^v = 0.
\end{align}
Meanwhile, $f_t^{v(1)}$ accounts for deviations of empirical fluctuations from $\tilde{f}_t^v$, following the recursion
\begin{equation}\label{def:variance-res}
\begin{aligned}
    & f_t^{v(1)} = (\boldsymbol{I} - \eta_t\boldsymbol{\Sigma})f_{t-1}^{v(1)} + \eta_t(\boldsymbol{\Sigma} - K_{{\mathbf{x}_t}}\otimes K_{{\mathbf{x}_t}})f_{t-1}^v, & f_0^{v(1)} = 0.
\end{aligned}
\end{equation}

    To bound $\mathbb E\left[\left\langle f_t^b,\boldsymbol{\Sigma} f_t^b\right\rangle_\mathcal{H}\right]$ and $\mathbb E\left[\left\langle f_t^v,\boldsymbol{\Sigma} f_t^v\right\rangle_\mathcal{H}\right]$, we have the following lemma:
    \begin{lemma}
    \label{lem:pop-res decomposition}
    For any iteration $t$, we have
    \begin{equation}\nonumber
            \begin{aligned}
                &\mathbb E\left[\left\langle f_t^b, \boldsymbol{\Sigma} f_t^b\right\rangle_\mathcal{H}\right] \le 2\left(\left\langle \tilde{f}_t^b, \boldsymbol{\Sigma} \tilde{f}_t^b\right\rangle_\mathcal{H} + \mathbb E\left[\left\langle f_t^{b(1)}, \boldsymbol{\Sigma} f_t^{b(1)}\right\rangle_\mathcal{H}\right]\right), \\
                &\mathbb E\left[\left\langle f_t^v, \boldsymbol{\Sigma} f_t^v\right\rangle_\mathcal{H}\right] \le 2\left(\mathbb E\left[\left\langle \tilde{f}_t^v, \boldsymbol{\Sigma} \tilde{f}_t^v\right\rangle_\mathcal{H}\right] + \mathbb E\left[\left\langle f_t^{v(1)}, \boldsymbol{\Sigma} f_t^{v(1)}\right\rangle_\mathcal{H}\right]\right).
            \end{aligned}
        \end{equation}
    \end{lemma}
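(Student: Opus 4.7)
The plan is to mimic verbatim the argument used for Lemma~\ref{lem:bias-variance decomposition}, applied to the two new decompositions $f_t^b = \tilde f_t^b + f_t^{b(1)}$ and $f_t^v = \tilde f_t^v + f_t^{v(1)}$ given in \eqref{def:bias-res} and \eqref{def:variance-res}. The key observation is that since the RKHS $\mathcal{H}$ is a subspace of $L^2$ and Lemma~\ref{lem-operator-eql} gives $T|_{\mathcal{H}}=\boldsymbol{\Sigma}$, the quadratic form $\langle g,\boldsymbol{\Sigma} g\rangle_\mathcal{H}$ is exactly $\|g\|_{L^2}^2$ for every $g\in\mathcal{H}$. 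This turns both inequalities into statements about squared $L^2$-norms of sums of two functions, for which a single use of the triangle inequality suffices.

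First I would write, for any iteration $t$,
\begin{equation*}
\left\langle f_t^b,\boldsymbol{\Sigma} f_t^b\right\rangle_\mathcal{H}
=\left\|\tilde f_t^b+f_t^{b(1)}\right\|_{L^2}^2
\le\Bigl(\bigl\|\tilde f_t^b\bigr\|_{L^2}+\bigl\|f_t^{b(1)}\bigr\|_{L^2}\Bigr)^2
\le 2\left(\bigl\|\tilde f_t^b\bigr\|_{L^2}^2+\bigl\|f_t^{b(1)}\bigr\|_{L^2}^2\right),
\end{equation*}
using Minkowski's inequality and then the elementary bound $(a+b)^2\le 2(a^2+b^2)$. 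Converting each $L^2$-norm back via Lemma~\ref{lem-operator-eql} and taking expectations produces the first stated inequality. Here I note that $\tilde f_t^b$ is deterministic (it depends only on $f_0-f_\rho^*$, the step sizes, and $\boldsymbol{\Sigma}$), so the expectation symbol may be dropped from the first summand without loss, matching the form in the statement.

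The identical chain of steps applied to the decomposition $f_t^v=\tilde f_t^v+f_t^{v(1)}$ yields the second inequality, with the expectation retained on both summands since $\tilde f_t^v$ still depends on the noise sequence $\{\Xi_t\}$.

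There is essentially no obstacle here: the lemma is a bookkeeping consequence of the identity $\langle\cdot,\boldsymbol{\Sigma}\cdot\rangle_\mathcal{H}=\|\cdot\|_{L^2}^2$ together with a single triangle-inequality step, and its proof is structurally the same as that of Lemma~\ref{lem:bias-variance decomposition}. The only point that warrants a brief check is that the residual iterates $f_t^{b(1)}$ and $f_t^{v(1)}$ remain in $\mathcal{H}$ almost surely, so that Lemma~\ref{lem-operator-eql} applies; this is immediate by induction from their defining recursions \eqref{def:bias-res} and \eqref{def:variance-res}, since $K_{\mathbf{x}_t}\otimes K_{\mathbf{x}_t}$ and $\boldsymbol{\Sigma}$ both map $\mathcal{H}$ into $\mathcal{H}$ and $f_0^b,\,f_t^b,\,f_t^v\in\mathcal{H}$ by the same induction.
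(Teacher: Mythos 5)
Your proof is correct and follows exactly the approach the paper intends: the paper explicitly omits this proof as "a similar argument to that of Lemma~\ref{lem:bias-variance decomposition}," which is precisely the route you take (convert $\langle\cdot,\boldsymbol{\Sigma}\cdot\rangle_\mathcal{H}$ to $\|\cdot\|_{L^2}^2$ via Lemma~\ref{lem-operator-eql}, apply Minkowski and $(a+b)^2\le 2(a^2+b^2)$, convert back, take expectations). Your additional remarks on the determinism of $\tilde f_t^b$ and the membership of the residual iterates in $\mathcal{H}$ are sound sanity checks consistent with the paper's framework.
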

    We omit the proof of the lemma since it follows a similar argument to that of Lemma \ref{lem:bias-variance decomposition}.

   Therefore, we have
    \begin{equation}\nonumber
        \begin{aligned}
            &\mathbb E\left[\left\|f_t - f_\rho^*\right\|_{L^2}^2\right] \\ &\le 4\left(\left\langle \tilde{f}_t^b, \boldsymbol{\Sigma} \tilde{f}_t^b\right\rangle_\mathcal{H}+ \mathbb E\left[\left\langle f_t^{b(1)}, \boldsymbol{\Sigma} f_t^{b(1)}\right\rangle_\mathcal{H}\right]+\mathbb E\left[\left\langle \tilde{f}_t^v, \boldsymbol{\Sigma} \tilde{f}_t^v\right\rangle_\mathcal{H}\right] + \mathbb E\left[\left\langle f_t^{v(1)}, \boldsymbol{\Sigma} f_t^{v(1)}\right\rangle_\mathcal{H}\right]\right).
        \end{aligned}
    \end{equation}
    Therefore, we only need to bind these four terms individually at the final iteration $t=n$.

\subsection{Analysis of SGD with Exponentially Decaying Step Size Schedule}
    In this section, we demonstrate the analysis of SGD with exponentially decaying step size schedule. The exponentially decaying step size schedule takes the form
    \begin{equation}
        \label{def:dec-parameter}
        \begin{aligned}
            \eta_t = \frac{\eta_0}{2^{\ell-1}}, && \text{if } m(\ell-1)+1 \leq t \leq m  \ell, && \text{where $m = \left\lceil\frac{n}{\mathrm{log}_2n}\right\rceil$ and $1\leq \ell \le   \left \lceil \mathrm{log}_2n \right \rceil $.}
        \end{aligned}
    \end{equation}

\subsubsection{Population Upper Bound}

    In this section, we provide the upper bounds for $\left\langle \tilde{f}_n^b, \boldsymbol{\Sigma} \tilde{f}_n^b\right\rangle_\mathcal{H}$ and $\mathbb E\left[\left\langle \tilde{f}_n^v,\boldsymbol{\Sigma} \tilde{f}_n^v\right\rangle_\mathcal{H}\right]$.
    
    By unrolling the recursive definitions of $\tilde{f}_n^b$ and $\tilde{f}_n^v$,we obtain the following expressions:
    \begin{equation}\nonumber
        \tilde{f}_n^b = \left(\prod_{i=1}^n(\boldsymbol{I} -\eta_i\boldsymbol{\Sigma})\right)(f_0-f_\rho^*);
    \end{equation}
    \begin{equation}\nonumber
        \tilde{f}_n^v = \sum_{i=1}^n\eta_i\left(\prod_{j=i+1}^n(\boldsymbol{I} -\eta_j\boldsymbol{\Sigma})\right)\Xi_i.
    \end{equation}

    The following lemma gives the upper bound of $\left\langle \tilde{f}_n^b, \boldsymbol{\Sigma} \tilde{f}_n^b\right\rangle_\mathcal{H}$. 
    
    \begin{lemma}
    \label{lem:dec-pop-bias upper bound}
        Consider SGD with exponentially decaying step size schedule defined in \eqref{def:dec-parameter}. Suppose that $\eta_0 \le \frac{1}{\lambda_1}$. Then we have
        \begin{equation}\nonumber
           \left\langle \tilde{f}_n^b, \boldsymbol{\Sigma} \tilde{f}_n^b\right\rangle_\mathcal{H} \le \left(\frac{s}{4e}\right)^s\left(\frac{\log_2 n}{n\eta_0}\right)^s\left\|\boldsymbol{\Sigma}^\frac{1-s}{2}(f_0-f_\rho^*)\right\|_\mathcal{H}^2.
        \end{equation}
        
    \end{lemma}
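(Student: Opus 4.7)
The core idea is to reduce the bound to a one-dimensional spectral optimization. Since $\boldsymbol{\Sigma}$ and every $\boldsymbol{I}-\eta_i\boldsymbol{\Sigma}$ commute and $\boldsymbol{\Sigma}$ is self-adjoint, I would first unroll $\tilde{f}_n^b=\prod_{i=1}^n(\boldsymbol{I}-\eta_i\boldsymbol{\Sigma})(f_0-f_\rho^*)$ and rewrite the target quantity in a form that isolates the source-condition norm:
\begin{equation*}
\langle \tilde f_n^b,\boldsymbol{\Sigma}\tilde f_n^b\rangle_\mathcal{H}
= \left\|\boldsymbol{\Sigma}^{1/2}\prod_{i=1}^n(\boldsymbol{I}-\eta_i\boldsymbol{\Sigma})(f_0-f_\rho^*)\right\|_\mathcal{H}^2
= \left\|\boldsymbol{\Sigma}^{s/2}\prod_{i=1}^n(\boldsymbol{I}-\eta_i\boldsymbol{\Sigma})\,\boldsymbol{\Sigma}^{(1-s)/2}(f_0-f_\rho^*)\right\|_\mathcal{H}^2,
\end{equation*}
using $\boldsymbol{\Sigma}^{1/2}\boldsymbol{\Sigma}^{-(1-s)/2}=\boldsymbol{\Sigma}^{s/2}$ together with commutativity. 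The operator-norm inequality then factors this as $\|\boldsymbol{\Sigma}^{s/2}\prod_i(\boldsymbol{I}-\eta_i\boldsymbol{\Sigma})\|_{\mathrm{op}}^2\cdot\|\boldsymbol{\Sigma}^{(1-s)/2}(f_0-f_\rho^*)\|_\mathcal{H}^2$, so the second factor is precisely the one appearing in the conclusion and only the operator-norm factor remains to be controlled.

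Second, I would bound the operator-norm factor by a scalar supremum over the spectrum of $\boldsymbol{\Sigma}$. For any eigenvalue $\lambda\in[0,\lambda_1]$, the corresponding diagonal entry of $\boldsymbol{\Sigma}^{s/2}\prod_i(\boldsymbol{I}-\eta_i\boldsymbol{\Sigma})$ equals $\lambda^{s/2}\prod_{i=1}^n(1-\eta_i\lambda)$. The hypothesis $\eta_0\le 1/\lambda_1$, combined with $\eta_i\le\eta_0$, guarantees $1-\eta_i\lambda\in[0,1]$, so I can apply $1-x\le e^{-x}$ termwise to obtain
\begin{equation*}
\lambda^{s/2}\prod_{i=1}^n(1-\eta_i\lambda)\ \le\ \lambda^{s/2}\exp\!\left(-\lambda\sum_{i=1}^n\eta_i\right).
\end{equation*}

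Third, I would evaluate $\sum_i\eta_i$ for the schedule in \eqref{def:dec-parameter}. With $L=\lceil\log_2 n\rceil$ and $m=\lceil n/\log_2 n\rceil$,
\begin{equation*}
\sum_{i=1}^n\eta_i\ =\ m\eta_0\sum_{\ell=1}^{L}\frac{1}{2^{\ell-1}}\ =\ m\eta_0\bigl(2-2^{1-L}\bigr)\ \ge\ 2m\eta_0\bigl(1-2^{-L}\bigr),
\end{equation*}
and $m\eta_0\ge n\eta_0/\log_2 n$. Writing $c$ for this lower bound on $\sum_i\eta_i$, the scalar function $\lambda\mapsto\lambda^{s/2}e^{-c\lambda}$ is maximized at $\lambda^*=s/(2c)$ with value $(s/(2ec))^{s/2}$. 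Squaring and substituting the estimate for $c$ yields the claimed factor $(s/(4e))^s(\log_2 n/(n\eta_0))^s$: the factor $4e$ comes from the doubling $\sum_\ell 2^{1-\ell}\to 2$ combined with the $e^{-s/2}$ from the optimization, and the $\log_2 n/n$ comes from $m\eta_0\asymp n\eta_0/\log_2 n$.

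The most delicate step will be tracking the constants in the last paragraph: the $4e$ rather than $2e$ relies on using the sharper lower bound $\sum_i\eta_i\ge 2m\eta_0(1-2^{-L})$ rather than the loose $\sum_i\eta_i\ge m\eta_0$, and the clean $\log_2 n/(n\eta_0)$ presupposes $L\ge\log_2 n$ so that $1-2^{-L}$ is absorbed into a lower-order correction. Apart from these bookkeeping issues, the argument is a routine spectral reduction followed by a one-dimensional calculus optimum.
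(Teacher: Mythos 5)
Your approach is a valid alternative to the paper's: where the paper bounds the full product $\prod_{i=1}^n(\boldsymbol{I}-\eta_i\boldsymbol{\Sigma})$ by simply discarding all factors beyond the first stage (so only $(\boldsymbol{I}-\eta_0\boldsymbol{\Sigma})^m$ survives, and the scalar optimization is $\sup_{0\le x\le 1}(1-x)^m x^{s/2}$), you keep all $n$ factors, apply $1-\eta_i\lambda\le e^{-\eta_i\lambda}$ termwise, and land on $\sup_\lambda \lambda^{s/2}e^{-c\lambda}$ with $c=\sum_i\eta_i$. Both routes then finish with the same one-dimensional calculus optimum $(s/(2ec))^{s/2}$ (with $c=m\eta_0$ in the paper's case). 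The approaches are equivalent in rate; yours is slightly more natural in that it doesn't throw away the later stages, but what it buys back is at most a constant.

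There is, however, a concrete gap in your bookkeeping of $\sum_i\eta_i$. The identity $\sum_{i=1}^n\eta_i=m\eta_0\sum_{\ell=1}^L 2^{1-\ell}=2m\eta_0(1-2^{-L})$ implicitly assumes $n=mL$, i.e.\ all $L=\lceil\log_2 n\rceil$ stages are complete. They are not: since $m=\lceil n/\log_2 n\rceil$ rounds up, $mL>n$ in general and the final stage (sometimes even the penultimate one) is truncated. For instance, with $n=10$ one has $L=4$, $m=4$, and $\sum_{i=1}^{10}\eta_i=4\eta_0+4(\eta_0/2)+2(\eta_0/4)=6.5\,\eta_0$, whereas your claimed lower bound is $2\cdot4\eta_0(1-1/16)=7.5\,\eta_0$. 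Since you need a \emph{lower} bound on $c$, the truncation works against you. The safe estimate is $c\ge m\eta_0\ge n\eta_0/\log_2 n$ (first stage alone), which yields the constant $(s/(2e))^s$ rather than the stated $(s/(4e))^s$. You also remark that the factor $1-2^{-L}$ ``is absorbed into a lower-order correction,'' but it enters the denominator, so it worsens rather than tightens the bound and cannot simply be dropped.

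For what it's worth, the paper's own final step has the same issue: passing from $(s/(2e))^s(1/(m\eta_0))^s$ to $(s/(4e))^s(\log_2 n/(n\eta_0))^s$ requires $m\ge 2n/\log_2 n$, which fails for $m=\lceil n/\log_2 n\rceil$. Both your argument and the paper's, when the constants are tracked honestly, give $(s/(2e))^s(\log_2 n/(n\eta_0))^s$, so the lemma's stated prefactor $(s/(4e))^s$ appears to be an over-optimistic constant. This is inconsequential for the theorems, which only invoke $\lesssim$-type bounds, but your plan should not present $4e$ as something the argument achieves.
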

    \begin{proof}
        
        By the definition of $\tilde{f}_n^b$, we have
        \begin{equation}\label{eq: tildefb norm equi}
            \begin{aligned}
                \left\langle \tilde{f}_n^b, \boldsymbol{\Sigma} \tilde{f}_n^b\right\rangle_\mathcal{H} &= \left\langle\left(\prod_{i=1}^{n}(\boldsymbol{I} -\eta_i\boldsymbol{\Sigma})\right)(f_0-f_\rho^*), \boldsymbol{\Sigma} \left(\prod_{i=1}^{n}(\boldsymbol{I} -\eta_i\boldsymbol{\Sigma})\right)(f_0-f_\rho^*)\right\rangle_\mathcal{H} \\
                &= \left\|\boldsymbol{\Sigma}^\frac{1}{2}\left(\prod_{i=1}^{n}(\boldsymbol{I} -\eta_i\boldsymbol{\Sigma})\right)(f_0-f_\rho^*)\right\|_\mathcal{H}^2.
            \end{aligned}
        \end{equation}

        Since $\eta _0 \le \frac{1}{\lambda _1}$ and $\eta _i\le \eta _0$, it follows that $\mathbf{0} \preceq  \boldsymbol{I} -\eta _i\boldsymbol{\Sigma}\preceq \boldsymbol{I}$. 
        Moreover,  for any $i$ and $j$, the operators $\boldsymbol{I} - \eta _i\boldsymbol{\Sigma}$, $\boldsymbol{I} - \eta _j\boldsymbol{\Sigma}$ and $\boldsymbol{\Sigma}$ are positive semidefinite, self-adjoint operators and mutually commuting operators. As a result, the following operator inequality holds:
        \begin{equation}\label{eq: matrix bound from n to m}
            \left(\prod_{i=1}^{n}(\boldsymbol{I} -\eta_i\boldsymbol{\Sigma}) \right) \boldsymbol{\Sigma}\left(\prod_{i=1}^{n}(\boldsymbol{I} -\eta_i\boldsymbol{\Sigma})\right) \preceq \left(\prod_{i=1}^{m}(\boldsymbol{I} -\eta_0\boldsymbol{\Sigma})\right)\boldsymbol{\Sigma}\left(\prod_{i=1}^{m}(\boldsymbol{I} -\eta_0\boldsymbol{\Sigma})\right) \preceq (\boldsymbol{I} -\eta_0\boldsymbol{\Sigma})^{m}\boldsymbol{\Sigma}(\boldsymbol{I} -\eta_0\boldsymbol{\Sigma})^{m}.
        \end{equation}

        Substituting \eqref{eq: matrix bound from n to m} into \eqref{eq: tildefb norm equi}, we have
        \begin{equation}\label{eq: general-pop-bias}
            \begin{aligned}
                \left\langle \tilde{f}_n^b, \boldsymbol{\Sigma} \tilde{f}_n^b\right\rangle_\mathcal{H} &\le \left\|\boldsymbol{\Sigma}^\frac{1}{2}(\boldsymbol{I} -\eta_0\boldsymbol{\Sigma})^m(f_0-f_\rho^*)\right\|_\mathcal{H}^2 \\ &\le \frac{1}{\eta_0^s} \left\|(\boldsymbol{I} -\eta_0\boldsymbol{\Sigma})^m(\eta_0\boldsymbol{\Sigma})^\frac{s}{2}\right\|^2\left\|\boldsymbol{\Sigma}^\frac{1-s}{2}(f_0-f_\rho^*)\right\|_\mathcal{H}^2,
            \end{aligned}
        \end{equation}
       where the second inequality uses the operator norm bound $\|\boldsymbol{T}f\|_\mathcal{H} \le \|\boldsymbol{T}\|\|f\|_\mathcal{H}.$
        
        To bound the operator norm $\left\|(\boldsymbol{I} -\eta_0\boldsymbol{\Sigma})^m(\eta_0\boldsymbol{\Sigma})^\frac{s}{2}\right\|$, we use the inequality
        \begin{equation}\nonumber
            \sup_{0\le x\le 1}{(1-x)^mx^\frac{s}{2}} \le \sup_{0\le x\le 1}{\exp(-mx)x^\frac{s}{2}} \le \left(\frac{s}{2em}\right)^\frac{s}{2},
        \end{equation}
        which leads to the bound
        \begin{equation}\nonumber
            \left\langle \tilde{f}_n^b, \boldsymbol{\Sigma} \tilde{f}_n^b\right\rangle_\mathcal{H} \le \left(\frac{s}{2e}\right)^s\left(\frac{1}{m\eta_0}\right)^s\left\|\boldsymbol{\Sigma}^\frac{1-s}{2}(f_0-f_\rho^*)\right\|_\mathcal{H}^2 \le \left(\frac{s}{4e}\right)^s\left(\frac{\log_2 n}{n\eta_0}\right)^s\left\|\boldsymbol{\Sigma}^\frac{1-s}{2}(f_0-f_\rho^*)\right\|_\mathcal{H}^2.
        \end{equation}

    \end{proof}
    The following lemma gives the upper bound of $\mathbb E\left[\left\langle \tilde{f}_n^v, \boldsymbol{\Sigma} \tilde{f}_n^v\right\rangle_\mathcal{H}\right]$. 
    \begin{lemma}
    \label{lem:dec-pop-variance upper bound}
        Consider SGD with exponentially decaying step size schedule defined in \eqref{def:dec-parameter}. Suppose that $\eta_0 \le \frac{1}{\lambda_1}$. Then, for any integer $k^*\ge 1$, the following bound holds:
        \begin{equation}\nonumber
            \mathbb E\left[\left\langle \tilde{f}_n^v, \boldsymbol{\Sigma} \tilde{f}_n^v\right\rangle_\mathcal{H}\right] \le \sigma^2 \left(\left(\frac{16\log_2^2 n}{e^2} + \frac{\eta_0^2}{16\log_2 n}\right)\frac{k^*}{n}+\sum_{k=k^*+1}^{+\infty}n\eta_0^2\lambda_k^2\right).
        \end{equation}
        
    \end{lemma}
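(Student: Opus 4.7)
My plan is to first turn the quadratic form into a spectral sum, then split the sum by eigendirection into head and tail, with the head analyzed stage by stage to exploit the exponentially decaying step sizes.

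First, unrolling the recursion for $\tilde{f}_n^v$ gives
\begin{equation*}
\tilde{f}_n^v \;=\; \sum_{i=1}^{n} \eta_i \Bigl(\prod_{j=i+1}^{n}(\boldsymbol{I}-\eta_j\boldsymbol{\Sigma})\Bigr)\Xi_i .
\end{equation*}
Since the $\Xi_i$ are independent with $\mathbb{E}[\Xi_i]=0$ and $\mathbb{E}[\Xi_i\otimes\Xi_i]\preceq\sigma^{2}\boldsymbol{\Sigma}$ by Assumption~\ref{ass-noise}, the cross terms vanish and, after diagonalizing in the eigenbasis $\{\lambda_k^{1/2}\phi_k\}$ of $\boldsymbol{\Sigma}$, one obtains
\begin{equation*}
\mathbb{E}\bigl[\langle\tilde{f}_n^v,\boldsymbol{\Sigma}\tilde{f}_n^v\rangle_{\mathcal{H}}\bigr] \;\le\; \sigma^{2}\sum_{k=1}^{\infty}\lambda_k^{2}\sum_{i=1}^{n}\eta_i^{2}\prod_{j=i+1}^{n}(1-\eta_j\lambda_k)^{2}.
\end{equation*}

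Next I split at $k^{*}$. For the tail $k>k^{*}$ I drop the product (bounded by $1$) and use $\sum_i\eta_i^{2}\le n\eta_0^{2}$, which already yields $\sigma^{2} n\eta_0^{2}\sum_{k>k^{*}}\lambda_k^{2}$, matching the second piece of the claim. For the head $k\le k^{*}$ I regroup the inner sum by stage. Within stage $\ell$ the step size $\eta_\ell=\eta_0/2^{\ell-1}$ is constant, so the inner geometric sum over $i\in\{m(\ell-1)+1,\dots,m\ell\}$ is at most $1/(\eta_\ell\lambda_k)$, leaving a per-stage contribution
\begin{equation*}
\eta_\ell\lambda_k \prod_{\ell'>\ell}(1-\eta_{\ell'}\lambda_k)^{2m}.
\end{equation*}
For $\ell<L$ I use the single-stage factor $(1-\eta_{\ell+1}\lambda_k)^{2m}\le\exp(-m\eta_\ell\lambda_k)$ (since $2\eta_{\ell+1}=\eta_\ell$), and then the elementary inequality $xe^{-x}\le 1/e$ bounds each of the first $L-1$ stages by $1/(em)$. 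The last stage is handled separately, combining $\eta_L\le 2\eta_0/n$ with the trivial bound $m\lambda_k^{2}\eta_L^{2}$ that becomes tight precisely when $m\eta_L\lambda_k$ is small.

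Summing the per-stage bounds over $\ell\in\{1,\dots,L\}$ with $L=\lceil\log_2 n\rceil$ and $m=\lceil n/\log_2 n\rceil$, then over $k\in\{1,\dots,k^{*}\}$, and finally combining with the tail contribution gives the claimed estimate. The main obstacle is the bookkeeping in the final step: the clean per-stage bound $1/(em)$ does not apply in stage $L$ (since there $m\eta_L\lambda_k$ can already be below $1$ for $k$ near $k^{*}$), so the last stage must be split into two regimes, which is exactly what produces the slightly asymmetric constants $16\log_2^{2} n/e^{2}$ and $\eta_0^{2}/(16\log_2 n)$ in the statement.
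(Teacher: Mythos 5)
Your overall strategy is sound and leads to the right order of magnitude, but you take a genuinely different route from the paper for the head term $k\le k^*$. The paper keeps the per-iteration form: for every $i\le n-m$ it uses the fact that the $m$ iterations immediately after $i$ all have step size at least $\eta_i/2$, so $\prod_{j=i+1}^{n}(1-\eta_j\lambda_k)^2\le(1-\tfrac{\eta_i}{2}\lambda_k)^{2m}$, and then applies the supremum bound $\sup_{x}(1-x/2)^{2m}x^2\le(2/(em))^2$ to each of the $n-m$ summands. You instead regroup the sum by stage, collapse the within-stage geometric series to $1/(\eta_\ell\lambda_k)$, and bound the $L-1$ resulting per-stage contributions by $1/(em)$ each. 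This is a legitimate alternative, and it arguably gives a cleaner bound of order $L/(em)\approx\log_2^2 n/(en)$ rather than $n(2/(em))^2$; the paper's extra slack is just a different choice of where to be loose.

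There is, however, a genuine gap at the boundary. When you write the per-stage residual as $\eta_\ell\lambda_k\prod_{\ell'>\ell}(1-\eta_{\ell'}\lambda_k)^{2m}$ and then keep only $(1-\eta_{\ell+1}\lambda_k)^{2m}$, you are implicitly assuming that stage $\ell+1$ contains a full $m$ iterations. This holds for $\ell\le L-2$, but for $\ell=L-1$ the last stage has only $n-m(L-1)\le m$ iterations (possibly far fewer, or even zero), so the true future product $Q_{L-1}=(1-\eta_L\lambda_k)^{2(n-m(L-1))}$ is \emph{larger} than $(1-\eta_L\lambda_k)^{2m}$ and the inequality goes the wrong way. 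The per-iteration decomposition used in the paper sidesteps this because the window $[i+1,i+m]$ is always complete for $i\le n-m$. Your argument can be repaired — e.g.\ bound stage $L-1$ crudely by $\eta_{L-1}\lambda_k\le \eta_{L-1}/\eta_0 = 2^{2-L}\le 4/n$, which fits inside the claimed $O(\log_2^2 n/n)$ term for $n\ge 3$ — but as written the stage-$(L-1)$ step is not justified. Finally, note that your derivation will produce somewhat different numerical constants than those appearing in the lemma (and the paper's own bound $\eta_j\le\eta_0/(4n)$ for $j>n-m$ looks like it should read $4\eta_0/n$); these constants are not structurally important, but you should not present your argument as recovering the lemma's exact coefficients.
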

    \begin{proof}
        
        Notice that for any $1 \le i\ne j \le n$, $\Xi_i$ and $\Xi_j$ are independent.  Moreover, under Assumption~\ref{ass-noise}, we have $E\left[\Xi_i\right]=0$. Therefore
        \begin{equation}\nonumber
            \mathbb E\left[\Xi_i\otimes \Xi_j\right] = \mathbb E\left[\Xi_i\right] \otimes \mathbb E\left[\Xi_j\right] =\boldsymbol{0}.
        \end{equation}

      Consequently, we obtain the following expression for $\mathbb E\left[\left\langle \tilde{f}_n^v, \boldsymbol{\Sigma} \tilde{f}_n^v\right\rangle_\mathcal{H}\right]$:
        \begin{equation}\label{eq: tildefv norm equi}
            \begin{aligned}
                \mathbb E\left[\left\langle \tilde{f}_n^v, \boldsymbol{\Sigma} \tilde{f}_n^v\right\rangle_\mathcal{H}\right] &= \mathbb E\left[\left\langle\sum_{i=1}^{n}\eta_i\left(\prod_{j=i+1}^{n}(\boldsymbol{I} -\eta_j\boldsymbol{\Sigma})\right)\Xi_i, \boldsymbol{\Sigma} \sum_{i=1}^{n}\eta_i\left(\prod_{j=i+1}^{n}(\boldsymbol{I} -\eta_j\boldsymbol{\Sigma})\right)\Xi_i\right\rangle_\mathcal{H}\right] \\
                &= \sum_{i=0}^{n}\eta_i^2\mathbb E\left[\mathrm{tr}\left(\left(\prod_{j=i+1}^{n}(\boldsymbol{I} -\eta_j\boldsymbol{\Sigma})^2\right)\boldsymbol{\Sigma}\Xi_i\otimes \Xi_i\right)\right] \\
                &= \sum_{i=0}^{n}\eta_i^2\mathrm{tr}\left(\left(\prod_{j=i+1}^{n}(\boldsymbol{I} -\eta_j\boldsymbol{\Sigma})^2\right)\boldsymbol{\Sigma}\mathbb E\left[\Xi_i\otimes \Xi_i\right]\right),
            \end{aligned}
        \end{equation}
        where the second equality follows from the mutual independence and orthogonality of each $\Xi_i$, and from the commutativity of the self-adjoint operators $\boldsymbol{I}-\eta_j\boldsymbol{\Sigma}$.

        Under Assumption~\ref{ass-noise} we have $\mathbb E\left[\Xi_i\otimes \Xi_i\right] \preceq \sigma^2\boldsymbol{\Sigma}$, for any $1\le i \le n$. Therefore, we obtain
        \begin{equation}\nonumber
            \begin{aligned}
                \mathbb E\left[\left\langle \tilde{f}_n^v, \boldsymbol{\Sigma} \tilde{f}_n^v\right\rangle_\mathcal{H}\right]\le \sigma^2\sum_{i=1}^{n}\eta_i^2\mathrm{tr}\left(\left(\prod_{j=i+1}^{n}(\boldsymbol{I} -\eta_j\boldsymbol{\Sigma})^2 \right) \boldsymbol{\Sigma}^2\right)=\sum_{k=1}^{+\infty}\sigma^2\sum_{i=1}^n\eta_i^2\left ( \prod_{j=i+1}^{n}  \left ( 1-\eta_j\lambda_k \right ) ^2\right )\lambda_k^2  .
            \end{aligned}
        \end{equation}
        We split the summation on the RHS into two parts: $k\le k^*$ and $k>k^*$. We then bound each term individually by analyzing the contribution of each eigendirection.
    
       For direction $k\le k^*$, we use the inequality
        \begin{equation}\nonumber
            \left ( \prod_{j=i+1}^{n}  \left ( 1-\eta_j\lambda_k \right ) ^2\right )\lambda_k^2 \preceq
            \begin{cases}
                 \left (\prod\limits_{j=i+1}^{i+m}(\boldsymbol{I} -\frac{\eta_i}{2}\lambda_k)^2\right )\lambda_k^2 \preceq \left (\left(\boldsymbol{I} - \frac{\eta_i}{2}\lambda_k\right)^{2m}\right )\lambda_k^2, & \forall 1 \le i \le n-m; \\
                \lambda_k^2, &\forall n-m < i \le n.
            \end{cases}
        \end{equation}
    This leads to the bound:
        \begin{equation}\nonumber
    \sigma^2\sum_{i=1}^n\eta_i^2\left ( \prod_{j=i+1}^{n}  \left ( 1-\eta_j\lambda_k \right ) ^2\right )\lambda_k^2 \le \sigma^2\sum_{i=1}^{n-m}\left(1 -\frac{\eta_i}{2}\lambda_k\right)^{2m}(\eta_i\lambda_k)^2 + \sigma^2\sum_{i=n-m+1}^{n-1}\eta_i^2\lambda_k^2.
        \end{equation}

     For direction $k> k^*$, we simply use the bound:
        \begin{equation}\nonumber
      \sigma^2\sum_{i=1}^n\eta_i^2\left ( \prod_{j=i+1}^{n}  \left ( 1-\eta_j\lambda_k \right ) ^2\right )\lambda_k^2  \le \sigma^2\sum_{k=k^*+1}^{+\infty}n\eta_0^2\lambda_k^2.
        \end{equation}
        
      Combining the two bounds, we obtain
        \begin{equation}\label{eq: up27}
            \begin{aligned}
                \mathbb E\left[\left\langle \tilde{f}_n^v, \boldsymbol{\Sigma} \tilde{f}_n^v\right\rangle_\mathcal{H}\right] \le \sigma^2\sum_{k=1}^{k^*}\left(\sum_{j=1}^{n-m}\left(1-\frac{\eta_j}{2}\lambda_k\right)^{2m}(\eta_j\lambda_k)^2+\sum_{j=n-m+1}^{n}\eta_j^2\lambda_k^2\right)+\sigma^2\sum_{k=k^*+1}^{+\infty}n\eta_0^2\lambda_k^2.
            \end{aligned}
        \end{equation}
        
        For $j > n-m$, we note that
        \begin{equation}\label{eq: up28}
            \eta_j \le \frac{\eta_0}{2^{\left\lfloor\frac{n-m}{m}\right\rfloor}} \le \frac{\eta_0}{2^{\frac{n}{m} - 2}} \le \frac{\eta_0}{2^{\log_2 n - 2}} \le \frac{\eta_0}{4n}.
        \end{equation}

      For $j\le n-m$, we use the following inequality to bound $\left(1-\frac{\eta_j}{2}\lambda_k\right)^{2m}(\eta_j\lambda_k)^2$: 
        \begin{equation}\label{eq: up29}
            \sup_{0\le x \le 1} \left(1-\frac{x}{2}\right)^{2m}x^2 \le \sup_{0 \le x \le 1}\exp(-mx)x^2 = \left(\frac{2}{em}\right)^2.
        \end{equation}
        Finally, substituting \eqref{eq: up28} and \eqref{eq: up29} into \eqref{eq: up27}, we have 
        \begin{equation}\nonumber
            \begin{aligned}
                \mathbb E\left[\left\langle \tilde{f}_n^v, \boldsymbol{\Sigma} \tilde{f}_n^v\right\rangle_\mathcal{H}\right] &\le \sigma^2\sum_{k=1}^{k^*}\left((n-m)\left(\frac{2}{em}\right)^2+m\left(\frac{\eta_0}{4n}\right)^2\right)+\sigma^2\sum_{k=k^*+1}^{+\infty}n\eta_0^2\lambda_k^2 \\
                &\le \sigma^2 k^*\left(n\left(\frac{4\log_2 n}{en}\right)^2 + \frac{n}{\log_2 n}\left(\frac{\eta_0}{4n}\right)^2\right) + \sigma^2\sum_{k=k^*+1}^{+\infty}n\eta_0^2\lambda_k^2 \\
                &= \sigma^2 \left(\left(\frac{16\log_2^2 n}{e^2} + \frac{\eta_0^2}{16\log_2 n}\right)\frac{k^*}{n}+\sum_{k=k^*+1}^{+\infty}n\eta_0^2\lambda_k^2\right).
            \end{aligned}
        \end{equation}
        
    \end{proof}

    \subsubsection{\texorpdfstring{Effect of Replacing $K_{\mathbf{x}} \otimes K_{\mathbf{x}}$ with $\boldsymbol{\Sigma}$}{Effect of Replacing Kx⊗Kx with Σ}}

        In this section, we incorporate the operator $K_{\mathbf{x}} \otimes K_{\mathbf{x}}$ into the analysis, and provide upper bounds for $\mathbb E\left[\left\langle f_n^{b(1)}, \boldsymbol{\Sigma} f_n^{b(1)}\right\rangle_\mathcal{H}\right]$ and $\mathbb E\left[\left\langle f_n^{v(1)}, \boldsymbol{\Sigma} f_n^{v(1)}\right\rangle_\mathcal{H}\right]$. These results will be used to provide the upper bounds of the bias term $\mathbb E\left[\left\langle f_n^b, \boldsymbol{\Sigma} f_n^b\right\rangle_\mathcal{H}\right]$ and the variance term $\mathbb E\left[\left\langle f_n^v, \boldsymbol{\Sigma} f_n^v\right\rangle_\mathcal{H}\right]$.
    
    Recalling the recursive definitions in \eqref{def:bias-res} and \eqref{def:variance-res}, and solving the resulting expressions of $f_t^{b(1)}$ and $f_t^{v(1)}$, we obtain
    \begin{equation}\nonumber
        \begin{aligned}
            & f_n^{b(1)} = \sum_{i=1}^{n}\eta_i\left(\prod_{j=i+1}^{n}(\boldsymbol{I} - \eta_j\boldsymbol{\Sigma})\right)\Xi_i^{b(1)}, & f_n^{v(1)} = \sum_{i=1}^{n}\eta_i\left(\prod_{j=i+1}^{n}(\boldsymbol{I} - \eta_j\boldsymbol{\Sigma})\right)\Xi_i^{v(1)},
        \end{aligned}
    \end{equation}
    where
    \begin{equation}\nonumber
        \begin{aligned}
            & \Xi_i^{b(1)} = \left(\boldsymbol{\Sigma} - K_{x_i}\otimes K_{x_i}\right)f_{i-1}^b, & \Xi_i^{v(1)} = \left(\boldsymbol{\Sigma} - K_{x_i}\otimes K_{x_i}\right)f_{i-1}^v.
        \end{aligned}
    \end{equation}

    To bound the quantities $\mathbb E\left[\left\langle f_n^{b(1)}, \boldsymbol{\Sigma} f_n^{b(1)}\right\rangle_\mathcal{H}\right]$ and $\mathbb E\left[\left\langle f_n^{v(1)}, \boldsymbol{\Sigma} f_n^{v(1)}\right\rangle_\mathcal{H}\right]$. We derive upper bounds for the operators $\mathbb E\left[\Xi_i^{b(1)} \otimes \Xi_i^{b(1)}\right]$ and $\mathbb E\left[\Xi_i^{v(1)} \otimes \Xi_i^{v(1)}\right]$, as stated in Lemma~\ref{lem: upper-b(1)} and Lemma~\ref{lem: upper-v(1)}. 

    The following lemma provides a key auxiliary result used in proving Lemma~\ref{lem: upper-b(1)}. It shows that $\mathbb E\left[\|f_t^b\|_{\mathcal{H}}^2\right]$ can be uniformly bounded by the initial error $\|f_0-f_\rho^*\|_{\mathcal{H}}^2$ throughout the iteration. 

    \begin{lemma}
       Consider SGD with exponentially decaying step size schedule defined in \eqref{def:dec-parameter}. Suppose that $\eta_0 \le \frac{2}{\kappa^2}$. Then for any iteration $t$, we have
       \begin{equation}\nonumber
           \mathbb E\left[\|f_t^b\|_{\mathcal{H}}^2\right] \le \|f_0-f_\rho^*\|_{\mathcal{H}}^2.
       \end{equation}
        
    \end{lemma}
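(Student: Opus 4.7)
The plan is to prove monotonicity of $\mathbb{E}[\|f_t^b\|_{\mathcal{H}}^2]$ in $t$ via a one-step contraction argument. Starting from the recursion $f_t^b = (\boldsymbol{I} - \eta_t K_{\mathbf{x}_t}\otimes K_{\mathbf{x}_t})f_{t-1}^b$, I would expand the squared $\mathcal{H}$-norm as
\begin{equation}\nonumber
\|f_t^b\|_{\mathcal{H}}^2 = \|f_{t-1}^b\|_{\mathcal{H}}^2 - 2\eta_t \left\langle f_{t-1}^b, (K_{\mathbf{x}_t}\otimes K_{\mathbf{x}_t})f_{t-1}^b\right\rangle_{\mathcal{H}} + \eta_t^2 \left\|(K_{\mathbf{x}_t}\otimes K_{\mathbf{x}_t})f_{t-1}^b\right\|_{\mathcal{H}}^2.
\end{equation}

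Next I would simplify the two stochastic terms using the reproducing property. Since $(K_{\mathbf{x}_t}\otimes K_{\mathbf{x}_t})f_{t-1}^b = \langle f_{t-1}^b, K_{\mathbf{x}_t}\rangle_{\mathcal{H}} K_{\mathbf{x}_t} = f_{t-1}^b(\mathbf{x}_t)\,K_{\mathbf{x}_t}$, the cross term equals $f_{t-1}^b(\mathbf{x}_t)^2$, while the squared term equals $f_{t-1}^b(\mathbf{x}_t)^2 \|K_{\mathbf{x}_t}\|_{\mathcal{H}}^2 = f_{t-1}^b(\mathbf{x}_t)^2 K(\mathbf{x}_t,\mathbf{x}_t)$. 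Invoking Property~\ref{pro-kernel}(a) to bound $K(\mathbf{x}_t,\mathbf{x}_t)\le \kappa^2$ yields
\begin{equation}\nonumber
\|f_t^b\|_{\mathcal{H}}^2 \le \|f_{t-1}^b\|_{\mathcal{H}}^2 - \eta_t(2 - \eta_t\kappa^2)\, f_{t-1}^b(\mathbf{x}_t)^2.
\end{equation}

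Since $\eta_t \le \eta_0 \le \frac{2}{\kappa^2}$, the coefficient $\eta_t(2-\eta_t\kappa^2)$ is non-negative, so $\|f_t^b\|_{\mathcal{H}}^2 \le \|f_{t-1}^b\|_{\mathcal{H}}^2$ pointwise. Taking expectations and iterating from $t=0$ with $f_0^b = f_0 - f_\rho^*$ gives the claimed bound. There is no serious obstacle here: the argument is a deterministic-on-each-sample contraction, and the $2/\kappa^2$ threshold appears naturally as the classical SGD stability condition $\eta\cdot(\text{maximum curvature along the data}) \le 2$, which uses only boundedness of the kernel rather than any concentration of $K_{\mathbf{x}}\otimes K_{\mathbf{x}}$ around $\boldsymbol{\Sigma}$.
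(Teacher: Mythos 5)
Your proof is correct, and it takes a genuinely different (and arguably cleaner) route than the paper's. The paper first takes expectations in the one-step recursion, invokes independence of $K_{\mathbf{x}_t}$ and $f_{t-1}^b$ to pull expectations inside, and then applies the operator inequality from Property~\ref{pro-kernel}(c), $\mathbb E[K(\mathbf{x},\mathbf{x})K_{\mathbf{x}}\otimes K_{\mathbf{x}}]\preceq\kappa^2\boldsymbol{\Sigma}$, to conclude that the correction term $(-2\eta_t+\eta_t^2\kappa^2)\mathbb E\langle f_{t-1}^b,\boldsymbol{\Sigma}f_{t-1}^b\rangle_\mathcal{H}$ is non-positive. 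You instead establish a \emph{pointwise} (almost-sure) contraction: using the reproducing property, the rank-one structure of $K_{\mathbf{x}_t}\otimes K_{\mathbf{x}_t}$ collapses the cross and quadratic terms to $f_{t-1}^b(\mathbf{x}_t)^2$ and $f_{t-1}^b(\mathbf{x}_t)^2 K(\mathbf{x}_t,\mathbf{x}_t)$ respectively, so that $\|f_t^b\|_\mathcal{H}^2 = \|f_{t-1}^b\|_\mathcal{H}^2 - \eta_t\bigl(2-\eta_t K(\mathbf{x}_t,\mathbf{x}_t)\bigr)f_{t-1}^b(\mathbf{x}_t)^2$ holds \emph{deterministically}, and $\eta_0\le 2/\kappa^2$ together with Property~\ref{pro-kernel}(a) makes the correction non-positive sample by sample. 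Your approach needs neither independence of the fresh sample from the past iterate nor the moment inequality (c); it relies only on the rank-one structure and the sup-norm bound on the kernel. This buys you a stronger (almost-sure rather than in-expectation) monotonicity statement and makes the $2/\kappa^2$ threshold transparent as the classical stability condition. The paper's operator-level argument, while slightly less elementary here, is of the same form used in the harder companion lemmas (e.g.\ the variance bound $\mathbb E[f_t^v\otimes f_t^v]\preceq\alpha\boldsymbol{I}$) where a pointwise contraction is unavailable, so it keeps the proofs uniform.
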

    \begin{proof}

        Firstly, it is obvious that $\mathbb E\left[\|f_0^b\|_{\mathcal{H}}^2\right] \le \|f_0-f_\rho^*\|_{\mathcal{H}}^2$.

        Notice that for any $1 \le t \le n$, $K_{{\mathbf{x}_t}}$ and $f_{t-1}^b$ are independent, then for any $1 \le t \le n$, we have
        \begin{equation}\nonumber
            \begin{aligned}
                \mathbb E\left[\|f_t^b\|_{\mathcal{H}}^2\right] &= \mathbb E\left[\|(\boldsymbol{I} - \eta_t K_{{\mathbf{x}_t}}\otimes K_{{\mathbf{x}_t}})f_{t-1}^b\|_{\mathcal{H}}^2\right] \\
                &= \mathbb E\left[\|f_{t-1}^b\|_{\mathcal{H}}^2\right] - 2\eta_t\mathbb E\left[\langle f_{t-1}^b, (K_{{\mathbf{x}_t}} \otimes K_{{\mathbf{x}_t}}) f_{t-1}^b\rangle_\mathcal{H}\right] + \eta_t^2\mathbb E\left[\langle (K_{{\mathbf{x}_t}} \otimes K_{{\mathbf{x}_t}}) f_{t-1}^b, (K_{{\mathbf{x}_t}} \otimes K_{{\mathbf{x}_t}}) f_{t-1}^b\rangle_\mathcal{H}\right] \\
                &= \mathbb E\left[\|f_{t-1}^b\|_{\mathcal{H}}^2\right] - 2\eta_t\mathbb E\left[\langle f_{t-1}^b, \mathbb E\left[(K_{{\mathbf{x}_t}} \otimes K_{{\mathbf{x}_t}})\right] f_{t-1}^b\rangle_\mathcal{H}\right] + \eta_t^2\mathbb E\left[\langle f_{t-1}^b, \mathbb E\left[K(\mathbf{x}_t,\mathbf{x}_t)K_{{\mathbf{x}_t}} \otimes K_{{\mathbf{x}_t}}\right] f_{t-1}^b\rangle_\mathcal{H}\right] \\
                &\le \mathbb E\left[\|f_{t-1}^b\|_{\mathcal{H}}^2\right] + \left(-2\eta_t + \eta_t^2\kappa^2\right)\mathbb E\left[\left\langle f_{t-1}^b,\boldsymbol{\Sigma} f_{t-1}^b\right\rangle_\mathcal{H}\right],
            \end{aligned}
        \end{equation}
        where in the inequality, we use $\mathbb E\left[K_{\mathrm{x}_t}\otimes K_{\mathrm{x}_t}\right]=\boldsymbol{\Sigma}$ and $\mathbb E\left[K(\mathbf{x}_t,\mathbf{x}_t)K_{{\mathbf{x}_t}} \otimes K_{{\mathbf{x}_t}}\right] \preceq \kappa^2\boldsymbol{\Sigma}$.

        Since the function $g(x)=-2x+x^2\kappa^2$ is non-positive for $0 \le x \le \frac{2}{\kappa^2}$, we have $-2\eta_t + \eta_t^2\kappa^2 \le 0$.

        Therefore, for any $1 \le t \le n$, we have
        \begin{equation}\nonumber
            \mathbb E\left[\|f_t^b\|_{\mathcal{H}}^2\right] \le \mathbb E[\|f_{t-1}^b\|_{\mathcal{H}}^2].
        \end{equation}

        By recursion, for any $0 \le t \le n$, we have
        \begin{equation}\nonumber
            \mathbb E\left[\|f_t^b\|_{\mathcal{H}}^2\right] \le \|f_0-f_\rho^*\|_{\mathcal{H}}^2.
        \end{equation}

    \end{proof}
    The following lemma gives the upper bound of $\mathbb E\left[\Xi_i^{b(1)} \otimes \Xi_i^{b(1)}\right]$.
    \begin{lemma}\label{lem: upper-b(1)}
    
       Consider SGD with exponentially decaying step size schedule defined in \eqref{def:dec-parameter}. Suppose that $\eta_0 \le \frac{2}{\kappa^2}$. Then for any iteration $t$, we have
       \begin{equation}\nonumber
            \mathbb E\left[\Xi_t^{b(1)} \otimes \Xi_t^{b(1)}\right] \preceq \|f_0-f_\rho^*\|_{\mathcal{H}}^2 \kappa^2\boldsymbol{\Sigma}.
       \end{equation}
        
    \end{lemma}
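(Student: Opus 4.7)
The plan is to peel off the randomness in two stages, exploiting the fact that $f_{t-1}^b$ is measurable with respect to $\mathcal{F}_{t-1}=\sigma((\mathbf{x}_i,y_i):i\le t-1)$ while $\mathbf{x}_t$ is an independent fresh draw from $\rho_{\mathcal{X}}$. Conditioning on $\mathcal{F}_{t-1}$ and writing $g\defeq f_{t-1}^b$, the reproducing identity $(K_{\mathbf{x}_t}\otimes K_{\mathbf{x}_t})g=\langle g,K_{\mathbf{x}_t}\rangle_{\mathcal{H}} K_{\mathbf{x}_t}=g(\mathbf{x}_t)K_{\mathbf{x}_t}$ yields $\Xi_t^{b(1)}=\boldsymbol{\Sigma} g-g(\mathbf{x}_t)K_{\mathbf{x}_t}$. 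Crucially, $\mathbb{E}[g(\mathbf{x}_t)K_{\mathbf{x}_t}\mid \mathcal{F}_{t-1}]=\mathbb{E}[K_{\mathbf{x}_t}\otimes K_{\mathbf{x}_t}]\,g=\boldsymbol{\Sigma} g$, so $\Xi_t^{b(1)}$ is centered conditionally on $\mathcal{F}_{t-1}$.

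Next, I would expand the tensor product $\Xi_t^{b(1)}\otimes\Xi_t^{b(1)}$ into four pieces, and use the centering above to collapse the two cross terms; the net effect is
\begin{equation*}
\mathbb{E}\!\left[\Xi_t^{b(1)}\otimes\Xi_t^{b(1)}\mid \mathcal{F}_{t-1}\right]=\mathbb{E}\!\left[g(\mathbf{x}_t)^2\,K_{\mathbf{x}_t}\otimes K_{\mathbf{x}_t}\mid \mathcal{F}_{t-1}\right]-(\boldsymbol{\Sigma} g)\otimes(\boldsymbol{\Sigma} g)\;\preceq\;\mathbb{E}\!\left[g(\mathbf{x}_t)^2\,K_{\mathbf{x}_t}\otimes K_{\mathbf{x}_t}\mid \mathcal{F}_{t-1}\right],
\end{equation*}
where the last inequality drops the positive semi-definite rank-one term $(\boldsymbol{\Sigma} g)\otimes(\boldsymbol{\Sigma} g)$. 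To handle the remaining expectation, I would apply the pointwise Cauchy--Schwarz bound $g(\mathbf{x})^2=\langle g,K_\mathbf{x}\rangle_{\mathcal{H}}^2\le \|g\|_{\mathcal{H}}^2\,K(\mathbf{x},\mathbf{x})$; since $\|g\|_\mathcal{H}^2$ is $\mathcal{F}_{t-1}$-measurable and $K(\mathbf{x}_t,\mathbf{x}_t)\,K_{\mathbf{x}_t}\otimes K_{\mathbf{x}_t}$ is a positive semi-definite operator, multiplication preserves the Loewner order and Property~\ref{pro-kernel}(c) then gives
\begin{equation*}
\mathbb{E}\!\left[\Xi_t^{b(1)}\otimes\Xi_t^{b(1)}\mid\mathcal{F}_{t-1}\right]\preceq \|f_{t-1}^b\|_{\mathcal{H}}^2\,\mathbb{E}\!\left[K(\mathbf{x}_t,\mathbf{x}_t)K_{\mathbf{x}_t}\otimes K_{\mathbf{x}_t}\right]\preceq \|f_{t-1}^b\|_{\mathcal{H}}^2\,\kappa^2\boldsymbol{\Sigma}.
\end{equation*}

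Finally, taking total expectation and applying the preceding lemma, whose hypothesis $\eta_0\le 2/\kappa^2$ is carried over to the present statement, I would bound $\mathbb{E}\!\left[\|f_{t-1}^b\|_{\mathcal{H}}^2\right]\le\|f_0-f_\rho^*\|_{\mathcal{H}}^2$, which yields the claim. The only real obstacle is operator-valued bookkeeping: specifically, one must justify that the cancellation of the cross terms is legitimate under $\mathcal{F}_{t-1}$-conditioning, and that pulling the scalar bound on $g(\mathbf{x})^2$ through the outer expectation preserves the partial order on self-adjoint operators. Both steps are routine once the conditioning structure has been made explicit.
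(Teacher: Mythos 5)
Your proof is correct and takes essentially the same route as the paper's. The difference is cosmetic: the paper applies the operator inequality $\mathbb E\left[f_{t-1}^b\otimes f_{t-1}^b\right]\preceq\mathbb E\left[\|f_{t-1}^b\|_{\mathcal H}^2\right]\boldsymbol I$ and conjugates by $\boldsymbol{\Sigma}-K_{\mathbf{x}_t}\otimes K_{\mathbf{x}_t}$ before expanding $\mathbb E\left[(\boldsymbol{\Sigma}-K_{\mathbf{x}}\otimes K_{\mathbf{x}})^2\right]$, whereas you collapse the cross terms via the conditional centering of $\Xi_t^{b(1)}$ given $\mathcal F_{t-1}$ and invoke the scalar Cauchy--Schwarz bound $g(\mathbf x)^2\le\|g\|_{\mathcal H}^2K(\mathbf x,\mathbf x)$, but both routes funnel into the same penultimate term $\mathbb E\left[\|f_{t-1}^b\|_{\mathcal H}^2\right]\mathbb E\left[K(\mathbf x,\mathbf x)K_{\mathbf x}\otimes K_{\mathbf x}\right]$, after which Property~\ref{pro-kernel}(c) and the preceding lemma on $\mathbb E\left[\|f_{t}^b\|_{\mathcal H}^2\right]$ finish the argument exactly as you describe.
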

    \begin{proof}
    
        Notice that for any $1 \le t \le n$, $K_{{\mathbf{x}_t}}$ and $f_{t-1}^b$ are independent, then we have
        \begin{equation}\nonumber
            \begin{aligned}
                \mathbb E\left[\Xi_t^{b(1)} \otimes \Xi_t^{b(1)}\right] &= \mathbb E[(\boldsymbol{\Sigma} - K_{{\mathbf{x}_t}} \otimes K_{{\mathbf{x}_t}}) (f_{t-1}^b \otimes f_{t-1}^b) (\boldsymbol{\Sigma} - K_{{\mathbf{x}_t}} \otimes K_{{\mathbf{x}_t}})] \\
                &= \mathbb E\left[(\boldsymbol{\Sigma} - K_{{\mathbf{x}_t}} \otimes K_{{\mathbf{x}_t}}) \mathbb E\left[(f_{t-1}^b \otimes f_{t-1}^b)\right] (\boldsymbol{\Sigma} - K_{{\mathbf{x}_t}} \otimes K_{{\mathbf{x}_t}})\right] \\
                &\preceq \mathbb E\left[\|f_{t-1}^b \otimes f_{t-1}^b\|\right]\mathbb E\left[\boldsymbol{\Sigma}^2 - \boldsymbol{\Sigma} K_{{\mathbf{x}_t}} \otimes K_{{\mathbf{x}_t}} - K_{{\mathbf{x}_t}} \otimes K_{{\mathbf{x}_t}} \boldsymbol{\Sigma} + K(\mathbf{x}_t,\mathbf{x}_t)K_{{\mathbf{x}_t}} \otimes K_{{\mathbf{x}_t}}\right] \\
                &= \mathbb E\left[\|f_{t-1}^b\|_{\mathcal{H}}^2\right] \left ( \mathbb{E} \left [ K(\mathbf{x}_t,\mathbf{x}_t)K_{{\mathbf{x}_t}} \otimes K_{{\mathbf{x}_t}} \right ]-\boldsymbol{\Sigma}^2  \right )  \\
                &\preceq \|f_0-f_\rho^*\|_{\mathcal{H}}^2 (\kappa^2\boldsymbol{\Sigma} - \boldsymbol{\Sigma}^2) \\
                &\preceq \|f_0-f_\rho^*\|_{\mathcal{H}}^2\kappa^2\boldsymbol{\Sigma}.
            \end{aligned}
        \end{equation}
    \end{proof}

    The following lemma provides a key auxiliary result used in proving Lemma~\ref{lem: upper-v(1)}. It shows that the operator $\mathbb E[f_t^v \otimes f_t^v] $
    can be uniformly bounded by $\alpha \boldsymbol{I}$ with a constant $\alpha$ throughout the iteration. 
    \begin{lemma}
    
        Consider SGD with exponentially decaying step size schedule defined in \eqref{def:dec-parameter}. Suppose that $\eta_0 < \frac{2}{\kappa^2}$. Then for any iteration $t$, we have
        \begin{equation}\nonumber
            \mathbb E[f_t^v \otimes f_t^v] \preceq \alpha \boldsymbol{I}, \text{ where $\alpha = \frac{\eta_0\sigma^2}{2-\eta_0\kappa^2}$.}
        \end{equation}
        
    \end{lemma}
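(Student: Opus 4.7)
The plan is to proceed by induction on $t$, tracking the second moment operator $M_t \defeq \mathbb E[f_t^v \otimes f_t^v]$. The base case $t=0$ is immediate since $f_0^v = 0$. For the inductive step, I will expand $f_t^v \otimes f_t^v$ using the recursion $f_t^v = A_t f_{t-1}^v + \eta_t \Xi_t$ with $A_t \defeq \boldsymbol{I} - \eta_t K_{\mathbf{x}_t}\otimes K_{\mathbf{x}_t}$, obtaining four terms: a quadratic term in $f_{t-1}^v$, two cross terms, and a noise term.

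The first step is to dispose of the cross terms. Because $(\mathbf{x}_t,y_t)$ is drawn independently of everything that determines $f_{t-1}^v$, conditioning on $(\mathbf{x}_t,y_t)$ yields
\begin{equation}\nonumber
\mathbb E\bigl[A_t f_{t-1}^v \otimes \Xi_t\bigr] = \mathbb E\bigl[A_t\, \mathbb E[f_{t-1}^v]\otimes \Xi_t\bigr],
\end{equation}
and a separate short induction (using $\mathbb E[\Xi_s]=0$ from Assumption~\ref{ass-noise} together with the independence of $\Xi_s$ from $f_{s-1}^v$) shows that $\mathbb E[f_{t-1}^v]=0$, killing both cross terms. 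This leaves
\begin{equation}\nonumber
M_t = \mathbb E\bigl[A_t M_{t-1} A_t\bigr] + \eta_t^2\, \mathbb E[\Xi_t\otimes \Xi_t],
\end{equation}
where I again used independence of $A_t$ and $f_{t-1}^v$ to pull $M_{t-1}$ through the expectation.

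The second step applies the inductive hypothesis $M_{t-1}\preceq \alpha\boldsymbol{I}$ and Assumption~\ref{ass-noise} to get
\begin{equation}\nonumber
M_t \preceq \alpha\, \mathbb E[A_t^2] + \eta_t^2 \sigma^2 \boldsymbol{\Sigma}.
\end{equation}
Expanding $A_t^2$ and using $(K_{\mathbf{x}_t}\otimes K_{\mathbf{x}_t})^2 = K(\mathbf{x}_t,\mathbf{x}_t)\, K_{\mathbf{x}_t}\otimes K_{\mathbf{x}_t}$ together with Property~\ref{pro-kernel}(c), I obtain $\mathbb E[A_t^2]\preceq \boldsymbol{I} - (2\eta_t - \eta_t^2\kappa^2)\boldsymbol{\Sigma}$. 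Substituting gives
\begin{equation}\nonumber
M_t \preceq \alpha\boldsymbol{I} + \eta_t\bigl[\eta_t\sigma^2 - \alpha(2 - \eta_t\kappa^2)\bigr]\boldsymbol{\Sigma}.
\end{equation}

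The final step is to verify that the coefficient of $\boldsymbol{\Sigma}$ is nonpositive for the stated $\alpha = \frac{\eta_0\sigma^2}{2-\eta_0\kappa^2}$. A direct calculation gives
\begin{equation}\nonumber
\eta_t\sigma^2 - \alpha(2-\eta_t\kappa^2) = \frac{2\sigma^2(\eta_t - \eta_0)}{2 - \eta_0\kappa^2} \le 0,
\end{equation}
since $\eta_t \le \eta_0$ and $\eta_0 < 2/\kappa^2$, closing the induction. The one subtlety to watch carefully is the independence structure needed for the cross terms and for pulling $M_{t-1}$ through $\mathbb E[A_t(\cdot)A_t]$; beyond that the argument is a clean recursion, and the algebraic identity above is exactly what forces the particular value of $\alpha$.
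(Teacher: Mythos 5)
Your proposal is correct and follows essentially the same route as the paper's proof: induction on $t$, expanding $f_t^v\otimes f_t^v$, applying $M_{t-1}\preceq\alpha\boldsymbol{I}$ and Property~\ref{pro-kernel}(c) to bound $\mathbb E[A_t^2]$, and then checking the sign of the coefficient of $\boldsymbol{\Sigma}$. Two minor differences are worth noting: the paper writes the first equality as if the cross terms vanish without comment, whereas you correctly identify that this requires $\mathbb E[f_{t-1}^v]=0$ (an easy sub-induction from $\mathbb E[\Xi_s]=0$), which makes your write-up more complete; and for the final sign check the paper uses monotonicity of $x\mapsto(2-x\kappa^2)/x$ while you do the direct algebra, arriving at the identity $\eta_t\sigma^2-\alpha(2-\eta_t\kappa^2)=\frac{2\sigma^2(\eta_t-\eta_0)}{2-\eta_0\kappa^2}$, which is correct and in fact a cleaner way to see exactly why $\alpha=\frac{\eta_0\sigma^2}{2-\eta_0\kappa^2}$ is the right choice.
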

    \begin{proof}

        We prove this by induction.
        
        For the base case, $\mathbb E[f_0^v \otimes f_0^v] = 0 \otimes 0 \preceq \alpha \boldsymbol{I}$ naturally holds.

        Suppose that  $\mathbb E[f_{t-1}^v \otimes f_{t-1}^v] \preceq \alpha \boldsymbol{I}$.  We aim to show that the same inequality holds for $t$.
        
        Notice that for any $1 \le t \le n$, $K_{{\mathbf{x}_t}}$ and $f_{t-1}^v$ are independent, then we have
        \begin{equation}\nonumber
            \begin{aligned}
                \mathbb E\left[f_t^v \otimes f_t^v\right] &= \mathbb E\left[(\boldsymbol{I} - \eta_t K_{{\mathbf{x}_t}} \otimes K_{{\mathbf{x}_t}}) (f_{t-1}^v \otimes f_{t-1}^v) (\boldsymbol{I} - \eta_t K_{{\mathbf{x}_t}} \otimes K_{{\mathbf{x}_t}}) + \eta_t^2\Xi_t \otimes \Xi_t\right] \\
                &= \mathbb E\left[(\boldsymbol{I} - \eta_t K_{{\mathbf{x}_t}} \otimes K_{{\mathbf{x}_t}}) \mathbb E\left[(f_{t-1}^v \otimes f_{t-1}^v)\right] (\boldsymbol{I} - \eta_t K_{{\mathbf{x}_t}} \otimes K_{{\mathbf{x}_t}}) + \eta_t^2\Xi_t \otimes \Xi_t\right] \\
                &\preceq \mathbb E\left[(\boldsymbol{I} - \eta_t K_{{\mathbf{x}_t}} \otimes K_{{\mathbf{x}_t}}) \alpha\boldsymbol{I} (\boldsymbol{I} - \eta_t K_{{\mathbf{x}_t}} \otimes K_{{\mathbf{x}_t}}) + \eta_t^2\Xi_t \otimes \Xi_t\right] \\
                &= \alpha\mathbb E\left[\boldsymbol{I} - 2\eta_t K_{{\mathbf{x}_t}} \otimes  K_{{\mathbf{x}_t}} + \eta_t^2 K(\mathbf{x}_t,\mathbf{x}_t)K_{{\mathbf{x}_t}} \otimes K_{{\mathbf{x}_t}}\right] + \eta_t^2\mathbb E\left[\Xi_t \otimes \Xi_t\right].
            \end{aligned}
        \end{equation}

        Using $\mathbb E\left[K_{{\mathbf{x}_t}} \otimes K_{{\mathbf{x}_t}}\right] = \boldsymbol{\Sigma}$, $\mathbb E\left[\Xi_t \otimes \Xi_t\right] \preceq \sigma^2\boldsymbol{\Sigma}$ and $\mathbb E[K(\mathbf{x}_t,\mathbf{x}_t)K_{{\mathbf{x}_t}} \otimes K_{{\mathbf{x}_t}}] \preceq \kappa^2\boldsymbol{\Sigma}$, we have
        \begin{equation}\nonumber
            \begin{aligned}
                \mathbb E\left[f_t^v \otimes f_t^v\right] &\preceq \alpha\mathbb (\boldsymbol{I} - 2\eta_t\boldsymbol{\Sigma}+\eta_t^2\kappa^2\boldsymbol{\Sigma}) + \eta_t^2\sigma^2\boldsymbol{\Sigma} \\
                &\preceq \alpha I + \eta_t^2(\sigma^2 - \alpha\frac{2 -\eta_t\kappa^2}{\eta_t})\boldsymbol{\Sigma}.
            \end{aligned}
        \end{equation}

        Since the function $g(x)=\frac{2-x\kappa^2}{x}$ is monotonically decreasing, we have \begin{equation}\nonumber
            \sigma^2-\alpha\frac{2 -\eta_t\kappa^2}{\eta_t} \le \sigma^2-\alpha\frac{2 -\eta_0\kappa^2}{\eta_0} = \sigma^2-\alpha\frac{\sigma^2}{\alpha}=0.
        \end{equation}

        Therefore, we conclude that
        \begin{equation}\nonumber
            \mathbb E\left[f_t^v \otimes f_t^v\right] \preceq \alpha I + \eta_t^2(\sigma^2 - \alpha\frac{2 -\eta_0\kappa^2}{\eta_0})\boldsymbol{\Sigma} \preceq \alpha \boldsymbol{I},
        \end{equation}
        completing the inductive step. This proves the lemma.
        
    \end{proof}
 The following lemma gives the upper bound of $\mathbb E\left[\Xi_t^{v(1)} \otimes \Xi_t^{v(1)}\right]$.
    \begin{lemma}\label{lem: upper-v(1)}
    
        Consider SGD with exponentially decaying step size schedule defined in \eqref{def:dec-parameter}. Suppose that $\eta_0 < \frac{2}{\kappa^2}$. Then for any iteration $t$, we have
        \begin{equation}\nonumber
            \mathbb E\left[\Xi_t^{v(1)} \otimes \Xi_t^{v(1)}\right] \preceq \alpha \kappa^2\boldsymbol{\Sigma}, \text{ where $\alpha = \frac{\eta_0\sigma^2}{2-\eta_0\kappa^2}$.}
        \end{equation}
        
    \end{lemma}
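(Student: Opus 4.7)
The argument will mirror the proof of Lemma~\ref{lem: upper-b(1)} almost verbatim, with the uniform operator bound on $\mathbb{E}[f^v_{t-1}\otimes f^v_{t-1}]$ playing the role that $\mathbb{E}\|f^b_{t-1}\|_{\mathcal{H}}^2 \le \|f_0-f_\rho^*\|_{\mathcal{H}}^2$ played before. The only new input we need is the preceding lemma, which asserts $\mathbb{E}[f_{t-1}^v\otimes f_{t-1}^v] \preceq \alpha\boldsymbol{I}$ with $\alpha = \eta_0\sigma^2/(2-\eta_0\kappa^2)$, together with the standard single-pass independence: the fresh sample $(\mathbf{x}_t,y_t)$ is independent of $f^v_{t-1}$, so $K_{\mathbf{x}_t}\otimes K_{\mathbf{x}_t}$ and $f^v_{t-1}$ may be decoupled inside the expectation.

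First I would write
\begin{equation}\nonumber
\mathbb{E}\left[\Xi_t^{v(1)}\otimes \Xi_t^{v(1)}\right] = \mathbb{E}\left[(\boldsymbol{\Sigma}-K_{\mathbf{x}_t}\otimes K_{\mathbf{x}_t})\,(f_{t-1}^v\otimes f_{t-1}^v)\,(\boldsymbol{\Sigma}-K_{\mathbf{x}_t}\otimes K_{\mathbf{x}_t})\right]
\end{equation}
and, by the independence of $\mathbf{x}_t$ from $f_{t-1}^v$, pull the inner expectation through:
\begin{equation}\nonumber
\mathbb{E}\left[\Xi_t^{v(1)}\otimes \Xi_t^{v(1)}\right] = \mathbb{E}\left[(\boldsymbol{\Sigma}-K_{\mathbf{x}_t}\otimes K_{\mathbf{x}_t})\,\mathbb{E}[f_{t-1}^v\otimes f_{t-1}^v]\,(\boldsymbol{\Sigma}-K_{\mathbf{x}_t}\otimes K_{\mathbf{x}_t})\right].
\end{equation}
Replacing $\mathbb{E}[f_{t-1}^v\otimes f_{t-1}^v]$ by its upper bound $\alpha\boldsymbol{I}$ (valid in the positive-semidefinite order, and preserved when conjugated by the self-adjoint operator $\boldsymbol{\Sigma}-K_{\mathbf{x}_t}\otimes K_{\mathbf{x}_t}$), yields
\begin{equation}\nonumber
\mathbb{E}\left[\Xi_t^{v(1)}\otimes \Xi_t^{v(1)}\right] \preceq \alpha\,\mathbb{E}\left[(\boldsymbol{\Sigma}-K_{\mathbf{x}_t}\otimes K_{\mathbf{x}_t})^2\right].
\end{equation}

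Next I would expand the square. Using $(K_{\mathbf{x}_t}\otimes K_{\mathbf{x}_t})^2 = K(\mathbf{x}_t,\mathbf{x}_t)\,K_{\mathbf{x}_t}\otimes K_{\mathbf{x}_t}$, together with $\mathbb{E}[K_{\mathbf{x}_t}\otimes K_{\mathbf{x}_t}]=\boldsymbol{\Sigma}$ and Property~\ref{pro-kernel}(c) giving $\mathbb{E}[K(\mathbf{x}_t,\mathbf{x}_t)K_{\mathbf{x}_t}\otimes K_{\mathbf{x}_t}]\preceq \kappa^2\boldsymbol{\Sigma}$, the cross terms cancel $\boldsymbol{\Sigma}^2$ and what remains is
\begin{equation}\nonumber
\mathbb{E}\left[(\boldsymbol{\Sigma}-K_{\mathbf{x}_t}\otimes K_{\mathbf{x}_t})^2\right] = \mathbb{E}\left[K(\mathbf{x}_t,\mathbf{x}_t)K_{\mathbf{x}_t}\otimes K_{\mathbf{x}_t}\right]-\boldsymbol{\Sigma}^2 \preceq \kappa^2\boldsymbol{\Sigma},
\end{equation}
where the last step uses $\boldsymbol{\Sigma}^2\succeq 0$. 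Combining the two displays gives the claim $\mathbb{E}[\Xi_t^{v(1)}\otimes \Xi_t^{v(1)}]\preceq \alpha\kappa^2\boldsymbol{\Sigma}$.

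There is no real obstacle here: every ingredient has been prepared in the preceding lemmas. The one spot that deserves care is the positive-semidefinite conjugation step, namely verifying that $A\preceq B$ implies $CAC\preceq CBC$ for a self-adjoint $C$ (applied with $C=\boldsymbol{\Sigma}-K_{\mathbf{x}_t}\otimes K_{\mathbf{x}_t}$, conditionally on $\mathbf{x}_t$, before taking the outer expectation). This is standard but it is the step where the parallel to the bias proof must be invoked correctly; after that, the algebra with the $K(\mathbf{x}_t,\mathbf{x}_t)$ term is identical to Lemma~\ref{lem: upper-b(1)}.
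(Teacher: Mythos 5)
Your proposal is correct and follows the paper's proof essentially line-for-line: decouple $f_{t-1}^v$ from the fresh sample by independence, replace $\mathbb{E}[f_{t-1}^v\otimes f_{t-1}^v]$ with $\alpha\boldsymbol{I}$ using the preceding lemma, expand $\mathbb{E}[(\boldsymbol{\Sigma}-K_{\mathbf{x}_t}\otimes K_{\mathbf{x}_t})^2]=\mathbb{E}[K(\mathbf{x}_t,\mathbf{x}_t)K_{\mathbf{x}_t}\otimes K_{\mathbf{x}_t}]-\boldsymbol{\Sigma}^2$, and bound via Property~\ref{pro-kernel}(c). The conjugation step you flag is exactly what the paper also invokes implicitly, and your handling of it is correct.
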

    \begin{proof}
    
        Notice that for any $1 \le t \le n$, $K_{{\mathbf{x}_t}}$ and $f_{t-1}^v$ are independent, then we have
        \begin{equation}\nonumber
            \begin{aligned}
                \mathbb E\left[\Xi_t^{v(1)} \otimes \Xi_t^{v(1)}\right] &= \mathbb E[(\boldsymbol{\Sigma} - K_{{\mathbf{x}_t}} \otimes K_{{\mathbf{x}_t}}) (f_{t-1}^v \otimes f_{t-1}^v) (\boldsymbol{\Sigma} - K_{{\mathbf{x}_t}} \otimes K_{{\mathbf{x}_t}})] \\
                &= \mathbb E[(\boldsymbol{\Sigma} - K_{{\mathbf{x}_t}} \otimes K_{{\mathbf{x}_t}}) \mathbb E\left[(f_{t-1}^v \otimes f_{t-1}^v)\right] (\boldsymbol{\Sigma} - K_{{\mathbf{x}_t}} \otimes K_{{\mathbf{x}_t}})] \\
                &\preceq \mathbb E[(\boldsymbol{\Sigma} - K_{{\mathbf{x}_t}} \otimes K_{{\mathbf{x}_t}}) \alpha \boldsymbol{I} (\boldsymbol{\Sigma} - K_{{\mathbf{x}_t}} \otimes K_{{\mathbf{x}_t}})] \\
                &= \alpha\mathbb E[\boldsymbol{\Sigma}^2 - \boldsymbol{\Sigma} K_{{\mathbf{x}_t}} \otimes K_{{\mathbf{x}_t}} - K_{{\mathbf{x}_t}} \otimes K_{{\mathbf{x}_t}} \boldsymbol{\Sigma} + K(\mathbf{x}_t,\mathbf{x}_t)K_{{\mathbf{x}_t}} \otimes K_{{\mathbf{x}_t}}] \\
                &\preceq \alpha\mathbb (\kappa^2\boldsymbol{\Sigma} - \boldsymbol{\Sigma}^2) \\
                &\preceq \alpha \kappa^2\boldsymbol{\Sigma}.
            \end{aligned}
        \end{equation}
        
    \end{proof}

    Then we can bound $\mathbb E\left[\left\langle f_n^{b(1)}, \boldsymbol{\Sigma} f_n^{b(1)}\right\rangle_\mathcal{H}\right]$ and $\mathbb E\left[\left\langle f_n^{v(1)}, \boldsymbol{\Sigma} f_n^{v(1)}\right\rangle_\mathcal{H}\right]$ in a manner similar to Lemma \ref{lem:dec-pop-variance upper bound}.

    We have the following results:

    \begin{lemma}[Residual Bias Upper Bound]
    \label{lem:dec-res-bias upper bound}

        Consider SGD with exponentially decaying step size schedule defined in \eqref{def:dec-parameter}. Suppose that $\eta_0 \le \min\left\{\frac{2}{\kappa^2},\frac{1}{\lambda_1}\right\}$. Then, for any integer $k^*\ge 1$, we have    
        \begin{equation}\nonumber
            \mathbb E\left[\left\langle f_n^{b(1)}, \boldsymbol{\Sigma} f_n^{b(1)}\right\rangle_\mathcal{H}\right] \le \|f_0-f_\rho^*\|_{\mathcal{H}}^2\kappa^2 \left(\left(\frac{16\log_2^2 n}{e^2} + \frac{\eta_0^2}{16\log_2 n}\right)\frac{k^*}{n}+\sum_{k=k^*+1}^{+\infty}n\eta_0^2\lambda_k^2\right).
        \end{equation}
        
    \end{lemma}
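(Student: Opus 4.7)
The plan is to mirror the argument for the population variance bound in Lemma~\ref{lem:dec-pop-variance upper bound}, exploiting the structural parallel between $f_n^{b(1)}$ and $\tilde f_n^v$. Unrolling the recursion in~\eqref{def:bias-res} gives
\begin{equation}\nonumber
f_n^{b(1)} = \sum_{i=1}^{n}\eta_i\,A_i\,\Xi_i^{b(1)}, \qquad A_i := \prod_{j=i+1}^{n}(\boldsymbol{I}-\eta_j\boldsymbol{\Sigma}),
\end{equation}
which has the same form as the expression for $\tilde f_n^v$, with $\Xi_i^{b(1)}$ playing the role of $\Xi_i$. The payoff coefficients $\eta_i^2 A_i^2 \boldsymbol{\Sigma}$ are then identical, so once the ``noise'' is controlled in the operator sense, the same computation goes through.

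First I would verify that the cross terms vanish in expectation. Unlike the fully independent $\Xi_i$, the perturbations $\Xi_i^{b(1)} = (\boldsymbol{\Sigma}-K_{\mathbf x_i}\otimes K_{\mathbf x_i})f_{i-1}^b$ depend on past samples through $f_{i-1}^b$, but $\mathbf x_i$ is independent of $\mathcal F_{i-1} = \sigma(\mathbf x_1,\ldots,\mathbf x_{i-1})$, so
\begin{equation}\nonumber
\mathbb E\!\left[\Xi_i^{b(1)}\mid \mathcal F_{i-1}\right] = \bigl(\boldsymbol{\Sigma}-\mathbb E[K_{\mathbf x_i}\otimes K_{\mathbf x_i}]\bigr)f_{i-1}^b = 0.
\end{equation}
Hence $(\Xi_i^{b(1)})$ is a martingale difference sequence with respect to $(\mathcal F_i)$. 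For $i<j$, conditioning on $\mathcal F_{j-1}$ and using that $A_i\Xi_i^{b(1)}$ is $\mathcal F_{j-1}$-measurable while $A_j,\boldsymbol{\Sigma}$ are deterministic, the cross term $\mathbb E\langle A_i\Xi_i^{b(1)},\boldsymbol{\Sigma} A_j\Xi_j^{b(1)}\rangle_{\mathcal H}$ is zero, giving
\begin{equation}\nonumber
\mathbb E\!\left[\langle f_n^{b(1)},\boldsymbol{\Sigma} f_n^{b(1)}\rangle_{\mathcal H}\right]
= \sum_{i=1}^{n}\eta_i^2\,\mathrm{tr}\!\left(A_i^{2}\boldsymbol{\Sigma}\,\mathbb E[\Xi_i^{b(1)}\otimes \Xi_i^{b(1)}]\right).
\end{equation}

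Next I would insert Lemma~\ref{lem: upper-b(1)}, which bounds $\mathbb E[\Xi_i^{b(1)}\otimes\Xi_i^{b(1)}]\preceq \|f_0-f_\rho^*\|_{\mathcal H}^2\kappa^2\boldsymbol{\Sigma}$ uniformly in $i$. Since $A_i^2\boldsymbol{\Sigma}$ is positive and commutes with $\boldsymbol{\Sigma}$, monotonicity of trace against positive operators yields
\begin{equation}\nonumber
\mathbb E\!\left[\langle f_n^{b(1)},\boldsymbol{\Sigma} f_n^{b(1)}\rangle_{\mathcal H}\right]
\le \|f_0-f_\rho^*\|_{\mathcal H}^2\kappa^2\sum_{i=1}^{n}\eta_i^2\,\mathrm{tr}\!\left(A_i^{2}\boldsymbol{\Sigma}^{2}\right).
\end{equation}
This is exactly the quantity that was bounded in the proof of Lemma~\ref{lem:dec-pop-variance upper bound} (with $\sigma^2$ replaced by $\|f_0-f_\rho^*\|_{\mathcal H}^2\kappa^2$), so I would finish by splitting the spectral sum at $k^*$, applying the eigendirection-wise estimates $\sup_x(1-x/2)^{2m}x^2\le (2/em)^2$ for $k\le k^*, i\le n-m$, bounding $\eta_j\le \eta_0/(4n)$ for $j>n-m$, and using the trivial bound $\eta_j\le \eta_0$ for $k>k^*$.

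The proof contains no real obstacle beyond the martingale observation; everything else is a transparent recycling of the already-proved population variance computation. The step that most warrants care is the conditional-expectation argument that kills the cross terms, because the $\Xi_i^{b(1)}$'s are not unconditionally independent and one must be explicit about which randomness is being integrated out at each stage.
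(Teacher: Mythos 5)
Your proof is correct and takes essentially the same route as the paper's: unroll $f_n^{b(1)}$ in the same way $\tilde f_n^v$ is unrolled, show the cross terms vanish in expectation (the paper argues $\mathbb E[\Xi_i^{b(1)}\otimes\Xi_j^{b(1)}]=\boldsymbol{0}$ for $i<j$ by factoring out $\mathbb E[\boldsymbol{\Sigma}-K_{\mathbf{x}_j}\otimes K_{\mathbf{x}_j}]=0$; your martingale-difference framing is the same fact stated a touch more carefully), insert the single-index bound $\mathbb E[\Xi_i^{b(1)}\otimes\Xi_i^{b(1)}]\preceq\|f_0-f_\rho^*\|_{\mathcal H}^2\kappa^2\boldsymbol{\Sigma}$ from Lemma~\ref{lem: upper-b(1)}, and recycle the spectral splitting of Lemma~\ref{lem:dec-pop-variance upper bound}. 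No meaningful divergence from the paper; if anything, your write-up is slightly more explicit about why the cross terms die.
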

    \begin{proof}

        Notice that for any $1 \le t \le n$, $K_{\mathbf{x}_t}$ and $f_{t-1}^b$ are independent. Under Assumption~\ref{ass-noise}, we have $E\left[\Xi_i\right]=0$. Thus when $i<j$, we have
        \begin{equation}\nonumber
            \begin{aligned}
                \mathbb E\left[\Xi_i^{b(1)} \otimes \Xi_j^{b(1)}\right] &= \mathbb E\left[\left(\boldsymbol{\Sigma}-K_{\mathbf{x}_i}\otimes K_{\mathbf{x}_i}\right)\left(f_{i-1}^b \otimes f_{j-1}^b\right)\left(\boldsymbol{\Sigma}-K_{\mathbf{x}_j}\otimes K_{\mathbf{x}_j}\right)\right] \\
                &= \mathbb E\left[\left(\boldsymbol{\Sigma}-K_{\mathbf{x}_i}\otimes K_{\mathbf{x}_i}\right)\left(f_{i-1}^b \otimes f_{j-1}^b\right)\right]\mathbb E\left[\left(\boldsymbol{\Sigma}-K_{\mathbf{x}_j}\otimes K_{\mathbf{x}_j}\right)\right] \\
                &= \boldsymbol{0}.
            \end{aligned}
        \end{equation}

        Therefore, according to the proof of Lemma \ref{lem:dec-pop-variance upper bound}, we have
        \begin{equation}\nonumber
            \mathbb E\left[\left\langle f_n^{v(1)}, \boldsymbol{\Sigma} f_n^{v(1)}\right\rangle_\mathcal{H}\right] \le \|f_0-f_\rho^*\|_{\mathcal{H}}^2\kappa^2 \left(\left(\frac{16\log_2^2 n}{e^2} + \frac{\eta_0^2}{16\log_2 n}\right)\frac{k^*}{n}+\sum_{k=k^*+1}^{+\infty}n\eta_0^2\lambda_k^2\right).
        \end{equation}
        
    \end{proof}

    \begin{lemma}[Residual Variance Upper Bound]
    \label{lem:dec-res-variance upper bound}

        Consider SGD with exponentially decaying step size schedule defined in \eqref{def:dec-parameter}. Suppose that $\eta_0 \le \min\left\{\frac{2}{\kappa^2},\frac{1}{\lambda_1}\right\}$. Then, for any integer $k^*\ge 1$, we have
        \begin{equation}\nonumber
            \mathbb E\left[\left\langle f_n^{v(1)}, \boldsymbol{\Sigma} f_n^{v(1)}\right\rangle_\mathcal{H}\right] \le \frac{\eta_0\sigma^2}{2-\eta_0\kappa^2} \kappa^2 \left(\left(\frac{16\log_2^2 n}{e^2} + \frac{\eta_0^2}{16\log_2 n}\right)\frac{k^*}{n}+\sum_{k=k^*+1}^{+\infty}n\eta_0^2\lambda_k^2\right).
        \end{equation}
        
    \end{lemma}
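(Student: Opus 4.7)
The plan is to mirror the proof of Lemma~\ref{lem:dec-res-bias upper bound}, only substituting the per-step variance-residual bound in place of the bias-residual bound. First I unroll the recursion in \eqref{def:variance-res} to write
$$f_n^{v(1)} = \sum_{i=1}^{n}\eta_i\left(\prod_{j=i+1}^{n}(\boldsymbol{I} - \eta_j\boldsymbol{\Sigma})\right)\Xi_i^{v(1)}, \qquad \Xi_i^{v(1)} = (\boldsymbol{\Sigma}-K_{\mathbf{x}_i}\otimes K_{\mathbf{x}_i})f_{i-1}^v.$$
Substituting this into $\langle f_n^{v(1)}, \boldsymbol{\Sigma} f_n^{v(1)}\rangle_{\mathcal{H}}$ and taking expectation produces a double sum indexed by $(i,j)$, whose off-diagonal terms I claim vanish.

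For $i<j$, the iterate $f_{j-1}^v$ is measurable with respect to $\sigma(\mathbf{x}_1,y_1,\dots,\mathbf{x}_{j-1},y_{j-1})$ and is therefore independent of $\mathbf{x}_j$. Factoring the rightmost factor $(\boldsymbol{\Sigma}-K_{\mathbf{x}_j}\otimes K_{\mathbf{x}_j})$ out of the expectation and using $\mathbb{E}[K_{\mathbf{x}_j}\otimes K_{\mathbf{x}_j}] = \boldsymbol{\Sigma}$ kills the cross term, exactly as in the bias residual case. Only the diagonal terms survive, giving
$$\mathbb{E}\bigl[\langle f_n^{v(1)}, \boldsymbol{\Sigma} f_n^{v(1)}\rangle_{\mathcal{H}}\bigr] = \sum_{i=1}^n \eta_i^2\,\mathrm{tr}\!\left(\left(\prod_{j=i+1}^{n}(\boldsymbol{I}-\eta_j\boldsymbol{\Sigma})^2\right)\boldsymbol{\Sigma}\,\mathbb{E}[\Xi_i^{v(1)}\otimes \Xi_i^{v(1)}]\right).$$

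Applying Lemma~\ref{lem: upper-v(1)} with $\alpha = \eta_0\sigma^2/(2-\eta_0\kappa^2)$ yields $\mathbb{E}[\Xi_i^{v(1)}\otimes \Xi_i^{v(1)}] \preceq \alpha\kappa^2\boldsymbol{\Sigma}$ uniformly in $i$. Pulling $\alpha\kappa^2$ out of the trace reduces the right-hand side to $\alpha\kappa^2/\sigma^2$ times the exact quantity already bounded in the proof of Lemma~\ref{lem:dec-pop-variance upper bound}. I then reuse that argument verbatim: splitting the spectral sum at the index $k^*$, using $(1-\tfrac{\eta_j}{2}\lambda_k)^{2m}$ across the doubling structure of the schedule to obtain the $(4\log_2 n/(en))^2$ per-term contribution for $j\le n-m$, and noting $\eta_t \le \eta_0/(4n)$ for $t>n-m$. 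Substituting back the value of $\alpha$ produces exactly the claimed inequality.

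No essential obstacle is anticipated. The only nontrivial ingredient is the decorrelation of cross terms, which is the same martingale-style argument used in Lemma~\ref{lem:dec-res-bias upper bound}. The assumption $\eta_0 \le 2/\kappa^2$ is needed to invoke Lemma~\ref{lem: upper-v(1)}, while $\eta_0 \le 1/\lambda_1$ ensures the operator inequality $\mathbf{0}\preceq \boldsymbol{I}-\eta_j\boldsymbol{\Sigma}\preceq \boldsymbol{I}$ used inside the reused bound remains valid.
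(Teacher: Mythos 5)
Your proposal is correct and is essentially the same argument the paper gives: you kill the off-diagonal terms $\mathbb{E}[\Xi_i^{v(1)}\otimes\Xi_j^{v(1)}]$ for $i<j$ by noting $\mathbf{x}_j$ is independent of $\Xi_i^{v(1)}$ and $f_{j-1}^v$ and that $\mathbb{E}[\boldsymbol{\Sigma}-K_{\mathbf{x}_j}\otimes K_{\mathbf{x}_j}]=\boldsymbol{0}$, then bound the diagonal via Lemma~\ref{lem: upper-v(1)} and reuse the spectral split and doubling-schedule estimates from the proof of Lemma~\ref{lem:dec-pop-variance upper bound}. This matches the paper's proof step for step.
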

    \begin{proof}

        Notice that for any $1 \le t \le n$, $K_{\mathbf{x}_t}$ and $f_{t-1}^b$ are independent. Under Assumption~\ref{ass-noise}, we have $E\left[\Xi_i\right]=0$. Thus, when $i<j$, we have
        \begin{equation}\nonumber
            \begin{aligned}
                \mathbb E\left[\Xi_i^{v(1)} \otimes \Xi_j^{v(1)}\right] &= \mathbb E\left[\left(\boldsymbol{\Sigma}-K_{\mathbf{x}_i}\otimes K_{\mathbf{x}_i}\right)\left(f_{i-1}^v \otimes f_{j-1}^v\right)\left(\boldsymbol{\Sigma}-K_{\mathbf{x}_j}\otimes K_{\mathbf{x}_j}\right)\right] \\
                &= \mathbb E\left[\left(\boldsymbol{\Sigma}-K_{\mathbf{x}_i}\otimes K_{\mathbf{x}_i}\right)\left(f_{i-1}^v \otimes f_{j-1}^v\right)\right]\mathbb E\left[\left(\boldsymbol{\Sigma}-K_{\mathbf{x}_j}\otimes K_{\mathbf{x}_j}\right)\right] \\
                &= \boldsymbol{0}.
            \end{aligned}
        \end{equation}

        Therefore, according to the proof of Lemma \ref{lem:dec-pop-variance upper bound}, we have
        \begin{equation}\nonumber
            \mathbb E\left[\left\langle f_n^{v(1)}, \boldsymbol{\Sigma} f_n^{v(1)}\right\rangle_\mathcal{H}\right] \le \frac{\eta_0\sigma^2}{2-\eta_0\kappa^2} \kappa^2 \left(\left(\frac{16\log_2^2 n}{e^2} + \frac{\eta_0^2}{16\log_2 n}\right)\frac{k^*}{n}+\sum_{k=k^*+1}^{+\infty}n\eta_0^2\lambda_k^2\right).
        \end{equation}
        
    \end{proof}

    Combining Lemma \ref{lem:bias-variance decomposition}, \ref{lem:pop-res decomposition}, \ref{lem:dec-pop-bias upper bound}, \ref{lem:dec-pop-variance upper bound}, \ref{lem:dec-res-bias upper bound} and \ref{lem:dec-res-variance upper bound}, we have the following proposition:

    \begin{proposition}[Upper Bound for SGD with Exponentially Decaying Step Size Schedule]\label{pro-sgddecany}

        Consider SGD with exponentially decaying step size schedule defined in \eqref{def:dec-parameter}. Suppose that $\eta_0 \le \min\left\{\frac{2}{\kappa^2},\frac{1}{\lambda_1}\right\}$. Then, for any integer $k^*\ge 1$, we have
        \begin{equation}\nonumber
            \begin{aligned}
                \mathbb E\left[\left\|f_n - f_\rho^*\right\|_{L^2}^2\right] &\le 4\left(\frac{s}{4e}\right)^s\left(\frac{\log_2 n}{n\eta_0}\right)^s\left\|\boldsymbol{\Sigma}^\frac{1-s}{2}(f_0-f_\rho^*)\right\|_{\mathcal{H}}^2 + 4\sigma^2\left(\left(\frac{16\log_2^2 n}{e^2} + \frac{\eta_0^2}{16\log_2 n}\right)\frac{k^*}{n}+\sum_{k=k^*+1}^{+\infty}n\eta_0^2\lambda_k^2\right) \\
                &+ 4\left(\|f_0-f_\rho^*\|_{\mathcal{H}}^2\kappa^2 + \frac{\eta_0\sigma^2\kappa^2}{2-\eta_0\kappa^2}\right) \left(\left(\frac{16\log_2^2 n}{e^2} + \frac{\eta_0^2}{16\log_2 n}\right)\frac{k^*}{n}+\sum_{k=k^*+1}^{+\infty}n\eta_0^2\lambda_k^2\right).
            \end{aligned}
        \end{equation}

   \end{proposition}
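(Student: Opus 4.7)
The plan is to chain together the decompositions and the four upper bound lemmas that have already been established. The proposition is essentially an aggregation result: once every ingredient on the right-hand side has been bounded, the statement follows by adding everything up, so the main work has been done in Section~\ref{Sec: B-V} and in Lemmas~\ref{lem:dec-pop-bias upper bound}--\ref{lem:dec-res-variance upper bound}.

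First, I would invoke Lemma~\ref{lem:bias-variance decomposition} to write
\begin{equation*}
\mathbb E\left[\|f_n - f_\rho^*\|_{L^2}^2\right] \le 2\left(\mathbb E\left[\langle f_n^b, \boldsymbol{\Sigma} f_n^b\rangle_\mathcal{H}\right] + \mathbb E\left[\langle f_n^v, \boldsymbol{\Sigma} f_n^v\rangle_\mathcal{H}\right]\right),
\end{equation*}
which isolates the bias and variance contributions. Next, I would apply Lemma~\ref{lem:pop-res decomposition} to each of these two terms, producing four pieces with a total prefactor of $4$: the population bias $\langle \tilde f_n^b, \boldsymbol{\Sigma}\tilde f_n^b\rangle_\mathcal{H}$, the population variance $\mathbb E[\langle \tilde f_n^v, \boldsymbol{\Sigma}\tilde f_n^v\rangle_\mathcal{H}]$, and the two residual pieces $\mathbb E[\langle f_n^{b(1)}, \boldsymbol{\Sigma} f_n^{b(1)}\rangle_\mathcal{H}]$ and $\mathbb E[\langle f_n^{v(1)}, \boldsymbol{\Sigma} f_n^{v(1)}\rangle_\mathcal{H}]$. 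The hypothesis $\eta_0 \le \min\{2/\kappa^2, 1/\lambda_1\}$ in the proposition is precisely the union of the step-size assumptions required by all subsequent lemmas, so the application is uniform.

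Then I would plug in the four upper bounds separately: Lemma~\ref{lem:dec-pop-bias upper bound} gives the $(\log_2 n/(n\eta_0))^s$ factor multiplying $\|\boldsymbol{\Sigma}^{(1-s)/2}(f_0-f_\rho^*)\|_\mathcal{H}^2$; Lemma~\ref{lem:dec-pop-variance upper bound} yields the expression in $\sigma^2$ involving $k^*/n$ and $\sum_{k>k^*} n\eta_0^2\lambda_k^2$; Lemma~\ref{lem:dec-res-bias upper bound} gives the analogous bound multiplied by $\|f_0 - f_\rho^*\|_\mathcal{H}^2 \kappa^2$; and Lemma~\ref{lem:dec-res-variance upper bound} gives the same expression multiplied by $\eta_0\sigma^2\kappa^2/(2-\eta_0\kappa^2)$. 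Summing and grouping the residual terms (since the two residual bounds have identical parenthesized factors) yields the final inequality in the form stated.

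There is no genuine obstacle here beyond careful bookkeeping of constants: the inequalities are linear so one just tracks the prefactor of $4$ from the two consecutive applications of the Minkowski-based decompositions, then substitutes four expressions with compatible parenthesized factors. The only thing I would double-check is that the common step-size condition $\eta_0 \le \min\{2/\kappa^2, 1/\lambda_1\}$ still permits the slightly strict inequality $\eta_0 < 2/\kappa^2$ needed in Lemma~\ref{lem:dec-res-variance upper bound} so that $\eta_0\sigma^2/(2-\eta_0\kappa^2)$ is well-defined; this is fine because $\eta_0 = 2/\kappa^2$ forces $\lambda_1 \le \kappa^2/2$, and in any case the bound is stated with a denominator $2-\eta_0\kappa^2$ that is positive under the proposition's hypothesis whenever $\lambda_1$ and $\kappa^2$ are compatible with a strict inequality.
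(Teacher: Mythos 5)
Your proposal is correct and matches the paper's approach exactly: the paper itself states the proposition as a direct consequence of combining Lemmas~\ref{lem:bias-variance decomposition}, \ref{lem:pop-res decomposition}, \ref{lem:dec-pop-bias upper bound}, \ref{lem:dec-pop-variance upper bound}, \ref{lem:dec-res-bias upper bound}, and \ref{lem:dec-res-variance upper bound}, with the factor of $4$ coming from the two nested Minkowski decompositions. Your side remark on the boundary case $\eta_0 = 2/\kappa^2$ flags a genuine (if harmless) looseness already present in the paper's own hypothesis statement, where the $\le$ in $\eta_0 \le 2/\kappa^2$ should really be strict for the $\alpha = \eta_0\sigma^2/(2-\eta_0\kappa^2)$ factor to be finite.
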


\section{General Bound for SGD with Averaged Iterates}
    In this section, we demonstrate the analysis of SGD with averaged iterates, which is:
    \begin{equation}\nonumber
    \label{def:avg-parameter}
        \begin{aligned}
            \forall\  0 \le t < n, && \eta_t = \eta_0, && \text{with $\overline{f_n} = \frac{1}{n}\sum\limits_{t=0}^{n-1}f_t$.}
        \end{aligned}
    \end{equation}
    In this section, we need to address the misspecified setting, where $f_{\rho}^*$ may not lie in $\mathcal{H}$.  Specifically, we consider the case where the source condition~\ref{ass-source-a} holds with some $0<s\le 1$. To accommodate this broader setting, we extend the operator framework developed in Appendix~\ref{Sec: B-V} to act on functions outside of $\mathcal{H}$. Following \citet{dieuleveut2016nonparametric}, define $\widetilde{K_{{\mathbf{x}}} \otimes K_{{\mathbf{x}}}} :L^2\to \mathcal{H}$ by $\widetilde{K_{{\mathbf{x}}} \otimes K_{{\mathbf{x}}}}\circ f=f(\mathbf{x})K_{{\mathbf{x}}}$. This extension satisfies that $\widetilde{K_{{\mathbf{x}}} \otimes K_{{\mathbf{x}}}}|_{\mathcal{H} }=K_{{\mathbf{x}}} \otimes K_{{\mathbf{x}}}$ and $\mathbb{E}\left [ \widetilde{K_{{\mathbf{x}}} \otimes K_{{\mathbf{x}}}}\right ] =T$.
    For notational simplicity, we will continue to write $K_{{\mathbf{x}}} \otimes K_{{\mathbf{x}}}$ instead of $\widetilde{K_{{\mathbf{x}}} \otimes K_{{\mathbf{x}}}}$ and denote its expectation by $\boldsymbol{\Sigma}$. Additionally, we denote $\left\langle f, \boldsymbol{\Sigma}f\right\rangle_\mathcal{H}$ by $\left \| \boldsymbol{\Sigma}^{\frac{1}{2}}f \right \| _{\mathcal{H}}^2$ in the following. 
    With these conventions, the notation used in this section remains fully consistent with that of Appendix~\ref{Sec: B-V}.
 
    For convenience, we defined
    \begin{equation}\nonumber
        \begin{aligned}
            &\overline{f_n^b}=\frac{1}{n}\sum\limits_{t=0}^{n-1}f_t^b, \ \   \overline{f_n^v}=\frac{1}{n}\sum\limits_{t=0}^{n-1}f_t^v;\ \  
            \overline{\tilde{f}_n^b}=\frac{1}{n}\sum\limits_{t=0}^{n-1}\tilde{f}_t^b, \ \  \overline{\tilde{f}_n^v}=\frac{1}{n}\sum\limits_{t=0}^{n-1}\tilde{f}_t^v; \ \ 
            \overline{f_n^{b(1)}}=\frac{1}{n}\sum\limits_{t=0}^{n-1}f_t^{b(1)},  \ \ \overline{f_n^{v(1)}}=\frac{1}{n}\sum\limits_{t=0}^{n-1}f_t^{v(1)}. 
        \end{aligned}
    \end{equation}
    By Lemma \ref{lem:bias-variance decomposition}, \ref{lem:pop-res decomposition} and Minkovski inequality, we have
    \begin{equation}\nonumber
        \begin{aligned}
            &\mathbb E\left[\left\langle \overline{f_n}-f_\rho^*, \boldsymbol{\Sigma} \left(\overline{f_n}-f_\rho^*\right)\right\rangle_\mathcal{H}\right] \\ &\le 4\left(\left\langle \overline{\tilde{f}_n^b}, \boldsymbol{\Sigma} \overline{\tilde{f}_n^b}\right\rangle+\mathbb E\left[\left\langle \overline{\tilde{f}_n^v}, \boldsymbol{\Sigma} \overline{\tilde{f}_n^v}\right\rangle\right] + \mathbb E\left[\left\langle \overline{f_n^{b(1)}}, \boldsymbol{\Sigma}\overline{f_n^{b(1)}}\right\rangle\right]+\mathbb E\left[\left\langle \overline{f_n^{v(1)}}, \boldsymbol{\Sigma} \overline{f_n^{v(1)}}\right\rangle\right]\right),
        \end{aligned}
    \end{equation}
    which means we only need to bound these four terms individually.
    
    The following analysis is based on \citet{dieuleveut2016nonparametric}, which provides the following results:
    
    \begin{lemma}[Population Bias Upper Bound]
    \label{lem:avg-pop-bias upper bound}

        Consider SGD with averaged iterates defined in \eqref{def:avg-parameter}. Suppose that $\eta_0 \le \frac{1}{\lambda_1}$. Then we have
        \begin{equation}\label{eq: general-pop-bias-avg}
            \left\langle \overline{\tilde{f}_n^b}, \boldsymbol{\Sigma} \overline{\tilde{f}_n^b}\right\rangle_\mathcal{H} = \frac{1}{n^2}\left\|\sum_{i=0}^{n-1}(\boldsymbol{I} -\eta_0\boldsymbol{\Sigma})^i\boldsymbol{\Sigma}^\frac{1}{2}(f_0-f_\rho^*)\right\|_\mathcal{H}^2 \le \frac{1}{n^{\min\{s,2\}}\eta_0^s} \left\|\boldsymbol{\Sigma}^\frac{1-s}{2}(f_0-f_\rho^*)\right\|_\mathcal{H}^2.
        \end{equation}

    \end{lemma}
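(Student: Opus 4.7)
The plan is to first establish the equality by unrolling the recursion, and then reduce the operator bound to a scalar spectral supremum that we can split into two regimes.

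\emph{Step 1 (equality).} Iterating $\tilde{f}_t^b = (\boldsymbol{I}-\eta_0\boldsymbol{\Sigma})\tilde{f}_{t-1}^b$ from $\tilde{f}_0^b = f_0-f_\rho^*$ gives $\tilde{f}_t^b = (\boldsymbol{I}-\eta_0\boldsymbol{\Sigma})^t(f_0-f_\rho^*)$, so
\begin{equation}\nonumber
\overline{\tilde{f}_n^b} \;=\; \tfrac{1}{n}\sum_{i=0}^{n-1}(\boldsymbol{I}-\eta_0\boldsymbol{\Sigma})^i(f_0-f_\rho^*).
\end{equation}
The equality in the lemma then follows immediately from $\langle g,\boldsymbol{\Sigma} g\rangle_{\mathcal{H}} = \|\boldsymbol{\Sigma}^{1/2}g\|_{\mathcal{H}}^2$ and the fact that $\boldsymbol{\Sigma}^{1/2}$ commutes with all $(\boldsymbol{I}-\eta_0\boldsymbol{\Sigma})^i$.

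\emph{Step 2 (reduction to a scalar sup).} On each eigenspace with eigenvalue $\lambda \in [0,\lambda_1]$, the geometric series identity gives $\sum_{i=0}^{n-1}(1-\eta_0\lambda)^i = \frac{1-(1-\eta_0\lambda)^n}{\eta_0\lambda}$. Pulling out the factor $\boldsymbol{\Sigma}^{(1-s)/2}$ and using the spectral theorem yields
\begin{equation}\nonumber
\tfrac{1}{n^2}\!\left\|\sum_{i=0}^{n-1}(\boldsymbol{I}-\eta_0\boldsymbol{\Sigma})^i\boldsymbol{\Sigma}^{1/2}(f_0-f_\rho^*)\right\|_{\mathcal{H}}^2 \le \tfrac{1}{n^2\eta_0^2}\,M\cdot\left\|\boldsymbol{\Sigma}^{(1-s)/2}(f_0-f_\rho^*)\right\|_{\mathcal{H}}^2,
\end{equation}
where $M = \sup_{\lambda\in(0,\lambda_1]} \lambda^{s-2}\bigl(1-(1-\eta_0\lambda)^n\bigr)^2$. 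Substituting $x=\eta_0\lambda\in(0,1]$ (valid since $\eta_0\le 1/\lambda_1$) gives $M = \eta_0^{2-s}\,\sup_{x\in(0,1]} x^{s-2}\bigl(1-(1-x)^n\bigr)^2$.

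\emph{Step 3 (bounding the sup).} Split at $x=1/n$. For $x\le 1/n$, use $1-(1-x)^n \le nx$, which yields $x^{s-2}(1-(1-x)^n)^2 \le n^2 x^s$, maximized at $x=1/n$ with value $n^{2-s}$. For $x\in[1/n,1]$, use $1-(1-x)^n \le 1$, so the quantity is bounded by $x^{s-2}$. When $s\le 2$ this is decreasing, maximized at $x=1/n$ with value $n^{2-s}$; when $s>2$ it is increasing, maximized at $x=1$ with value $1$. Combining, $\sup \le \max\{n^{2-s},1\} = n^{2-\min\{s,2\}}$ (up to the $s\le 2$ case being $n^{2-s}$ and the $s>2$ case being $1=n^{0}=n^{2-2}$). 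Thus $M \le \eta_0^{2-s} n^{2-\min\{s,2\}}$, and plugging back gives
\begin{equation}\nonumber
\tfrac{1}{n^2\eta_0^2}\,M \;\le\; \tfrac{1}{n^{\min\{s,2\}}\eta_0^s},
\end{equation}
completing the proof.

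\emph{Main obstacle.} The only nontrivial point is Step~3, specifically recognizing that the correct splitting threshold is $x = 1/n$ and handling the $s>2$ case separately (where the $n^{2-s}$ rate saturates at $1$, yielding the $\min\{s,2\}$ cap). Everything else is routine spectral calculus given that $\boldsymbol{I}-\eta_0\boldsymbol{\Sigma}$ and $\boldsymbol{\Sigma}$ commute and are jointly diagonalizable.
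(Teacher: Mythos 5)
Your proof is correct and self-contained. The paper does not actually prove this lemma; it states it as a result borrowed from Dieuleveut and Bach (2016) (``The following analysis is based on [dieuleveut2016nonparametric], which provides the following results''). Your spectral-calculus derivation is the standard one for this bias bound and closes the gap.

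To briefly recap the structure against what is implicit: you unroll the recursion and commute $\boldsymbol{\Sigma}^{1/2}$ through $(\boldsymbol{I}-\eta_0\boldsymbol{\Sigma})^i$ to get the equality; factor $\boldsymbol{\Sigma}^{1/2}=\boldsymbol{\Sigma}^{s/2}\boldsymbol{\Sigma}^{(1-s)/2}$ and bound by an operator norm times $\|\boldsymbol{\Sigma}^{(1-s)/2}(f_0-f_\rho^*)\|_\mathcal{H}^2$; then reduce the operator norm to the scalar supremum $\sup_{x\in(0,1]} x^{s-2}(1-(1-x)^n)^2$ via the geometric-series identity and the change of variable $x=\eta_0\lambda$. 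The split at $x=1/n$ with $1-(1-x)^n\le\min\{nx,1\}$ gives $n^{2-s}$ in the regime $s\le 2$, while for $s>2$ the supremum saturates at $1$ (attained at $x=1$), which is exactly where the $\min\{s,2\}$ cap — and hence the saturation phenomenon the paper later studies for averaged iterates — comes from. One small point worth flagging explicitly (you use it implicitly): since this lemma is applied in the misspecified regime where $f_\rho^*\notin\mathcal{H}$, the expression $\|\boldsymbol{\Sigma}^{(1-s)/2}(f_0-f_\rho^*)\|_\mathcal{H}$ must be read through the extension $\widetilde{K_\mathbf{x}\otimes K_\mathbf{x}}$ and the convention $\langle f,\boldsymbol{\Sigma}f\rangle_\mathcal{H}=\|\boldsymbol{\Sigma}^{1/2}f\|_\mathcal{H}^2$ that the paper sets up at the start of the same appendix section; your spectral argument goes through verbatim under that identification, so this is only a matter of saying so, not a gap.
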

    \begin{lemma}[Residual Bias Upper Bound]
    \label{lem:avg-res-bias upper bound}
    
        Consider SGD with averaged iterates defined in \eqref{def:avg-parameter}. Suppose that $\eta_0 \le \min\left\{\frac{1}{\kappa^2},\frac{1}{\lambda_1}\right\}$. Then, for any integer $k^*\ge 1$, we have
        \begin{equation}\label{eq: general-res-bias-avg}
            \begin{aligned}
                \mathbb E\left[\left\langle \overline{f_n^{b(1)}}, \boldsymbol{\Sigma} \overline{f_n^{b(1)}}\right\rangle_\mathcal{H}\right] &\le \frac{1}{1-\eta_0\kappa^2}\frac{\eta_0\kappa^2}{n}\frac{1}{\eta_0^s}\left\|\left(\sum_{i=0}^{n-1}(\boldsymbol{I}-\eta_0\boldsymbol{\Sigma})^{2i}\left(\eta_0\boldsymbol{\Sigma}\right)^s\right)^\frac{1}{2}\right\|^2\left\|\boldsymbol{\Sigma}^\frac{1-s}{2}(f_0-f_\rho^*)\right\|_\mathcal{H}^2 \\
                &\le \frac{2\kappa^2}{1-\eta_0\kappa^2}\frac{1}{n^{\min\{s,1\}\eta_0^{s-1}}}\left\|\boldsymbol{\Sigma}^\frac{1-s}{2}(f_0-f_\rho^*)\right\|_\mathcal{H}^2.
            \end{aligned}
        \end{equation}
        
    \end{lemma}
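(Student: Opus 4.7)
The approach follows the martingale strategy of \citet{bach2013non} and \citet{dieuleveut2016nonparametric}, organized in three stages.

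First, I would unroll the recursion~\eqref{def:bias-res} with $f_0^{b(1)}=0$ to obtain $f_t^{b(1)} = \eta_0\sum_{i=1}^t(\boldsymbol{I}-\eta_0\boldsymbol{\Sigma})^{t-i} r_i$, where $r_i := (\boldsymbol{\Sigma}-K_{\mathbf{x}_i}\otimes K_{\mathbf{x}_i})f_{i-1}^b$. Since $\mathbb{E}[K_\mathbf{x}\otimes K_\mathbf{x}] = \boldsymbol{\Sigma}$ and $f_{i-1}^b$ is measurable with respect to $\mathcal{F}_{i-1} := \sigma(\mathbf{x}_1,\ldots,\mathbf{x}_{i-1})$, we have $\mathbb{E}[r_i\mid\mathcal{F}_{i-1}] = 0$, so $\{r_i\}$ is a martingale difference sequence. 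Averaging and swapping sums yields
\begin{equation*}
\overline{f_n^{b(1)}} = \frac{\eta_0}{n}\sum_{i=1}^{n-1} T_i\, r_i, \qquad T_i := \sum_{j=0}^{n-1-i}(\boldsymbol{I}-\eta_0\boldsymbol{\Sigma})^j,
\end{equation*}
with each $T_i$ self-adjoint and commuting with $\boldsymbol{\Sigma}$.

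Second, I would apply the Bach-type proposition quoted immediately before this lemma to the sequence $g_t := f_t^{b(1)}$ (with $\eta=\eta_0$), producing $\mathbb{E}\|\overline{f_n^{b(1)}}\|_\mathcal{H}^2 \le \frac{\eta_0}{n(1-\eta_0\kappa^2)}\sum_{i=1}^n\mathbb{E}\|r_i\|_{L^2}^2$. Passing to the target $\boldsymbol{\Sigma}^{1/2}$-weighted norm via $\langle h,\boldsymbol{\Sigma} h\rangle_{\mathcal{H}} \le \kappa^2\|h\|_\mathcal{H}^2$ recovers exactly the $\tfrac{\eta_0\kappa^2}{n(1-\eta_0\kappa^2)}$ prefactor in the target bound.

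Third, I would bound $\sum_{i=1}^n\mathbb{E}\|r_i\|_{L^2}^2$ by a small multiple of $\sum_{i=0}^{n-1}\|\tilde{f}_i^b\|_{L^2}^2$. The mean-zero property of $r_i$ gives $\mathbb{E}[\|r_i\|_{L^2}^2\mid\mathcal{F}_{i-1}] \le \mathbb{E}[\|K_{\mathbf{x}_i}\otimes K_{\mathbf{x}_i} f_{i-1}^b\|_{L^2}^2\mid \mathcal{F}_{i-1}]$, which Property~\ref{pro-kernel}(c) controls by $\kappa^2\|f_{i-1}^b\|_{L^2}^2$. A secondary induction—decomposing $f_{i-1}^b = \tilde{f}_{i-1}^b + f_{i-1}^{b(1)}$ and using $\eta_0 \le 1/\kappa^2$ to absorb the residual contribution—closes the resulting Gronwall-type recursion. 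Writing $f_0-f_\rho^* = \boldsymbol{\Sigma}^{(s-1)/2} h$ with $h := \boldsymbol{\Sigma}^{(1-s)/2}(f_0-f_\rho^*)$ (legitimate under Assumption~\ref{ass-source-a}), the summed population bias factors as
\begin{equation*}
\sum_{i=0}^{n-1}\|\tilde{f}_i^b\|_{L^2}^2 = \Big\langle h, \Big(\sum_{i=0}^{n-1}(\boldsymbol{I}-\eta_0\boldsymbol{\Sigma})^{2i}\boldsymbol{\Sigma}^s\Big) h\Big\rangle_\mathcal{H} \le \eta_0^{-s}\Big\|\sum_{i=0}^{n-1}(\boldsymbol{I}-\eta_0\boldsymbol{\Sigma})^{2i}(\eta_0\boldsymbol{\Sigma})^s\Big\|\, \|h\|_\mathcal{H}^2,
\end{equation*}
yielding the first displayed inequality of the lemma. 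The second inequality then reduces to the scalar estimate $\sup_{0<x\le 1}\tfrac{x^{s-1}(1-(1-x)^{2n})}{2-x} \le 2n^{\max(0,1-s)}$, which is proved separately for $s\ge 1$ (the supremum is bounded by $1$) and $s<1$ (the maximum is attained around $x\asymp 1/n$).

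The main obstacle will be the third stage: controlling $\sum\mathbb{E}\|r_i\|_{L^2}^2$ under only Property~\ref{pro-kernel}(c), which is strictly weaker than a fourth-moment (kurtosis) assumption on $K_\mathbf{x}$. This is exactly why iterate averaging is essential in the misspecified regime $s<1$—the iterates $f_{i-1}^b$ need not lie in $\mathcal{H}$, so one cannot invoke $\mathcal{H}$-norm moment bounds and must instead close a self-consistent recursion on $\mathbb{E}\|f_{i-1}^b\|_{L^2}^2$ using only $L^2$-valued quantities.
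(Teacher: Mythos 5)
The martingale set-up in your first stage and the scalar estimate closing the second displayed inequality are both sound, but your second and third stages hinge on the wrong form of the noise, and this is not a cosmetic choice. You unroll $f_t^{b(1)}$ against the \emph{deterministic} operator $(\boldsymbol{I}-\eta_0\boldsymbol{\Sigma})$ and take $r_i=(\boldsymbol{\Sigma}-K_{\mathbf{x}_i}\otimes K_{\mathbf{x}_i})f_{i-1}^{b}$, where $f_{i-1}^{b}$ is the \emph{stochastic} bias iterate. With that choice the Bach--Moulines telescoping inevitably produces a bound of the shape $\sum_t\mathbb E\|f_t^{b(1)}\|_{L^2}^2\le\frac{\eta_0\kappa^2}{1-\eta_0\kappa^2}\sum_t\mathbb E\|f_{t-1}^b\|_{L^2}^2$, and feeding $\|f_{t-1}^b\|_{L^2}^2\le 2\|\tilde f_{t-1}^b\|_{L^2}^2+2\|f_{t-1}^{b(1)}\|_{L^2}^2$ back in only closes the Gronwall loop when $\frac{2\eta_0\kappa^2}{1-\eta_0\kappa^2}<1$, i.e. $\eta_0<\frac{1}{3\kappa^2}$. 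The lemma is asserted for all $\eta_0\le\frac{1}{\kappa^2}$, so the "main obstacle" you flag at the end is not an obstacle one can get past with your set-up; it signals that the set-up itself is wrong.

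The fix, which is what \citet{dieuleveut2016nonparametric} actually do, is to rewrite the recursion for $g_t:=\tilde f_t^b-f_t^b$ against the \emph{stochastic} operator,
\begin{equation*}
g_t=(\boldsymbol{I}-\eta_0 K_{\mathbf{x}_t}\otimes K_{\mathbf{x}_t})\,g_{t-1}+\eta_0\,\zeta_t,\qquad \zeta_t:=(K_{\mathbf{x}_t}\otimes K_{\mathbf{x}_t}-\boldsymbol{\Sigma})\,\tilde f_{t-1}^b ,
\end{equation*}
so that the forcing term involves the \emph{deterministic} population iterate $\tilde f_{t-1}^b$ rather than the random $f_{t-1}^b$. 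Expanding $\|g_t\|_\mathcal{H}^2$, using $\|K_{\mathbf{x}}\otimes K_{\mathbf{x}}\,g\|_\mathcal{H}^2=K(\mathbf{x},\mathbf{x})\,|g(\mathbf{x})|^2\le\kappa^2\langle g,K_{\mathbf{x}}\otimes K_{\mathbf{x}}\,g\rangle_\mathcal{H}$ and $2ab\le a^2+b^2$, one obtains after conditional expectation and telescoping (with $g_0=0$)
\begin{equation*}
\sum_{t=1}^{n}\mathbb E\,\langle g_{t-1},\boldsymbol{\Sigma}\,g_{t-1}\rangle_\mathcal{H}\;\le\;\frac{\eta_0}{1-\eta_0\kappa^2}\sum_{t=1}^{n}\mathbb E\,\|\zeta_t\|_\mathcal{H}^2 ,
\end{equation*}
and Jensen's inequality gives $\mathbb E\langle\bar g_n,\boldsymbol{\Sigma}\bar g_n\rangle_\mathcal{H}\le\frac{\eta_0}{n(1-\eta_0\kappa^2)}\sum_t\mathbb E\|\zeta_t\|_\mathcal{H}^2$. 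Since $\tilde f_{t-1}^b$ is deterministic, $\mathbb E\|\zeta_t\|_\mathcal{H}^2\le\mathbb E\big[K(\mathbf{x},\mathbf{x})\,|\tilde f_{t-1}^b(\mathbf{x})|^2\big]\le\kappa^2\|\tilde f_{t-1}^b\|_{L^2}^2$ by Property~\ref{pro-kernel}(a) alone, with no self-referential term and no extra $\lambda_1$ factor. Plugging in and writing $f_0-f_\rho^*=\boldsymbol{\Sigma}^{(s-1)/2}h$ gives the first displayed inequality verbatim.

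This also exposes two bookkeeping errors in your sketch that would otherwise wreck the constants even if the Gronwall closed. First, you bound $\|\bar g_n\|_{L^2}^2\le\kappa^2\|\bar g_n\|_\mathcal{H}^2$; this is unnecessary, because the quantity $\langle g,\boldsymbol{\Sigma} g\rangle_\mathcal{H}=\|g\|_{L^2}^2$ is what the telescoping produces on the left-hand side. Second, your claim that Property~\ref{pro-kernel}(c) yields $\mathbb E\|K_{\mathbf{x}}\otimes K_{\mathbf{x}}\,g\|_{L^2}^2\le\kappa^2\|g\|_{L^2}^2$ is false: the $L^2$-norm of $K_{\mathbf{x}}\otimes K_{\mathbf{x}}\,g$ is $|g(\mathbf{x})|^2\|K_{\mathbf{x}}\|_{L^2}^2$, and $\|K_{\mathbf{x}}\|_{L^2}^2\le\lambda_1 K(\mathbf{x},\mathbf{x})$, so the best you get is $\lambda_1\kappa^2\|g\|_{L^2}^2$. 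Combined with the $\kappa^2$ from the norm conversion your route yields $\kappa^4\lambda_1$ in place of the lemma's single $\kappa^2$; the clean route produces $\kappa^2$ because it never leaves the $\mathcal{H}$-norm for the noise nor the $L^2$-norm for the average.
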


    Meanwhile, \citet{dieuleveut2016nonparametric} provides the following expression $\mathbb E\left[\left\langle \overline{\tilde{f}_n^v}, \boldsymbol{\Sigma} \overline{\tilde{f}_n^v}\right\rangle_\mathcal{H}\right]$, which is
    \begin{equation}\label{bach res}
        \mathbb E\left[\left\langle \overline{\tilde{f}_n^v}, \boldsymbol{\Sigma} \overline{\tilde{f}_n^v}\right\rangle_\mathcal{H}\right] = \frac{1}{n^2} \mathbb E\left[\left\|\sum_{j=1}^{n-1}\eta_0^2\sum_{i=j}^{n-1}(\boldsymbol{I} -\eta_0\boldsymbol{\Sigma})^{i-j}\boldsymbol{\Sigma}^\frac{1}{2}\Xi_j\right\|_\mathcal{H}^2\right].
    \end{equation}

    \begin{lemma}[Population Variance Upper Bound]
    \label{lem:avg-pop-variance upper bound}
    
        Consider SGD with averaged iterates defined in \eqref{def:avg-parameter}. Suppose that $\eta_0 \le \frac{1}{\lambda_1}$. Then, for any integer $k^*\ge 1$, we have
        \begin{equation}\nonumber
            \mathbb E\left[\left\langle \overline{\tilde{f}_n^v}, \boldsymbol{\Sigma} \overline{\tilde{f}_n^v}\right\rangle_\mathcal{H}\right] \le \sigma^2\left(\frac{k^*}{n} + \frac{1}{3}\sum_{i=k^*+1}^{+\infty}n\eta_0^2\lambda_i^2\right).
        \end{equation}
        
    \end{lemma}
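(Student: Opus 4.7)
The plan is to begin from the identity \eqref{bach res}, exploit mean-zero independence of the $\{\Xi_j\}$, diagonalize in the eigenbasis of $\boldsymbol{\Sigma}$, and then split the resulting spectral sum at $k^*$ using two complementary estimates for the resolvent factor $1-(1-\eta_0\lambda_i)^{n-j}$.

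Concretely, expanding the squared Hilbert-space norm in \eqref{bach res} and discarding the vanishing cross terms (using independence of the $\Xi_j$'s together with $\mathbb{E}[\Xi_j]=0$ from Assumption~\ref{ass-noise}) gives
$$
\mathbb{E}\!\left[\left\langle \overline{\tilde{f}_n^v},\boldsymbol{\Sigma}\overline{\tilde{f}_n^v}\right\rangle_{\mathcal{H}}\right]
= \frac{\eta_0^2}{n^2}\sum_{j=1}^{n-1} \mathbb{E}\!\left[\left\|R_{n,j}(\boldsymbol{\Sigma})\boldsymbol{\Sigma}^{1/2}\Xi_j\right\|_{\mathcal{H}}^2\right],
$$
where $R_{n,j}(\boldsymbol{\Sigma}) = \sum_{t=j}^{n-1}(\boldsymbol{I}-\eta_0\boldsymbol{\Sigma})^{t-j}$ is a self-adjoint polynomial in $\boldsymbol{\Sigma}$. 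I rewrite the inner norm as a trace involving $\Xi_j\otimes\Xi_j$ and apply Assumption~\ref{ass-noise} ($\mathbb{E}[\Xi_j\otimes\Xi_j]\preceq\sigma^2\boldsymbol{\Sigma}$) to upper-bound by $\frac{\sigma^2\eta_0^2}{n^2}\sum_{j=1}^{n-1}\mathrm{tr}\bigl(R_{n,j}(\boldsymbol{\Sigma})^2\boldsymbol{\Sigma}^2\bigr)$.

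Next, I diagonalize using the orthonormal basis $\{\lambda_i^{1/2}\phi_i\}$ of $\mathcal{H}$ and evaluate the geometric sum $\eta_0 R_{n,j}(\lambda_i)=(1-(1-\eta_0\lambda_i)^{n-j})/\lambda_i$, which collapses the quantity to $\frac{\sigma^2}{n^2}\sum_{i=1}^{\infty}\sum_{j=1}^{n-1}\bigl(1-(1-\eta_0\lambda_i)^{n-j}\bigr)^2$. I then split this spectral sum at $k^*$: for $i\le k^*$ use the crude bound $\bigl(1-(1-\eta_0\lambda_i)^{n-j}\bigr)^2\le 1$, contributing at most $\sigma^2 k^*/n$; for $i>k^*$, since $0\le\eta_0\lambda_i\le 1$, the Bernoulli-type inequality $1-(1-x)^m\le mx$ gives $\bigl(1-(1-\eta_0\lambda_i)^{n-j}\bigr)^2\le (n-j)^2\eta_0^2\lambda_i^2$, and combining with $\sum_{j=1}^{n-1}(n-j)^2\le n^3/3$ produces exactly $\frac{1}{3}\sum_{i>k^*}\sigma^2 n\eta_0^2\lambda_i^2$. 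Summing the two pieces yields the claim.

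The only mild subtlety is to justify $\mathrm{tr}\bigl(R_{n,j}(\boldsymbol{\Sigma})^2\boldsymbol{\Sigma}\,\mathbb{E}[\Xi_j\otimes\Xi_j]\bigr)\le\sigma^2\mathrm{tr}\bigl(R_{n,j}(\boldsymbol{\Sigma})^2\boldsymbol{\Sigma}^2\bigr)$ from the operator inequality; this follows from the general fact that $\mathrm{tr}(AB)\ge\mathrm{tr}(AC)$ whenever $A\succeq 0$ is trace-class and $B\succeq C$ are self-adjoint, verified by taking the trace of $A^{1/2}(B-C)A^{1/2}\succeq 0$. Apart from this routine bookkeeping, the proof is a direct spectral calculation and I expect no serious obstacle.
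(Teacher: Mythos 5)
Your proof is correct and follows essentially the same route as the paper: start from \eqref{bach res}, kill the cross terms by independence and mean-zero, pass to the trace bound via $\mathbb{E}[\Xi_j\otimes\Xi_j]\preceq\sigma^2\boldsymbol{\Sigma}$, diagonalize to obtain $\frac{\sigma^2}{n^2}\sum_i\sum_j\bigl(1-(1-\eta_0\lambda_i)^{n-j}\bigr)^2$, and then split at $k^*$ using the trivial bound for $i\le k^*$ and the Bernoulli bound $1-(1-x)^m\le mx$ together with $\sum_{j=1}^{n-1}j^2\le n^3/3$ for $i>k^*$. The only cosmetic difference is that you keep the exponent as $n-j$ while the paper re-indexes to $j$, and you spell out the routine trace monotonicity that the paper leaves implicit.
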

    \begin{proof}

         According to the form in \eqref{bach res}, we have
        \begin{equation}\nonumber
            \begin{aligned}
                \mathbb E\left[\left\langle \overline{\tilde{f}_n^v}, \boldsymbol{\Sigma} \overline{\tilde{f}_n^v}\right\rangle_\mathcal{H}\right] &= \frac{1}{n^2} \mathbb E\left[\left\|\sum_{j=1}^{n-1}\eta_0^2\sum_{i=j}^{n-1}(\boldsymbol{I} -\eta_0\boldsymbol{\Sigma})^{i-j}\boldsymbol{\Sigma}^\frac{1}{2}\Xi_j\right\|_\mathcal{H}^2\right] \\
                &= \frac{1}{n^2} \mathbb E\left[\mathrm{tr}\left(\sum_{j=1}^{n-1}\eta_0^2 \left(\sum_{i=j}^{n-1}(\boldsymbol{I} -\eta_0\boldsymbol{\Sigma})^{i-j}\right)^2\boldsymbol{\Sigma}\Xi_j \otimes \Xi_j\right)\right] \\
                &= \frac{1}{n^2} \mathrm{tr}\left(\sum_{j=1}^{n-1}\eta_0^2\left(\sum_{i=j}^{n-1}(\boldsymbol{I} -\eta_0\boldsymbol{\Sigma})^{i-j}\right)^2\boldsymbol{\Sigma}\mathbb E\left[\Xi_i\otimes \Xi_i\right]\right),
            \end{aligned}
        \end{equation}
        where in the second equality, we use the commutativity of each $\boldsymbol{I}-\eta_j\boldsymbol{\Sigma}$, and from the fact that for any $1 \le i\ne j \le n$, $\Xi_i$ and $\Xi_j$ are independent, and under Assumption~\ref{ass-noise} $\mathbb E\left[\Xi_i\right]=0$.

        Using the property that for any $1\le i \le n$, $\mathbb E\left[\Xi_i\otimes \Xi_i\right] \preceq \sigma^2\boldsymbol{\Sigma}$, we have
        \begin{equation}\nonumber
            \begin{aligned}
                \mathbb E\left[\left\langle \overline{\tilde{f}_n^v}, \boldsymbol{\Sigma} \overline{\tilde{f}_n^v}\right\rangle_\mathcal{H}\right]&\le \frac{\sigma^2}{n^2} \mathrm{tr}\left(\sum_{j=1}^{n-1}\eta_0^2\left(\sum_{i=j}^{n-1}(\boldsymbol{I} -\eta_0\boldsymbol{\Sigma})^{i-j}\right)^2\boldsymbol{\Sigma}^2\right).
            \end{aligned}
        \end{equation}
    Similar to the proof of Lemma~\ref{lem:dec-pop-variance upper bound}, we split the summation on the RHS into two parts: $k\le k^*$ and $k>k^*$. The resulting upper bound is summarized in the following:
        \begin{equation}\nonumber
            \begin{aligned}
                \mathbb E\left[\left\langle \overline{\tilde{f}_n^v}, \boldsymbol{\Sigma} \overline{\tilde{f}_n^v}\right\rangle_\mathcal{H}\right] &\le \frac{\sigma^2}{n^2} \sum_{k=1}^{+\infty}\left(\sum_{j=1}^{n-1}\eta_0^2\left(\sum_{i=j}^{n-1}(1 -\eta_0\lambda_k)^{i-j}\right)^2\lambda_k^2\right) \\
                &= \sigma^2 \left(\sum_{k=1}^{k^*}\frac{1}{n^2}\sum_{j=1}^{n-1}\left(1-(1 -\eta_0\lambda_k)^j\right)^2 + \sum_{k=k^*+1}^{+\infty}\frac{1}{n^2}\sum_{j=1}^{n-1}\left(1-(1 -\eta_0\lambda_k)^j\right)^2\right) \\
                &\le \sigma^2\left(\sum_{k=1}^{k^*}\frac{1}{n^2}\sum_{j=1}^{n-1}1 + \sum_{k=k^*+1}^{+\infty}\frac{1}{n^2}\sum_{j=1}^{n-1}\left(j\eta_0\lambda_k\right)^2\right) \\
                &\le \sigma^2\left(\frac{k^*}{n} + \frac{1}{3}\sum_{k=k^*+1}^{+\infty}n\eta_0^2\lambda_k^2\right),
            \end{aligned}
        \end{equation}
        where in the second inequality, we use $1-nx\le(1-x)^n\le 1$ for any $0 < x < 1$ and $n \in \mathbb N$.
    \end{proof}

    Besides, we have the following lemma:
    \begin{lemma}[Residual Variance Upper Bound]
    \label{lem:avg-res-variance upper bound}
    
        Consider SGD with averaged iterates defined in \eqref{def:avg-parameter}. Suppose that $\eta_0 \le \min\left\{\frac{2}{\kappa^2},\frac{1}{\lambda_1}\right\}$. Then, for any integer $k^*\ge 1$, we have
        \begin{equation}\nonumber
            \mathbb E\left[\left\langle \overline{f_n^{v(1)}}, \boldsymbol{\Sigma} \overline{f_n^{v(1)}}\right\rangle_\mathcal{H}\right] \le \frac{\eta_0\sigma^2}{2-\eta_0 \kappa^2} \kappa^2\left(\frac{k^*}{n} + \frac{1}{3}\sum_{k=k^*+1}^{+\infty}n\eta_0^2\lambda_k^2\right).
        \end{equation}
        
    \end{lemma}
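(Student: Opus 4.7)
The plan is to mimic the derivation of Lemma \ref{lem:avg-pop-variance upper bound} while replacing the role played by $\Xi_i$ with the residual noise $\Xi_i^{v(1)}$, and to invoke Lemma \ref{lem: upper-v(1)} to control its second moment. First I would unroll the recursion in \eqref{def:variance-res} with constant step size to obtain
\begin{equation}\nonumber
    f_t^{v(1)} = \eta_0 \sum_{i=1}^{t}(\boldsymbol{I}-\eta_0\boldsymbol{\Sigma})^{t-i}\Xi_i^{v(1)},
\end{equation}
then average over $t=0,\ldots,n-1$ and swap the order of summation (exactly as in the derivation that led to \eqref{bach res}) to get
\begin{equation}\nonumber
    \overline{f_n^{v(1)}} = \frac{\eta_0}{n}\sum_{i=1}^{n-1}\Bigl(\sum_{t=i}^{n-1}(\boldsymbol{I}-\eta_0\boldsymbol{\Sigma})^{t-i}\Bigr)\Xi_i^{v(1)}.
\end{equation}

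Next I would show that the cross-terms vanish, i.e.\ $\mathbb{E}[\Xi_i^{v(1)}\otimes \Xi_j^{v(1)}]=\boldsymbol{0}$ for $i<j$. This follows by the same argument as in Lemma \ref{lem:dec-res-variance upper bound}: conditioning on the first $j-1$ samples, $\Xi_j^{v(1)}=(\boldsymbol{\Sigma}-K_{\mathbf{x}_j}\otimes K_{\mathbf{x}_j})f_{j-1}^v$ factors through $\mathbb{E}[\boldsymbol{\Sigma}-K_{\mathbf{x}_j}\otimes K_{\mathbf{x}_j}]=\boldsymbol{0}$ since $K_{\mathbf{x}_j}$ is independent of $f_{j-1}^v$. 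Hence the squared $\boldsymbol{\Sigma}^{1/2}$-norm decomposes as a single sum over $i$:
\begin{equation}\nonumber
    \mathbb{E}\left[\left\langle \overline{f_n^{v(1)}},\boldsymbol{\Sigma}\,\overline{f_n^{v(1)}}\right\rangle_{\mathcal{H}}\right] = \frac{\eta_0^2}{n^2}\sum_{i=1}^{n-1}\mathrm{tr}\!\left(\Bigl(\sum_{t=i}^{n-1}(\boldsymbol{I}-\eta_0\boldsymbol{\Sigma})^{t-i}\Bigr)^{\!2}\boldsymbol{\Sigma}\,\mathbb{E}\bigl[\Xi_i^{v(1)}\otimes \Xi_i^{v(1)}\bigr]\right).
\end{equation}

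Then I would apply Lemma \ref{lem: upper-v(1)}, which gives $\mathbb{E}[\Xi_i^{v(1)}\otimes \Xi_i^{v(1)}]\preceq \alpha\kappa^2\boldsymbol{\Sigma}$ with $\alpha=\eta_0\sigma^2/(2-\eta_0\kappa^2)$. Substituting this dominates the inner trace by $\alpha\kappa^2\cdot\mathrm{tr}\bigl((\sum_{t=i}^{n-1}(\boldsymbol{I}-\eta_0\boldsymbol{\Sigma})^{t-i})^2\boldsymbol{\Sigma}^2\bigr)$, which is precisely the expression that appeared inside the trace in the proof of Lemma \ref{lem:avg-pop-variance upper bound} (only with $\sigma^2$ replaced by $\alpha\kappa^2$). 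Invoking the spectral decomposition, splitting the sum at the threshold $k^*$, and using $1-nx\le (1-x)^n\le 1$ for $0<x<1$ exactly as in that proof yields
\begin{equation}\nonumber
    \mathbb{E}\left[\left\langle \overline{f_n^{v(1)}},\boldsymbol{\Sigma}\,\overline{f_n^{v(1)}}\right\rangle_{\mathcal{H}}\right] \le \alpha\kappa^2\left(\frac{k^*}{n}+\frac{1}{3}\sum_{k=k^*+1}^{+\infty}n\eta_0^2\lambda_k^2\right),
\end{equation}
which is the desired bound.

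There is essentially no new obstacle: the decorrelation step and the second-moment bound on $\Xi_i^{v(1)}$ are the only ingredients that need verification, and both have direct analogs earlier in the appendix (Lemmas \ref{lem:dec-res-variance upper bound} and \ref{lem: upper-v(1)} respectively). The main care is simply that the bound on $\mathbb{E}[\Xi_i^{v(1)}\otimes \Xi_i^{v(1)}]$ must hold uniformly in $i$, which is guaranteed by the induction argument of Lemma \ref{lem: upper-v(1)} valid under $\eta_0<2/\kappa^2$. After that, the argument is a direct copy of Lemma \ref{lem:avg-pop-variance upper bound} with $\sigma^2\to\alpha\kappa^2$.
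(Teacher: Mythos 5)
Your proposal is correct and follows exactly the route the paper intends: the paper explicitly omits the proof, stating that it mirrors Lemma~\ref{lem:dec-res-variance upper bound}, and your plan (decorrelation of the residual noise $\Xi_i^{v(1)}$ via conditioning on $K_{\mathbf{x}_j}$, uniform second-moment bound from Lemma~\ref{lem: upper-v(1)}, then the spectral split at $k^*$ from Lemma~\ref{lem:avg-pop-variance upper bound} with $\sigma^2$ replaced by $\alpha\kappa^2$) is precisely that adaptation. The only detail worth stating explicitly is that Lemma~\ref{lem: upper-v(1)} is phrased for the exponentially decaying schedule but its induction only uses $\eta_t\le\eta_0<2/\kappa^2$, so it applies verbatim to the constant step size; you implicitly acknowledge this, so there is no gap.
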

    
    We omit the proof since it follows a similar argument to that of Lemma \ref{lem:dec-res-variance upper bound}, 

    Combining Lemma \ref{lem:avg-pop-bias upper bound}, \ref{lem:avg-res-bias upper bound}, \ref{lem:avg-pop-variance upper bound}, and \ref{lem:avg-res-variance upper bound}, we have the following corollary.
    \begin{corollary}
        [Upper Bound for SGD with Averaged Iterates]

        Consider SGD with averaged iterates defined in \eqref{def:avg-parameter}. Suppose that $\eta_0 \le \min\left\{\frac{1}{\kappa^2},\frac{1}{\lambda_1}\right\}$. Then, for any integer $k^*\ge 1$, we have
        \begin{equation}\nonumber
            \begin{aligned}
                \mathbb E\left[\left\|f_n - f_\rho^*\right\|_{L^2}^2\right] &\le 4\frac{1}{n^{\min\{s,2\}}\eta_0^s} \left\|\boldsymbol{\Sigma}^\frac{1-s}{2}(f_0-f_\rho^*)\right\|_\mathcal{H}^2 + 4\sigma^2\left(\frac{k^*}{n} + \frac{1}{3}\sum_{k=k^*+1}^{+\infty}n\eta_0^2\lambda_k^2\right). \\
                &+ 4\frac{2\kappa^2}{1-\eta_0\kappa^2}\frac{1}{n^{\min\{s,1\}\eta_0^{s-1}}}\left\|\boldsymbol{\Sigma}^\frac{1-s}{2}(f_0-f_\rho^*)\right\|_\mathcal{H}^2 + 4\frac{\eta_0\sigma^2}{2-\eta_0 \kappa^2} \kappa^2\left(\frac{k^*}{n} + \frac{1}{3}\sum_{k=k^*+1}^{+\infty}n\eta_0^2\lambda_k^2\right).
            \end{aligned}
        \end{equation}

    \end{corollary}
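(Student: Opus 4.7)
The target statement is a clean aggregation corollary: it collects the four components of the error decomposition for averaged SGD into a single bound. So my plan is essentially to assemble pieces that are already laid out in the section, rather than to introduce new analytical machinery.

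First, I would recall the decomposition already established just before the corollary: by applying Lemma~\ref{lem:bias-variance decomposition} to the averaged iterate $\overline{f_n}$, combined with Minkowski's inequality on the averages, we get a factor of $2$ separating bias from variance; then applying Lemma~\ref{lem:pop-res decomposition} (again componentwise, with Minkowski absorbing the averaging) gives another factor of $2$, yielding the four-term bound
\begin{equation}\nonumber
\mathbb{E}\!\left[\|\overline{f_n}-f_\rho^*\|_{L^2}^2\right]
\le 4\Bigl(
\bigl\langle \overline{\tilde{f}_n^b},\boldsymbol{\Sigma}\,\overline{\tilde{f}_n^b}\bigr\rangle_{\mathcal{H}}
+ \mathbb{E}\bigl\langle \overline{\tilde{f}_n^v},\boldsymbol{\Sigma}\,\overline{\tilde{f}_n^v}\bigr\rangle_{\mathcal{H}}
+ \mathbb{E}\bigl\langle \overline{f_n^{b(1)}},\boldsymbol{\Sigma}\,\overline{f_n^{b(1)}}\bigr\rangle_{\mathcal{H}}
+ \mathbb{E}\bigl\langle \overline{f_n^{v(1)}},\boldsymbol{\Sigma}\,\overline{f_n^{v(1)}}\bigr\rangle_{\mathcal{H}}
\Bigr).
\end{equation}
Here I am reusing the fact that the norm $\langle\cdot,\boldsymbol{\Sigma}\,\cdot\rangle_{\mathcal{H}}^{1/2}$ on $L^2$ satisfies the triangle inequality, so the averaging and the splitting into (population, residual) $\times$ (bias, variance) at most introduces the stated constant $4$.

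Next I would simply substitute each of the four component bounds. The first term is controlled by the population bias estimate in Lemma~\ref{lem:avg-pop-bias upper bound}, yielding $\tfrac{1}{n^{\min\{s,2\}}\eta_0^s}\bigl\|\boldsymbol{\Sigma}^{(1-s)/2}(f_0-f_\rho^*)\bigr\|_{\mathcal{H}}^2$. The second term is controlled by Lemma~\ref{lem:avg-pop-variance upper bound}, which requires only $\eta_0\le 1/\lambda_1$ and produces $\sigma^2\bigl(k^*/n + \tfrac{1}{3}\sum_{k>k^*}n\eta_0^2\lambda_k^2\bigr)$. The third term is handled by Lemma~\ref{lem:avg-res-bias upper bound}, which uses $\eta_0\le 1/\kappa^2$ so that $1-\eta_0\kappa^2>0$ appears safely in the denominator. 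The fourth term is handled by Lemma~\ref{lem:avg-res-variance upper bound}, producing the factor $\tfrac{\eta_0\sigma^2}{2-\eta_0\kappa^2}\kappa^2$ multiplied by the same effective-dimension/tail sum. Summing these four bounds, each multiplied by the common prefactor $4$, produces exactly the claimed expression.

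The only care-points are bookkeeping rather than analysis: (i) verifying that the step-size restriction $\eta_0\le\min\{1/\kappa^2,1/\lambda_1\}$ stated in the corollary is at least as strong as the hypotheses needed in each of the four lemmas (the residual-variance lemma nominally only requires $\eta_0\le 2/\kappa^2$, which is implied); (ii) confirming that the function $f_0-f_\rho^*$ and the operator $\widetilde{K_{\mathbf{x}}\otimes K_{\mathbf{x}}}$ extension described at the top of the section make the $\boldsymbol{\Sigma}$-norms well defined even when $f_\rho^*\notin\mathcal{H}$; and (iii) keeping the same $k^*$ across the two variance bounds, which is legitimate since the lemma statements hold for any integer $k^*\ge 1$. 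I do not anticipate any genuine obstacle: the corollary is a direct consolidation, and the main intellectual content has already been discharged in the four preceding lemmas.
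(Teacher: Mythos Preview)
Your proposal is correct and matches the paper's own approach exactly: the paper simply states that the corollary follows by combining Lemmas~\ref{lem:avg-pop-bias upper bound}, \ref{lem:avg-res-bias upper bound}, \ref{lem:avg-pop-variance upper bound}, and \ref{lem:avg-res-variance upper bound}, with the four-term decomposition already established beforehand. Your care-points about the step-size hypothesis, the operator extension for $f_\rho^*\notin\mathcal{H}$, and using the same $k^*$ across the variance lemmas are all well taken and are exactly the implicit verifications the paper leaves to the reader.
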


\section{Proof of Optimal Rate}

    In this section, we discuss the optimality of SGD for kernel regression with the dot-product kernel $K$ under high-dimensional and asymptotic settings, respectively.
    
\subsection{High-Dimensional Setting}

    Under this setting, we consider sufficient large $n$ which satisfies $c_1 d^\gamma < n < c_2 d^\gamma$ for some fixed $\gamma > 0$ and absolute constants $c_1,c_2$.

    To demonstrate the optimal rate, here we borrowed a minimax lower bound of $\|f-f_\rho^*\|_{L^2}^2$ from \citet{lu2024pinskerboundinnerproduct}.
    \begin{theorem}[Minimax Lower Bound, from \citet{lu2024pinskerboundinnerproduct}]
        In high-dimensional settings, where $n$ is bounded by $c_1 d^\gamma < n < c_2 d^\gamma$ for some fixed $\gamma > 0$ and constants $c_1, c_2$, consider $\mathcal{X} = \mathbb S^d$.  The marginal distribution $\rho_{\mathcal{X}}$ is assumed to be the uniform distribution on $\mathbb S^d$. Let $K$ denote a dot-product kernel on the sphere, which satisfies Assumption~\ref{ass-dotkernel}.  
        Let $\mathcal{P}$ consist of all distributions $\rho$ on $\mathcal{X}\times \mathcal{Y}$ given by $y=f_{\rho}^*(\mathbf{x})+\epsilon$, where $f_{\rho}^{*}\in \left [ \mathcal{H}  \right ] ^s$ and $\epsilon \sim \mathcal{N} \left ( 0,1 \right ) $ is independent of $\mathbf{x}$. Let $p=\left\lceil \frac{\gamma}{s+1} \right\rceil- 1$, then we have:
        \begin{equation}\nonumber
            \inf_{\hat{f} }\sup_{\rho\in \mathcal{P} }\mathbb{E}_{\rho} \left \| \hat{f} -f_{\rho}^{*} \right \| _{L^2}^2=\Omega \left ( d^{-\mathrm{min}\left \{ \gamma -p,s(p+1) \right \}  } \right ) .
        \end{equation}
    \end{theorem}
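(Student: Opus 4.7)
My plan is to use a two-scale hypothesis-testing argument on spherical harmonics, producing a lower bound coming from level-$p$ coefficients (the variance-dominated regime) and another from level-$(p+1)$ coefficients (the bias-dominated regime), then taking the maximum.

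First I would set up the geometry. Because $\rho_\mathcal X$ is uniform on $\mathbb S^d$ and $K$ is a dot-product kernel, Mercer's theorem yields the orthonormal basis $\{Y_{k,j}\}$ with eigenvalues $\mu_k$ of multiplicity $N(d,k)$, and Lemma~\ref{lem: higheigen dec} (together with Assumption~\ref{ass-dotkernel}) gives the two-sided estimates $\mu_k\asymp d^{-k}$ and $N(d,k)\asymp d^k$ for $k\le p+1$. For $f=\sum_{k,j}a_{k,j}Y_{k,j}$ the source condition reads $\|f\|_{[\mathcal H]^s}^2=\sum_{k,j}\mu_k^{-s}a_{k,j}^2\le 1$, and $\|f\|_{L^2}^2=\sum_{k,j}a_{k,j}^2$. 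By definition of $p=\lceil\gamma/(s+1)\rceil-1$ we have $p(s+1)<\gamma\le(p+1)(s+1)$, which is what makes the two hypothesis families below simultaneously admissible.

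Second, for the variance-dominated bound $\Omega(d^{-(\gamma-p)})$, I would construct, via Gilbert--Varshamov, a $\{-1,+1\}^{N(d,p)}$-indexed family
\begin{equation}\nonumber
f_\varepsilon(\mathbf x)=c_p\sum_{j=1}^{N(d,p)}\varepsilon_j Y_{p,j}(\mathbf x),\qquad \varepsilon\in\mathcal M\subset\{\pm 1\}^{N(d,p)},
\end{equation}
with $|\mathcal M|\ge 2^{cN(d,p)}$ and pairwise Hamming distance $\ge N(d,p)/4$. Pick $c_p^2\asymp \mu_p^s/N(d,p)\wedge 1/n$; since $\gamma>p(s+1)$, the binding constraint is $c_p^2\asymp 1/n\asymp d^{-\gamma}$, and the source condition $c_p^2 N(d,p)\mu_p^{-s}\le 1$ is met. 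Pairs are separated in $L^2$ by $c_p^2 N(d,p)\asymp d^{p-\gamma}$, while the pairwise KL between the product Gaussian data laws is $\tfrac n2\|f_\varepsilon-f_{\varepsilon'}\|_{L^2}^2\lesssim nc_p^2 N(d,p)\asymp N(d,p)$, which is a bounded fraction of $\log|\mathcal M|$. Fano's inequality then yields the minimax bound $\Omega(d^{-(\gamma-p)})$.

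Third, for the bias-dominated bound $\Omega(d^{-(p+1)s})$, I would carry out the same construction at level $p+1$: put $c_{p+1}^2\asymp\mu_{p+1}^s/N(d,p+1)\asymp d^{-(p+1)(s+1)}$, saturating the source condition. The $L^2$ separation is $c_{p+1}^2 N(d,p+1)\asymp d^{-(p+1)s}$, and the KL bound is $\lesssim n c_{p+1}^2 N(d,p+1)\asymp d^{\gamma-(p+1)s}\cdot N(d,p+1)/N(d,p+1)= d^{\gamma-(p+1)s}\cdot 1$; since $\gamma\le(p+1)(s+1)$ we have $nc_{p+1}^2\asymp d^{\gamma-(p+1)(s+1)}\le 1$, so $\mathrm{KL}\lesssim N(d,p+1)$, again a fraction of $\log|\mathcal M|$. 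Fano then delivers $\Omega(d^{-(p+1)s})$. Combining the two constructions gives the claimed $\Omega(d^{-\min\{\gamma-p,\,s(p+1)\}})$.

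The main obstacle I would expect is bookkeeping around the critical case $\gamma=(p+1)(s+1)$ (where both rates coincide) and ensuring that the Gilbert--Varshamov packing constants are uniform in $d$; in particular the lower bound of Lemma~\ref{lem: higheigen dec} must hold down to the right power of $d$ so that $N(d,p)$ and $N(d,p+1)$ are both $\ge$ some positive power of $d$ (guaranteeing that $\log|\mathcal M|$ dominates the KL). A minor secondary point is handling the case $p=0$, where the level-$p$ family collapses to the constant function and the variance lower bound must instead be obtained by a two-point Le~Cam argument at level $1$ or by noting that no estimator beats the noise rate $\Omega(1/n)=\Omega(d^{-\gamma})$. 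Everything else is a routine spherical-harmonic computation using orthonormality in $L^2(\mathbb S^d)$ and the eigenvalue estimates already collected in the appendix.
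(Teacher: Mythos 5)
The paper does not prove this theorem: it is stated explicitly as imported from \citet{lu2024pinskerboundinnerproduct} (and in the main text as Proposition~\ref{minmax-high}, also attributed to that reference). There is therefore no in-paper argument for you to match; the appendix simply restates the claim before using it.

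That said, your reconstruction is essentially the standard minimax argument one would expect behind such a result, and I don't see a gap in the outline. The two-scale Fano construction is set up correctly: at level $p$ you take $c_p^2\asymp n^{-1}$, which is the binding constraint because $\gamma>p(s+1)$; the pairwise $L^2$-separation after Gilbert--Varshamov pruning is $\asymp c_p^2N(d,p)\asymp d^{p-\gamma}$ while the pairwise KL is $\lesssim nc_p^2N(d,p)\asymp N(d,p)\asymp\log|\mathcal M|$, and the constant in front of $c_p^2$ can be chosen so the Fano error probability stays bounded away from one. At level $p+1$ you saturate the source ball, $c_{p+1}^2\asymp\mu_{p+1}^s/N(d,p+1)$, and the KL constraint $nc_{p+1}^2\lesssim1$ is exactly the inequality $\gamma\le(p+1)(s+1)$ built into the definition of $p$. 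The boundary case $\gamma=(p+1)(s+1)$, where the two rates coincide, costs nothing since both constructions remain admissible. Your flag on $p=0$ is the right thing to worry about: level $p=0$ has $N(d,0)=1$, so the Varshamov--Gilbert packing at that level degenerates, and the $\Omega(d^{-\gamma})=\Omega(n^{-1})$ bound must come from a two-point Le~Cam argument (or from noting the level-$1$ packing already covers the transition, since $\gamma\le s+1$ forces the level-$1$ rate to be $\ge n^{-1}$ up to constants). Since the paper offers no proof, I cannot certify that the cited source's argument takes exactly this form, but nothing in your plan would fail, and it is the canonical route for lower bounds over inner-product kernels on the sphere.
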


    The well-specified case corresponds to $s\ge 1$, and the mis-specified case corresponds to $0<s<1$.

    In this section, we use different schedules to reach the minimax lower bound in the well-specified and mis-specified cases.

\subsubsection{Well-Specified Case}

    In the well-specified case, we use SGD with exponentially step decay schedule to reach the minimax lower bound.

    \begin{theorem}[Theorem~\ref{thm-ndr-easy}]

        In high-dimensional settings, where $n$ is bounded by $c_1 d^\gamma < n < c_2 d^\gamma$ for some fixed $\gamma > 0$ and constants $c_1, c_2$, consider $\mathcal{X} = \mathbb{S}^d$. The marginal distribution $\rho_{\mathcal{X}}$ is assumed to be the uniform distribution on $\mathbb{S}^d$. Let $K$ denote the dot-product kernel on $\mathbb{S}^d$, which satisfies Assumption~\ref{ass-dotkernel}.
        Given $s\ge 1$, supposing that Assumption~\ref{ass-source-a} holds with $s$, and treating $\gamma,\sigma,\kappa,c_1,c_2$ and $s$ as constants, the excess risk of the output of SGD with exponentially decaying step size $f^{dec}_n$ satisfies:
     
        (i) When $\gamma \in (ps+p,ps+p+s]$ for some $p \in \mathbb N$, with initial step size $\eta_0 = \Theta(d^{-\gamma+p} \log_2 n \ln d)\le \min\left\{\frac{1}{\kappa^2},\frac{1}{\lambda_1}\right\}$, and $k^*=\Theta(d^p)$, there exists a constant $d_0$ such that for any $d \ge d_0$, we have:
        \begin{equation}\nonumber
            \mathbb{E}\left[\|f^{dec}_n-f_\rho^*\|_{L^2}^2\right] \lesssim d^{-\gamma+p}\log_2^2 d.
        \end{equation}
        (ii) When $\gamma \in (ps+p+s,(p+1)s+p+1]$ for some $p \in \mathbb N$, with initial step size $\eta_0 = \Theta(d^{-\gamma+p} \log_2 n \ln d)\le \min\left\{\frac{1}{\kappa^2},\frac{1}{\lambda_1}\right\}$, and $k^*=\Theta(d^p)$, there exists a constant $d_0$ such that for any $d \ge d_0$, we have:
        \begin{equation}\nonumber
           \mathbb{E}\left[\|f^{dec}_n-f_\rho^*\|_{L^2}^2\right]\lesssim d^{-(p+1)s}.
        \end{equation}

    \end{theorem}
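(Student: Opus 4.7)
The plan is to apply the bias--variance decomposition of Lemmas~\ref{lem:bias-variance decomposition}--\ref{lem:pop-res decomposition}, to use the generic upper bounds of Lemmas~\ref{lem:dec-pop-variance upper bound}, \ref{lem:dec-res-bias upper bound}, and~\ref{lem:dec-res-variance upper bound} for the variance and residual pieces, and to replace the generic Lemma~\ref{lem:dec-pop-bias upper bound} on the population bias by a refined block-by-block argument that exploits the spherical-harmonic eigenvalue structure of Lemma~\ref{lem: higheigen dec}. Admissibility of the step size is automatic: since $\lambda_1 = \mu_0 \asymp 1$ and $\gamma > p$ (as $\gamma > ps + p \ge p$ under $s \ge 1$), the step size $\eta_0 = \Theta(d^{-\gamma+p}\log_2 n \, \ln d)$ tends to $0$, so the hypotheses $\eta_0 \le \min\{2/\kappa^2, 1/\lambda_1\}$ hold for all $d \ge d_0$.

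For the variance and residual terms, I would fix $k^* = \Theta(d^p)$. The leading variance contribution is $\log_2^2 n \cdot k^*/n \asymp d^{p-\gamma}\log_2^2 d$; the tail is, by Lemma~\ref{lem: higheigen dec}, at most $\sum_{j=p+1}^{\infty} N(d,j)\mu_j^2 \asymp \sum_{j=p+1}^\infty d^{-j} \asymp d^{-(p+1)}$, so that $n\eta_0^2\sum_{k>k^*}\lambda_k^2 \lesssim d^{p-\gamma-1}\,\mathrm{polylog}(d)$ is subdominant. The residual-bias term is of the same order up to the prefactor $\|f_0-f_\rho^*\|_\mathcal{H}^2$, which the source condition controls via
\begin{equation*}
\|f_\rho^*\|_\mathcal{H}^2 = \sum_i c_i^2/\lambda_i \le \lambda_1^{s-1}\,\|f_\rho^*\|_{[\mathcal{H}]^s}^2 \le \kappa^{2(s-1)}
\end{equation*}
for $s\ge 1$; the residual variance is further damped by $\eta_0 \to 0$.

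The main obstacle, and the place where I depart from Proposition~\ref{pro-sgddecany}, is the population bias. The generic estimate gives $(\log_2 n/(n\eta_0))^s \asymp d^{-ps}/\ln^s d$, which fails to match either target rate once $\gamma - p > ps$. I would instead diagonalize in the Mercer basis: since each $\boldsymbol{I} - \eta_t\boldsymbol{\Sigma}$ is diagonal in $\{\lambda_i^{1/2}\phi_i\}$, writing $f_\rho^* = \sum_i c_i\phi_i$ yields
\begin{equation*}
\bigl\langle\tilde f_n^b, \boldsymbol{\Sigma}\tilde f_n^b\bigr\rangle_\mathcal{H} = \sum_i c_i^2 \prod_{t=1}^n (1-\eta_t\lambda_i)^2.
\end{equation*}
Group by spherical-harmonic degree $k$ and apply the source condition block-wise to obtain $\sum_{i\in\text{block }k} c_i^2 \le \mu_k^s \asymp d^{-ks}$. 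Bound the contraction crudely by $(1-\eta_0\mu_k)^{2m} \le e^{-2m\eta_0\mu_k}$, and use $m\eta_0 \asymp d^p\ln d$, so $2m\eta_0\mu_k \gtrsim d^{p-k}\ln d$. For $k \le p$, this exponent is at least $\ln d$, and after choosing the hidden constant in $\eta_0$ large enough in terms of $p,s$ one forces $e^{-2m\eta_0\mu_k} \le d^{-(p+1)s}$, so each such block contributes at most $d^{-(p+1)s}$. For $k \ge p+1$ the contraction factor is $\le 1$ and the weights sum geometrically to $\asymp d^{-(p+1)s}$. Combining gives $\langle\tilde f_n^b,\boldsymbol{\Sigma}\tilde f_n^b\rangle_\mathcal{H} \lesssim d^{-(p+1)s}$.

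Putting the pieces together, the excess risk is bounded by $C\bigl(d^{p-\gamma}\log_2^2 d + d^{-(p+1)s}\bigr)$. In case (i) one has $\gamma - p \le s \le (p+1)s$, so $d^{-(p+1)s} \le d^{p-\gamma}$, the variance dominates, and the rate is $d^{p-\gamma}\log_2^2 d$. In case (ii) one has $\gamma - p > (p+1)s$, so the bias dominates and the rate is $d^{-(p+1)s}$. The main technical subtlety I anticipate is the calibration of the constant in $\eta_0$: it must be chosen large enough (as a function of $p,s$ but independent of $d$) that the exponential contraction on blocks $k \le p$ absorbs the source-condition weight uniformly, which is precisely where the exponentially decaying schedule realizes its implicit regularization and circumvents the saturation afflicting ridge-type methods.
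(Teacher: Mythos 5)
Your proposal follows essentially the same route as the paper's proof. You believe you are "departing" from the generic Lemma~\ref{lem:dec-pop-bias upper bound}, but the paper's proof of Theorem~\ref{thm-ndr-easy} does the same thing: it drops back to the intermediate display \eqref{eq: general-pop-bias}, written as $\eta_0^{-s}\max_{k}(1-\eta_0\lambda_k)^{2m}(\eta_0\lambda_k)^s\cdot\|\boldsymbol{\Sigma}^{(1-s)/2}(f_0-f_\rho^*)\|_\mathcal{H}^2$, splits at $k^*$, uses exponential contraction for $k\le k^*$ and the eigenvalue cap $\lambda_k^s\lesssim d^{-(p+1)s}$ for $k>k^*$. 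Your block-wise diagonalization $\sum_i c_i^2\prod_t(1-\eta_t\lambda_i)^2$ together with the block-wise source bound $\sum_{i\in\text{block }k}c_i^2\le\mu_k^s$ is the same estimate in a slightly different bookkeeping (a sum over a constant number of blocks rather than a max times the total source norm); both give $d^{-(p+1)s}$ for the population bias, and the variance/residual terms and step-size admissibility are handled identically.

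Two small imprecisions, neither of which affects the conclusion. First, your tail variance estimate $\sum_{j>p}N(d,j)\mu_j^2\asymp\sum_{j>p}d^{-j}$ invokes Lemma~\ref{lem: higheigen dec} for all degrees $j>p$, but that lemma only controls $\mu_j$ and $N(d,j)$ for $j\le p+1$. The paper sidesteps this with the clean trace bound $\sum_{k>k^*}\lambda_k^2\le\lambda_{k^*+1}\sum_{k>k^*}\lambda_k\le\kappa^2\lambda_{k^*+1}\asymp d^{-(p+1)}$, which you should use instead. Second, in case (i) you assert $\gamma-p\le s$; in fact $\gamma-p\in(ps,ps+s]$, so for $p\ge1$ the correct statement is $\gamma-p\le(p+1)s$, which is all you need to conclude $d^{-(p+1)s}\le d^{p-\gamma}$ and that the variance dominates.
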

    \begin{proof}

         According to \eqref{eq: general-pop-bias} in Lemma \ref{lem:dec-pop-bias upper bound}, we have
        \begin{equation}\nonumber
            \begin{aligned}
                \left\langle \tilde{f}_n^b, \boldsymbol{\Sigma} \tilde{f}_n^b\right\rangle_\mathcal{H} &\le \eta_0^{-s} \left\|(\boldsymbol{I} -\eta_0\boldsymbol{\Sigma})^m(\eta_0\boldsymbol{\Sigma})^\frac{s}{2}\right\|^2\left\|\boldsymbol{\Sigma}^\frac{1-s}{2}(f_0-f_\rho^*)\right\|_\mathcal{H}^2\\&\le \eta_0^{-s}\left ( \max_{k\ge 1}\left ( 1-\eta_0\lambda_k \right )^{2m}\left ( \eta_0\lambda_k \right )^s   \right ) \left\|\boldsymbol{\Sigma}^\frac{1-s}{2}(f_0-f_\rho^*)\right\|_\mathcal{H}^2.
            \end{aligned}
        \end{equation}
        Select $k^*$ such that $\lambda_{k^*}=\mu _p$ and $\lambda_{k^*+1} = \mu_{p+1}$, so that $k^* = \Theta(d^p), p\in \mathbb N$, $\lambda_{k^*}= \Theta(d^{-p})$, and $\lambda_{k^*+1}= \Theta(d^{-p-1})$ by Lemma~\ref{lem: higheigen dec}.

        We take $\eta_0 = \Theta(d^{-\gamma+p} \log_2 n \ln d)$. Then for $k\le k^*$, we have $\log_2^2 d \cdot d^{-\gamma}\lesssim\eta_0\lambda_k\lesssim\eta_0$. Recall $n=\Theta(d^{\gamma})$, let $\eta_0\lambda_k\ge C\log_2^2 n\cdot n^{-1}$ for some constant $C>0$, we obtain:
        \begin{equation}\nonumber
            \begin{aligned}
                \eta_0^{-s}\left ( 1-\eta_0\lambda_k \right )^{2m}\left ( \eta_0\lambda_k \right )^s
                &=\left ( 1-\eta_0\lambda_k \right )^{2m}\lambda_k^s\\
                &\le \exp(-2m\eta_0\lambda_k)\\
                &\lesssim \exp \left(-2\frac{n}{\log_2 n}C\log_2^2 n\cdot n^{-1} \right) \\&=\exp (-2C\log_2 n ).
            \end{aligned}
        \end{equation}
        When $2C>\frac{(p+1)s}{\gamma}$, we have $\eta_0^{-s}\left ( 1-\eta_0\lambda_k \right )^{2m}\left ( \eta_0\lambda_k \right )^s\lesssim d^{-(p+1)s}$. Therefore, 
        \begin{equation}\label{eq: high-well-bias-1}
             \eta_0^{-s}\left ( \max_{k\le k^*}\left ( 1-\eta_0\lambda_k \right )^{2m}\left ( \eta_0\lambda_k \right )^s   \right )
        \lesssim d^{-(p+1)s}.
        \end{equation}
        
        For $k\ge k^*+1$, we have $\lambda_{k}\lesssim d^{-(p+1)}$. This implies that 
        \begin{equation}\label{eq: high-well-bias-2}
            \eta_0^{-s}\left ( \max_{k\ge k^*+1}\left ( 1-\eta_0\lambda_k \right )^{2m}\left ( \eta_0\lambda_k \right )^s   \right )\lesssim \max_{k\ge k^*+1}\lambda_k ^s
        \lesssim d^{-(p+1)s}.
        \end{equation}
        Combining \eqref{eq: high-well-bias-1} and \eqref{eq: high-well-bias-2}, we have
        \begin{equation}\label{eq: high-well-bias-total}
            \left\langle \tilde{f}_n^b, \boldsymbol{\Sigma} \tilde{f}_n^b\right\rangle_\mathcal{H} \lesssim d^{-(p+1)s}\left\|\boldsymbol{\Sigma}^\frac{1-s}{2}(f_0-f_\rho^*)\right\|_\mathcal{H}^2.
        \end{equation}

        Combining \eqref{eq: high-well-bias-total}  with Lemma \ref{lem:dec-pop-variance upper bound}, \ref{lem:dec-res-bias upper bound} and \ref{lem:dec-res-variance upper bound}, we derive the total excess risk bound:
        \begin{equation}\nonumber
            \begin{aligned}
                \mathbb E\left[\left\|f_n^{dec} - f_\rho^*\right\|_{L^2}^2\right] &\lesssim d^{-(p+1)s}\left\|\boldsymbol{\Sigma}^\frac{1-s}{2}(f_0-f_\rho^*)\right\|_\mathcal{H}^2 \\
                &+ \left(\sigma^2 + \frac{\eta_0\sigma^2\kappa^2}{2-\eta_0\kappa^2} + \|f_0-f_\rho^*\|_{\mathcal{H}}^2\kappa^2\right)\left(\frac{k^*\log_2^2 n}{n}+\sum_{i=k^*+1}^{+\infty}n\eta_0^2\lambda_i^2\right).
            \end{aligned}
        \end{equation}

        Since $\eta_0\le \min\left\{\frac{1}{\kappa^2},\frac{1}{\lambda_1}\right\}$, we have $\frac{\eta_0\sigma^2\kappa^2}{2-\eta_0\kappa^2} \le \eta_0\sigma^2\kappa^2 \le \sigma^2 =1$. Additionally, by the source condition with $s\ge 1$, we have $\|f_0-f_\rho^*\|_\mathcal{H}^2 \le \left\|\boldsymbol{\Sigma}^\frac{1-s}{2}(f_0-f_\rho^*)\right\|_\mathcal{H}^2 \lesssim 1$.
        Hence, the bound simplifies to:  
        \begin{equation}\nonumber
            \mathbb E\left[\left\|f_n^{dec} - f_\rho^*\right\|_{L^2}^2\right] \lesssim d^{-(p+1)s} + \frac{k^*\log_2^2 n}{n}+\sum_{k=k^*+1}^{+\infty}n\eta_0^2\lambda_k^2.
        \end{equation}
      
        Since $\sum_{k=k^*+1}^{+\infty}\lambda_k^2 \le \lambda_{k^*+1}\sum_{k=k^*+1}^{+\infty}\lambda_k\le \kappa ^2\lambda_{k^*+1}$, we further obtain:
        \begin{equation}\nonumber
            \mathbb E\left[\left\|f_n^{dec} - f_\rho^*\right\|_{L^2}^2\right] \lesssim d^{-(p+1)s} + \frac{k^*\log_2^2 n}{n}+n\eta_0^2\lambda_{k^*+1}.
        \end{equation}
        
        We then set $p = \left\lceil\frac{\gamma}{s+1}\right\rceil-1$, which ensures $p+ps < \gamma \le (p+1)+(p+1)s$. Given $n = \Theta(d^\gamma)$ and $\eta_0 = \Theta(d^{-\gamma+p} \log_2 n \ln d)$,
        \begin{equation}\nonumber
            \begin{aligned}
                \mathbb E\left[\left\|f_n^{dec} - f_\rho^*\right\|_{L^2}^2\right] &\lesssim d^{-(p+1)s} + \frac{k^*\log_2^2 n}{n}+n\eta_0^2\lambda_{k^*+1} \\
                &\lesssim d^{-(p+1)s}+d^{p-\gamma}\log_2^2 d+d^{p-\gamma-1}\log_2^2 d\ln^2 d \\
                &\lesssim d^{-(p+1)s}+d^{p-\gamma}\log_2^2 d \\
                &= \begin{cases} \mathcal{O}(d^{-(p+1)s}) & \gamma \in (p+ps+s, (p+1)+(p+1)s] \\ \mathcal{O}(d^{p-\gamma}\log_2^2 d) & \gamma \in (p+ps, p+ps+s] \end{cases}.
            \end{aligned}
        \end{equation}
        
    \end{proof}

\subsubsection{Mis-Specified Case}

    In the mis-specified case, we use SGD with averaged iterates to reach the minimax lower bound.
    
    \begin{theorem}[Theorem~\ref{thm-ndr-hard}]

        In high-dimensional settings, where $n$ is bounded by $c_1 d^\gamma < n < c_2 d^\gamma$ for some fixed $\gamma > 0$ and constants $c_1, c_2$, consider $\mathcal{X} = \mathbb{S}^d$. The marginal distribution $\rho_{\mathcal{X}}$ is assumed to be the uniform distribution on $\mathbb{S}^d$. Let $K$ denote the dot-product kernel on $\mathbb{S}^d$, which satisfies Assumption~\ref{ass-dotkernel}.
        Given $0 < s < 1$, supposing that Assumption~\ref{ass-source-a} holds with $s$, and treating $\gamma,\sigma,\kappa,c_1,c_2$ and $s$ as constants, the excess risk of the output of SGD with averaged iterates $f^{avg}_n$ satisfies:
     
        (i) When $\gamma \in (ps+p,ps+p+s]$ for some $p \in \mathbb N$, with initial step size $\eta_0 = \Theta\left(d^{-\gamma+p+\frac{s}{2}}\right)\le \min\left\{\frac{1}{\kappa^2},\frac{1}{\lambda_1}\right\}$ for $p \ge 1$ and $\eta_0=\Theta\left(d^{-\frac{\gamma}{2}}\right)\le \min\left\{\frac{1}{\kappa^2},\frac{1}{\lambda_1}\right\}$ for $p=0$, there exists a constant $d_0$ such that for any $d \ge d_0$, we have:
        \begin{equation}\nonumber
            \mathbb{E}\left[\|f^{avg}_n-f_\rho^*\|_{L^2}^2\right] \lesssim d^{-\gamma+p}.
        \end{equation}
        (ii) When $\gamma \in (ps+p+s,(p+1)s+p+1]$ for some $p \in \mathbb N$, with initial step size $\eta_0 = \Theta\left(d^{-\gamma+p+\frac{s}{2}}\right) \le \min\left\{\frac{1}{\kappa^2},\frac{1}{\lambda_1}\right\}$ for $p \ge 1$ and $\eta_0=\Theta\left(d^{-\frac{\gamma}{2}}\right)\le \min\left\{\frac{1}{\kappa^2},\frac{1}{\lambda_1}\right\}$ for $p=0$, there exists a constant $d_0$ such that for any $d \ge d_0$, we have:
        \begin{equation}\nonumber
            \mathbb{E}\left[\|f^{avg}_n-f_\rho^*\|_{L^2}^2\right]\lesssim d^{-(p+1)s}.
        \end{equation}

    \end{theorem}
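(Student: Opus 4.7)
The plan is to mirror the proof strategy used for Theorem~\ref{thm-ndr-easy}, but starting from the four-term upper bound for SGD with averaged iterates (the corollary at the end of Appendix~C, combining Lemmas~\ref{lem:avg-pop-bias upper bound}--\ref{lem:avg-res-variance upper bound}) and carefully accommodating the fact that $f_\rho^*\notin\mathcal{H}$ when $s<1$. I would set $k^*=\Theta(d^p)$ so that Lemma~\ref{lem: higheigen dec} yields $\lambda_{k^*}\asymp d^{-p}$ and $\lambda_{k^*+1}\asymp d^{-(p+1)}$, and invoke the source condition to control the $\mathcal{H}$-norm of $\boldsymbol{\Sigma}^{(1-s)/2}(f_0-f_\rho^*)$ by $\|f_\rho^*\|_{[\mathcal{H}]^s}^2\le 1$. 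The four resulting contributions (population bias, residual bias, population variance, residual variance) are to be compared against the targets $d^{-\gamma+p}$ in case (i) and $d^{-(p+1)s}$ in case (ii).

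For the variance terms, I would combine $k^*/n\asymp d^{p-\gamma}$ with the decay estimate $\sum_{k>k^*}\lambda_k^2\lesssim\lambda_{k^*+1}\mathrm{tr}(\boldsymbol{\Sigma})\lesssim d^{-(p+1)}$, giving variance $\lesssim d^{p-\gamma}+n\eta_0^2 d^{-(p+1)}\lesssim d^{p-\gamma}+d^{-\gamma+p+s-1}$ after substituting the prescribed $\eta_0$; the second term is absorbed by the first since $s\le 1$. For the population bias I would not use the crude corollary estimate $(n\eta_0)^{-s}$ (which is too loose when $s<1$) but return to the exact spectral expression in \eqref{eq: general-pop-bias-avg}: writing $f_\rho^*=\sum_kc_k\phi_k$ with $\sum_kc_k^2\lambda_k^{-s}\le 1$, split at $k^*$. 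On the head $k\le k^*$, the chosen $\eta_0$ guarantees $n\eta_0\lambda_k\gtrsim d^{s/2}$, so $(1-(1-\eta_0\lambda_k)^n)^2/(n\eta_0\lambda_k)^2\le(n\eta_0\lambda_k)^{-2}$, and a Hölder-type pairing against the source bound gives $\max_{k\le k^*}\lambda_k^{s-2}/(n\eta_0)^2\lesssim d^{-(p+1)s}$; on the tail $k>k^*$, the source condition directly yields $\sum_{k>k^*}c_k^2\le\lambda_{k^*+1}^s\lesssim d^{-(p+1)s}$.

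The residual bias requires the same head/tail refinement applied to the operator-norm form in \eqref{eq: general-res-bias-avg} rather than its coarser downstream inequality: the key observation is that the dot-product spectrum has no eigenvalue near the worst-case $1/(n\eta_0)$, so the operator norm is strictly smaller than the generic $(2n)^{1-s}$. For $p\ge 1$ the choice $\eta_0=\Theta(d^{-\gamma+p+s/2})$ is small enough that even the crude bound $\eta_0^{1-s}/n^s$ produces $\lesssim d^{-\gamma+p}$; for $p=0$ the crude bound fails, and the alternative choice $\eta_0=\Theta(d^{-\gamma/2})$ is selected precisely so that the tighter evaluation of $\max_k\min((\eta_0\lambda_k)^{s-1},2n(\eta_0\lambda_k)^s)$ over the actual discrete spectrum recovers the target $d^{-\gamma}$. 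Collecting everything yields an excess risk $\lesssim d^{-\gamma+p}+d^{-(p+1)s}$, which reduces to $d^{-\gamma+p}$ on case~(i)'s interval $(ps+p,\,ps+p+s]$ and to $d^{-(p+1)s}$ on case~(ii)'s interval $(ps+p+s,\,(p+1)s+p+1]$.

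The main obstacle is the bias analysis: the corollary's generic $n^{-s}\eta_0^{-s}$ and $\eta_0^{1-s}n^{-s}$ estimates are insufficient for $s<1$, so one must exploit the gapped spectrum of the dot-product kernel (via Lemma~\ref{lem: higheigen dec}) to sharpen the head contribution via Hölder and dispatch the tail via a direct source-condition truncation. The asymmetry between the two step-size choices for $p\ge 1$ versus $p=0$ reflects exactly this refinement: the residual-bias operator norm would otherwise blow up in the $p=0$ regime, and the smaller $\eta_0=\Theta(d^{-\gamma/2})$ is chosen to keep every eigenvalue safely away from the worst-case location.
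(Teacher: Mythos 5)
Your proposal is essentially the same as the paper's own proof. The paper likewise discards the coarse corollary estimates $(n\eta_0)^{-s}$ and $\eta_0^{1-s}n^{-s}$, goes back to the intermediate spectral expressions \eqref{eq: general-pop-bias-avg} and \eqref{eq: general-res-bias-avg}, sets $k^*=\Theta(d^p)$ via Lemma~\ref{lem: higheigen dec}, and carries out a head/tail split with the chosen $\eta_0$ so that on the head $n^{-2}\eta_0^{-2}\lambda_k^{s-2}\lesssim d^{-(p+1)s}$ (or $d^{p-\gamma}$ for $p=0$) and on the tail the per-direction factor collapses to $\lambda_k^s\lesssim d^{-(p+1)s}$; the variance is handled exactly as you describe. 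The only cosmetic difference is in how the tail of the population bias is disposed of: you truncate the source-condition coefficient sum directly ($\sum_{k>k^*}c_k^2\le\lambda_{k^*+1}^s$), while the paper factors out $\|\boldsymbol{\Sigma}^{(1-s)/2}f_\rho^*\|_\mathcal{H}^2$ as a whole and bounds the scalar factor on the tail via $1-(1-x)^n\le nx$ (valid since $n\eta_0\lambda_k\lesssim d^{s/2-1}\le 1$); both routes give the same $d^{-(p+1)s}$. Your ``spectral gap at $1/(n\eta_0)$'' framing is a clean conceptual explanation of why the discrete spectrum allows the tighter bound and why the $p=0$ step size must be chosen differently; the paper reaches the same conclusion by direct computation rather than by naming the gap.
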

    \begin{proof}

        According to \eqref{eq: general-pop-bias-avg} in Lemma \ref{lem:avg-pop-bias upper bound}, we have
        \begin{equation}\nonumber
            \begin{aligned}
                \left\langle \overline{\tilde{f}_n^b}, \boldsymbol{\Sigma} \overline{\tilde{f}_n^b}\right\rangle_\mathcal{H} &\le n^{-2}\eta_0^{-s} \left\|\sum_{i=0}^{n-1}(\boldsymbol{I} -\eta_0\boldsymbol{\Sigma})^i(\eta_0\boldsymbol{\Sigma})^\frac{s}{2}\right\|^2\left\|\boldsymbol{\Sigma}^\frac{1-s}{2}(f_0-f_\rho^*)\right\|_\mathcal{H}^2 \\ 
                &= n^{-2}\eta_0^{-s} \left(\max_{k\ge 1}\left(\sum_{i=0}^{n-1}(1 -\eta_0\lambda_k)^i\right)^2(\eta_0\lambda_k)^s\right)\left\|\boldsymbol{\Sigma}^\frac{1-s}{2}(f_0-f_\rho^*)\right\|_\mathcal{H}^2.
            \end{aligned}
        \end{equation}
        Select $k^*$ such that $\lambda_{k^*}=\mu_{p}$ and $\lambda_{k^*+1} = \mu_{p+1}$, so that $k^* = \Theta(d^{p}), p\in \mathbb N$, $\lambda_{k^*}= \Theta(d^{-p})$, and $\lambda_{k^*+1}= \Theta(d^{-p-1})$ by Lemma~\ref{lem: higheigen dec}.
        
        For $k\le k^*$, recall $n=\Theta(d^{\gamma})$.
        
        In the case $p \ge 1$, we take $\eta_0 = \Theta\left(d^{-\gamma+p+\frac{s}{2}}\right) \lesssim \min\left\{\frac{1}{\kappa^2},\frac{1}{\lambda_1}\right\}$, then $\eta_0\lambda_k^{1-\frac{s}{2}}\ge C\cdot d^{\frac{(p+1)s}{2}} n^{-1}$ for some constant $C>0$, so we obtain:
        \begin{equation}\nonumber
            \begin{aligned}
                n^{-2}\eta_0^{-s} \left(\left(\sum_{i=0}^{n-1}(1 -\eta_0\lambda_k)^i\right)^2(\eta_0\lambda_k)^s\right)
                &=n^{-2}\left(1-(1 -\eta_0\lambda_k)^n\right)^2\eta_0^{-2}\lambda_k^{s-2}\\
                &\le n^{-2}\eta_0^{-2}\lambda_k^{s-2}\\
                &\le \frac{1}{C^2}d^{-(p+1)s}.
            \end{aligned}
        \end{equation}
        In the case $p = 0$, we take $\eta_0=\Theta\left(d^{-\frac{\gamma}{2}}\right) \lesssim \min\left\{\frac{1}{\kappa^2},\frac{1}{\lambda_1}\right\}$, so we obtain:
        \begin{equation}\nonumber
            \begin{aligned}
                n^{-2}\eta_0^{-s} \left(\left(\sum_{i=0}^{n-1}(1 -\eta_0\lambda_k)^i\right)^2(\eta_0\lambda_k)^s\right)
                &\le n^{-2}\eta_0^{-2}\lambda_k^{s-2}\\
                &\lesssim d^{-\gamma}\\
                &= d^{p-\gamma}.
            \end{aligned}
        \end{equation}

        Therefore, we have
        \begin{equation}\label{eq: high-mis-bias-1}
             n^{-2}\eta_0^{-s} \left(\max_{k \le k^*}\left(\sum_{i=0}^{n-1}(1 -\eta_0\lambda_k)^i\right)^2(\eta_0\lambda_k)^s\right) \lesssim d^{-(p+1)s} + d^{p-\gamma}.
        \end{equation}
        
        For $k\ge k^*+1$, we have 
        \begin{equation}\nonumber
            \begin{aligned}
                n^{-2}\eta_0^{-s} \left(\left(\sum_{i=0}^{n-1}(1 -\eta_0\lambda_k)^i\right)^2(\eta_0\lambda_k)^s\right)
                &=n^{-2}\left(1-(1 -\eta_0\lambda_k)^n\right)^2\eta_0^{-2}\lambda_k^{s-2}\\
                &\overset{(\mathrm{i})}{\le} n^{-2}(n\eta_0\lambda_k)^2\eta_0^{-2}\lambda_k^{s-2}\\
                &= \lambda_k^s,
            \end{aligned}
        \end{equation}
        where inequality (i) uses the fact that if $0\le nx\le 1$, we have $1-(1-x)^n\le nx$. This condition is satisfied since $n\eta_0\lambda_k\lesssim d^{\frac{s}{2}-1}\le 1$ ($s\le 1$).
        Therefore, we have
        \begin{equation}\label{eq: high-mis-bias-2}
             n^{-2}\eta_0^{-s} \left(\max_{k \ge k^*+1}\left(\sum_{i=0}^{n-1}(1 -\eta_0\lambda_k)^i\right)^2(\eta_0\lambda_k)^s\right) \lesssim \max_{k\ge k^*+1}\lambda_k^s \lesssim d^{-(p+1)s}.
        \end{equation}

        Combining \eqref{eq: high-mis-bias-1} and \eqref{eq: high-mis-bias-2}, we have
        \begin{equation}\label{eq: high-mis-bias-total}
            \left\langle \overline{\tilde{f}_n^b}, \boldsymbol{\Sigma} \overline{\tilde{f}_n^b}\right\rangle_\mathcal{H} \lesssim (d^{-(p+1)s}+d^{p-\gamma})\left\|\boldsymbol{\Sigma}^\frac{1-s}{2}(f_0-f_\rho^*)\right\|_\mathcal{H}^2.
        \end{equation}

        According to in \eqref{eq: general-res-bias-avg} Lemma \ref{lem:avg-res-bias upper bound}, we have
        \begin{equation}\nonumber
            \begin{aligned}
                \mathbb E\left[\left\langle \overline{f_n^{b(1)}}, \boldsymbol{\Sigma} \overline{f_n^{b(1)}}\right\rangle_\mathcal{H}\right] &\le \frac{1}{1-\eta_0\kappa^2}\frac{\eta_0\kappa^2}{n}\frac{1}{\eta_0^s}\left\|\left(\sum_{i=0}^{n-1}(\boldsymbol{I}-\eta_0\boldsymbol{\Sigma})^{2i}\left(\eta_0\boldsymbol{\Sigma}\right)^s\right)^\frac{1}{2}\right\|^2\left\|\boldsymbol{\Sigma}^\frac{1-s}{2}(f_0-f_\rho^*)\right\|_\mathcal{H}^2 \\
                &= \frac{1}{1-\eta_0\kappa^2}\frac{\eta_0\kappa^2}{n}\frac{1}{\eta_0^s} \left(\max_{k\ge 1}\sum_{i=0}^{n-1}(1-\eta_0\lambda_k)^{2i}(\eta_0\lambda_k)^s\right)\left\|\boldsymbol{\Sigma}^\frac{1-s}{2}(f_0-f_\rho^*)\right\|_\mathcal{H}^2 \\
                &\lesssim n^{-1}\left(\max_{k\ge 1}\left(1-(1-\eta_0\lambda_k)^{2n}\right)\lambda_k^{s-1}\right)\left\|\boldsymbol{\Sigma}^\frac{1-s}{2}(f_0-f_\rho^*)\right\|_\mathcal{H}^2.
            \end{aligned}
        \end{equation}

        For $k \le k^*$, we have $n^{-1}\left(1-(1-\eta_0\lambda_k)^{2n}\right)\lambda_k^{s-1} \le \lambda_k^{s-1}n^{-1}$ and $s\le 1$.
        This implies that
        \begin{equation}\label{eq: high-mis-res-1}
            \begin{aligned}
            n^{-1}\left(\max_{k\le k^*}\left(1-(1-\eta_0\lambda_k)^{2n}\right)\lambda_k^{s-1}\right) \le \max_{k \le k^*}\lambda_k^{s-1}n^{-1}\lesssim d^{-ps+p-\gamma} \lesssim d^{p-\gamma}.
            \end{aligned}
        \end{equation}
        
        For $k\ge k^*+1$, we have 
        \begin{equation}\nonumber
            \begin{aligned}
                n^{-1}\left(\left(1-(1-\eta_0\lambda_k)^{2n}\right)\lambda_k^{s-1}\right)
                &\overset{(\mathrm{i})}{\le} n^{-1}(2n\eta_0\lambda_k)\lambda_k^{s-1}\\
                &\le \lambda_k^s,
            \end{aligned}
        \end{equation}
        where inequality (i) uses the fact that if $0\le nx\le 1$, we have $1-(1-x)^n\le nx$. This condition is satisfied since $n\eta_0\lambda_k\lesssim d^{\frac{s}{2}-1}\le 1$ ($s\le 1$).
        Therefore, we have
        \begin{equation}\label{eq: high-mis-res-2}
             n^{-1}\left(\max_{k\ge k^*+1}\left(1-(1-\eta_0\lambda_k)^{2n}\right)\lambda_k^{s-1}\right) \lesssim \max_{k\ge k^*+1}\lambda_k^s \lesssim d^{-(p+1)s}.
        \end{equation}

        Combining \eqref{eq: high-mis-res-1} and \eqref{eq: high-mis-res-2}, we have
        \begin{equation}\label{eq: high-mis-res-total}
            \mathbb E\left[\left\langle \overline{f_n^{b(1)}}, \boldsymbol{\Sigma} \overline{f_n^{b(1)}}\right\rangle_\mathcal{H}\right] \lesssim \left(d^{-(p+1)s}+d^{p-\gamma}\right)\left\|\boldsymbol{\Sigma}^\frac{1-s}{2}(f_0-f_\rho^*)\right\|_\mathcal{H}^2.
        \end{equation}

        Combining \eqref{eq: high-mis-bias-total} and \eqref{eq: high-mis-res-total} with Lemma \ref{lem:avg-res-bias upper bound}, \ref{lem:avg-pop-variance upper bound}, and \ref{lem:avg-res-variance upper bound}, we derive the total excess risk bound:
        \begin{equation}\nonumber
            \begin{aligned}
                \mathbb E\left[\left\|f_n^{avg} - f_\rho^*\right\|_{L^2}^2\right] &\lesssim \left(d^{-(p+1)s}+d^{p-\gamma}\right)\left\|\boldsymbol{\Sigma}^\frac{1-s}{2}(f_0-f_\rho^*)\right\|_\mathcal{H}^2 \\
                &+ \left(\sigma^2 + \frac{\eta_0\sigma^2\kappa^2}{2-\eta_0\kappa^2}\right)\left(\frac{k^*}{n}+\sum_{k=k^*+1}^{+\infty}n\eta_0^2\lambda_k^2\right).
            \end{aligned}
        \end{equation}

        Since $\eta_0\le \min\left\{\frac{1}{\kappa^2},\frac{1}{\lambda_1}\right\}$, we have $\frac{\eta_0\sigma^2\kappa^2}{2-\eta_0\kappa^2} \le \eta_0\sigma^2\kappa^2 \le \sigma^2=1$. Additionally, by the source condition with $s\ge 1$, we have $\|f_0-f_\rho^*\|_\mathcal{H}^2 \le \left\|\boldsymbol{\Sigma}^\frac{1-s}{2}(f_0-f_\rho^*)\right\|_\mathcal{H}^2 \lesssim 1$.
        Hence, the bound simplifies to:
        \begin{equation}\nonumber
            \mathbb E\left[\left\|f_n^{avg} - f_\rho^*\right\|_{L^2}^2\right] \lesssim d^{-(p+1)s} 
            +d^{p-\gamma}+ \frac{k^*}{n}+\sum_{k=k^*+1}^{+\infty}n\eta_0^2\lambda_k^2.
        \end{equation}

        Since $\sum_{k=k^*+1}^{+\infty}\lambda_k^2 \le \lambda_{k^*+1}\sum_{k=k^*+1}^{+\infty}\lambda_k\le \kappa ^2\lambda_{k^*+1}$, we further obtain:
        \begin{equation}\nonumber
            \mathbb E\left[\left\|f_n^{avg} - f_\rho^*\right\|_{L^2}^2\right] \lesssim d^{-(p+1)s}+d^{p-\gamma}+ \frac{k^*}{n}+n\eta_0^2\lambda_{k^*+1}.
        \end{equation}
        
        We then set $p = \left\lceil\frac{\gamma}{s+1}\right\rceil-1$, which ensures $p+ps < \gamma \le (p+1)+(p+1)s$. Given $n = \Theta(d^\gamma)$ and $\eta_0 = \begin{cases}\Theta(d^{-\gamma+p+\frac{s}{2}})&, p\ge 1\\ \Theta(d^{-\frac{\gamma}{2}})&, p=0\end{cases},$
        \begin{equation}\nonumber
            \begin{aligned}
                \mathbb E\left[\left\|f_n^{avg} - f_\rho^*\right\|_{L^2}^2\right] &\lesssim d^{-(p+1)s} + d^{p-\gamma} + \frac{k^*}{n}+n\eta_0^2\lambda_{k^*+1} \\
                &\lesssim d^{-(p+1)s}+d^{p-\gamma}+d^{p-\gamma-1} \\
                &\lesssim d^{-(p+1)s}+d^{p-\gamma} \\
                &= \begin{cases} \mathcal{O}(d^{-(p+1)s}) & \gamma \in (p+ps+s, (p+1)+(p+1)s] \\ \mathcal{O}(d^{p-\gamma}) & \gamma \in (p+ps, p+ps+s] \end{cases}.
            \end{aligned}
        \end{equation}
    \end{proof}

\subsection{Asymptotic Setting}
    In the asymptotic setting, $n \gg d$, the decay rate of the kernel is $\lambda_k=\Theta(k^{-1-\frac{1}{d}})$.
    We obtain the rate $\mathcal{O}\left(n^{-\frac{s(d+1)}{s(d+1)+d}}\right)$ when $ \frac{1}{d+1}<s\le 2$. This implies SGD with exponentially decaying step size schedule achieves optimally for $s \ge 1$, and SGD with averaged iterates achieves optimality for $\frac{1}{d+1} < s \le 2$.

    Similarly, we also borrowed a minimax lower bound for asymptotic setting from \citet{caponnetto2007optimal}.
    \begin{theorem}[Minimax Lower Bound, from \citet{caponnetto2007optimal}]
        In asymptotic settings, where $n \gg d$, consider  $\mathcal{X} = \mathbb S^d$. The marginal distribution $\rho_{\mathcal{X}}$ is assumed to be the uniform distribution on $\mathbb S^d$. Let $K_{\mathrm{NTK}}$ the NTK of a ReLU network with $L$ layers with inputs on $\mathbb S^d$.  
        Let $\mathcal{P}$ consist of all distributions $\rho$ on $\mathcal{X}\times \mathcal{Y}$ given by $y=f_{\rho}^{*}(\mathbf{x})+\epsilon$, where $f_{\rho}^{*}\in \left [ \mathcal{H}  \right ] ^s$ and $\epsilon \sim \mathcal{N} \left ( 0,1 \right ) $ is independent of $\mathbf{x}$, then we have:
        \begin{equation}\nonumber
            \inf_{\hat{f} }\sup_{\rho\in \mathcal{P} }\mathbb{E}_{\rho} \left \| \hat{f} -f_{\rho}^{*} \right \| _{L^2}^2=\Omega \left ( n^{-\frac{s(d+1)}{s(d+1)+d} } \right ) .
        \end{equation}
    \end{theorem}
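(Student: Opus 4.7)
The plan is to reduce the statement to the classical minimax lower bound of \citet{caponnetto2007optimal} by verifying that the NTK on $\mathbb{S}^d$ satisfies the standard capacity and source conditions of that framework, and then to compute the exponent from the known eigenvalue decay. Concretely, I would proceed in three steps.

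First, I would record the spectral input. By \eqref{NTK-eigen-decay} the eigenvalues of $K_{\mathrm{NTK}}$ on $\mathbb{S}^d$ satisfy $\lambda_j \asymp j^{-\alpha}$ with $\alpha = \frac{d+1}{d}$, and by Property~\ref{pro-kernel}(a)--(b) the kernel is bounded and has finite trace, with $\mathbb{E}[K(\mathbf{x},\mathbf{x}) K_{\mathbf{x}}\otimes K_{\mathbf{x}}] \preceq \kappa^2 \boldsymbol{\Sigma}$. This is precisely the setting in which \citet{caponnetto2007optimal} formulate the capacity condition $\mathcal{N}(\lambda) \lesssim \lambda^{-1/\alpha}$ and the source condition with parameter $s$ (their $r$); Assumption~\ref{ass-source-a} with $f_\rho^* \in [\mathcal{H}]^s$ and the Gaussian noise model $y = f_\rho^*(\mathbf{x}) + \epsilon$ with $\epsilon \sim \mathcal{N}(0,1)$ gives exactly their class $\mathcal{P}(s, \alpha)$.

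Second, I would invoke the minimax lower bound of \citet{caponnetto2007optimal}, which states that under these conditions
\begin{equation}\nonumber
\inf_{\hat f}\sup_{\rho \in \mathcal{P}}\mathbb{E}_\rho \|\hat f - f_\rho^*\|_{L^2}^2 \;=\; \Omega\!\left(n^{-\frac{s\alpha}{s\alpha+1}}\right),
\end{equation}
and substitute $\alpha = (d+1)/d$. A direct calculation yields
\begin{equation}\nonumber
\frac{s\alpha}{s\alpha+1} \;=\; \frac{s(d+1)/d}{s(d+1)/d + 1} \;=\; \frac{s(d+1)}{s(d+1)+d},
\end{equation}
which is the announced exponent. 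Since $d$ is fixed (or at least much smaller than $n$) in the asymptotic regime, the implicit constants may depend on $d$, $s$, and $L$, which is consistent with the way Proposition~\ref{minmax-low} is stated.

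Third, I would address the range of $s$. Caponnetto's proof covers $s \in (0, 2]$ directly via a standard Fano / Assouad reduction over perturbations aligned with the top eigendirections of $T$, rescaled by $\lambda_i^{s/2}$ so that the candidate functions lie in the unit ball of $[\mathcal{H}]^s$. For $s > 2$ the same reduction still produces a lower bound of the same polynomial order, since the construction only uses the source condition to bound $\|f_\rho^*\|_{[\mathcal{H}]^s}$ from above; this can alternatively be obtained from the Pinsker-type refinement of \citet{lu2024pinskerboundinnerproduct} applied directly to the $j^{-\alpha}$ spectrum, which is where I expect the main technical care to be required. The only genuine obstacle is ensuring that the perturbation scale $\epsilon_n$ chosen to balance the KL divergence between the $n$-sample product measures against the $L^2$-separation yields the optimal polynomial rate; this is the usual trade-off that chooses the effective dimension $k^\star$ with $\lambda_{k^\star} \asymp n^{-1/(s\alpha+1)}$ and gives a separation of order $n^{-s\alpha/(2(s\alpha+1))}$, whose square is precisely the stated $\Omega\bigl(n^{-s(d+1)/(s(d+1)+d)}\bigr)$.
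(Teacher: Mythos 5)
Your proposal is correct and follows essentially the same route as the paper, which likewise treats this as a direct corollary of the Caponnetto--De~Vito minimax lower bound once the NTK eigenvalue decay $\lambda_j \asymp j^{-(d+1)/d}$ from \citet{bietti2021deep} is substituted; the paper in fact gives no independent proof beyond the citation. One accurate observation you make that the paper glosses over is the range of $s$: the original theorem of \citet{caponnetto2007optimal} is stated only for source exponents corresponding to $s\in[1,2]$, so extending to all $s>0$ (as both the paper's statement and your third step require) needs the standard Fano/Assouad construction rerun with perturbations scaled by $\lambda_i^{s/2}$, which you correctly identify as where the remaining work lies.
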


    In the well-specified case, we use SGD with exponentially step decay schedule to reach the minimax lower bound.
    \begin{theorem}[Theorem~\ref{thm-nfix-easy}]
        In asymptotic settings, where $n \gg d$, consider  $\mathcal{X} = \mathbb S^d$. The marginal distribution $\rho_{\mathcal{X}}$ is assumed to be the uniform distribution on $\mathbb S^d$. Let $K_{\mathrm{NTK}}$ denote the NTK of a ReLU network with $L$ layers with inputs on $\mathbb S^d$. For $s\ge 1$, supposing that Assumption~\ref{ass-source-a} holds for $s$, with initial step size $\eta_0=\Theta\left(n^{\frac{1-s(d+1)}{s(d+1)+d}}\right)\le \min\left\{\frac{1}{\kappa^2},\frac{1}{\lambda_1}\right\}$, the excess risk of the output of SGD with exponentially decaying step size $f^{dec}_n$ satisfies:
        \begin{equation}\nonumber
           \mathbb{E}\left[\|f^{dec}_n-f_\rho^*\|_{L^2}^2\right] \lesssim \log_2^s n\cdot n^{-\frac{s(d+1)}{s(d+1)+d} }. 
        \end{equation}
    \end{theorem}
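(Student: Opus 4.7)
The plan is to apply the general upper bound of Proposition~\ref{pro-sgddecany} to the NTK and then optimize the truncation level $k^*$ and the initial step size $\eta_0$ against the polynomial eigenvalue decay $\lambda_k \asymp k^{-(d+1)/d}$ recorded in~\eqref{NTK-eigen-decay}. Under Assumption~\ref{ass-source-a} with $s\ge 1$ we have $f_{\rho}^*\in\mathcal{H}$, so both $\|\boldsymbol{\Sigma}^{(1-s)/2}(f_0-f_\rho^*)\|_{\mathcal{H}}$ and $\|f_0-f_\rho^*\|_{\mathcal{H}}$ are $\mathcal{O}(1)$ constants. The prescribed $\eta_0=\Theta(n^{(1-s(d+1))/(s(d+1)+d)})$ is $o(1)$ for large $n$ because $1-s(d+1)<0$, so for $n$ large enough the admissibility conditions $\eta_0\le\min\{1/\kappa^2,1/\lambda_1\}$ are automatic and the denominator $2-\eta_0\kappa^2$ appearing in the residual variance bound stays bounded away from zero.

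Next I would substitute the NTK spectrum into Proposition~\ref{pro-sgddecany}. The polynomial tail gives $\sum_{k>k^*}\lambda_k^2\asymp(k^*)^{-(d+2)/d}$, so up to absolute constants the excess-risk bound reduces to
\begin{equation*}
\left(\frac{\log_2 n}{n\eta_0}\right)^{\!s}+\frac{k^*\log_2^2 n}{n}+n\eta_0^2(k^*)^{-(d+2)/d}.
\end{equation*}
Optimizing the last two summands in $k^*$ gives $k^*\asymp(n\eta_0)^{d/(d+1)}$, which is a positive integer for all $n\ge n_0$ and collapses the variance portion to $n^{-1}(n\eta_0)^{d/(d+1)}\log_2^2 n$. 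Equating this with the bias $\eta_0^{-s}n^{-s}\log_2^s n$ and solving for $\eta_0$ yields exactly $\eta_0\asymp n^{(1-s(d+1))/(s(d+1)+d)}$, and both sides then collapse to the target rate $n^{-s(d+1)/(s(d+1)+d)}$ multiplied by the polylogarithmic factor stated in the theorem.

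The main obstacles are essentially bookkeeping rather than conceptual. One has to track the $\log_2 n$ exponents coming out of the bias (contributing $\log_2^s n$) and out of the variance-type terms (contributing $\log_2^2 n$) and absorb them into the advertised $\log_2^s n$ using $s\ge 1$ together with the freedom in the $\Theta(\cdot)$ prescription for $\eta_0$; one also has to check that the residual bias and residual variance in Lemmas~\ref{lem:dec-res-bias upper bound} and \ref{lem:dec-res-variance upper bound} share the same $k^*/n$-plus-tail structure and therefore contribute the same rate up to constants that depend only on $\kappa,\sigma,s$. Beyond this, because Proposition~\ref{pro-sgddecany} already encapsulates the dependence on the kernel through the boundedness Property~\ref{pro-kernel} and the source condition, no further concentration step is needed: the entire theorem is obtained by inserting the asymptotic NTK decay into the generic upper bound and choosing $(k^*,\eta_0)$ as above.
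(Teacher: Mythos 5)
Your proposal follows essentially the same route as the paper's proof of Theorem~\ref{thm-nfix-easy}: invoke Proposition~\ref{pro-sgddecany}, substitute the NTK eigenvalue decay $\lambda_k \asymp k^{-(d+1)/d}$, and then choose $(k^*,\eta_0)$ to balance the three resulting terms. Your derived choices $k^*\asymp(n\eta_0)^{d/(d+1)}$ and $\eta_0\asymp n^{(1-s(d+1))/(s(d+1)+d)}$ agree with the paper's $k^*=\Theta\bigl(n^{d/(s(d+1)+d)}\bigr)$ and $\eta_0=\Theta\bigl(n^{(1-s(d+1))/(s(d+1)+d)}\bigr)$, and the tail estimate $\sum_{k>k^*}\lambda_k^2\asymp(k^*)^{-(d+2)/d}$ is the same.

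One small caution on the bookkeeping paragraph: the claim that $s\ge 1$ together with the $\Theta(\cdot)$ freedom in $\eta_0$ lets you absorb the $\log_2^2 n$ coming from the variance into the advertised $\log_2^s n$ is accurate only when $s\ge 2$. With the prescribed $\eta_0$ and $k^*$, the term $k^*\log_2^2 n/n$ equals $\log_2^2 n\cdot n^{-s(d+1)/(s(d+1)+d)}$ exactly, and since $\eta_0$ is constrained to be a pure power of $n$ (no log factor), the constant freedom in the $\Theta(\cdot)$ cannot remove this $\log_2^2 n$; only a larger exponent $s\ge 2$ in the bias log factor dominates it. So for $1\le s<2$ the honest bound from this argument is $\max\{\log_2^s n,\log_2^2 n\}\cdot n^{-s(d+1)/(s(d+1)+d)}=\log_2^2 n\cdot n^{-s(d+1)/(s(d+1)+d)}$. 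This slippage is present verbatim in the paper's own derivation as well, so it does not distinguish your argument from theirs and is immaterial to the rate and the optimality claim, but the stated justification ("$s\ge 1$") is not quite the right reason.
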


    In the mis-specified case, we use SGD with averaged iterates to reach the minimax lower bound.
    \begin{proof}
        Combining Lemma \ref{lem:dec-pop-bias upper bound}, \ref{lem:dec-pop-variance upper bound}, \ref{lem:dec-res-bias upper bound} and \ref{lem:dec-res-variance upper bound}, we obtain the following bound:
        \begin{equation}\nonumber
            \begin{aligned}
                \mathbb E\left[\left\|f_n^{dec} - f_\rho^*\right\|_{L^2}^2\right] &\lesssim \left(\left(\frac{s\log_2 n}{n\eta_0}\right)^s\left\|\boldsymbol{\Sigma}^\frac{1-s}{2}(f_0-f_\rho^*)\right\|_\mathcal{H}^2\right) \\
                &+ \left(\left(\sigma^2 + \frac{\eta_0\sigma^2\kappa^2}{2-\eta_0\kappa^2} + \|f_0-f_\rho^*\|_{\mathcal{H}}^2\kappa^2\right)\left(\frac{k^*\log_2^2 n}{n}+\sum_{i=k^*+1}^{+\infty}n\eta_0^2\lambda_i^2\right)\right). \\
            \end{aligned}
        \end{equation}
        
        Since $\eta_0\le \min\left\{\frac{1}{\kappa^2},\frac{1}{\lambda_1}\right\}$, we have $\frac{\eta_0\sigma^2\kappa^2}{2-\eta_0\kappa^2} \le \eta_0\sigma^2\kappa^2 \le \sigma^2 =1$. Additionally, by the source condition with $s\ge 1$, we have $\|f_0-f_\rho^*\|_\mathcal{H}^2 \le \left\|\boldsymbol{\Sigma}^\frac{1-s}{2}(f_0-f_\rho^*)\right\|_\mathcal{H}^2 \lesssim 1$.
        Hence, the bound simplifies to:
        \begin{equation}\nonumber
            \mathbb E\left[\left\|f_n^{dec} - f_\rho^*\right\|_{L^2}^2\right] \lesssim \left(\frac{\log_2n}{n\eta_0}\right)^{s} + \frac{k^*\log_2^2 n}{n}+\sum_{k=k^*+1}^{+\infty}n\eta_0^2\lambda_k^2.
        \end{equation}
        Since $\sum_{k=k^*+1}^{+\infty}\lambda_k^2 \lesssim \sum_{k=k^*+1}^{+\infty}k^{-2-\frac{2}{d}}\lesssim {k^*}^{-1-\frac{2}{d}}$, we further obtain:
        \begin{equation}\nonumber
            \mathbb E\left[\left\|f_n^{avg} - f_\rho^*\right\|_{L^2}^2\right] \lesssim \left(\frac{\log_2n}{n\eta_0}\right)^{s} + \frac{k^*\log_2^2 n}{n}+n\eta_0^2{k^*}^{-1-\frac{2}{d}}.
        \end{equation}
        Select $k^*$ and $\eta_0$ such that $k^*=\Theta\left(n^{\frac{d}{s(d+1)+d}}\right)$ and $\eta_0=\Theta\left(n^{\frac{1-s(d+1)}{s(d+1)+d}}\right)\lesssim 1$, we have:
        \begin{equation}\nonumber
            \mathbb E\left[\left\|f_n^{avg} - f_\rho^*\right\|_{L^2}^2\right] \lesssim \log_2^s n\cdot n^{-\frac{s(d+1)}{s(d+1)+d}}.
        \end{equation}
        
    \end{proof}
    \begin{theorem}[Theorem~\ref{thm-nfix-hard}]
       In asymptotic settings, where $n \gg d$, consider  $\mathcal{X} = \mathbb S^d$. The marginal distribution $\rho_{\mathcal{X}}$ is assumed to be the uniform distribution on $\mathbb S^d$. Let $K_{\mathrm{NTK}}$ denote the NTK of a ReLU network with $L$ layers with inputs on $\mathbb S^d$.
       For $\frac{1}{d+1}\le s\le 1$, supposing that Assumption~\ref{ass-source-a} holds for $s$, with initial step size $\eta_0=\Theta\left(n^{\frac{1-s(d+1)}{s(d+1)+d}}\right)\le \min\left\{\frac{1}{\kappa^2},\frac{1}{\lambda_1}\right\}$, the excess risk of the output of SGD with averaged iterates $f^{avg}_n$ satisfies:
        \begin{equation}\nonumber
            \mathbb{E}\left[\|f^{avg}_n-f_\rho^*\|_{L^2}^2\right] \lesssim n^{-\frac{s(d+1)}{s(d+1)+d} }.
        \end{equation}
    \end{theorem}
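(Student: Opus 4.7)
The plan is to mirror the structure used for Theorem~\ref{thm-nfix-easy}, but now invoke the four averaged-iterates lemmas (Lemma~\ref{lem:avg-pop-bias upper bound}, Lemma~\ref{lem:avg-res-bias upper bound}, Lemma~\ref{lem:avg-pop-variance upper bound}, Lemma~\ref{lem:avg-res-variance upper bound}) and exploit the NTK eigenvalue decay $\lambda_k \asymp k^{-(d+1)/d}$ from \eqref{NTK-eigen-decay}. First, I would write
\begin{equation}\nonumber
\mathbb E\bigl[\|f_n^{avg}-f_\rho^*\|_{L^2}^2\bigr] \lesssim \left\langle \overline{\tilde f_n^b},\boldsymbol{\Sigma}\overline{\tilde f_n^b}\right\rangle_\mathcal{H}+\mathbb E\bigl[\langle \overline{f_n^{b(1)}},\boldsymbol{\Sigma}\overline{f_n^{b(1)}}\rangle_\mathcal{H}\bigr]+\mathbb E\bigl[\langle\overline{\tilde f_n^v},\boldsymbol{\Sigma}\overline{\tilde f_n^v}\rangle_\mathcal{H}\bigr]+\mathbb E\bigl[\langle\overline{f_n^{v(1)}},\boldsymbol{\Sigma}\overline{f_n^{v(1)}}\rangle_\mathcal{H}\bigr].
\end{equation}

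Next, because $s\le 1$, Lemmas~\ref{lem:avg-pop-bias upper bound} and \ref{lem:avg-res-bias upper bound} give the bias and residual-bias bounds $\tfrac{1}{n^s\eta_0^s}\|\boldsymbol{\Sigma}^{(1-s)/2}(f_0-f_\rho^*)\|_\mathcal{H}^2$ and $\tfrac{\eta_0^{1-s}}{n^s}\|\boldsymbol{\Sigma}^{(1-s)/2}(f_0-f_\rho^*)\|_\mathcal{H}^2$ respectively, the latter being dominated by the former since $\eta_0\le 1$. The source condition with $\|f_\rho^*\|_{[\mathcal{H}]^s}\le 1$ bounds the factor $\|\boldsymbol{\Sigma}^{(1-s)/2}(f_0-f_\rho^*)\|_\mathcal{H}^2$ by a constant. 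Lemmas~\ref{lem:avg-pop-variance upper bound} and \ref{lem:avg-res-variance upper bound}, together with $\eta_0\le \min\{1/\kappa^2,1/\lambda_1\}$ (so that $\eta_0\sigma^2\kappa^2/(2-\eta_0\kappa^2)\lesssim 1$), yield a combined variance bound of order $\tfrac{k^*}{n}+n\eta_0^2\sum_{k>k^*}\lambda_k^2$ for any integer $k^*\ge 1$.

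Substituting the NTK decay $\lambda_k\asymp k^{-(d+1)/d}$ gives $\sum_{k>k^*}\lambda_k^2\asymp (k^*)^{-(d+2)/d}$, so the overall bound takes the form
\begin{equation}\nonumber
\mathbb E\bigl[\|f_n^{avg}-f_\rho^*\|_{L^2}^2\bigr] \lesssim \frac{1}{n^s\eta_0^s}+\frac{k^*}{n}+n\eta_0^2 (k^*)^{-(d+2)/d}.
\end{equation}
I would then optimize by writing $k^*=n^{a}$, $\eta_0=n^{b}$ and solving the three-way balance. Matching $-s(1+b)=a-1=1+2b-a(d+2)/d$ yields
\begin{equation}\nonumber
a=\frac{d}{s(d+1)+d},\qquad b=\frac{1-s(d+1)}{s(d+1)+d},
\end{equation}
which exactly matches the prescribed $\eta_0$ and produces the target rate $n^{-s(d+1)/(s(d+1)+d)}$.

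The main subtlety, rather than an obstacle, is verifying two admissibility conditions. First, the hypothesis $s\ge 1/(d+1)$ guarantees $b\le 0$, so $\eta_0\to 0$ as $n\to\infty$ and the step-size constraint $\eta_0\le\min\{1/\kappa^2,1/\lambda_1\}$ holds for large enough $n$; were $s<1/(d+1)$, the optimal $\eta_0$ would grow, breaking the lemmas' hypotheses and thus explaining the optimization-limited regime noted in the discussion. Second, the computation $\sum_{k>k^*}\lambda_k^2\asymp (k^*)^{-(d+2)/d}$ relies on the asymptotic regime $n\gg d$ in which \eqref{NTK-eigen-decay} applies; I would state this explicitly when invoking the bound. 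Everything else is routine substitution and collecting constants.
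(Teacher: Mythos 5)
Your proposal is correct and follows essentially the same route as the paper: the same four-term averaged-iterates decomposition, the same invocations of Lemmas~\ref{lem:avg-pop-bias upper bound}, \ref{lem:avg-res-bias upper bound}, \ref{lem:avg-pop-variance upper bound}, \ref{lem:avg-res-variance upper bound}, the same use of the NTK eigenvalue decay \eqref{NTK-eigen-decay} to estimate $\sum_{k>k^*}\lambda_k^2\asymp (k^*)^{-1-2/d}$, and the same choices $k^*=\Theta(n^{d/(s(d+1)+d)})$ and $\eta_0=\Theta(n^{(1-s(d+1))/(s(d+1)+d)})$ obtained by balancing the three dominant terms. The only difference is cosmetic: you derive the exponents by solving the balance equations, whereas the paper simply states them and verifies the bound; your added remark that $s\ge 1/(d+1)$ is exactly what keeps $\eta_0$ from growing, and hence keeps the lemmas' step-size hypotheses valid, is a correct and worthwhile observation consistent with the paper's discussion.
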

    \begin{proof}
        Combining Lemma \ref{lem:avg-pop-bias upper bound}, \ref{lem:avg-pop-variance upper bound}, \ref{lem:avg-res-bias upper bound} and \ref{lem:avg-res-variance upper bound}, we obtain the following bound:
        \begin{equation}\nonumber
            \begin{aligned}
                \mathbb E\left[\left\|f_n^{avg} - f_\rho^*\right\|_{L^2}^2\right] &\lesssim \left(\left(\frac{1}{n\eta_0}\right)^s +\frac{2\kappa^2}{1-\eta_0\kappa^2}\eta_0^{1-s}n^{-s}\right)\left\|\boldsymbol{\Sigma}^\frac{1-s}{2}(f_0-f_\rho^*)\right\|_\mathcal{H}^2 \\
                &+ \left(\sigma^2 + \frac{\eta_0\sigma^2\kappa^2}{2-\eta_0\kappa^2}\right)\left(\frac{k^*}{n}+\sum_{i=k^*+1}^{+\infty}n\eta_0^2\lambda_i^2\right). \\
            \end{aligned}
        \end{equation}
        
        Since $\eta_0\le \min\left\{\frac{1}{\kappa^2},\frac{1}{\lambda_1}\right\}$, we have $\frac{\eta_0\sigma^2\kappa^2}{2-\eta_0\kappa^2} \le \eta_0\sigma^2\kappa^2 \le \sigma^2=1$ and $\frac{\kappa^2}{1-\eta_0\kappa^2}\eta_0^{1-s}n^{-s} \le (n\eta_0)^{-s}$. Additionally, by the source condition with $s\ge 1$, we have $\|f_0-f_\rho^*\|_\mathcal{H}^2 \le \left\|\boldsymbol{\Sigma}^\frac{1-s}{2}(f_0-f_\rho^*)\right\|_\mathcal{H}^2 \lesssim 1$.
        Hence, the bound simplifies to:
        \begin{equation}\nonumber
            \mathbb E\left[\left\|f_n^{dec} - f_\rho^*\right\|_{L^2}^2\right] \lesssim \left(\frac{1}{n\eta_0}\right)^{s} + \frac{k^*}{n}+\sum_{k=k^*+1}^{+\infty}n\eta_0^2\lambda_k^2.
        \end{equation}
        Since $\sum_{k=k^*+1}^{+\infty}\lambda_k^2 \lesssim \sum_{k=k^*+1}^{+\infty}k^{-2-\frac{2}{d}}\lesssim {k^*}^{-1-\frac{2}{d}}$, we further obtain:
        \begin{equation}\nonumber
            \mathbb E\left[\left\|f_n^{avg} - f_\rho^*\right\|_{L^2}^2\right] \lesssim \left(\frac{1}{n\eta_0}\right)^{s} + \frac{k^*}{n}+n\eta_0^2{k^*}^{-1-\frac{2}{d}}.
        \end{equation}
        Noticing that $s \ge \frac{1}{d+1}$, select $k^*$ and $\eta_0$ such that $k^*=\Theta\left(n^{\frac{d}{s(d+1)+d}}\right)$ and $\eta_0=\Theta\left(n^{\frac{1-s(d+1)}{s(d+1)+d}}\right) \lesssim 1$, we have:
        \begin{equation}\nonumber
            \mathbb E\left[\left\|f_n^{avg} - f_\rho^*\right\|_{L^2}^2\right] \lesssim n^{-\frac{s(d+1)}{s(d+1)+d}}.
        \end{equation}
        
    \end{proof}

\section{Lower Bound of SGD with Averaged Iterates}\label{sec-lower-bound}
    In this section, we give a lower bound of SGD with averaged iterates, which helps to explain why it is not optimal when $s > 2$.
        
    In this section, let $\rho$ be a distribution satisfying that
    \begin{equation}\nonumber
    \label{def:specific-rho}
        y = f_\rho^*(\mathbf{x})+\epsilon, \text{ where $\epsilon \sim N(0,\sigma^2)$ is a independent variable for any $\mathbf{x}$},
    \end{equation}
    which implies
    \begin{equation}\nonumber
        \mathbb E_{(\mathbf{x},y)\sim \rho}[\Xi] = 0, \mathbb E_{(\mathbf{x},y)\sim \rho}[\Xi_{(\mathbf{x},y)} \otimes \Xi_{(\mathbf{x},y)}] = \sigma^2\boldsymbol{\Sigma}.
    \end{equation}
    
    To get a lower bound, we have the following lemmas:

    \begin{lemma}
        Consider SGD with averaged iterates defined in \eqref{def:avg-parameter}. Then for any $0 \le i \le j \le n$, we have
        \begin{equation}\nonumber
            \begin{aligned}
                &\mathbb E\left[\left(f_i-f_\rho^*\right) \otimes \left(f_j-f_\rho^*\right)\right] = \mathbb E\left[\left(f_i-f_\rho^*\right) \otimes (\boldsymbol{I}-\eta_0\boldsymbol{\Sigma})^{j-i}\left(f_i-f_\rho^*\right)\right]; \\
                &\mathbb E\left[\left(f_j-f_\rho^*\right) \otimes \left(f_i-f_\rho^*\right)\right] = \mathbb E\left[(\boldsymbol{I}-\eta_0\boldsymbol{\Sigma})^{j-i}\left(f_i-f_\rho^*\right) \otimes \left(f_i-f_\rho^*\right)\right].
            \end{aligned}
        \end{equation}
    \end{lemma}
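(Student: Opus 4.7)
The plan is to unroll the constant-step SGD recursion from step $i$ up to step $j$ and then use conditional independence. Write $g_t \defeq f_t - f_\rho^*$; the update $g_t = (\boldsymbol{I}-\eta_0 K_{\mathbf{x}_t}\otimes K_{\mathbf{x}_t})g_{t-1} + \eta_0\Xi_t$ iterated from $i$ to $j$ gives
\begin{equation}\nonumber
g_j \;=\; \Bigl(\prod_{k=i+1}^{j}(\boldsymbol{I}-\eta_0 K_{\mathbf{x}_k}\otimes K_{\mathbf{x}_k})\Bigr) g_i \;+\; \sum_{k=i+1}^{j}\Bigl(\prod_{l=k+1}^{j}(\boldsymbol{I}-\eta_0 K_{\mathbf{x}_l}\otimes K_{\mathbf{x}_l})\Bigr)\eta_0\,\Xi_k,
\end{equation}
with the convention that an empty product equals $\boldsymbol{I}$. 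Substituting this decomposition into $\mathbb{E}[g_i\otimes g_j]$ separates the result into a ``propagation'' piece containing only $g_i$ and the operators $K_{\mathbf{x}_k}\otimes K_{\mathbf{x}_k}$, and a sum of ``noise cross terms'' involving one $\Xi_k$ with $k>i$.

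Next I would show that every noise cross term vanishes. For $k > i$, the variable $\Xi_k = \epsilon_k K_{\mathbf{x}_k}$ is independent of $\mathcal{F}_i = \sigma(\mathbf{x}_1,y_1,\dots,\mathbf{x}_i,y_i)$ (hence of $g_i$) and of $\{\mathbf{x}_l\}_{l\neq k}$, and satisfies $\mathbb{E}[\Xi_k\mid \mathbf{x}_k] = 0$. Conditioning first on $\mathcal{F}_i$ and on $\{\mathbf{x}_l\}_{l>i}$, the inner expectation of $\eta_0\Xi_k$ is zero and the corresponding term in $\mathbb{E}[g_i\otimes \cdots\,\eta_0\Xi_k]$ drops out.

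It remains to handle the propagation piece $\mathbb{E}[g_i\otimes A\,g_i]$, where $A=\prod_{k=i+1}^{j}(\boldsymbol{I}-\eta_0 K_{\mathbf{x}_k}\otimes K_{\mathbf{x}_k})$. Because $g_i$ is $\mathcal{F}_i$-measurable while $A$ depends only on $\{\mathbf{x}_k\}_{k>i}$ which is independent of $\mathcal{F}_i$, conditioning on $\mathcal{F}_i$ gives $\mathbb{E}[g_i\otimes A\,g_i] = \mathbb{E}[g_i\otimes \mathbb{E}[A]\,g_i]$. Then, peeling off the factors of $A$ one at a time from the left using the independence of the $\mathbf{x}_k$'s and $\mathbb{E}[\boldsymbol{I}-\eta_0 K_{\mathbf{x}}\otimes K_{\mathbf{x}}] = \boldsymbol{I}-\eta_0\boldsymbol{\Sigma}$, I would obtain $\mathbb{E}[A] = (\boldsymbol{I}-\eta_0\boldsymbol{\Sigma})^{j-i}$. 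This yields the first identity; the second is obtained by the identical argument with the roles reversed inside the tensor product (applying $A$ to the left copy of $g_i$).

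The only real subtlety is that the operators $\boldsymbol{I}-\eta_0 K_{\mathbf{x}_k}\otimes K_{\mathbf{x}_k}$ do not commute, so one cannot shuffle them; independence must be used factor-by-factor, peeling from one end only. Once this order is fixed, the identities are a direct consequence of the independence of fresh samples from past iterates.
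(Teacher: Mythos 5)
Your proof is correct and uses essentially the same idea as the paper's, namely that each fresh sample $(\mathbf{x}_k,y_k)$ with $k>i$ is independent of $g_i=f_i-f_\rho^*$ and has $\mathbb E[\Xi_k]=0$, so expectation replaces each empirical update operator by $\boldsymbol{I}-\eta_0\boldsymbol{\Sigma}$ and kills the noise cross terms. The only difference is bookkeeping: the paper proves the statement by induction on $j-i$ with a strengthened hypothesis (a general fixed operator $\boldsymbol{A}$ tucked inside the tensor product so that the inductive step closes cleanly), whereas you unroll the recursion from $i$ to $j$ in one shot and then peel off the independent factors. One small remark: for the noise cross terms to vanish you only need the unconditional $\mathbb E[\Xi_k]=0$ (which is what Assumption~\ref{ass-noise} gives and what the paper uses), not the stronger $\mathbb E[\Xi_k\mid\mathbf{x}_k]=0$; the latter does hold for the particular $\rho$ in this section but is not required for this lemma.
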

    \begin{proof}

        We prove a more generalized version: For any fixed operator $\boldsymbol{A}:\mathcal{H}\to\mathcal{H}$,
        \begin{equation}\nonumber
            \begin{aligned}
                &\mathbb E\left[\left(f_i-f_\rho^*\right) \otimes \boldsymbol{A}\left(f_j-f_\rho^*\right)\right] = \mathbb E\left[\left(f_i-f_\rho^*\right) \otimes \boldsymbol{A}(\boldsymbol{I}-\eta_0\boldsymbol{\Sigma})^{j-i}\left(f_i-f_\rho^*\right)\right]; \\
                &\mathbb E\left[\boldsymbol{A}\left(f_j-f_\rho^*\right) \otimes \left(f_i-f_\rho^*\right)\right] = \mathbb E\left[(\boldsymbol{I}-\eta_0\boldsymbol{\Sigma})^{j-i}\boldsymbol{A}\left(f_i-f_\rho^*\right) \otimes \left(f_i-f_\rho^*\right)\right].
            \end{aligned}
        \end{equation}
        
        For the base case $i = j$, it naturally holds.
        
        Now we assume the lemma holds for $j = i + k \ge 1$, then we prove it for $j = i + k + 1$.
        
        Notice that $K_{\mathbf{x}_t}$ and $\Xi_t$ are independent of $f_i-f_\rho^*$ for any $0 \le i < t$, we have
        \begin{equation}\nonumber
            \begin{aligned}
                \mathbb E\left[\left(f_i-f_\rho^*\right) \otimes \boldsymbol{A}\left(f_{i+k+1}-f_\rho^*\right)\right] &= \mathbb E\left[\left(f_i-f_\rho^*\right)\otimes \boldsymbol{A}\left(\left(\boldsymbol{I}-\eta_0K_{\mathbf{x}_{i+k+1}}\otimes K_{\mathbf{x}_{i+k+1}}\right)+\eta_0\Xi_{i+k+1}\right)\left(f_{i+k}-f_\rho^*\right)\right] \\
                &= \mathbb E\left[\left(f_i-f_\rho^*\right)\otimes \boldsymbol{A}\mathbb E\left[\left(\left(\boldsymbol{I}-\eta_0K_{\mathbf{x}_{i+k+1}}\otimes K_{\mathbf{x}_{i+k+1}}\right)+\eta_0\Xi_{i+k+1}\right)\right]\left(f_{i+k}-f_\rho^*\right)\right] \\
                &= \mathbb E\left[\left(f_i-f_\rho^*\right) \otimes\left(\boldsymbol{A}(I-\eta_0\boldsymbol{\Sigma})\right)\left(f_{i+k}-f_\rho^*\right)\right] \\
                &= \mathbb E\left[\left(f_i-f_\rho^*\right) \otimes \boldsymbol{A}(\boldsymbol{I}-\eta_0\boldsymbol{\Sigma})^{k+1}\left(f_i-f_\rho^*\right)\right].
            \end{aligned}
        \end{equation}
        
        Similarly, we have
        \begin{equation}\nonumber
            \mathbb E\left[\left(f_{i+k+1}-f_\rho^*\right) \otimes \left(f_i-f_\rho^*\right)\right] =\mathbb E\left[ \boldsymbol{A}(\boldsymbol{I}-\eta_0\boldsymbol{\Sigma})^{k+1}\left(f_i-f_\rho^*\right) \otimes \left(f_i-f_\rho^*\right)\right],
        \end{equation}
        from which the lemma is proved.

    \end{proof}
    
    \begin{lemma}

        Consider SGD with averaged iterates defined in \eqref{def:avg-parameter}. Suppose $\rho$ satisfy \eqref{def:specific-rho}. Then for any $0 \le t \le n$, we have
        \begin{equation}\nonumber
            \mathbb E\left[f_{t}^b \otimes f_{t}^v\right] = \boldsymbol{0}.
        \end{equation}

    \end{lemma}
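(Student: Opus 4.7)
The plan is to prove this by induction on $t$, exploiting that under the specialized $\rho$ from \eqref{def:specific-rho} one has $\Xi_t = \epsilon_t K_{\mathbf{x}_t}$ with $\epsilon_t$ an independent zero-mean Gaussian. The base case $t = 0$ is immediate since $f_0^v = 0$, so $f_0^b \otimes f_0^v = \boldsymbol{0}$.

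For the inductive step, I would expand both recursions and use the identity $(Au) \otimes (Bv) = A (u \otimes v) B$ for self-adjoint operators $A,B$. Writing $M_t \defeq \boldsymbol{I} - \eta_0 K_{\mathbf{x}_t} \otimes K_{\mathbf{x}_t}$, this gives
\begin{equation}\nonumber
f_t^b \otimes f_t^v \;=\; M_t \bigl( f_{t-1}^b \otimes f_{t-1}^v \bigr) M_t \;+\; \eta_0\, \bigl(M_t f_{t-1}^b\bigr) \otimes \Xi_t .
\end{equation}
Taking expectations, I would handle the two pieces separately.

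For the first piece, the randomness in $M_t$ comes only from $\mathbf{x}_t$, which is independent of $(f_{t-1}^b, f_{t-1}^v)$. Conditioning on $\mathbf{x}_t$ and then using the tower property yields
\begin{equation}\nonumber
\mathbb E \bigl[ M_t (f_{t-1}^b \otimes f_{t-1}^v) M_t \bigr] \;=\; \mathbb E_{\mathbf{x}_t}\!\Bigl[ M_t\, \mathbb E\bigl[ f_{t-1}^b \otimes f_{t-1}^v \bigr]\, M_t \Bigr] \;=\; \boldsymbol{0}
\end{equation}
by the inductive hypothesis. For the second piece, substitute $\Xi_t = \epsilon_t K_{\mathbf{x}_t}$. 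Since $\epsilon_t$ is independent of both $\mathbf{x}_t$ and $f_{t-1}^b$ (the latter depends only on $\mathbf{x}_1,\dots,\mathbf{x}_{t-1}$ and $\epsilon_1,\dots,\epsilon_{t-1}$, but here on no noise at all), and $\mathbb E[\epsilon_t] = 0$, the expectation factors as
\begin{equation}\nonumber
\eta_0\, \mathbb E[\epsilon_t]\cdot \mathbb E\bigl[(M_t f_{t-1}^b) \otimes K_{\mathbf{x}_t}\bigr] \;=\; \boldsymbol{0}.
\end{equation}
Combining the two vanishing pieces closes the induction.

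The only genuinely delicate point is the algebraic manipulation of the rank-one operator $K_{\mathbf{x}_t} \otimes K_{\mathbf{x}_t}$ acting on both sides of a tensor product, together with keeping the $\sigma$-algebra bookkeeping straight so that the conditional independence argument is valid; in particular, one must observe that $f_{t-1}^b$ is a deterministic function of $\mathbf{x}_1,\dots,\mathbf{x}_{t-1}$ alone (carrying no noise dependence) so that the cross term really does separate. Once these identifications are made, no matrix concentration or spectral property of the kernel is needed — only linearity of expectation and the specific noise structure of \eqref{def:specific-rho}.
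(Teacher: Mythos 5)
Your proof is correct and reaches the same conclusion, but packages the argument as an induction on $t$ rather than the paper's direct computation. The paper unrolls $f_t^b=\left(\prod_{i=1}^t M_i\right)f_0$ and $f_t^v=\eta_0\sum_{i=1}^t\left(\prod_{j>i}M_j\right)\Xi_i$ in one stroke, then factors out $\epsilon_i=y_i-\langle f_\rho^*,K_{\mathbf{x}_i}\rangle_\mathcal{H}$ from each summand (since under \eqref{def:specific-rho} the $\epsilon_i$ are independent of all the $\mathbf{x}_j$) to get a product with $\mathbb{E}[\epsilon_i]=0$. Your route reaches the same zero one step at a time: the $M_t(\cdot)M_t$ term dies by the inductive hypothesis and linearity of the conditional expectation, while the cross term $\eta_0(M_t f_{t-1}^b)\otimes\Xi_t$ dies because $\Xi_t=\epsilon_t K_{\mathbf{x}_t}$ with $\epsilon_t$ independent and mean zero. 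Both proofs turn on the identical ingredients (noise factorization and zero mean), so the content is essentially the same; yours has the modest advantage of not needing the explicit unrolled formulas, while the paper's is a one-line computation once those formulas are written. One small caution: under the paper's convention $(f\otimes g)(h)=\langle f,h\rangle_{\mathcal{H}}g$, the correct conjugation identity is $(Au)\otimes(Bv)=B(u\otimes v)A$ for self-adjoint $A,B$, not $A(u\otimes v)B$ as you wrote. The slip is harmless where you apply it, since you take $A=B=M_t$, but the general statement you quote is backwards.
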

    \begin{proof}

        Solving the recursion of $f_t^b$ and $f_t^v$ defined in \eqref{def:bias} and \eqref{def:variance}, we have
        \begin{equation}\nonumber
            \begin{aligned}
               f_t^b = \left(\prod_{i=1}^t(\boldsymbol{I}-\eta_0K_{\mathbf{x}_i}\otimes K_{\mathbf{x}_i})\right)f_0, f_t^v = \eta_0\sum_{i=1}^t\left(\prod_{j=i+1}^t(\boldsymbol{I}-\eta_0K_{\mathbf{x}_j}\otimes K_{\mathbf{x}_j})\right)\Xi_i.
            \end{aligned}
        \end{equation}

        By the special property of $\rho$, we have $y_t-\left\langle f_\rho^*, K_{\mathbf{x}_t}\right\rangle_\mathcal{H}$ is independent to $K_{\mathbf{x}_i}$ for any $0 \le i \le n$, thus we have
        \begin{equation}\nonumber
            \begin{aligned}
                \mathbb E\left[f_t^b \otimes f_t^v\right] &= \mathbb E\left[\left(\prod_{k=1}^t(\boldsymbol{I}-\eta_0K_{\mathbf{x}_k}\otimes K_{\mathbf{x}_k})\right)f_0 \otimes \eta_0\sum_{i=1}^t\left(\prod_{j=i+1}^t(\boldsymbol{I}-\eta_0K_{\mathbf{x}_j}\otimes K_{\mathbf{x}_j})\right)\Xi_i\right] \\
                &= \mathbb E\left[\left(\prod_{k=1}^t(\boldsymbol{I}-\eta_0K_{\mathbf{x}_k}\otimes K_{\mathbf{x}_k})\right)f_0 \otimes \eta_0\sum_{i=1}^t\left(\prod_{j=i+1}^t(\boldsymbol{I}-\eta_0K_{\mathbf{x}_j}\otimes K_{\mathbf{x}_j})\right)\Xi_i\right] \\
                &= \eta_0\sum_{i=1}^t\mathbb E\left[\left(\prod_{k=1}^t(\boldsymbol{I}-\eta_0K_{\mathbf{x}_k}\otimes K_{\mathbf{x}_k})\right)f_0 \otimes \left(\prod_{j=i+1}^t(\boldsymbol{I}-\eta_0K_{\mathbf{x}_j}\otimes K_{\mathbf{x}_j})\right)K_{\mathbf{x}_i}\right]\mathbb E\left[\Xi_i\right] \\
                &= \boldsymbol{0}.
            \end{aligned}
        \end{equation}
    \end{proof}

    \begin{lemma}

        Consider SGD with averaged iterates defined in \eqref{def:avg-parameter}. Then for any $0 \le t \le n$, we have 
        \begin{equation}\nonumber
            \begin{aligned}
                \mathbb E\left[f_t^b \otimes f_t^b\right] \succeq \tilde{f}_t^b \otimes \tilde{f}_t^b; \\
                \mathbb E\left[f_t^v \otimes f_t^v\right] \succeq \mathbb E\left[\tilde{f}_t^v \otimes \tilde{f}_t^v\right].
            \end{aligned}
        \end{equation}

    \end{lemma}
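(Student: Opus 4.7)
The plan is to prove both inequalities by induction on $t$, with the base case $t=0$ being trivial (both sides equal $(f_0-f_\rho^*)\otimes(f_0-f_\rho^*)$ for the bias and the zero operator for the variance). The inductive step for each part relies on a single operator–Jensen (matrix variance) inequality, which is the one non-obvious tool needed.

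The key tool is the following: for a random self-adjoint operator $\boldsymbol{A}$ on $\mathcal{H}$ with finite second moment and a \emph{deterministic} PSD operator $\boldsymbol{M}$, one has
\begin{equation}\nonumber
\mathbb{E}[\boldsymbol{A}\boldsymbol{M}\boldsymbol{A}] \;=\; \mathbb{E}[\boldsymbol{A}]\,\boldsymbol{M}\,\mathbb{E}[\boldsymbol{A}] + \mathbb{E}\bigl[(\boldsymbol{A}-\mathbb{E}\boldsymbol{A})\,\boldsymbol{M}\,(\boldsymbol{A}-\mathbb{E}\boldsymbol{A})\bigr] \;\succeq\; \mathbb{E}[\boldsymbol{A}]\,\boldsymbol{M}\,\mathbb{E}[\boldsymbol{A}],
\end{equation}
where the second term is PSD because, for any $v\in\mathcal{H}$, $\langle v,(\boldsymbol{A}-\mathbb{E}\boldsymbol{A})\boldsymbol{M}(\boldsymbol{A}-\mathbb{E}\boldsymbol{A})v\rangle_\mathcal{H}=\langle (\boldsymbol{A}-\mathbb{E}\boldsymbol{A})v,\boldsymbol{M}(\boldsymbol{A}-\mathbb{E}\boldsymbol{A})v\rangle_\mathcal{H}\ge 0$ by self-adjointness. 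We apply this with $\boldsymbol{A}=\boldsymbol{I}-\eta_0 K_{\mathbf{x}_t}\otimes K_{\mathbf{x}_t}$ (self-adjoint, with $\mathbb{E}[\boldsymbol{A}]=\boldsymbol{I}-\eta_0\boldsymbol{\Sigma}$) and with $\boldsymbol{M}$ equal to the deterministic operator $\mathbb{E}[f_{t-1}^b\otimes f_{t-1}^b]$ (or $\mathbb{E}[f_{t-1}^v\otimes f_{t-1}^v]$), legitimately treated as constant after conditioning on $\mathbf{x}_t$ and using the independence of $\mathbf{x}_t$ from the past.

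For the bias part, the recursion \eqref{def:bias} yields $f_t^b\otimes f_t^b = \boldsymbol{A}\,(f_{t-1}^b\otimes f_{t-1}^b)\,\boldsymbol{A}$; taking expectation, applying the inequality above, and then using the inductive hypothesis $\mathbb{E}[f_{t-1}^b\otimes f_{t-1}^b]\succeq \tilde{f}_{t-1}^b\otimes\tilde{f}_{t-1}^b$ together with monotonicity $\boldsymbol{B}\boldsymbol{M}_1\boldsymbol{B}\succeq \boldsymbol{B}\boldsymbol{M}_2\boldsymbol{B}$ for self-adjoint $\boldsymbol{B}$, gives $\mathbb{E}[f_t^b\otimes f_t^b]\succeq (\boldsymbol{I}-\eta_0\boldsymbol{\Sigma})(\tilde{f}_{t-1}^b\otimes\tilde{f}_{t-1}^b)(\boldsymbol{I}-\eta_0\boldsymbol{\Sigma})=\tilde{f}_t^b\otimes\tilde{f}_t^b$, where the last identity uses $\boldsymbol{B}(f\otimes g)\boldsymbol{B}=(\boldsymbol{B} f)\otimes(\boldsymbol{B} g)$ for self-adjoint $\boldsymbol{B}$.

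For the variance part, the extra ingredient is that the cross terms $\mathbb{E}[\boldsymbol{A} f_{t-1}^v\otimes \eta_0\Xi_t]$ and its symmetric partner vanish. Under the setting \eqref{def:specific-rho}, $\Xi_t=\epsilon_t K_{\mathbf{x}_t}$ with $\epsilon_t\sim N(0,\sigma^2)$ independent of $\mathbf{x}_t$ and of $f_{t-1}^v$, so conditioning on $(\mathbf{x}_t,f_{t-1}^v)$ kills these terms (and the analogous terms in the $\tilde{f}_t^v$ recursion). Consequently
\begin{equation}\nonumber
\mathbb{E}[f_t^v\otimes f_t^v] = \mathbb{E}\bigl[\boldsymbol{A}(f_{t-1}^v\otimes f_{t-1}^v)\boldsymbol{A}\bigr] + \eta_0^2\,\mathbb{E}[\Xi_t\otimes\Xi_t],
\end{equation}
and the same matrix-Jensen argument plus the inductive hypothesis, combined with the identical decomposition $\mathbb{E}[\tilde{f}_t^v\otimes\tilde{f}_t^v]=(\boldsymbol{I}-\eta_0\boldsymbol{\Sigma})\,\mathbb{E}[\tilde{f}_{t-1}^v\otimes\tilde{f}_{t-1}^v](\boldsymbol{I}-\eta_0\boldsymbol{\Sigma})+\eta_0^2\,\mathbb{E}[\Xi_t\otimes\Xi_t]$, closes the induction. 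The main obstacle is simply getting the operator-Jensen inequality right with the correct independence/self-adjointness bookkeeping; once that lemma is isolated, both statements follow from parallel one-line inductions.
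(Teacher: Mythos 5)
Your proof is correct and takes essentially the same route as the paper: induction on $t$, using independence of $\mathbf{x}_t$ (and of the noise, via the special $\rho$ of \eqref{def:specific-rho}) to kill the cross terms, then a PSD argument to drop the extra quadratic-in-$\eta_0$ term and invoke the inductive hypothesis. You have simply isolated the PSD step as a clean operator-Jensen lemma, $\mathbb{E}[\boldsymbol{A}\boldsymbol{M}\boldsymbol{A}]\succeq\mathbb{E}[\boldsymbol{A}]\boldsymbol{M}\mathbb{E}[\boldsymbol{A}]$, whereas the paper carries out the same expansion inline; your presentation is a bit tidier, but the mathematical content is identical.
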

    \begin{proof}

        We prove this by induction on $t$. The base case naturally holds. Assuming the lemma holds for $t-1$, we prove this also holds for $t$.

        Notice that $y_t-\left\langle f_\rho^*, K_{\mathbf{x}_t}\right\rangle_\mathcal{H}$, $K_{\mathbf{x}_t}$ and $f_i$ are independent to each other for any $0 \le i < t$, thus we have
        \begin{equation}\nonumber
            \begin{aligned}
                &\mathbb E\left[f_t^v \otimes f_t^v\right] \\ &= \mathbb E\left[\left(\left(\boldsymbol{I}-\eta_0K_{\mathbf{x}_t}\otimes K_{\mathbf{x}_t}\right)f_{t-1}^v + \eta_0\Xi_t\right) \otimes \left(\left(\boldsymbol{I}-\eta_0K_{\mathbf{x}_t}\otimes K_{\mathbf{x}_t}\right)f_{t-1}^v + \eta_0\Xi_t\right) \right] \\
                &= \mathbb E\left[\left(\boldsymbol{I}-\eta_0\boldsymbol{\Sigma}\right)\left(f_{t-1}^v \otimes f_{t-1}^v\right)\left(\boldsymbol{I}-\eta_0\boldsymbol{\Sigma}\right)\right] + \eta_0^2 [\left(K_{\mathbf{x}_t}\otimes K_{\mathbf{x}_t}\right) (f_{t-1}^v\otimes f_{t-1}^v)\left(K_{\mathbf{x}_t}\otimes K_{\mathbf{x}_t}\right)]+ \eta_0^2\mathbb E\left[\Xi_t \otimes \Xi_t\right] \\
                &\succeq \mathbb E\left[\left(\boldsymbol{I}-\eta_0\boldsymbol{\Sigma}\right)\left(f_{t-1}^v \otimes f_{t-1}^v\right)\left(\boldsymbol{I}-\eta_0\boldsymbol{\Sigma}\right)\right] + \eta_0^2\mathbb E\left[\Xi_t \otimes \Xi_t\right] \\
                &= \mathbb E\left[\tilde{f}_t^v \otimes \tilde{f}_t^v\right].
            \end{aligned}
        \end{equation}

        Similarly, we have $\mathbb E\left[f_t^b \otimes f_t^b\right] \succeq \tilde{f}_t^b \otimes \tilde{f}_t^b$, 
        from which the lemma is proved.

    \end{proof}

    Combining the three lemmas above, we have
    \begin{equation}\nonumber
        \begin{aligned}
            &\mathbb E\left[\left\langle \left(\overline{f_n}-f_\rho^*\right),\boldsymbol{\Sigma} \left(\overline{f_n}-f_\rho^*\right)\right\rangle_\mathcal{H}\right] \\
            &= \mathbb E\left[\mathrm{tr}\left(\boldsymbol{\Sigma}\frac{1}{n}\sum_{i=0}^{n-1}\left(f_i-f_\rho^*\right) \otimes \frac{1}{n}\sum_{j=0}^{n-1}\left(f_j-f_\rho^*\right)\right)\right] \\
            &= \mathrm{tr}\left(\boldsymbol{\Sigma} \frac{1}{n^2} \sum_{i=0}^{n-1}\sum_{j=0}^{n-1} \mathbb E\left[\left(f_i-f_\rho^*\right) \otimes \left(f_j-f_\rho^*\right)\right]\right) \\
            &= \mathrm{tr}\left(\boldsymbol{\Sigma} \frac{1}{n^2} \sum_{i=0}^{n-1}\left(\sum_{j=0}^{i} \mathbb E\left[\left(\boldsymbol{I}-\eta_0\boldsymbol{\Sigma}\right)^{i-j}\left(f_j^b+f_j^v\right) \otimes \left(f_j^b+f_j^v\right)\right]\right)\right)\\ &+\mathrm{tr}\left(\boldsymbol{\Sigma} \frac{1}{n^2} \sum_{i=0}^{n-1}\left(\sum_{j=i+1}^{n-1}\mathbb E\left[\left(f_i^b+f_i^v\right) \otimes \left(\boldsymbol{I}-\eta_0\boldsymbol{\Sigma}\right)^{i-j}\left(f_i^b+f_i^v\right)\right]\right)\right) \\
            &= \mathrm{tr}\left(\boldsymbol{\Sigma} \frac{1}{n^2} \sum_{i=0}^{n-1}\sum_{j=0}^{i} \mathbb E\left[\left(\boldsymbol{I}-\eta_0\boldsymbol{\Sigma}\right)^{i-j}f_j^b \otimes f_j^b + \left(\boldsymbol{I}-\eta_0\boldsymbol{\Sigma}\right)^{i-j}f_j^v \otimes f_j^v\right]\right) \\
            &+\mathrm{tr}\left(\boldsymbol{\Sigma} \frac{1}{n^2} \sum_{i=0}^{n-1}\sum_{j=i+1}^{n-1}\mathbb E\left[f_i^b \otimes \left(\boldsymbol{I}-\eta_0\boldsymbol{\Sigma}\right)^{i-j}f_i^b + f_i^v \otimes \left(\boldsymbol{I}-\eta_0\boldsymbol{\Sigma}\right)^{i-j}f_i^v\right]\right) \\
            &\ge\mathrm{tr}\left(\boldsymbol{\Sigma} \frac{1}{n^2} \sum_{i=0}^{n-1}\sum_{j=0}^{i} \mathbb E\left[\left(\boldsymbol{I}-\eta_0\boldsymbol{\Sigma}\right)^{i-j}\tilde{f}_j^b \otimes \tilde{f}_j^b + \left(\boldsymbol{I}-\eta_0\boldsymbol{\Sigma}\right)^{i-j}\tilde{f}_j^v \otimes \tilde{f}_j^v\right]\right) \\
            &+\mathrm{tr}\left(\boldsymbol{\Sigma} \frac{1}{n^2} \sum_{i=0}^{n-1}\sum_{j=i+1}^{n-1}\mathbb E\left[\tilde{f}_i^b \otimes \left(\boldsymbol{I}-\eta_0\boldsymbol{\Sigma}\right)^{i-j}\tilde{f}_i^b + \tilde{f}_i^v \otimes \left(\boldsymbol{I}-\eta_0\boldsymbol{\Sigma}\right)^{i-j}\tilde{f}_i^v\right]\right) \\
            &= \mathrm{tr}\left(\boldsymbol{\Sigma} \frac{1}{n^2} \sum_{i=0}^{n-1}\sum_{j=0}^{n-1} \left(\tilde{f}_i^b \otimes \tilde{f}_j^b+ \mathbb E\left[\tilde{f}_i^v \otimes \tilde{f}_j^v\right]\right)\right) \\
            &=\left\langle \overline{\tilde{f}_n^b},\boldsymbol{\Sigma} \overline{\tilde{f}_n^b}\right\rangle_\mathcal{H} + \mathbb E\left[\left\langle \overline{\tilde{f}_n^v},\boldsymbol{\Sigma} \overline{\tilde{f}_n^v}\right\rangle_\mathcal{H}\right].
        \end{aligned}
    \end{equation}

    Thus, we have the following lower bound:

    \begin{lemma}
    \label{lem:avg lower bound}

2        Consider SGD with averaged iterates defined in \eqref{def:avg-parameter}. Suppose that $\eta_0 \le \frac{1}{\lambda_1}$. Then we have
        \begin{equation}\nonumber
            \begin{aligned}
                &\max_{\rho}\mathbb E_{(\mathbf{x},y)\sim \rho}\|\overline{f_n}-f_\rho^*\|_{L^2}^2 \\
                &\ge \frac{1}{n^2\eta_0^s} \max_{i\in\mathbb{N}_+}\left\{(1-(1-\lambda_i\eta_0)^n)^2(\lambda_i\eta_0)^{s-2}\right\}\left\|\boldsymbol{\Sigma}^\frac{1-s}{2}(f_0-f_\rho^*)\right\|_\mathcal{H}^2 + \sigma^2\left(\frac{1}{16}\frac{k^*}{n} + \frac{1}{64}\sum_{i=k^*+1}^{+\infty}n\eta_0^2\lambda_i^2\right), \\
            \end{aligned}
        \end{equation}
        where $k^* = \max_{i\in\mathbb{N}_+} \{k:\eta_0\lambda_k\ge \frac{1}{n}\}$.

    \end{lemma}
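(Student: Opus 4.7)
The plan is to combine the pre-lemma display, which under any $\rho$ satisfying \eqref{def:specific-rho} already yields
\begin{equation*}
\mathbb{E}\|\overline{f_n}-f_\rho^*\|_{L^2}^2 \;\ge\; \langle \overline{\tilde{f}_n^b}, \boldsymbol{\Sigma}\overline{\tilde{f}_n^b}\rangle_{\mathcal{H}} + \mathbb{E}\bigl[\langle \overline{\tilde{f}_n^v}, \boldsymbol{\Sigma}\overline{\tilde{f}_n^v}\rangle_{\mathcal{H}}\bigr],
\end{equation*}
with two separate sharp lower bounds on the two purely population terms, and then to take the supremum over admissible $\rho$. Since the variance term is independent of $f_\rho^*$ while the bias term is a quadratic form in $f_0-f_\rho^*$, choosing $\rho$ to align with the worst eigendirection will deliver both contributions simultaneously.

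For the bias, I would diagonalize in the $\boldsymbol{\Sigma}$-eigenbasis $\{\lambda_k^{1/2}\phi_k\}_k$ of $\mathcal{H}$, expand $f_0-f_\rho^* = \sum_k a_k\,\lambda_k^{1/2}\phi_k$, and apply the geometric sum identity $\sum_{j=0}^{n-1}(1-\eta_0\lambda_k)^j = \bigl(1-(1-\eta_0\lambda_k)^n\bigr)/(\eta_0\lambda_k)$ to obtain
\begin{equation*}
\langle \overline{\tilde{f}_n^b}, \boldsymbol{\Sigma}\overline{\tilde{f}_n^b}\rangle_{\mathcal{H}} = \frac{1}{n^2\eta_0^2}\sum_{k} \bigl(1-(1-\eta_0\lambda_k)^n\bigr)^2 \lambda_k^{-1}\, a_k^2.
\end{equation*}
Rewriting $\lambda_k^{-1}\eta_0^{-2} = \eta_0^{-s}(\eta_0\lambda_k)^{s-2}\lambda_k^{s-1}$ groups $a_k^2\lambda_k^{1-s}$, which is exactly the contribution of mode $k$ to $\|\boldsymbol{\Sigma}^{(1-s)/2}(f_0-f_\rho^*)\|_{\mathcal{H}}^2$. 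Choosing $\rho$ whose target concentrates on a single eigendirection $i$ (i.e.\ $a_k=0$ for $k\ne i$) then leaves the factor $(1-(1-\lambda_i\eta_0)^n)^2(\lambda_i\eta_0)^{s-2}$ multiplying that norm, and taking the $\max$ over $i$ produces the bias term stated in the lemma.

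For the variance, the specific noise model \eqref{def:specific-rho} yields the \emph{equality} $\mathbb{E}[\Xi_j\otimes\Xi_j] = \sigma^2\boldsymbol{\Sigma}$, so the identity used in the proof of Lemma~\ref{lem:avg-pop-variance upper bound} becomes
\begin{equation*}
\mathbb{E}\bigl[\langle \overline{\tilde{f}_n^v}, \boldsymbol{\Sigma}\overline{\tilde{f}_n^v}\rangle_{\mathcal{H}}\bigr] = \frac{\sigma^2}{n^2}\sum_{k}\sum_{j=1}^{n-1}\bigl(1-(1-\eta_0\lambda_k)^{n-j}\bigr)^2,
\end{equation*}
which I would split at $k^*$. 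For $k\le k^*$ (so $\eta_0\lambda_k\ge 1/n$), restricting to $j\in[1,n/2]$ gives $(n-j)\eta_0\lambda_k\ge 1/2$ and hence $1-(1-\eta_0\lambda_k)^{n-j}\ge 1-e^{-1/2}$; summing the squared constant over $n/2$ values of $j$ contributes a multiple of $n$ per $k$, producing the $\tfrac{1}{16}\tfrac{k^*}{n}$ piece. For $k>k^*$ (so $(n-j)\eta_0\lambda_k<1$), concavity gives $1-(1-\eta_0\lambda_k)^{n-j}\ge \tfrac{1}{2}(n-j)\eta_0\lambda_k$, and $\sum_{j=1}^{n-1}(n-j)^2 = \Theta(n^3)$ yields the $\tfrac{1}{64}n\eta_0^2\lambda_k^2$ piece.

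The main obstacle is Step~3: pinning down the absolute constants $1/16$ and $1/64$ requires a careful choice of the retained $j$-ranges on each side of $k^*$ so that every surviving summand admits a uniform lower bound without any asymptotic slack. A second point of care is that the two bounds must be proved for a \emph{single} $\rho$, but this is automatic since the variance estimate does not depend on $f_\rho^*$, so the $\rho$ that maximizes the bias index $i$ also inherits the full variance contribution.
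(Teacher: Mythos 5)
Your proposal follows the same route as the paper: both start from the pre-lemma display reducing the problem to lower-bounding the two population terms, both handle the bias by diagonalizing in the eigenbasis (via the geometric sum identity) and choosing $f_\rho^*$ concentrated on the maximizing eigendirection $i$, and both handle the variance by using the exact noise equality $\mathbb{E}[\Xi\otimes\Xi]=\sigma^2\boldsymbol{\Sigma}$, splitting the sum at $k^*$, and lower-bounding the head via a constant bound and the tail via the Bernoulli-type linear bound. The only cosmetic differences are that the paper also restricts the tail sum to $j\ge n/2$ (while you use the full $\Theta(n^3)$ count, which works equally well), and you have a sign typo in the intermediate rewriting ($\lambda_k^{s-1}$ should read $\lambda_k^{1-s}$) that does not affect the final grouping you state correctly.
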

    \begin{proof}
        Recall the proof of Lemma \ref{lem:avg-pop-bias upper bound}, we have
        \begin{equation}\nonumber
            \left\langle \overline{\tilde{f}_n^b},\boldsymbol{\Sigma} \overline{\tilde{f}_n^b}\right\rangle_\mathcal{H} \ge \frac{1}{n^2\eta_0^s} \left\|\sum_{i=0}^{n-1}(\boldsymbol{I} -\eta_0\boldsymbol{\Sigma})^i(\eta_0\boldsymbol{\Sigma})^\frac{s}{2}\right\|^2\left\|\boldsymbol{\Sigma}^\frac{1-s}{2}(f_0-f_\rho^*)\right\|_\mathcal{H}^2,
        \end{equation}
        when we choose $f_\rho^*$ that makes
        \begin{equation}\nonumber
            \left\|\sum_{i=0}^{n-1}(\boldsymbol{I} -\eta_0\boldsymbol{\Sigma})^i(\eta_0\boldsymbol{\Sigma})^\frac{s}{2}\boldsymbol{\Sigma}^\frac{1-s}{2}(f_0-f_\rho^*)\right\|_\mathcal{H}=\left\|\sum_{i=0}^{n-1}(\boldsymbol{I} -\eta_0\boldsymbol{\Sigma})^i(\eta_0\boldsymbol{\Sigma})^\frac{s}{2}\right\|\left\|\boldsymbol{\Sigma}^\frac{1-s}{2}(f_0-f_\rho^*)\right\|_\mathcal{H}.
        \end{equation}

        That is, choose $f_\rho^*=f_0-\left\|\boldsymbol{\Sigma}^\frac{1-s}{2}(f_0-f_\rho^*)\right\|_\mathcal{H}\lambda_q^\frac{s}{2}e_q$, where $q=\mathop{\arg\max}_{i\in\mathbb{N}_+}\left\{(1-(1-\lambda_i\eta_0)^n)^2(\lambda_i\eta_0)^{s-2}\right\}$.

        Recalling the proof of Lemma \ref{lem:avg-pop-variance upper bound}, we have
        \begin{equation}\nonumber
            \begin{aligned}
                \mathbb E\left[\left\langle \overline{\tilde{f}_n^v}, \boldsymbol{\Sigma} \overline{\tilde{f}_n^v}\right\rangle_\mathcal{H}\right] &= \frac{1}{n^2} \mathrm{tr}\left(\sum_{j=1}^{n-1}\eta_0^2\left(\sum_{i=j}^{n-1}(\boldsymbol{I} -\eta_0\boldsymbol{\Sigma})^{i-j}\right)^2\boldsymbol{\Sigma}\mathbb E\left[\Xi_i\otimes \Xi_i\right]\right) \\
                &= \frac{\sigma^2}{n^2} \mathrm{tr}\left(\sum_{j=1}^{n-1}\eta_0^2\left(\sum_{i=j}^{n-1}(\boldsymbol{I} -\eta_0\boldsymbol{\Sigma})^{i-j}\right)^2\boldsymbol{\Sigma}^2\right) \\
                &= \sigma^2 \left(\sum_{i=1}^{k^*}\frac{1}{n^2}\sum_{j=1}^{n-1}\left(1-(1 -\eta_0\lambda_i)^j\right)^2 + \sum_{i=k^*+1}^{+\infty}\frac{1}{n^2}\sum_{j=1}^{n-1}\left(1-(1-\eta_0\lambda_i)^j\right)^2\right) \\
                &\ge \sigma^2 \left(\sum_{i=1}^{k^*}\frac{1}{n^2}\sum_{j=\left\lceil\frac{n}{2}\right\rceil}^{n-1}\left(1-\exp\left(-\eta_0\lambda_{k^*} \frac{n}{2}\right)\right)^2 + \sum_{i=k^*+1}^{+\infty}\frac{1}{n^2}\sum_{j=\left\lceil\frac{n}{2}\right\rceil}^{n-1}\left(1-\left(1-\eta_0\lambda_i\right)^\frac{n}{2}\right)^2\right) \\
                &\ge \sigma^2 \left(\sum_{i=1}^{k^*} \frac{1}{n^2}\frac{n}{4}\left(1-\exp\left(-\frac{1}{n}\frac{n}{2}\right)\right)^2+\sum_{i=k^*+1}^{+\infty}\frac{1}{n^2}\frac{n}{4}\left(\frac{n}{4}\eta_0\lambda_i\right)^2\right) \\
                &\ge \sigma^2 \left(\frac{1}{16}\frac{k^*}{n}+\frac{1}{64}\sum_{i=k^*+1}^{+\infty}n\eta_0^2\lambda_i^2\right),
            \end{aligned}
        \end{equation}
        when we choose $k^*= \arg\max_{i\in\mathbb{N}_+} \{k:\eta_0\lambda_k\ge \frac{1}{n}\}$. Thus, we have proved there exists a $\rho$ satisfying the inequality in the lemma, from which the lemma is proved.

    \end{proof}

    In the following, we show that in high-dimensional settings, when $s>1$, there exists a region where SGD with averaged iterates cannot achieve optimality.

    \begin{lemma}
        Let $c_1d^\gamma < n < c_2d^\gamma$ for some fixed $\gamma > 0$ and absolute constants $c_1,c_2$. Consider $\mathcal{X} = \mathbb S^d$ and the marginal distribution $\mu$ to be the uniform distribution. Let $K$ be the inner product kernel on the sphere. Consider $\gamma,\sigma,\kappa,c_1,c_2$ and $s>1$ as constants, the excess risk of the output of SGD averaged iterates $f_n^{avg}$ cannot achieve optimality.
    \end{lemma}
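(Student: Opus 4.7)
The plan is to apply Lemma \ref{lem:avg lower bound} in the concrete regime $\gamma=2$ with $s\ge 2$, and show that the best step size still leaves a polynomial gap against the minimax rate. Here Proposition \ref{minmax-high} forces a minimax rate of $d^{-2}$ (since $p=\lceil\gamma/(s+1)\rceil-1=0$, and therefore $\min\{\gamma-p,s(p+1)\}=\min\{\gamma,s\}=2$), while I will argue that SGD with averaged iterates cannot beat $d^{-3/2}$ under any admissible $\eta_0$. Since $\gamma>1$ and $s>1$, this suffices to exhibit a region where SGD averaged is suboptimal.

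First I parametrize $\eta_0=d^{-\alpha}$ and use the sphere kernel spectrum $\mu_\ell=\Theta(d^{-\ell})$ with $N(d,\ell)=\Theta(d^\ell)$ from Lemma \ref{lem: higheigen dec}, so that $k^*=\Theta(d^{\lfloor\gamma-\alpha\rfloor})$ whenever $\gamma>\alpha$ and $\sum_{i>k^*}\lambda_i^2=\Theta(d^{-\lfloor\gamma-\alpha\rfloor-1})$. Restricting the $\max_i$ in the bias lower bound to $i=1$ and choosing $\rho$ whose $f_\rho^*$ concentrates on the top eigenvector of $\boldsymbol{\Sigma}$ (so that $\|\boldsymbol{\Sigma}^{(1-s)/2}(f_0-f_\rho^*)\|_{\mathcal{H}}^2$ equals a positive constant depending only on $s$), Lemma \ref{lem:avg lower bound} together with $\lambda_1=\Theta(1)$ yields
\begin{equation}\nonumber
\sup_\rho\mathbb{E}\|\overline{f_n}-f_\rho^*\|_{L^2}^2\;\ge\;c\,\frac{(1-(1-\lambda_1\eta_0)^n)^2}{(n\eta_0)^2}\;\ge\;\frac{c'}{(n\eta_0)^2}\;=\;\Omega(d^{2\alpha-4})
\end{equation}
whenever $n\lambda_1\eta_0\ge 1$, i.e.\ $\alpha\le\gamma$. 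The key feature is that this lower bound is essentially independent of how large $s$ is, reflecting the saturation inherent to iterate averaging.

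Second I split $\alpha\in[0,\infty)$ at the integer thresholds of $\gamma-\alpha=2-\alpha$ where $k^*$ jumps by a factor of $d$. On $[0,1)$ the variance term $\sigma^2 k^*/(16n)$ already exceeds $d^{-1}$; on $[2,\infty)$ the bias is $\Omega(1)$; and on $[1,2)$ the bias $\Omega(d^{2\alpha-4})$ trades off against the tail variance $\sigma^2\sum_{i>k^*}n\eta_0^2\lambda_i^2/64=\Omega(d^{1-2\alpha})$, with both quantities meeting at $\alpha=5/4$ where they equal $d^{-3/2}$. The infimum over $\eta_0$ of the Lemma \ref{lem:avg lower bound} lower bound is therefore $\Omega(d^{-3/2})$, which is polynomially larger than the minimax rate $d^{-2}$, contradicting optimality. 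The main obstacle is precisely this discontinuity of $k^*$ across integer values of $\gamma-\alpha$: it prevents a continuous bias-variance balance from reaching the minimax rate, and it is also the structural reason SGD averaging saturates whereas SGD with exponentially decaying step size does not (Theorem \ref{thm-ndr-easy}). The remaining work is routine piecewise bookkeeping within each subinterval of $\alpha$, together with a short check that $(1-(1-\lambda_1\eta_0)^n)^2$ is bounded below by a universal constant on the admissible range of $\alpha$.
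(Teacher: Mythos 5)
The proposal follows the same high-level route as the paper—apply Lemma~\ref{lem:avg lower bound}, lower bound the bias by picking a single eigendirection, lower bound the variance by the tail sum, and optimize over step size—so structurally you are on the right track. However, your argument only handles $s\ge 2$ (in fact only $s>3/2$), while the lemma asserts the existence of a suboptimality region for every constant $s>1$.

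There are two concrete issues with restricting to $\gamma=2$ and $i=1$ in the bias bound. First, when $1<s<2$ the factor $(\lambda_i\eta_0)^{s-2}$ is \emph{increasing} as $\lambda_i$ decreases, so the maximizing index in $\max_i\{(1-(1-\lambda_i\eta_0)^n)^2(\lambda_i\eta_0)^{s-2}\}$ is interior and the $i=1$ restriction loses a polynomial factor; the paper captures this via the worse exponent $\Omega\left(d^{(\frac{s}{2}-1)s}n^{2-s}\right)$ on the bias for $0<s\le 2$. Second, and independently, the conclusion does not follow even if one uses only the $i=1$ bound: with $\gamma=2$ and $p=0$ the minimax rate is $d^{-\min\{2,s\}}=d^{-s}$, so for $1<s\le 3/2$ the $\Omega(d^{-3/2})$ lower bound sits at or below the minimax rate and certifies nothing. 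To cover $1<s<2$ one must instead choose $\gamma$ in the genuinely suboptimal window $\bigl(p(s+1)+1,\,p(s+1)+2s-1\bigr)$ (e.g.\ $p=0$, $\gamma=s$), which is exactly where the paper's piecewise lower bound $\Omega\left(d^{\frac{-(p+1)s+(p-\gamma)+s-1}{2}}\right)$ strictly exceeds $d^{-\min\{\gamma-p,(p+1)s\}}$. The closing sentence "Since $\gamma>1$ and $s>1$, this suffices" therefore does not follow from what precedes it; the missing half of the range $s>1$ needs a separate choice of $\gamma$ and a sharper bias maximization over $i$.
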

    \begin{proof}
        
        According to Lemma \ref{lem:avg lower bound}, we have
        \begin{equation}\nonumber
            \begin{aligned}
                &\max_{\rho}\mathbb E_{(\mathbf{x},y)\sim \rho}\|f_n^{avg}-f_\rho^*\|_{L^2}^2 \\
                &= \Omega\left(\frac{1}{n^2\eta_0^s} \max_{i\in\mathbb{N}_+}\left\{(1-(1-\lambda_i\eta_0)^n)^2(\lambda_i\eta_0)^{s-2}\right\}\left\|\boldsymbol{\Sigma}^\frac{1-s}{2}(f_0-f_\rho^*)\right\|_\mathcal{H}^2 + \sigma^2\left(\frac{k^*}{n} + \sum_{i=k^*+1}^{+\infty}n\eta_0^2\lambda_i^2\right)\right)
            \end{aligned}
        \end{equation}
        where $k^* = \arg\max_{i\in\mathbb{N}_+} \{k:\eta_0\lambda_k\ge \frac{1}{n}\}$.
        
        Under kernel $K$, $k^*$ defined above satisfies $\lambda_{k^*+1}=\Theta\left(\lambda_{k^*}d^{-1}\right)$, so we assume $k^*=\Theta\left(d^{p}\right)$, $p \in \mathbb N$, then $\eta_0=\Theta(d^{p+r}n^{-1})$, where $ 0 \le r < \min\{1,\gamma-p\}$ is a arbitrary constant.

        In the case $0 < s \le 2$, using the inequality $1-(1-x)^n \ge \frac{1}{2}\min\{1,nx\}$, according to the proof of Lemma \ref{thm-ndr-hard}, we have
        \begin{equation}\nonumber
            \max_{i\in\mathbb{N}_+}\left\{(1-(1-\lambda_i\eta_0)^n)^2(\lambda_i\eta_0)^{s-2}\right\} = \Omega\left(\inf_{x_0 \le x < dx_0} \max\left\{x^{s-2},n\left(\frac{x}{d}\right)^s\right\}\right) = \Omega\left(d^{\left(\frac{s}{2}-1\right)s}{n}^{2-s}\right),
        \end{equation}
        where $x_0 = \mathop{\arg\max}\limits_{0 \le x \le 1}\min\{1,nx\}x^{\frac{s}{2}-1} = \frac{1}{n}$.

        We know
        \begin{equation}\nonumber
            \max_{i\in\mathbb{N}_+}\left\{(1-(1-\lambda_i\eta_0)^n)^2(\lambda_i\eta_0)^{s-2}\right\} = \Theta\left(d^{\left(\frac{s}{2}-1\right)s}{n}^{2-s}\right)
        \end{equation}
        if and only if $\eta_0 = \Theta(d^{p+\frac{s}{2}}n^{-1})$, $p \in \mathbb N$ and $p < \gamma -\frac{s}{2}$.

        When $\eta_0 = \Theta(d^{p+r}n^{-1})$ with $0 < r < \min\{\frac{s}{2},\gamma-p\}$, we know
        \begin{equation}\nonumber
            \max_{i\in\mathbb{N}_+}\left\{(1-(1-\lambda_i\eta_0)^n)^2(\lambda_i\eta_0)^{s-2}\right\}=\Theta\left(d^{r(s-2)}n^{2-s}\right)
        \end{equation}

        When $\eta_0 = \Theta(d^{p+r}n^{-1})$ with $\frac{s}{2} < r < \min\{1,\gamma-p\}$, we know
        \begin{equation}\nonumber
            \max_{i\in\mathbb{N}_+}\left\{(1-(1-\lambda_i\eta_0)^n)^2(\lambda_i\eta_0)^{s-2}\right\} = \Theta\left(d^{\left(r-1\right)s}{n}^{2-s}\right)
        \end{equation}

        By contrast, in the case $s>2$, we have
        \begin{equation}\nonumber
            \max_{i\in\mathbb{N}_+}\left\{(1-(1-\lambda_i\eta_0)^n)^2(\lambda_i\eta_0)^{s-2}\right\} = \Omega\left(1\right).
        \end{equation}

        Using $\sigma^2 = \Omega(1)$, $\left\|\boldsymbol{\Sigma}^\frac{1-s}{2}(f_0-f_\rho^*)\right\|_\mathcal{H}^2 = \Omega(1)$, $\sum_{i=k^*+1}^{+\infty}\lambda_i^2=\Theta(\lambda_{k^*+1})$, and $n = \Theta(d^\gamma)$, we have
        \begin{equation}\label{lowerresult}
            \begin{aligned}
                &\max_{\rho}\mathbb E_{(\mathbf{x},y)\sim \rho}\|f_n^{avg}-f_\rho^*\|_{L^2}^2 \\
                &= \begin{cases}
                    \Omega\left(d^{p-\gamma}\right) , & 0 < s \le 1, \gamma\in[p(s+1), p(s+1)+s) \\
                    \Omega\left(d^{-(p+1)s}\right) , & 0 < s \le 1, \gamma\in[p(s+1)+s, p(s+1)+s+1) \\
                    \Omega\left(d^{p-\gamma}\right) , & 1 < s < 2, \gamma\in[p(s+1), p(s+1)+1) \\
                    \Omega\left(d^{\frac{-(p+1)s+(p-\gamma)+s-1}{2}}\right) , & 1 < s < 2, \gamma\in[p(s+1)+1, p(s+1)+2s-1) \\
                    \Omega\left(d^{-(p+1)s}\right) , & 1 < s < 2, \gamma\in[p(s+1)+2s-1, p(s+1)+s+1) \\
                    \Omega\left(d^{p-\gamma}\right) , & s \ge 2, \gamma\in [\frac{p(s+1)}{s-1},\frac{p(s+1)+\frac{s}{2}}{s-1}) \\
                    \Omega\left(d^{p+\frac{s-4}{s+2}\gamma-\frac{2ps+2p+s}{s+2}}\right) , & s \ge 2, \gamma \in [\frac{p(s+1)+\frac{s}{2}}{s-1},\frac{p(s+1)+s+1}{s-1})
                \end{cases} \\
            \end{aligned}
        \end{equation}

    \end{proof}

Similarly, we can prove that when $n\gg d$, SGD with averaged iterates is suboptimal for any $s>2$.
\newpage
\section{Graphical illustration and Summary of Optimal Region}\label{App: Graph}
\subsection{Graphical illustration of Theorem~\ref{thm-ndr-easy} and Theorem~\ref{thm-ndr-hard} }
In Figure~\ref{Fig:curve}, we provide a visual illustration of the convergence rate curves of the excess risk for SGD in the high-dimensional setting $n\asymp d^{\gamma}$, under source conditions with $s=0.5,1,5$. For comparison, we also include KRR convergence curves and the minimax optimal rate curves. These references highlight the saturation phenomenon of KRR for large $s$, and the optimality of SGD across $s>0$.

    \begin{figure*}[htbp]
        \centering
        \begin{subfigure}{0.49\textwidth}
            \centering
            \includegraphics[width=\linewidth]{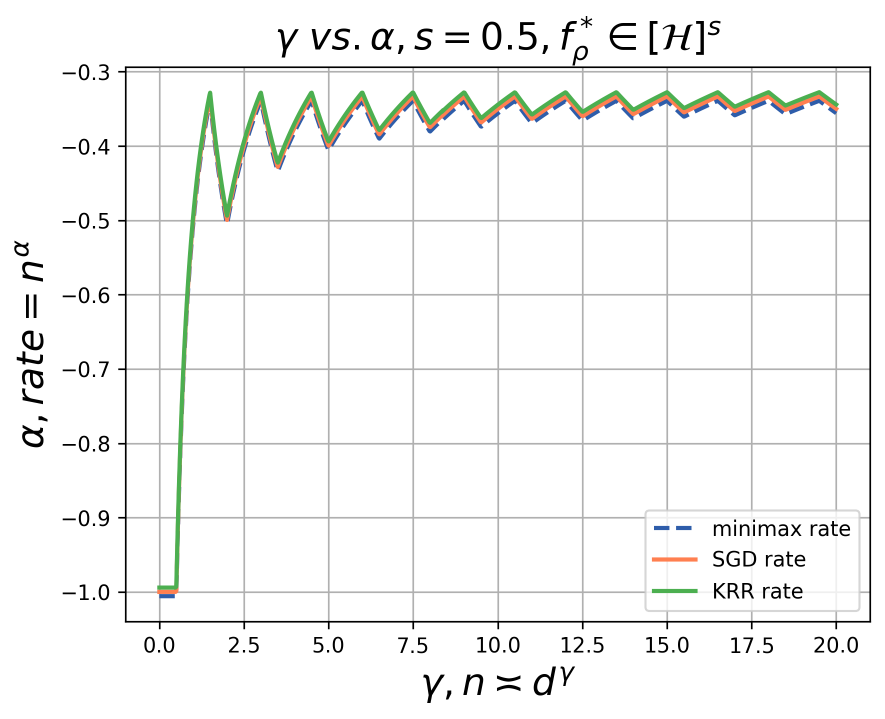}
            \caption{}
        \end{subfigure}
        \hfill
        \begin{subfigure}{0.49\textwidth}
            \centering
            \includegraphics[width=\linewidth]{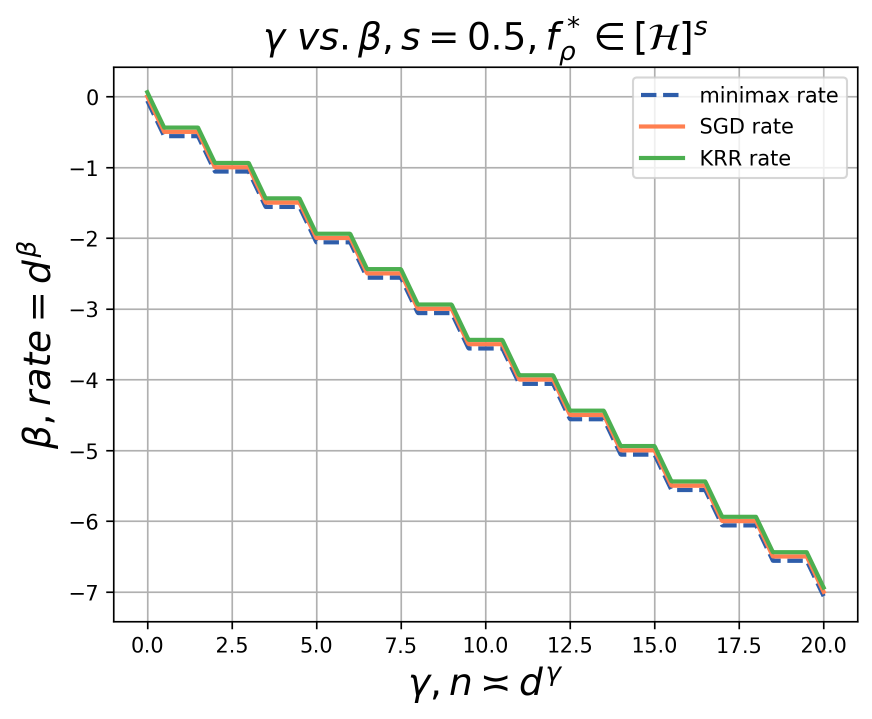}
            \caption{}
        \end{subfigure}
        \hfill
        \centering
        \begin{subfigure}{0.49\textwidth}
            \centering
            \includegraphics[width=\linewidth]{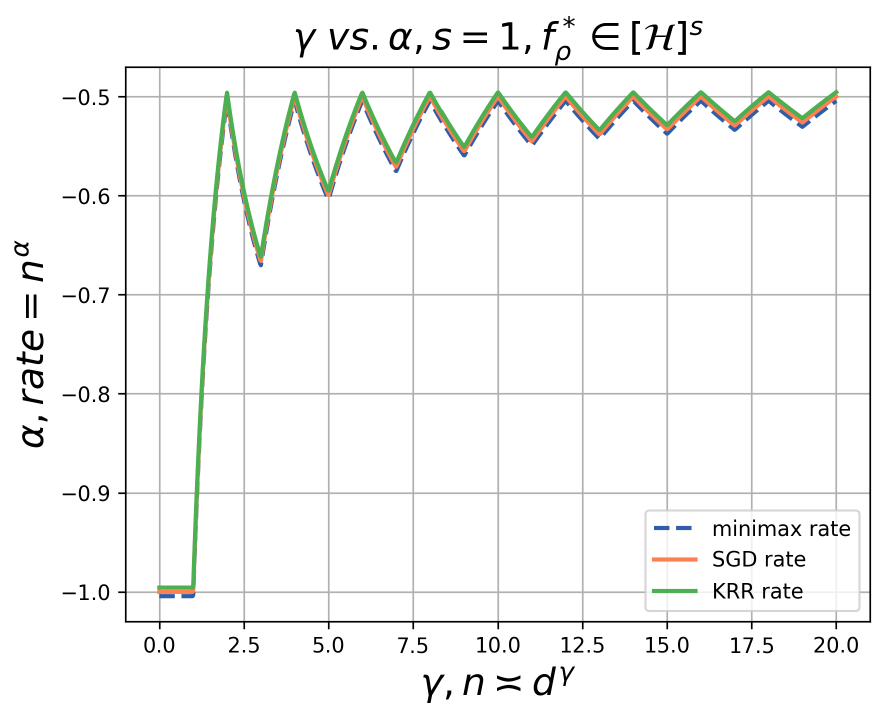}
            \caption{}
        \end{subfigure}
        \hfill
        \begin{subfigure}{0.49\textwidth}
            \centering
            \includegraphics[width=\linewidth]{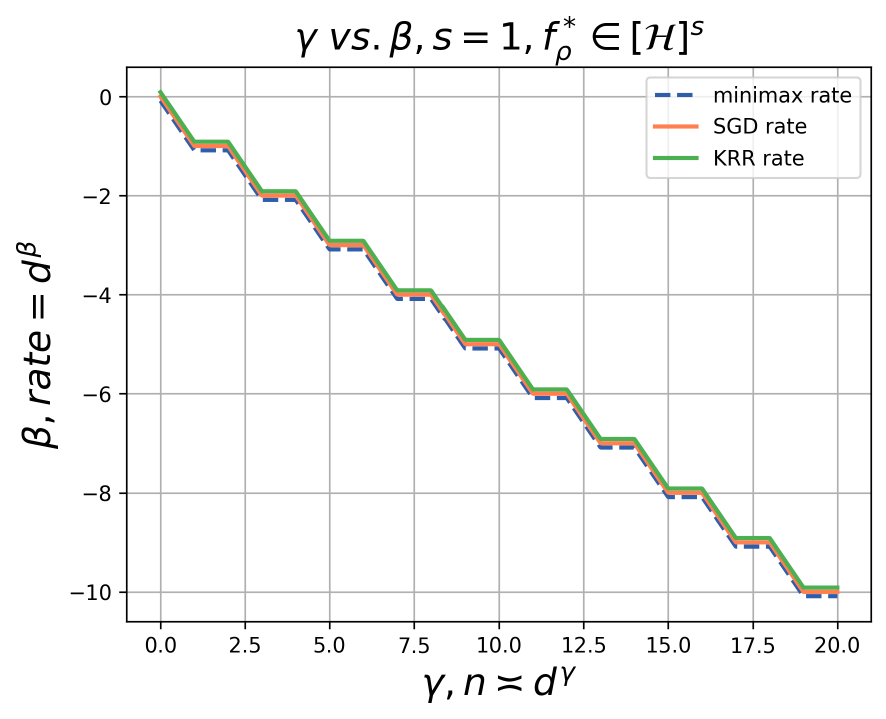}
            \caption{}
        \end{subfigure}
        \hfill
        \centering
        \begin{subfigure}{0.49\textwidth}
            \centering
            \includegraphics[width=\linewidth]{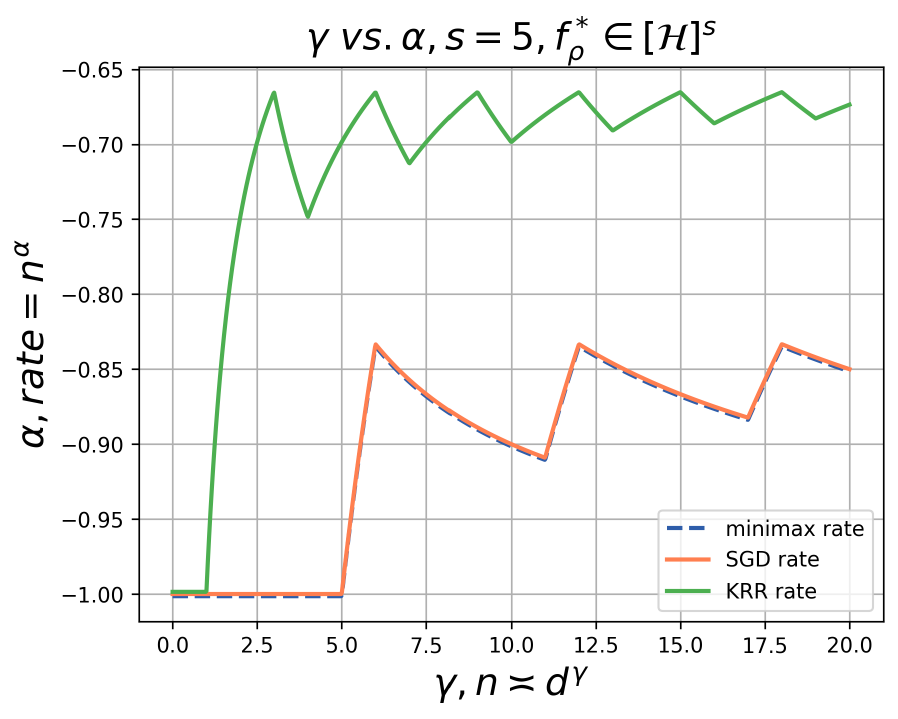}
            \caption{}
        \end{subfigure}
        \hfill
        \begin{subfigure}{0.49\textwidth}
            \centering
            \includegraphics[width=\linewidth]{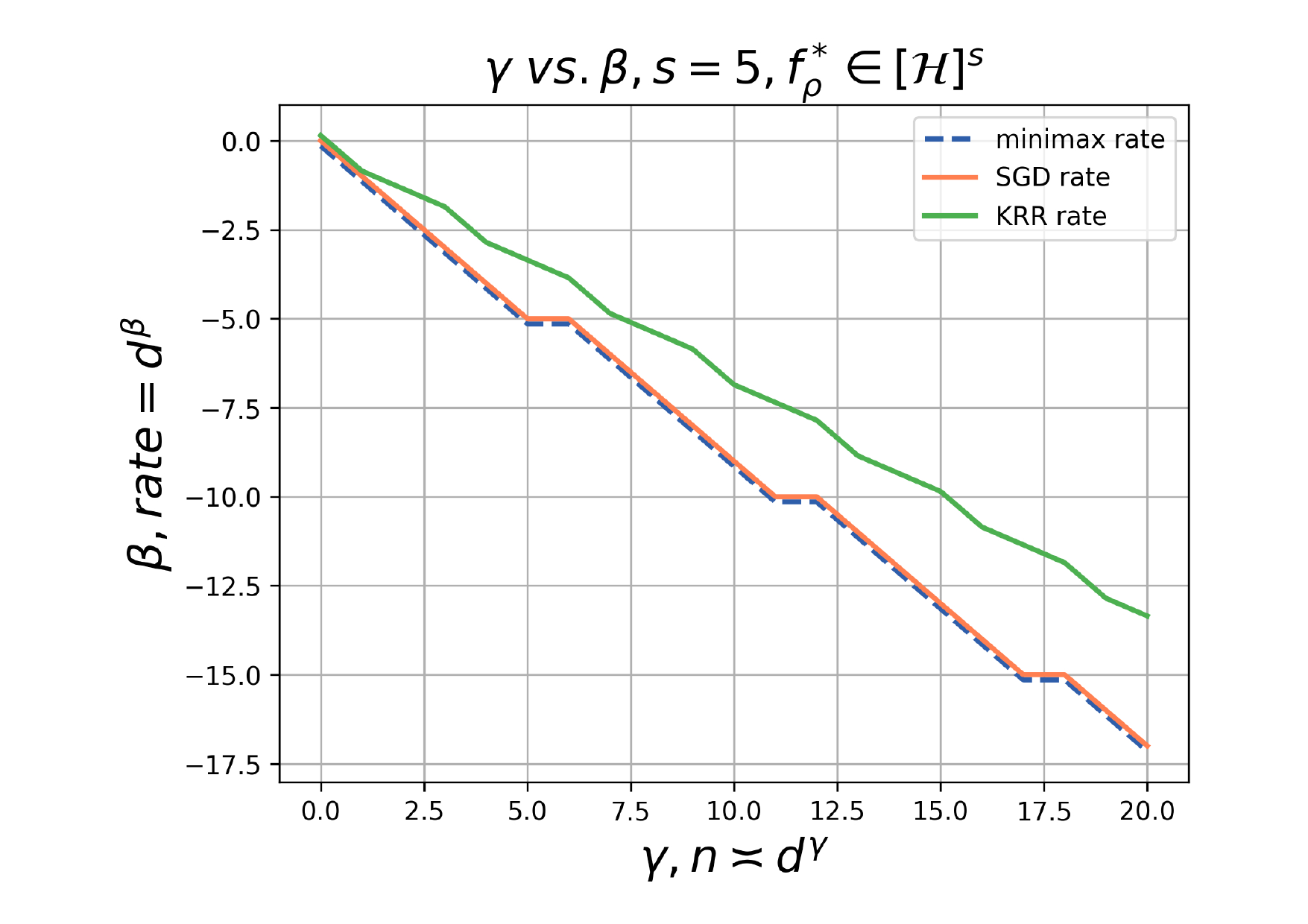}
            \caption{}
        \end{subfigure}
        \hfill
        \caption{Convergence Rate of SGD and KRR with respect to $n$ and $d$, we present 6 graphs corresponding to source conditions with $s = 0.5, 1, 5$. The x-axis represents the asymptotic scaling $\gamma:n\asymp d^\gamma$; the y-axis represents the convergence rate of excess risk: excess risk $\asymp n^\alpha$ or excess risk $d^\beta$.}
        \label{Fig:curve}
    \end{figure*}

\subsection{Summary of Optimal Region}

In Table~\ref{Table: summary}, we summarize the optimality regions (with respect to the source condition parameter $s$) of various algorithms, including SGD, KRR, and GD with early stopping under both high-dimensional and asymptotic settings.

\textbf{In the high-dimensional setting:}
\begin{itemize}
    \item SGD is optimal for $s>0$ (Theorem~\ref{thm-ndr-easy} and Theorem~\ref{thm-ndr-hard}).
    \item KRR is optimal for $0<s\le 1$~\citep{zhang2024optimal1}, but for $s>1$, there exists a range of $\gamma$ where it fails to be optimal~\citep{zhang2024optimal1}.
    \item GD with early stopping is optimal for all $s>0$~\citep{lu2024on}.
\end{itemize}

\textbf{In the asymptotic setting:}
\begin{itemize}
    \item SGD is optimal for $s\ge \frac{1}{d+1}$ (Theorem~\ref{thm-nfix-easy} and Theorem~\ref{thm-nfix-hard}).
    \item KRR is optimal for $0<s\le 2$~\citep{caponnetto2007optimal,smale2007learning,blanchard2018optimal,NEURIPS2022_1c71cd40,zhang2024optimality}, but not for $s>2$~\citep{NEURIPS2021_543bec10,li2023on}. 
    \item GD with early stopping is optimal for all $s>0$~\citep{raskutti2014early}.
\end{itemize}

    \begin{table}[htbp]
        \centering
        \begin{tabular}{|c|c|c|}
            \hline
            \makecell{\textbf{}} & \makecell{\textbf{High-Dimensional Setting($n \asymp d^\gamma$)}} & \makecell{\textbf{Asymptotic Setting($n \gg d$)}} \\
            \hline
            \makecell{SGD} & \makecell{$s>0$} & \makecell{ $s\ge \frac{1}{d+1} $} \\
            \hline
            \makecell{KRR}     & \makecell{$0 < s \le 1$}   & \makecell{$0 < s \le 2$} \\
            \hline
            \makecell{GD with Early Stopping}     & \makecell{$s > 0$}     & \makecell{$s > 0$} \\
            \hline
        \end{tabular}
        \caption{Optimal Region for Algorithms mentioned.}
        \label{Table: summary}
    \end{table}

\end{document}